\documentclass[11pt]{article}
\usepackage{amssymb,amsmath,amsthm,amsfonts}
\usepackage{jinshuomacros}
\usepackage{algorithm}
\usepackage[noend]{algpseudocode}
\usepackage{thmtools}
\usepackage{mathrsfs}
\usepackage[usenames,dvipsnames]{color}
\usepackage{tikz}
\usepackage{hyperref}

\usepackage{graphicx}
\usepackage{bibentry}
\usepackage{cleveref}
\usepackage{fullpage}

\newif\ifhideproofs

\ifhideproofs
\usepackage{environ}
\NewEnviron{hide}{}

\fi
\renewcommand{\N}{\mathcal{N}}
\renewcommand{\P}{\mathbb{P}}
\newcommand{\e}{\mathrm{e}}
\newcommand{\f}{\boldsymbol{f}}
\newcommand{\DCT}{{dominated convergence theorem}}
\newcommand{\Proc}{\mathrm{Proc}}
\newcommand{\Renyi}{R\'enyi}

\newcommand{\Sf}{f^{\mathrm{S}}}
		\newcommand{\TV}{\mathrm{TV}}

\nobibliography* 



\declaretheoremstyle[bodyfont=\normalfont]{normalbody}
\newtheorem{theorem}{Theorem}[section]
\newtheorem{lemma}[theorem]{Lemma}

\newtheorem{corollary}[theorem]{Corollary}
\newtheorem{proposition}[theorem]{Proposition}

\newtheorem{definition}[theorem]{Definition}
\theoremstyle{remark}
\declaretheorem{remark}


\begin{document}
\pagenumbering{roman}
\title{Gaussian Differential Privacy}


\author{Jinshuo Dong\thanks{Graduate Group in Applied Mathematics and Computational Science, University of Pennsylvania. Email: \texttt{jinshuo@sas.upenn.edu}.} \and Aaron Roth\thanks{Department of Computer and Information Sciences, University of Pennsylvania. Email: \texttt{aaroth@cis.upenn.edu}.} \and Weijie J.~Su\thanks{Department of Statistics, the Wharton School, University of Pennsylvania. Email: \texttt{suw@wharton.upenn.edu}.}}



\date{May 24, 2019}


\maketitle

\vspace{0.3cm}
\begin{abstract}
In the past decade, differential privacy has seen remarkable success as a rigorous and practical formalization of data privacy. This privacy definition and its divergence based relaxations, however, have several acknowledged weaknesses, either in handling composition of private algorithms or in analyzing important primitives like privacy amplification by subsampling. Inspired by the hypothesis testing formulation of privacy, this paper proposes a new relaxation of differential privacy, which we term ``$f$-differential privacy'' ($f$-DP). This notion of privacy has a number of appealing properties and, in particular, avoids difficulties associated with divergence based relaxations. First, $f$-DP faithfully preserves the hypothesis testing interpretation of differential privacy, thereby making the privacy
guarantees easily interpretable. In addition, $f$-DP allows for lossless reasoning about composition in an algebraic fashion. Moreover, we provide a powerful technique to import existing results proven for the original differential
privacy definition to $f$-DP and, as an application of this technique, obtain a simple and easy-to-interpret theorem of privacy amplification by subsampling for $f$-DP.

In addition to the above findings, we introduce a canonical single-parameter family of privacy notions within the $f$-DP class that is referred to as ``Gaussian differential privacy'' (GDP), defined based on hypothesis testing of two shifted Gaussian distributions. GDP is the focal privacy definition among the family of $f$-DP guarantees due to a central limit theorem for differential privacy that we prove. More precisely, the privacy guarantees of \emph{any} hypothesis testing based definition of privacy (including the original differential privacy definition) converges to GDP in the limit under composition. We also prove a Berry--Esseen style version of the central limit theorem, which gives a computationally inexpensive tool for tractably analyzing the exact composition of private algorithms.

Taken together, this collection of attractive properties render $f$-DP a mathematically coherent, analytically tractable, and versatile framework for private data analysis. Finally, we demonstrate the use of the tools we develop by giving an improved analysis of the privacy guarantees of noisy stochastic gradient descent.
\end{abstract}

\newpage
\tableofcontents

\newpage
\pagenumbering{arabic}
\setcounter{page}{1}
\section{Introduction}


Modern statistical analysis and machine learning are overwhelmingly applied to data concerning \emph{people}. Valuable datasets generated from personal devices and online behavior of billions of individuals contain data on location, web search histories, media consumption, physical activity, social networks, and more. This is on top of continuing large-scale analysis of traditionally sensitive data records, including those collected by hospitals, schools, and the Census. This reality requires the development of tools to perform large-scale data analysis in a way that still protects the \emph{privacy} of individuals represented in the data.

Unfortunately, the history of data privacy for many years consisted of ad-hoc attempts at ``anonymizing'' personal information, followed by high profile de-anonymizations. This includes the release of AOL search logs, de-anonymized by the \textit{New York Times} \cite{aol}, the Netflix Challenge dataset, de-anonymized by Narayanan and Shmatikov \cite{netflix}, the realization that participants in genome-wide association studies could be identified from aggregate statistics such as minor allele frequencies that were publicly released \cite{gwas}, and the reconstruction of individual-level census records from aggregate statistical releases \cite{census}.

Thus, we urgently needed a rigorous and principled privacy-preserving framework to prevent breaches of personal information in data analysis. In this context, \textit{differential privacy} has put private data analysis on firm theoretical foundations \cite{DMNS06,approxdp}. This definition has become tremendously successful: in addition to an enormous and growing academic literature, it has been adopted as a key privacy technology by Google \cite{rappor}, Apple \cite{apple}, Microsoft \cite{microsoft}, and the US Census Bureau \cite{census}. The definition of this new concept involves privacy parameters $\epsilon \ge 0$ and $0 \le \delta \le 1$.
\begin{definition}[\cite{DMNS06,approxdp}]\label{def:dpintro}
A randomized algorithm $M$ that takes as input a dataset consisting of individuals is $(\ep, \delta)$-differentially private (DP) if for any pair of datasets $S, S'$ that differ in the record of a single individual, and any event $E$,
\begin{equation}\label{eq:dp_ine}
\P\left[ M(S)\in E \right] \leqslant \e^\ep \P \left[ M(S')\in E \right] + \delta.
\end{equation}
When $\delta = 0$, the guarantee is simply called $\epsilon$-DP.
\end{definition}

In this definition, datasets are \textit{fixed} and the probabilities are taken \textit{only} over the randomness of the mechanism\footnote{A randomized algorithm $M$ is often referred to as a mechanism in the differential privacy literature.}. In particular, the event $E$ can take any measurable set in the range of $M$. To achieve differential privacy, a mechanism is necessarily randomized. Take as an example the problem of privately releasing the average cholesterol level of individuals in the dataset $S = (x_1, \ldots, x_n)$, each $x_i$ corresponding to an individual. A privacy-preserving mechanism may take the form\footnote{Here we identify the individual $x_i$ with his/her cholesterol level.}
\[
M(S) = \frac1n (x_1 + \cdots + x_n) + \text{noise}.
\]
The level of the noise term has to be sufficiently large to mask the \textit{characteristics} of any individual's cholesterol level,
while not being too large to distort the population average for accuracy purposes. Consequently, the probability distributions of $M(S)$ and $M(S')$ are close to each other for any datasets $S, S'$ that differ in only one individual record.



Differential privacy is most naturally defined through a hypothesis testing problem from the perspective of an attacker who aims to distinguish $S$ from $S'$ based on the output of the mechanism. This statistical viewpoint was first observed by \cite{wasserman_zhou} and then further developed by \cite{KOV}, which is a direct inspiration for our work. In short, consider the hypothesis testing problem
\begin{equation}\label{eq:hyp_into}
H_0: \text{the underlying dataset is } S \quad\text{ versus }\quad H_1: \text{the underlying dataset is } S'
\end{equation}
and call Alice the only individual that is in $S$ but not $S'$. As such, rejecting the null hypothesis corresponds to the detection of absence of Alice, whereas accepting the null hypothesis means to detect the presence of Alice in the dataset. Using the output of an $(\ep, \delta)$-DP mechanism, the power\footnote{The power is equal to 1 minus the type II error.} of any test at significance level $0 < \alpha < 1$ has an upper bound\footnote{A more precise bound is given in \Cref{thm:privacy_testing}.} of $\e^\ep\alpha+\delta$. This bound is only slightly larger than $\alpha$ provided that $\epsilon,\delta$ are small and, therefore, \textit{any} test is essentially powerless. Put differently, differential privacy with small privacy parameters protects against any inferences of the presence of Alice, or any other individual, in the dataset.

Despite its apparent success, there are good reasons to want to relax the original definition of differential privacy, which has led to a long line of proposals for such relaxations. The most important shortcoming is that $(\ep,\delta)$-DP does not tightly handle composition. Composition concerns how privacy guarantees degrade under repetition of mechanisms applied to the same dataset, rendering the design of differentially private algorithms \textit{modular}. Without compositional properties, it would be near impossible to develop complex differentially private data analysis methods. Although it has been known since the original papers defining differential privacy \cite{DMNS06,approxdp} that the composition of an $(\epsilon_1,\delta_1)$-DP mechanism and an $(\epsilon_2,\delta_2)$-DP mechanism yields an $(\epsilon_1+\epsilon_2,\delta_1+\delta_2)$-DP mechanism, the corresponding upper bound $\e^{\epsilon_1 + \epsilon_2}\alpha +  \delta_1 + \delta_2$ on the power of any test at
significance level $\alpha$ no longer tightly characterizes the trade-off between significance level and power for the testing between $S$ and $S'$. In \cite{boosting}, Dwork, Rothblum, and Vadhan gave an improved composition theorem, but it fails to capture the correct hypothesis testing trade-off. This is for a fundamental reason: $(\epsilon,\delta)$-DP is mis-parameterized in the sense that the guarantees of the composition of $(\epsilon_i,\delta_i)$-DP mechanisms cannot be characterized by any pair of parameters $(\epsilon,\delta)$. Worse, given any $\delta$, finding the parameter $\ep$ that most tightly approximates the correct trade-off between significance level and type II error for a composition of a
sequence of differentially private algorithms is  computationally hard \cite{complexity}, and so in practice, one must resort to approximations. Given that composition and modularity are first-order desiderata for a useful privacy definition, these are substantial drawbacks and often continue to push practical algorithms with meaningful privacy guarantees out of reach.

In light of this, substantial recent effort has been devoted to developing relaxations of differential privacy for which composition can be handled exactly. This line of work includes several variants of ``concentrated differential privacy'' \cite{concentrated,concentrated2}, ``R\'enyi differential privacy'' \cite{renyi}, and ``truncated concentrated differential privacy'' \cite{tcdp}. These definitions are tailored to be able to exactly and easily track the ``privacy cost'' of compositions of the most basic primitive in differential privacy, which is the perturbation of a real valued statistic with Gaussian noise. 

While this direction of privacy relaxation has been quite fruitful, there are still several places one might wish for improvement. First, these notions of differential privacy no longer have hypothesis testing interpretations, but are rather based on studying divergences that satisfy a certain information processing inequality. There are good reasons to prefer definitions based on hypothesis testing. Most immediately, hypothesis testing based definitions provide an easy way to interpret the guarantees of a privacy definition. More fundamentally, a theorem due to Blackwell (see \Cref{thm:blackwell}) provides a formal sense in which a tight understanding of the trade-off between type I and type II errors for the hypothesis testing problem of distinguishing between $M(S)$ and $M(S')$ contains only more information than any divergence between the distributions $M(S)$ and $M(S')$ (so long as the divergence satisfies the information processing inequality).

Second, certain simple and fundamental primitives associated with differential privacy---most notably, \emph{privacy amplification by subsampling} \cite{KLNRS}---either fail to apply to the existing relaxations of differential privacy, or require a substantially complex analysis \cite{wang2018subsampled}. This is especially problematic when analyzing privacy guarantees of stochastic gradient descent---arguably the most popular present-day optimization algorithm---as subsampling is inherent to this algorithm. At best, this difficulty arising from using these relaxations could be overcome by using complex technical machinery. For example, it necessitated Abadi et al.~\cite{deep} to develop the numerical \emph{moments accountant} method to sidestep the issue.

\subsection{Our Contributions}
\label{sec:our-contributions-1}
In this work, we introduce a new relaxation of differential privacy that avoids these issues and has other attractive properties. Rather than giving a ``divergence'' based relaxation of differential privacy, we start fresh from the hypothesis testing interpretation of differential privacy, and obtain a new privacy definition by allowing the \textit{full} trade-off between type I and type II errors in the simple hypothesis testing problem \eqref{eq:hyp_into} to be governed by some function $f$. The functional privacy parameter $f$ is to this new definition as $(\ep, \delta)$ is to the original definition of differential privacy. Notably, this definition that we term $f$-differential privacy ($f$-DP)---which captures $(\ep, \delta)$-DP as a special case---is accompanied by a powerful and elegant toolkit
for reasoning about composition. Here, we highlight some of our contributions:


\smallskip
\noindent {\bf An Algebra for Composition.} We show that our privacy definition is \emph{closed} and \textit{tight} under composition, which means that the trade-off between type I and type II errors that results from the composition of an $f_1$-DP mechanism with an $f_2$-DP mechanism can always be \emph{exactly} described by a certain function $f$. This function can be expressed via $f_1$ and $f_2$ in an algebraic fashion, thereby allowing for losslessly reasoning about composition. In contrast, $(\epsilon,\delta)$-DP or any other privacy definition artificially restricts itself to a small number of parameters. By allowing for a \emph{function} to keep track of the privacy guarantee of the mechanism, our new privacy definition avoids the pitfall of premature summarization\footnote{To quote Susan Holmes \cite{evil}, ``premature
  summarization is the root of all evil in statistics.''} in intermediate steps and, consequently, yields a comprehensive delineation of the overall privacy guarantee. See more details in \Cref{sec:composition-theorems}.

\smallskip

\noindent{\bf A Central Limit Phenomenon.} We define a single-parameter family of $f$-DP that uses the type I and type II error trade-off in distinguishing the standard normal distribution $\N(0,1)$ from $\N(\mu,1)$ for $\mu \ge 0$. This is referred to as Gaussian differential privacy (GDP). By relating to the hypothesis testing interpretation of differential privacy \eqref{eq:hyp_into}, the GDP guarantee can be interpreted as saying that determining whether or not Alice is in the dataset is at least as difficult as telling apart $\N(0,1)$ and $\N(\mu,1)$ based on one draw. Moreover, we show that GDP is a ``canonical'' privacy guarantee in a fundamental sense: for any privacy definition that retains a hypothesis testing interpretation, we prove that the privacy guarantee of
composition with an appropriate scaling converges to GDP in the limit. This central limit theorem type of result is remarkable not only because of its profound theoretical implication, but also for providing a computationally tractable tool for analytically approximating the privacy loss under composition. Figure~\ref{fig:contribution} demonstrates that this tool yields surprisingly accurate approximations to the exact trade-off in testing the hypotheses \eqref{eq:hyp_into} or substantially improves on the existing privacy guarantee in terms of type I and type II errors. See \Cref{sub:gaussian_differential_privacy} and \Cref{sec:composition-theorems} for a thorough discussion.


\begin{figure}[!htp]
\centering
  \includegraphics[width=0.85\linewidth]{./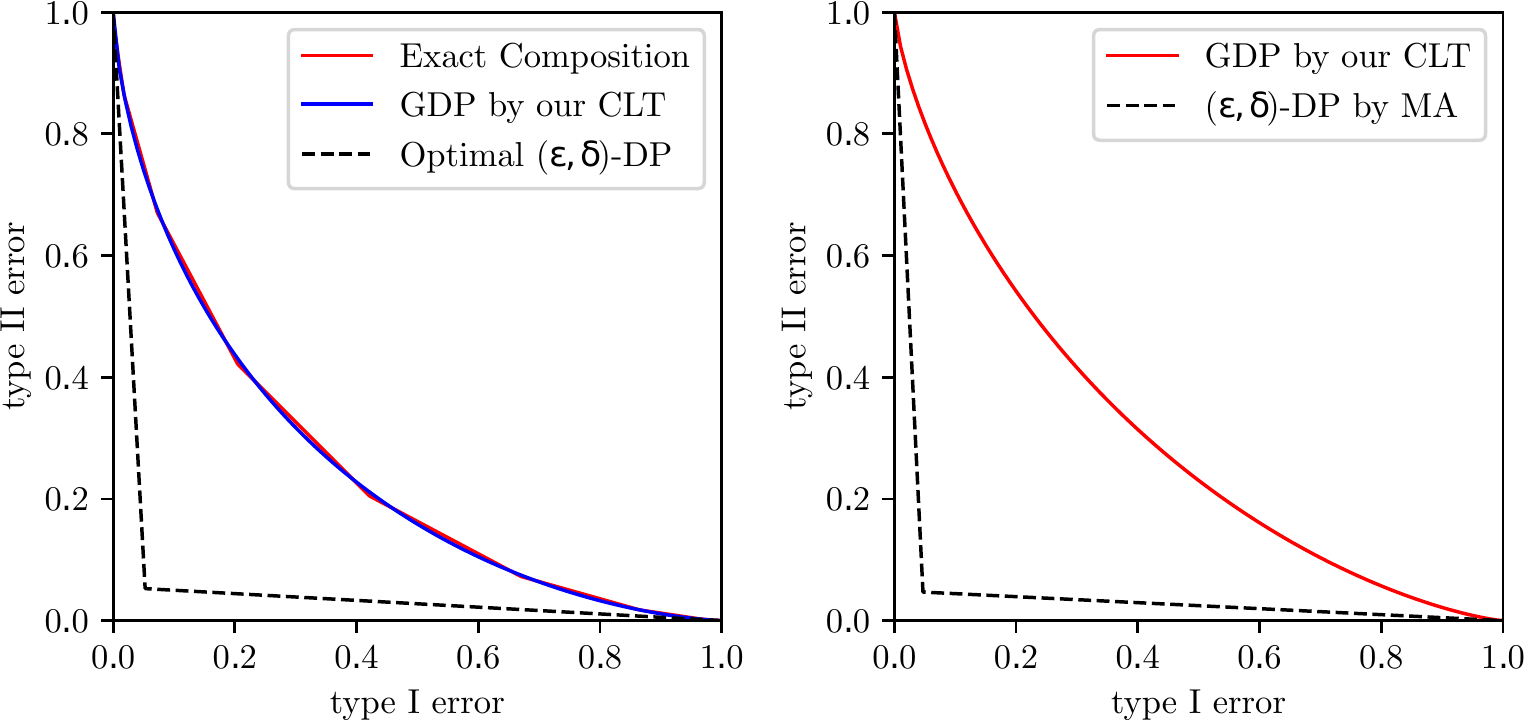}
  \captionof{figure}{Left: Our central limit theorem based approximation (in blue) is very close to the composition of just $10$ mechanisms (in red). The tightest possible approximation via an $(\ep,\delta)$-DP guarantee (in back) is substantially looser. See \Cref{fig:comp} for parameter setup. Right: Privacy analysis of stochastic gradient descent used to train a convolutional neural network on MNIST \cite{lecun-mnisthandwrittendigit-2010}. The $f$-DP framework yields a  privacy guarantee (in red) for this problem that is significantly better than the optimal $(\epsilon,\delta)$-DP guarantee (in black) that is derived from the moments accountant (MA) method \cite{deep}. Put simply, our analysis shows that stochastic gradient descent releases less sensitive information than expected in the literature. See \Cref{sec:application_in_sgd} for more plots and details.}
  \label{fig:contribution}
\end{figure}


\smallskip
\noindent {\bf A Primal-Dual Perspective.} We show a general duality between $f$-DP and infinite collections of $(\epsilon,\delta)$-DP guarantees. This duality is useful in two ways. First, it allows one to analyze an algorithm in the framework of $f$-DP, and then convert back to an $(\epsilon,\delta)$-DP guarantee at the end, if desired. More fundamentally, this duality provides an approach to import techniques developed for $(\epsilon,\delta)$-DP to the framework of $f$-DP. As an important application, we use this duality to show how to reason simply about privacy amplification by subsampling for $f$-DP, by leveraging existing results for $(\epsilon,\delta)$-DP. This is in contrast to divergence based notions of privacy, in which reasoning about amplification by subsampling is difficult.


\smallskip

Taken together, this collection of attractive properties render $f$-DP a mathematically coherent, computationally efficient, and versatile framework for privacy-preserving data analysis. To demonstrate the practical use of this hypothesis testing based framework, we give a substantially sharper analysis of the privacy guarantees of noisy stochastic gradient descent, improving on previous special-purpose analyses that reasoned about divergences rather than directly about hypothesis testing \cite{deep}. This application is presented in \Cref{sec:application_in_sgd}.

\section{$f$-Differential Privacy and Its Basic Properties}
\label{sec:fDP}


In Section~\ref{sub:trade_off_function}, we give a formal definition of $f$-DP. Section~\ref{sub:gaussian_differential_privacy} introduces Gaussian differential privacy, a special case of $f$-DP. In Section~\ref{sec:conn-with-blackw}, we highlight some appealing properties of this new privacy notation from an information-theoretic perspective. Next, Section~\ref{sub:a_primal_dual_connection_with_} offers a profound connection between $f$-DP and $(\epsilon, \delta)$-DP. Finally, we discuss the group privacy properties of $f$-DP.

Before moving on, we first establish several key pieces of notation from the differential privacy literature.
\begin{itemize}
\item  
{\bf Dataset.} A dataset $S$ is a collection of $n$ records, each corresponding to an individual. Formally, we write the dataset as $S = (x_1,\ldots, x_n)$, and an individual $x_i \in X$ for some abstract space $X$. Two datasets $S' = (x'_1,\ldots, x'_n)$ and $S$ are said to be \textit{neighbors} if they differ in exactly one record, that is, there exists an index $j$ such that $x_i = x_i'$ for all $i \ne j$ and $x_j \ne x_j'$.



\item {\bf Mechanism.}  A mechanism $M$ refers to a randomized algorithm that takes as input a dataset $S$ and releases some (randomized) statistics $M(S)$ of the dataset in some abstract space $Y$. For example, a mechanism can release the average salary of individuals in the dataset plus some random noise.


\end{itemize}


\subsection{Trade-off Functions and $f$-DP}
\label{sub:trade_off_function}


All variants of differential privacy informally require that it be hard to \textit{distinguish} any pairs of neighboring datasets based on the information released by a private a mechanism $M$. From an attacker's perspective, it is natural to formalize this notion of  ``indistinguishability'' as a hypothesis testing problem for two neighboring datasets $S$ and $S'$:
\[
H_0: \text{the underlying dataset is } S \quad\text{ versus }\quad H_1: \text{the underlying dataset is } S'.
\]
The output of the mechanism $M$ serves as the basis for performing the hypothesis testing problem. Denote by $P$ and $Q$ the probability distributions of the mechanism applied to the two datasets, namely $M(S)$ and $M(S')$, respectively. The fundamental difficulty in distinguishing the two hypotheses is best delineated by the \textit{optimal} trade-off between the achievable type I and type II errors. More precisely, consider a rejection rule $0 \le \phi \le 1$, with type I and type II error rates defined as\footnote{A rejection rule takes as input the released results of the mechanism. We flip a coin and reject the null hypothesis with probability $\phi$.}
\[
\alpha_\phi = \E_{P}[\phi], \quad \beta_\phi = 1 - \E_{Q}[\phi],
\]
respectively. The two errors satisfy, for example, the constraint is well known to satisfy
\begin{equation}\label{eq:tv_norm}
\alpha_{\phi} + \beta_{\phi} \ge 1 - \TV(P, Q),
\end{equation}
where the total variation distance $\TV(P, Q)$ is the supremum of $|P(A) - Q(A)|$ over all measurable sets $A$. Instead of this rough constraint, we seek to characterize the fine-grained trade-off between the two errors. Explicitly, fixing the type I error at \textit{any} level, we consider the minimal achievable type II error. This motivates the following definition.
\begin{definition}[trade-off function] \label{def:trade-off}
For any two probability distributions $P$ and $Q$ on the same space, define the trade-off function $\F(P, Q): [0, 1] \rightarrow [0, 1]$ as
\[
\F(P, Q)(\alpha) = \inf \left\{ \beta_\phi: \alpha_\phi \leqslant \alpha \right\},
\]
where the infimum is taken over all (measurable) rejection rules.
\end{definition}
The trade-off function serves as a clear-cut boundary of the achievable and unachievable regions of type I and type II errors, rendering itself the \textit{complete} characterization of the fundamental difficulty in testing between the two hypotheses. In particular, the greater this function is, the harder it is to distinguish the two distributions. For completeness, we remark that the minimal $\beta_{\phi}$ can be achieved by the likelihood ratio test---a fundamental result known as the Neyman--Pearson lemma, which we state in the appendix as \Cref{thm:NPlemma}.

A function is called a trade-off function if it is equal to $T(P, Q)$ for some distributions $P$ and $Q$. Below we give a necessary and sufficient condition for $f$ to be a trade-off function. This characterization reveals, for example, that $\max\{f, g\}$ is a trade-off function if both $f$ and $g$ are trade-off functions.
\begin{restatable}{proposition}{tradeoffthm}\label{prop:trade-off}
A function $f: [0, 1] \rightarrow [0, 1]$ is a trade-off function if and only if $f$ is convex, continuous\footnote{Convexity itself implies continuity in $(0,1)$ for $f$. In addition, $f(\alpha) \ge 0$ and $f(\alpha) \le 1-\alpha$ implies continuity at 1. Hence, the continuity condition only matters at $x = 0$.}, non-increasing, and $f(x) \leqslant 1-x$ for $x \in [0,1]$.
\end{restatable}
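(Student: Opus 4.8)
The plan is to prove the two implications separately; the forward direction is routine, while the converse---that every such $f$ is realized as some $\F(P,Q)$---carries the content. For necessity, suppose $f=\F(P,Q)$. Monotonicity is immediate, since enlarging the feasible set $\{\phi:\alpha_\phi\le\alpha\}$ can only decrease the infimum defining $f(\alpha)$. The bound $f(\alpha)\le 1-\alpha$ follows by testing with the constant rule $\phi\equiv\alpha$, for which $\alpha_\phi=\alpha$ and $\beta_\phi=1-\alpha$. For convexity, given levels $\alpha_0,\alpha_1$ and $\lambda\in[0,1]$, I would pick rules $\phi_0,\phi_1$ that are $\varepsilon$-optimal at those two levels and form the mixture $\phi_\lambda=\lambda\phi_0+(1-\lambda)\phi_1$; since $\phi\mapsto\alpha_\phi$ and $\phi\mapsto\beta_\phi$ are both affine in $\phi$, one gets $\alpha_{\phi_\lambda}\le\lambda\alpha_0+(1-\lambda)\alpha_1$ and $\beta_{\phi_\lambda}=\lambda\beta_{\phi_0}+(1-\lambda)\beta_{\phi_1}\le\lambda f(\alpha_0)+(1-\lambda)f(\alpha_1)+\varepsilon$, and $\varepsilon\to0$ gives convexity. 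Convexity yields continuity on $(0,1)$; moreover $0\le f(1)\le 1-1=0$ forces $f(1)=0$, so if $\lim_{\alpha\to1^-}f(\alpha)=c>0$ then monotonicity gives $f\ge c$ near $1$, contradicting $f(\alpha)\le1-\alpha\to0$. Hence $f$ is continuous at $1$ as well.

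The one genuinely delicate point (as the footnote observes, continuity is automatic except at $0$) is right-continuity of $\F(P,Q)$ at $0$. I would argue by weak-$*$ compactness: view rejection rules as elements of the unit ball of $L^\infty(P+Q)=(L^1(P+Q))^*$, which is weak-$*$ compact by Banach--Alaoglu; the set $\{0\le\phi\le1\}$ is convex and weak-$*$ closed, and $\phi\mapsto\alpha_\phi$, $\phi\mapsto\beta_\phi$ are weak-$*$ continuous because the Radon--Nikodym derivatives of $P$ and $Q$ with respect to $P+Q$ lie in $L^1(P+Q)$. Choosing $\alpha_n\downarrow0$ and rules $\phi_n$ with $\alpha_{\phi_n}\le\alpha_n$ and $\beta_{\phi_n}\to\lim_{\alpha\to0^+}f(\alpha)$, any weak-$*$ cluster point $\phi^\star$ satisfies $\alpha_{\phi^\star}=0$ and $\beta_{\phi^\star}=\lim_{\alpha\to0^+}f(\alpha)$, so $f(0)\le\lim_{\alpha\to0^+}f(\alpha)$; the reverse inequality is monotonicity. (This can also be derived from the Neyman--Pearson test by monotone convergence as the likelihood-ratio cutoff tends to its essential supremum.)

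For sufficiency, given $f$ convex, continuous, non-increasing with $f(x)\le1-x$, I would write down an explicit pair on $[0,1]$. Take $P=\mathrm{Unif}[0,1]$ and set $F:=1-f$; then $F$ is continuous, non-decreasing, concave, with $F(1)=1$ (since $f(1)=0$) and $F(0)=1-f(0)\in[0,1]$, and being continuous and concave it is absolutely continuous, so $F(x)=F(0)+\int_0^x F'(t)\,dt$ with $F'$ (taken as the right derivative) non-negative, non-increasing and integrable. Let $Q$ be the probability measure on $[0,1]$ that places an atom of mass $F(0)=1-f(0)$ at the point $0$ and has density $F'$ on $(0,1]$ with respect to Lebesgue measure, so $Q([0,x])=F(x)$; when $f(0)=1$ no atom is needed, and one could equivalently place the mass $1-f(0)$ on a point outside the support of $P$ to avoid atoms altogether. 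To check $\F(P,Q)=f$: for any rejection rule $\phi$ with $\alpha_\phi=\int_0^1\phi(t)\,dt\le\alpha$, the atom sits on a $P$-null point and $F'$ is non-increasing, so the rearrangement (``bathtub'') inequality gives $\E_Q[\phi]=F(0)\phi(0)+\int_0^1\phi(t)F'(t)\,dt\le F(0)+\int_0^\alpha F'(t)\,dt=F(\alpha)$, hence $\beta_\phi\ge1-F(\alpha)=f(\alpha)$; conversely the rule $\phi^\star=\mathbf 1_{[0,\alpha)}$ has $\alpha_{\phi^\star}=\alpha$ and $\E_Q[\phi^\star]=F(0)+\int_0^\alpha F'(t)\,dt=F(\alpha)$, i.e.\ $\beta_{\phi^\star}=f(\alpha)$. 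Therefore $\F(P,Q)(\alpha)=f(\alpha)$ for every $\alpha\in[0,1]$.

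The remark that $\max\{f,g\}$ is a trade-off function whenever $f,g$ are then follows at once from the characterization, since the pointwise maximum of two convex, continuous, non-increasing functions bounded above by $1-x$ is again of this form. I expect the sufficiency construction to be the main obstacle, and within it the degenerate cases: the atom of $Q$ when $f(0)<1$, kinks of $f$ (where $F'$ jumps and the Neyman--Pearson test randomizes over a plateau of $F'$), and $f'(0^+)=-\infty$ (where the density $F'$ is unbounded but still integrable). The rearrangement step used for the lower bound must be stated with care in those cases, or else replaced by an appeal to the Neyman--Pearson lemma (\Cref{thm:NPlemma}); on the necessity side, the only subtlety is the weak-$*$ compactness argument for right-continuity at $0$.
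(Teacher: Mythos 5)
Your proof is correct, and its skeleton matches the paper's: necessity via mixtures of rejection rules, sufficiency via an explicit pair with $P=\mathrm{Unif}[0,1]$ and $Q$ built from $f$ (your construction is the mirror image of the paper's, which puts the density $-f'(1-x)$ on $[0,1)$ and the atom of mass $1-f(0)$ at $1$ rather than at $0$). You differ in two technical spots, both to your advantage in robustness. For continuity at $0$, the paper sends $\alpha_n\downarrow 0$ through the Neyman--Pearson tests, argues these are pointwise monotone up to null sets, and passes to the limit by dominated convergence; your Banach--Alaoglu argument reaches the same conclusion without needing the monotone structure of the optimal tests. For the converse, the paper identifies the likelihood ratio and appeals to Neyman--Pearson to read off the optimal errors, which leaves the kinks of $f$ (plateaus of the likelihood ratio, where the NP test randomizes) and the atom implicit; your bathtub-inequality bound $\int_0^1\phi F'\leqslant\int_0^\alpha F'$ proves optimality of $\mathbf{1}_{[0,\alpha)}$ directly for every feasible $\phi$ and handles those degenerate cases without further comment. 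The one point worth double-checking in your write-up is absolute continuity of $F=1-f$ up to the endpoints when $f'(0^+)=-\infty$; this follows because $-f'$ is non-negative with $\int_0^1(-f')=f(0)-f(1)<\infty$ by monotonicity and continuity, exactly as you indicate.
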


Now, we propose a new generalization of differential privacy built on top of trade-off functions. Below, we write $g \ge f$ for two functions defined on $[0, 1]$ if $g(x) \ge f(x)$ for all $0 \le x \le 1$, and we abuse notation by identifying $M(S)$ and $M(S')$ with their corresponding probability distributions. Note that if $T(P,Q) \ge T(\widetilde P, \widetilde Q)$, then in a very strong sense, $P$ and $Q$ are harder to distinguish than $\widetilde P$ and $\widetilde Q$ at \textit{any} level of type I error.

\begin{definition}[$f$-differential privacy] \label{def:kstable}
Let $f$ be a trade-off function. A mechanism $M$ is said to be $f$-differentially private if
\[
T\big(M(S), M(S')\big) \ge f
\]
for all neighboring datasets $S$ and $S'$.
\end{definition}

A graphical illustration of this definition is shown in Figure~\ref{fig:intro_def}. Letting $P$ and $Q$ be the distributions such that $f = \F(P, Q)$, this privacy definition amounts to saying that a mechanism is $f$-DP if distinguishing any two neighboring datasets based on the released information is at least as difficult as distinguishing $P$ and $Q$ based on a single draw. In contrast to existing definitions of differential privacy, our new definition is parameterized by a function, as opposed to several real valued parameters (e.g.~$\epsilon$ and $\delta$). This functional perspective offers a complete characterization of ``privacy'', thereby avoiding the pitfall of summarizing statistical information too early. This fact is crucial to the development of a composition theorem for $f$-DP in \Cref{sec:composition-theorems}. Although this completeness comes at the cost of increased complexity, as we will see in Section~\ref{sub:gaussian_differential_privacy}, a simple family of trade-off functions can often closely capture privacy loss in many scenarios.

\begin{figure}[!htp]
\centering
\includegraphics[width=.6\linewidth]{./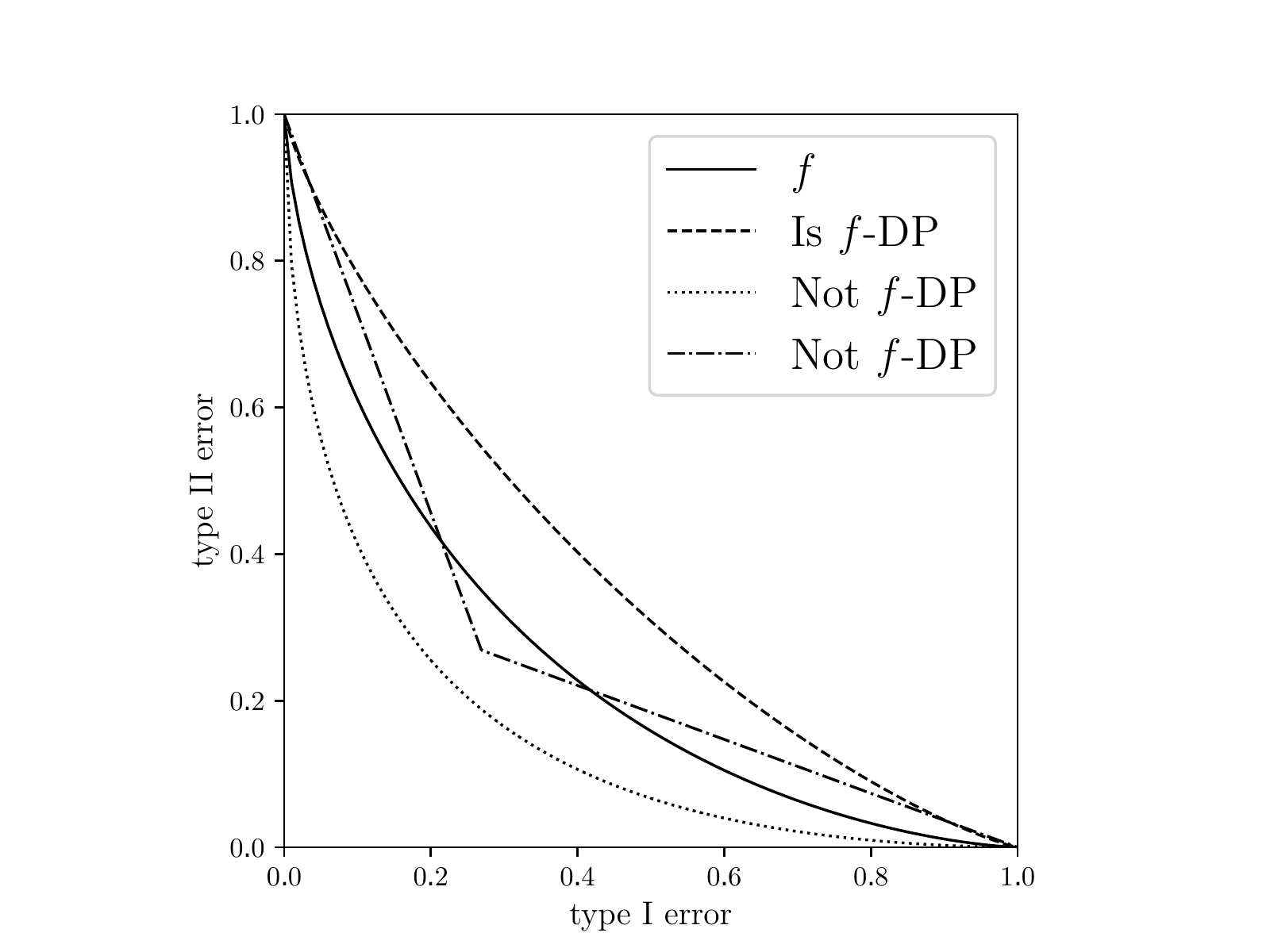}
\caption{Three different examples of $T\big(M(S),M(S')\big)$. Only the dashed line corresponds to a trade-off function satisfying $f$-DP.}
\label{fig:intro_def}
\end{figure}

Naturally, the definition of $f$-DP is symmetric in the same sense as the neighboring relationship, which by definition is symmetric. Observe that this privacy notion also requires
\[
T\big(M(S'), M(S)\big) \ge f
\]
for any neighboring pair $S, S'$. Therefore, it is desirable to restrict our attention to ``symmetric'' trade-off functions. Proposition~\ref{prop:symmetry} shows that this restriction does not lead to any loss of generality.
\begin{restatable}{proposition}{symmrep}\label{prop:symmetry}
Let a mechanism $M$ be $f$-DP. Then, $M$ is $\Sf$-DP with $\Sf = \max\{f, f^{-1}\}$, where the inverse function is defined as\footnote{
\Cref{eq:inver_f} is the standard definition of the left-continuous inverse of a decreasing function. When $f$ is strictly decreasing and $f(0)=1$ and hence bijective as a mapping, \eqref{eq:inver_f} corresponds to the inverse function in the ordinary sense, i.e. $f(f^{-1}(x)) = f^{-1}(f(x)) = x$. However, this is not true in general.}
\begin{equation}\label{eq:inver_f}
f^{-1}(\alpha):=\inf\{t \in [0,1]:f(t) \leqslant \alpha\}
\end{equation}
for $\alpha \in[0,1]$.
\end{restatable}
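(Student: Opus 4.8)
The plan is to exploit the symmetry of the neighboring relation together with a ``reflection across the diagonal'' identity relating $T(P,Q)$ and $T(Q,P)$. First, observe that since ``$S$ and $S'$ are neighbors'' is a symmetric relation, $f$-DP of $M$ yields \emph{two} inequalities for every neighboring pair: directly $T\big(M(S),M(S')\big)\ge f$, and, applying \Cref{def:kstable} to the pair $(S',S)$, also $T\big(M(S'),M(S)\big)\ge f$. Since $\Sf=\max\{f,f^{-1}\}$ and the bound $T\big(M(S),M(S')\big)\ge f$ is already in hand, it suffices to prove $T\big(M(S),M(S')\big)\ge f^{-1}$; combining the two then gives $T\big(M(S),M(S')\big)\ge\Sf$ for every neighboring pair, i.e.\ $M$ is $\Sf$-DP.

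The key step is the following fact about \emph{arbitrary} distributions $P,Q$: $T(P,Q)\ge T(Q,P)^{-1}$, where $^{-1}$ is the generalized inverse \eqref{eq:inver_f}. (Equality in fact holds, but the stated inequality is all we need.) I would prove it using the involution $\phi\mapsto 1-\phi$ on rejection rules, which interchanges type I and type II errors once the null and alternative are swapped: if $\phi$ is a test for $H_0:P$ vs.\ $H_1:Q$, then $1-\phi$, viewed as a test for $H_0:Q$ vs.\ $H_1:P$, has type I error $\E_Q[1-\phi]=1-\E_Q[\phi]=\beta_\phi$ and type II error $1-\E_P[1-\phi]=\E_P[\phi]=\alpha_\phi$. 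Now suppose toward a contradiction that $T(P,Q)(\alpha)<T(Q,P)^{-1}(\alpha)$ for some $\alpha$, and pick a rejection rule $\phi$ with $\alpha_\phi\le\alpha$ and $\beta_\phi<T(Q,P)^{-1}(\alpha)=\inf\{t:T(Q,P)(t)\le\alpha\}$. Then $\beta_\phi$ lies strictly below every element of $\{t:T(Q,P)(t)\le\alpha\}$, hence $T(Q,P)(\beta_\phi)>\alpha$; but the test $1-\phi$ exhibited above shows $T(Q,P)(\beta_\phi)\le\alpha_\phi\le\alpha$, a contradiction. One must check the generalized inverse is finite on $[0,1]$ for this to be meaningful: since $f(x)\le 1-x$ and $f\ge 0$ force $f(1)=0$, the point $1$ always lies in $\{t:f(t)\le\alpha\}$ for $\alpha\ge 0$, and likewise for $T(Q,P)$.

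It remains to assemble the pieces. The generalized inverse is monotone: if $g\ge f$ then $\{t:g(t)\le\alpha\}\subseteq\{t:f(t)\le\alpha\}$, so $g^{-1}(\alpha)\ge f^{-1}(\alpha)$, i.e.\ $g^{-1}\ge f^{-1}$. Applying this with $g=T\big(M(S'),M(S)\big)\ge f$ and chaining with the key step,
\[
T\big(M(S),M(S')\big)\ \ge\ T\big(M(S'),M(S)\big)^{-1}\ \ge\ f^{-1},
\]
which, combined with $T\big(M(S),M(S')\big)\ge f$, gives $T\big(M(S),M(S')\big)\ge\max\{f,f^{-1}\}=\Sf$. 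One should also verify $\Sf$ is a bona fide trade-off function so the statement is well posed: $f^{-1}$ is the reflection of the graph of $f$ across the line $\beta=\alpha$, hence again convex, non-increasing, and $\le 1-x$ (that line being invariant under the reflection), so $f^{-1}$ is a trade-off function by \Cref{prop:trade-off}, and then so is $\max\{f,f^{-1}\}$ by the remark following that proposition.

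I expect the main obstacle to be the bookkeeping around the generalized inverse \eqref{eq:inver_f}: getting the direction of every inequality right, making the ``reflection'' picture rigorous through the infimum-based formula, and handling the boundary values $\alpha\in\{0,1\}$ and the degenerate case where the relevant infimum is $0$. Upgrading the one-sided bound $T(P,Q)\ge T(Q,P)^{-1}$ to the equality $T(Q,P)=T(P,Q)^{-1}$ would additionally require invoking the Neyman--Pearson lemma (\Cref{thm:NPlemma}) for attainment of the optimal type II error, but this is not needed for the proposition as stated.
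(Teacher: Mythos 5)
Your proof is correct and follows essentially the same route as the paper's: exploit the symmetry of the neighboring relation to get both $T(M(S),M(S'))\ge f$ and $T(M(S'),M(S))\ge f$, relate $T(P,Q)$ and $T(Q,P)$ through the generalized inverse, and use monotonicity of the inverse plus the fact that a max of trade-off functions is a trade-off function. The only cosmetic difference is that the paper establishes the identity $T(Q,P)=T(P,Q)^{-1}$ via the epigraph characterization, whereas you derive the (sufficient) one-sided inequality directly from the test-complementation map $\phi\mapsto 1-\phi$ — the same underlying idea.
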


Writing $f = \F(P,Q)$, we can express the inverse as $f^{-1} = \F(Q, P)$, which therefore is also a trade-off function. As a consequence of this, $\Sf$ continues to be a trade-off function by making use of \Cref{prop:trade-off} and, moreover, is \textit{symmetric} in the sense that 
\[
\Sf = (\Sf)^{-1}.
\] 
Importantly, this symmetrization gives a tighter bound in the privacy definition since $\Sf \geqslant f$. In the remainder of the paper, therefore, trade-off functions will always be assumed to be symmetric unless otherwise specified. We prove \Cref{{prop:symmetry}} in \Cref{app:fDP}.



We conclude this subsection by showing that $f$-DP is a generalization of $(\ep,\delta)$-DP. This foreshadows a deeper connection between $f$-DP and $(\epsilon, \delta)$-DP that will be discussed in Section~\ref{sub:a_primal_dual_connection_with_}. Denote
\begin{equation}\label{eq:fed}
f_{\ep, \delta}(\alpha) = \max\left\{ 0,1 - \delta - \e^\ep \alpha, \e^{-\ep}(1-\delta-\alpha) \right\}
\end{equation}
for $0 \le \alpha \le 1$, which is a trade-off function. \Cref{fig:DPvsGDP} shows the graph of this function and its evident symmetry. The following result is adapted from \cite{wasserman_zhou}.
\begin{proposition}[\cite{wasserman_zhou}] \label{thm:privacy_testing}
A mechanism $M$ is $(\ep, \delta)$-DP if and only if $M$ is $f_{\ep, \delta}$-DP.
\end{proposition}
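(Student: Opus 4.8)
The plan is to fix a pair of neighboring datasets $S,S'$, abbreviate $P=M(S)$ and $Q=M(S')$, and reformulate both conditions as statements about the single pair $(P,Q)$; since both $(\epsilon,\delta)$-DP and $f_{\epsilon,\delta}$-DP quantify over all neighboring pairs, it is enough to establish the equivalence pairwise. The bridge between the two formulations will be the elementary fact that the defining inequality of $(\epsilon,\delta)$-DP extends from indicators of events to arbitrary rejection rules $0\le\phi\le1$: for each $t$ the set $\{\phi>t\}$ is a measurable event, so integrating $P(\phi>t)\le \e^\epsilon Q(\phi>t)+\delta$ over $t\in[0,1]$ and using $\E_P[\phi]=\int_0^1 P(\phi>t)\,\mathrm{d}t$ gives $\E_P[\phi]\le \e^\epsilon\E_Q[\phi]+\delta$, and symmetrically $\E_Q[\phi]\le \e^\epsilon\E_P[\phi]+\delta$ because $(S',S)$ is again a neighboring pair.

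For the ``only if'' direction, assuming $M$ is $(\epsilon,\delta)$-DP, I would take any rejection rule $\phi$ with $\alpha_\phi\le\alpha$ and bound $\beta_\phi$ from below in three ways. Applying the extended inequality to $\phi$ with $P$ and $Q$ interchanged gives $1-\beta_\phi=\E_Q[\phi]\le \e^\epsilon\alpha_\phi+\delta\le \e^\epsilon\alpha+\delta$, hence $\beta_\phi\ge 1-\delta-\e^\epsilon\alpha$; applying it to $1-\phi$ gives $1-\alpha_\phi=\E_P[1-\phi]\le \e^\epsilon\E_Q[1-\phi]+\delta=\e^\epsilon\beta_\phi+\delta$, hence $\beta_\phi\ge \e^{-\epsilon}(1-\alpha_\phi-\delta)\ge \e^{-\epsilon}(1-\delta-\alpha)$; and trivially $\beta_\phi\ge0$. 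These three bounds say exactly $\beta_\phi\ge f_{\epsilon,\delta}(\alpha)$ for every feasible $\phi$, so taking the infimum yields $T(P,Q)(\alpha)\ge f_{\epsilon,\delta}(\alpha)$ for all $\alpha$, i.e.\ $M$ is $f_{\epsilon,\delta}$-DP.

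For the converse, assuming $M$ is $f_{\epsilon,\delta}$-DP, I would fix a measurable event $E$ and test with the rejection rule $\phi=\mathbf{1}_{E^{c}}$, for which $\alpha_\phi=1-P(E)$ and $\beta_\phi=Q(E)$. Feasibility of $\phi$ at type I level $\alpha_\phi$ together with the hypothesis gives $Q(E)=\beta_\phi\ge T(P,Q)(\alpha_\phi)\ge f_{\epsilon,\delta}(\alpha_\phi)\ge \e^{-\epsilon}\bigl(1-\delta-\alpha_\phi\bigr)=\e^{-\epsilon}\bigl(P(E)-\delta\bigr)$, which rearranges to $P(E)\le \e^\epsilon Q(E)+\delta$; since $E$ and the neighboring pair are arbitrary, $M$ is $(\epsilon,\delta)$-DP.

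I expect no genuine obstacle here: the only points requiring care are the layer-cake extension of the $(\epsilon,\delta)$-DP inequality to randomized rejection rules (needed because $T(P,Q)$ is an infimum over all such rules, not only indicator tests) and the bookkeeping of which of the three pieces of the maximum defining $f_{\epsilon,\delta}$ is triggered by which test and which orientation of the DP inequality. A slicker but less self-contained alternative would be to exhibit explicit distributions $P_0,Q_0$ with $T(P_0,Q_0)=f_{\epsilon,\delta}$ and then appeal to a Blackwell-type comparison, but the direct computation above is preferable.
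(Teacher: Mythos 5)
Your proof is correct. The paper does not actually supply its own proof of this proposition (it is cited from Wasserman--Zhou), and your argument is the standard one: the layer-cake extension of the $(\epsilon,\delta)$ inequality to randomized rejection rules handles the only real subtlety (that $T(P,Q)$ is an infimum over all $\phi$ with $0\le\phi\le1$, not just indicator tests), and the converse via $\phi=\mathbf{1}_{E^c}$ correctly recovers $\P[M(S)\in E]\le \e^\epsilon\P[M(S')\in E]+\delta$ from the third branch of the maximum defining $f_{\epsilon,\delta}$.
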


\begin{figure}[!htp]
\centering
  \includegraphics[width=.75\linewidth]
{./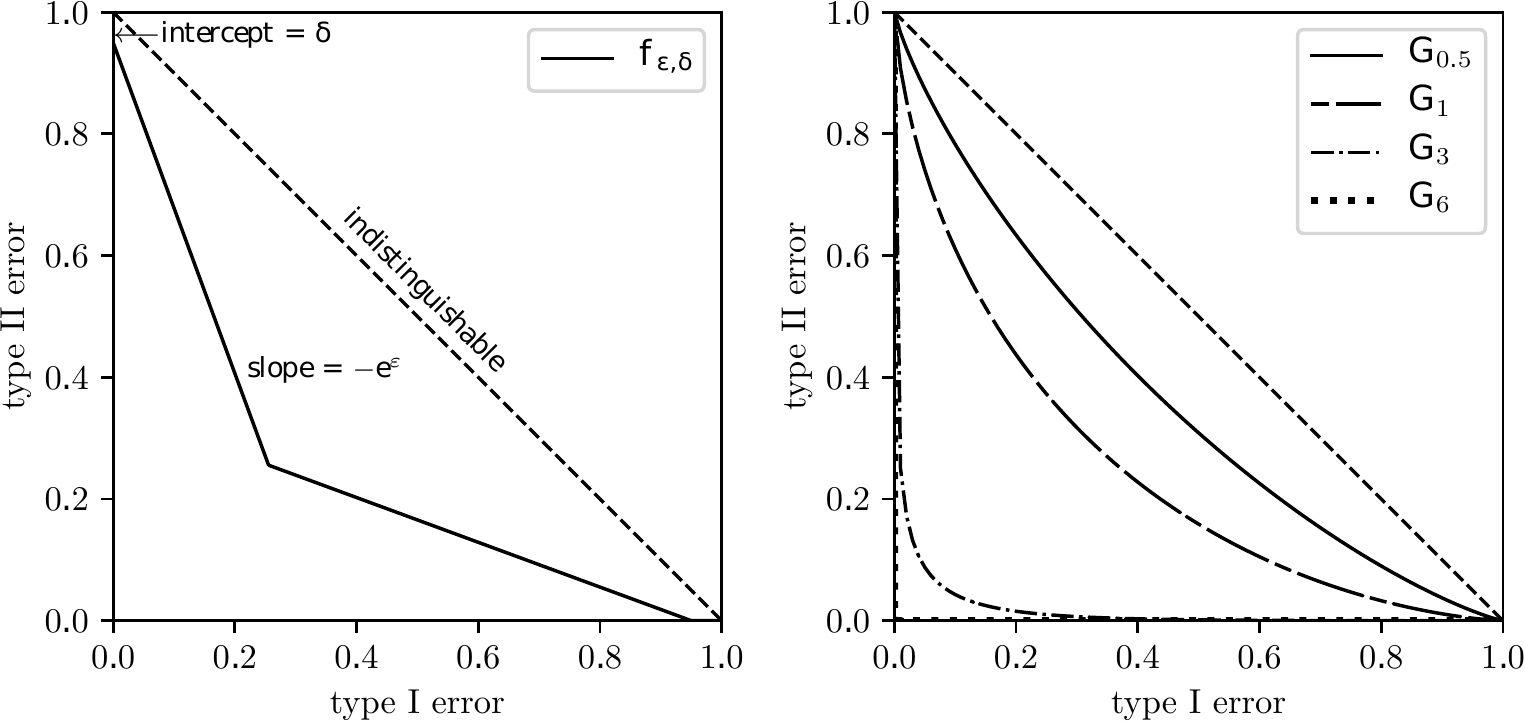}
  \captionof{figure}{Left: $f_{\ep,\delta}$ is a piecewise linear function and is symmetric with respect to the line $y = x$. It has (nontrivial) slopes $-\e^{\pm\ep}$ and intercepts $1-\delta$. Right: Trade-off functions of  unit-variance Gaussian distributions with different means. The case of $\mu=0.5$ is reasonably private, $\mu=1$ is borderline private, and $\mu=3$ is basically non-private: an adversary can control type I and type II errors simultaneously at only 0.07. In the case of $\mu=6$ (almost coincides with the axes), the two errors both can be as small as 0.001.}
  \label{fig:DPvsGDP}
\end{figure}

\subsection{Gaussian Differential Privacy} 
\label{sub:gaussian_differential_privacy}


This subsection introduces a parametric family of $f$-DP guarantees, where $f$ is the trade-off function of two normal distributions. We refer to this specialization as Gaussian differential privacy (GDP). GDP enjoys many desirable properties that lead to its central role in this paper. Among others, we can now precisely define the trade-off function with a single parameter. To define this notion, let
\[
G_\mu:=\F\big(\N(0,1), \N(\mu,1)\big)
\]
for $\mu \ge 0$. An explicit expression for the trade-off function $G_\mu$ reads
\begin{equation}\label{eq:Gmu}
	G_\mu(\alpha) = \Phi\big(\Phi^{-1}(1-\alpha)-\mu\big),
\end{equation}
where $\Phi$ denotes the standard normal CDF. For completeness, we provide a proof of \eqref{eq:Gmu} in \Cref{app:fDP}. This trade-off function is decreasing in $\mu$ in the sense that $G_\mu \leqslant G_{\mu'}$ if $\mu \ge \mu'$. We now define GDP:
\begin{definition} \label{def:GDP}
A mechanism $M$ is said to satisfy $\mu$-Gaussian Differential Privacy ($\mu$-GDP) if it is $G_\mu$-DP. That is,
\[
T\big(M(S),M(S')\big) \ge G_\mu
\]
for all neighboring datasets $S$ and $S'$.
\end{definition}
GDP has several attractive properties. First, this privacy definition is fully described by the single mean parameter of a unit-variance Gaussian distribution, which makes it easy to describe and interpret the privacy guarantees. For instance, one can see from the right panel of \Cref{fig:DPvsGDP} that $\mu \le 0.5$ guarantees a reasonable amount of privacy, whereas if $\mu \geqslant 6$, almost nothing is being promised. Second, loosely speaking, GDP occupies a role among all hypothesis testing based notions of privacy that is similar to the role that the Gaussian distribution has among general probability distributions. We formalize this important point by proving central limit theorems for $f$-DP in \Cref{sec:composition-theorems}, which, roughly speaking, says that $f$-DP converges to GDP under composition in the limit. Lastly, as shown in the remainder of this subsection, GDP \emph{precisely} characterizes the Gaussian mechanism, one of the most fundamental building blocks of differential privacy.

Consider the problem of privately releasing a univariate statistic $\theta(S)$ of the dataset $S$. Define the sensitivity of $\theta$ as
\[
\mathrm{sens}(\theta) = \sup_{S, S'} | \theta(S) - \theta(S') |,
\]
where the supremum is over all neighboring datasets. The Gaussian mechanism adds Gaussian noise to the statistic $\theta$ in order to obscure whether $\theta$ is computed on $S$ or $S'$. The following result shows that the Gaussian mechanism with noise properly scaled to the sensitivity of the statistic satisfies GDP.
\begin{theorem}\label{thm:g_mech}
Define the Gaussian mechanism that operates on a statistic $\theta$ as $M(S) = \theta(S) + \xi$, where $\xi \sim \N(0, \mathrm{sens}(\theta)^2/\mu^2)$. Then, $M$ is $\mu$-GDP.
\end{theorem}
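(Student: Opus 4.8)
The plan is to fix an arbitrary pair of neighboring datasets $S, S'$ and show directly that $T\big(M(S), M(S')\big) \ge G_\mu$; since this is exactly what $\mu$-GDP (i.e.\ $G_\mu$-DP) demands, the theorem follows. Writing $\sigma = \mathrm{sens}(\theta)/\mu$, the distribution of $M(S)$ is $\N(\theta(S), \sigma^2)$ and that of $M(S')$ is $\N(\theta(S'), \sigma^2)$, so the two hypotheses differ only by a location shift of size $\Delta := |\theta(S) - \theta(S')|$ at common scale $\sigma$. The whole problem thus reduces to computing (or lower bounding) the trade-off function of two equal-variance, mean-shifted Gaussians.

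Next I would compute $T\big(\N(\theta(S),\sigma^2), \N(\theta(S'),\sigma^2)\big)$. The cleanest route is to observe that applying the bijection $y \mapsto \big(y - \theta(S)\big)/\sigma$ to the output transports this pair of distributions to $\big(\N(0,1), \N(\Delta/\sigma, 1)\big)$; since a bijective (indeed any invertible, measurable) relabeling of the sample space neither creates nor destroys rejection rules, the trade-off function is unchanged, giving $T\big(M(S), M(S')\big) = T\big(\N(0,1), \N(\Delta/\sigma,1)\big) = G_{\Delta/\sigma}$, where the last equality and the explicit formula $G_\mu(\alpha) = \Phi\big(\Phi^{-1}(1-\alpha)-\mu\big)$ are \eqref{eq:Gmu}. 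If one prefers to avoid invoking an invariance principle, the same identity comes out of the Neyman--Pearson lemma (\Cref{thm:NPlemma}): the likelihood ratio between the two Gaussians is monotone in $y$, so the most powerful level-$\alpha$ test rejects when $M$ exceeds a threshold, and a one-line computation with the Gaussian CDF produces precisely $\beta = \Phi\big(\Phi^{-1}(1-\alpha) - \Delta/\sigma\big)$.

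Finally I would conclude by monotonicity. By the definition of sensitivity, $\Delta = |\theta(S) - \theta(S')| \le \mathrm{sens}(\theta)$, hence $\Delta/\sigma \le \mathrm{sens}(\theta)/\sigma = \mu$. Since $\mu \mapsto G_\mu$ is non-increasing (as noted right after \eqref{eq:Gmu}, $G_\mu \le G_{\mu'}$ whenever $\mu \ge \mu'$), we get $T\big(M(S), M(S')\big) = G_{\Delta/\sigma} \ge G_\mu$. As $S, S'$ were arbitrary neighbors, $M$ is $G_\mu$-DP, i.e.\ $\mu$-GDP.

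The only step requiring care is the reduction in the second paragraph: one must be sure that re-centering and re-scaling the mechanism's output by a fixed deterministic bijection really leaves the optimal type I/type II trade-off invariant. I expect this to be the main (mild) obstacle, and I would discharge it either by the explicit Neyman--Pearson computation above (which sidesteps the need for any invariance lemma) or by appealing to the fact that post-processing by an invertible map preserves trade-off functions. Everything else—identifying the output distributions as shifted Gaussians, the sensitivity bound $\Delta \le \mathrm{sens}(\theta)$, and the monotonicity of $G_\mu$—is routine and already available from the preceding material.
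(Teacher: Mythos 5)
Your proposal is correct and follows essentially the same route as the paper's proof: identify $M(S)$ and $M(S')$ as equal-variance Gaussians, observe that their trade-off function is $G_{|\theta(S)-\theta(S')|/\sigma}$, and conclude via the sensitivity bound and the monotonicity of $\mu \mapsto G_\mu$. Your extra care about why the affine rescaling preserves the trade-off function (or, alternatively, the direct Neyman--Pearson computation) is exactly the content of \eqref{eq:Gmu} and \Cref{prop:logconcave}, which the paper's one-line identification of the trade-off function implicitly relies on.
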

\begin{proof}[Proof of Theorem~\ref{thm:g_mech}]
Recognizing that $M(S), M(S')$ are normally distributed with means $\theta(S), \theta(S')$, respectively, and common variance $\sigma^2 = \mathrm{sens}(\theta)^2/\mu^2$, we get
\begin{align*}
\F\big(M(S),M(S')\big) = \F\big( \N(\theta(S), \sigma^2), \N(\theta(S'), \sigma^2)\big) = G_{|\theta(S)-\theta(S')|/\sigma}.
\end{align*}
By the definition of sensitivity, $|\theta(S)-\theta(S')|/\sigma \le \mathrm{sens}(\theta)/\sigma = \mu$. Therefore, we get
\[
\F\big(M(S),M(S')\big) = G_{|\theta(S)-\theta(S')|/\sigma} \geqslant G_\mu.
\]
This completes the proof.
\end{proof}


As implied by the proof above, GDP offers the tightest possible privacy bound of the Gaussian mechanism. More precisely, the Gaussian mechanism in \Cref{thm:g_mech} satisfies
\begin{equation}\label{eq:f_inf}
G_{\mu}(\alpha) = \inf_{\text{neighboring }S, S'} \, \F \big(M(S),M(S') \big)(\alpha),
\end{equation}
where the infimum is (asymptotically) achieved at the two neighboring datasets such that $|\theta(S) - \theta(S')| = \mathrm{sens}(\theta)$ \textit{irrespective} of the type I error $\alpha$. As such, the characterization by GDP is precise in the pointwise sense. In contrast, the right-hand side of \eqref{eq:f_inf} in general is not necessarily a convex function of $\alpha$ and, in such case, is not a trade-off function according to \Cref{prop:trade-off}. This nice property of Gaussian mechanism is related to the log-concavity of Gaussian distributions. See \Cref{prop:logconcave} for a detailed treatment of log-concave distributions.

\subsection{Post-Processing and the Informativeness of $f$-DP}
\label{sec:conn-with-blackw}

Intuitively, a data analyst cannot make a statistical analysis more disclosive only by processing the output of the mechanism $M$. This is called the post-processing property, a natural requirement that any notion of privacy, including our definition of $f$-DP, should satisfy.



To formalize this point for $f$-DP, denote by $\Proc: Y \to Z$ a (randomized) algorithm that maps the input $M(S) \in Y$ to some space $Z$, yielding a new mechanism that we denote by $\Proc \circ M$. The following result confirms the post-processing property of $f$-DP.
\begin{proposition}\label{prop:post_process}
If a mechanism $M$ is $f$-DP, then its post-processing $\mathrm{Proc}\circ M$ is also $f$-DP.
\end{proposition}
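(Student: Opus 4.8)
The plan is to reduce the statement to the single inequality that lies at the heart of every post-processing argument: for any randomized post-processing map $\Proc\colon Y\to Z$ and any pair of distributions $P,Q$ on $Y$, one has $T(\Proc\circ P,\Proc\circ Q)\ge T(P,Q)$, i.e.\ passing both distributions through the same channel can only make them harder to tell apart, hence can only raise the trade-off function. Granting this, the proposition is immediate: for neighboring datasets $S,S'$, writing $P=M(S)$ and $Q=M(S')$ and using that $(\Proc\circ M)(S)=\Proc\circ M(S)$, we get $T\big((\Proc\circ M)(S),(\Proc\circ M)(S')\big)\ge T\big(M(S),M(S')\big)\ge f$, since $M$ is $f$-DP; as this holds for every neighboring pair, $\Proc\circ M$ is $f$-DP.

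To prove the inequality, fix a type I level $\alpha\in[0,1]$ and an arbitrary rejection rule $\psi\colon Z\to[0,1]$ for the problem of testing $\Proc\circ P$ against $\Proc\circ Q$ with $\alpha_\psi\le\alpha$. The key construction is to pull $\psi$ back through the channel: define $\phi\colon Y\to[0,1]$ by $\phi(y)=\E_{z\sim\Proc(y)}[\psi(z)]$, the expectation of $\psi$ under the output distribution of the post-processing applied to $y$. Viewing $\Proc$ as a Markov kernel, $\phi$ is a measurable $[0,1]$-valued function and hence a legitimate rejection rule on $Y$. By Fubini/the tower property, $\alpha_\phi=\E_{y\sim P}\,\E_{z\sim\Proc(y)}[\psi(z)]=\E_{z\sim\Proc\circ P}[\psi(z)]=\alpha_\psi\le\alpha$, and likewise $\beta_\phi=1-\E_{y\sim Q}\,\E_{z\sim\Proc(y)}[\psi(z)]=1-\E_{z\sim\Proc\circ Q}[\psi(z)]=\beta_\psi$. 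Thus $\phi$ attains exactly the same type I and type II errors as $\psi$ and respects the constraint $\alpha_\phi\le\alpha$, so $\beta_\psi=\beta_\phi\ge\inf\{\beta_{\phi'}:\alpha_{\phi'}\le\alpha\}=T(P,Q)(\alpha)$. Taking the infimum over all admissible $\psi$ gives $T(\Proc\circ P,\Proc\circ Q)(\alpha)\ge T(P,Q)(\alpha)$, and since $\alpha$ was arbitrary, $T(\Proc\circ P,\Proc\circ Q)\ge T(P,Q)$.

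I do not anticipate a real obstacle here; the only point requiring care is a clean formalization of the ``randomized algorithm $\Proc$'' as a Markov kernel $Y\times\mathcal{B}(Z)\to[0,1]$, which is what makes $\phi$ measurable and legitimizes the interchange of expectations above. (Alternatively, one could invoke Blackwell's theorem, cited as \Cref{thm:blackwell}, since $\Proc\circ M(S)$ is a garbling of $M(S)$; but the pull-back argument is self-contained and also explains transparently why the monotonicity goes in this direction.)
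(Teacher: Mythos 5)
Your proof is correct and follows essentially the same route as the paper: the paper's proof of \Cref{lem:post} likewise pulls a test $\psi$ on $Z$ back to the rule $\psi\circ\Proc$ on $Y$ (which is exactly your $\phi(y)=\E_{z\sim\Proc(y)}[\psi(z)]$) and observes that it has identical type I and type II errors, so the infimum over tests on $Y$ can only be smaller. Your version is slightly more careful in quantifying over all admissible $\psi$ rather than just the optimal one, but the idea is the same.
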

Proposition~\ref{prop:post_process} is a consequence of the following lemma. Let $\Proc(P)$ be the probability distribution of $\Proc(\zeta)$ with $\zeta$ drawn from $P$. Define $\Proc(Q)$ likewise.
\begin{restatable}{lemma}{postrep} \label{lem:post}
For any two distributions $P$ and $Q$, we have
$$\F\big(\mathrm{Proc}(P),\mathrm{Proc}(Q)\big)\geqslant \F(P,Q).$$
\end{restatable}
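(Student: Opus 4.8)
The plan is to prove Lemma~\ref{lem:post} by showing that every rejection rule for distinguishing $\Proc(P)$ from $\Proc(Q)$ can be ``pulled back'' to a rejection rule for distinguishing $P$ from $Q$ with exactly the same type I and type II errors. Since the trade-off function $\F(P,Q)$ is defined as an infimum of type II errors over \emph{all} rejection rules at a given type I level, exhibiting a $P$-vs-$Q$ test that matches any given $\Proc(P)$-vs-$\Proc(Q)$ test shows that the infimum defining $\F(P,Q)(\alpha)$ is taken over a richer collection, hence is no larger; this gives $\F(\Proc(P),\Proc(Q))(\alpha) \ge \F(P,Q)(\alpha)$ for every $\alpha$.

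Concretely, I would proceed as follows. First, fix $\alpha \in [0,1]$ and let $\psi : Z \to [0,1]$ be any (measurable) rejection rule with $\E_{\Proc(P)}[\psi] \le \alpha$. Define $\phi : Y \to [0,1]$ to be the composition that first applies the randomized map $\Proc$ and then applies $\psi$ to the result; more precisely, $\phi(y) = \E_{z \sim \Proc(y)}[\psi(z)]$, the probability of rejecting after post-processing $y$ through $\Proc$. This $\phi$ is a legitimate rejection rule on $Y$ (it is $[0,1]$-valued and measurable, since $\Proc$ is a Markov kernel). Second, by Fubini/the tower property, $\E_P[\phi] = \E_{\zeta \sim P}\E_{z \sim \Proc(\zeta)}[\psi(z)] = \E_{\Proc(P)}[\psi] \le \alpha$, and likewise $1 - \E_Q[\phi] = 1 - \E_{\Proc(Q)}[\psi]$. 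Thus $\phi$ has type I error at most $\alpha$ and type II error exactly equal to that of $\psi$. Third, taking the infimum over all such $\psi$ gives $\F(P,Q)(\alpha) \le \F(\Proc(P),\Proc(Q))(\alpha)$, which is the claim. Proposition~\ref{prop:post_process} then follows immediately: if $M$ is $f$-DP then for neighbors $S,S'$ we have $\F(\Proc\circ M(S), \Proc\circ M(S')) = \F(\Proc(M(S)), \Proc(M(S'))) \ge \F(M(S),M(S')) \ge f$.

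The main obstacle is purely a matter of measure-theoretic bookkeeping rather than any genuine mathematical difficulty: one must be careful that $\Proc$ is treated as a randomized (Markov kernel) map, so that $\phi(y) = \E_{z\sim\Proc(y)}[\psi(z)]$ is well-defined and measurable in $y$, and that the interchange of expectations $\E_P \E_{\Proc}[\psi] = \E_{\Proc(P)}[\psi]$ is justified (it is, since $\psi$ is bounded). If one wanted to avoid kernels entirely, an alternative is to note that a randomized post-processing can be realized as a deterministic function of $(y, U)$ with $U$ an independent uniform random variable, reducing to the deterministic case plus an independent-randomness argument; but the kernel formulation is cleaner and is what I would write. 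There is also a one-line conceptual alternative via Blackwell's theorem (cf.~\Cref{thm:blackwell}), since $(P,Q)$ is a more informative experiment than $(\Proc(P),\Proc(Q))$, but giving the direct argument above keeps the proof self-contained.
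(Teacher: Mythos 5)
Your proposal is correct and follows essentially the same route as the paper's proof: pull back any test $\psi$ on $Z$ to the test $\phi = \psi\circ\Proc$ on $Y$ (i.e.\ $\phi(y)=\E_{z\sim\Proc(y)}[\psi(z)]$), note that type I and type II errors are preserved, and conclude that the infimum defining $\F(P,Q)$ ranges over a richer family of tests. The paper states this more tersely but the argument is identical; your extra care with the kernel formulation is fine.
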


This lemma means that post-processed distributions can only become more difficult to tell apart than the original distributions from the perspective of trade-off functions. While the same property holds for many divergence based measures of indistinguishability such as the {\Renyi} divergences\footnote{See \Cref{app:relation} for its definition and relation with trade-off functions.} used by the concentrated differential privacy family of definitions \cite{concentrated,concentrated2,renyi,tcdp}, a consequence of the following theorem is that trade-off functions offer the most informative measure among all. This remarkable inverse of \Cref{lem:post} is due to Blackwell (see also Theorem 2.5 in \cite{KOV}).
\begin{theorem}[\cite{blackwell1950comparison}, Theorem 10]\label{thm:blackwell}
Let $P,Q$ be probability distributions on $Y$ and $P',Q'$ be probability distributions on $Z$. The following two statements are equivalent:
\begin{enumerate}
\item[(a)] $\F(P,Q)\leqslant \F(P',Q')$.
\item[(b)] There exists a randomized algorithm $\Proc: Y \to Z$ such that $\Proc(P)=P',\Proc(Q)=Q'$.
\end{enumerate}
\end{theorem}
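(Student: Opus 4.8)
The plan is to prove the two implications separately. The implication (b)$\Rightarrow$(a) is immediate from \Cref{lem:post}: if $\Proc(P)=P'$ and $\Proc(Q)=Q'$, then $\F(P',Q')=\F\big(\Proc(P),\Proc(Q)\big)\ge\F(P,Q)$. So the whole content is the converse, (a)$\Rightarrow$(b): from the single inequality $\F(P,Q)\le\F(P',Q')$ we must manufacture a Markov kernel $\Proc:Y\to Z$ with $\Proc(P)=P'$ and $\Proc(Q)=Q'$.

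First I would pass to a canonical representation of each dichotomy. The bounded likelihood-ratio statistic $L=\frac{\mathrm{d}P}{\mathrm{d}(P+Q)}\in[0,1]$ is sufficient for $\{P,Q\}$ (both $P$ and $Q$ are dominated by $P+Q$, with $\sigma(L)$-measurable densities $L$ and $1-L$), so $(P,Q)$ post-processes to the ``standard experiment'' $(\bar P,\bar Q)$ obtained by applying $L$; conversely, a regular conditional distribution of the data given $L$ is a \emph{common} kernel for $P$ and $Q$, so $(\bar P,\bar Q)$ post-processes back to $(P,Q)$. Thus, up to mutual post-processing, we may assume $P,Q$ and $P',Q'$ live on $[0,1]$ and are standard experiments, each encoded by its ``standard measure'' $\pi:=\tfrac12(P+Q)$, with $\mathrm{d}P=2t\,\mathrm{d}\pi(t)$, $\mathrm{d}Q=2(1-t)\,\mathrm{d}\pi(t)$, and barycenter $\int_0^1 t\,\mathrm{d}\pi=\tfrac12$. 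By the Neyman--Pearson lemma (\Cref{thm:NPlemma}) the trade-off function of a standard experiment is traced by threshold tests on $t$, so $\F(P,Q)$ and $\pi$ carry the same information, and the achievable set of error pairs is exactly the band between the graph of $\F(P,Q)$ and its reflection through $(\tfrac12,\tfrac12)$.

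Next I would convert hypothesis (a) into a statement in the convex (dilation) order between $\pi:=\pi_{P,Q}$ and $\pi':=\pi_{P',Q'}$. The key elementary identity is that, for every $c\in[0,1]$,
\[
\int_0^1 (t-c)_+\,\mathrm{d}\pi(t)\;=\;\tfrac12\Big(1-c-\inf_{0\le\alpha\le1}\big[(1-c)\,\alpha+c\,\F(P,Q)(\alpha)\big]\Big),
\]
obtained by writing the left side as $\sup_{0\le\phi\le1}\tfrac12\big[(1-c)\alpha_\phi-c(1-\beta_\phi)\big]$ and maximizing over the error band from the previous paragraph. Since $c\ge0$ and $\F(P,Q)\le\F(P',Q')$, the right-hand side is monotone in the trade-off function, so $\int(t-c)_+\,\mathrm{d}\pi\ge\int(t-c)_+\,\mathrm{d}\pi'$ for all $c$. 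Writing an arbitrary convex $h$ on $[0,1]$ as $h(t)=h(0)+h'(0^{+})\,t+\int_{(0,1]}(t-c)_+\,\mathrm{d}\mu_h(c)$ with $\mu_h\ge0$, and using that $\pi$ and $\pi'$ share the same barycenter, this upgrades to $\int h\,\mathrm{d}\pi\ge\int h\,\mathrm{d}\pi'$ for all convex $h$, i.e.\ $\pi\succeq_{\mathrm{cx}}\pi'$. By Strassen's theorem there is then a martingale coupling $\Gamma$ of $\pi'$ and $\pi$: first marginal $\pi'$, second marginal $\pi$, and $\int t\,\Gamma(\mathrm{d}t\mid s)=s$ for $\pi'$-a.e.\ $s$. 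Disintegrating $\Gamma$ over its second coordinate, $\Gamma(\mathrm{d}s,\mathrm{d}t)=\pi(\mathrm{d}t)\,M(t,\mathrm{d}s)$, the kernel $M:[0,1]\to[0,1]$ satisfies $\int_t 2t\,\pi(\mathrm{d}t)\,M(t,\mathrm{d}s)=2\int_t t\,\Gamma(\mathrm{d}s,\mathrm{d}t)=2s\,\pi'(\mathrm{d}s)$ by the martingale identity, i.e.\ $M(\bar P)=\bar P'$, and the same computation with $1-t$ and $1-s$ gives $M(\bar Q)=\bar Q'$. Composing $M$ with the canonical reductions of the two sides (the forward reduction on $Y$ and the common-conditional kernel on $Z$) yields the desired $\Proc:Y\to Z$.

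I expect the only real obstacle to be the measure-theoretic bookkeeping, not the structure above. One needs regular conditional distributions for the canonical reduction (hence a standard Borel assumption on $Y,Z$), Strassen's theorem in sufficient generality, and genuine care with the boundary behaviour of $\pi$---atoms at $0$ or $1$, which encode the parts on which $P\perp Q$ or $P\not\ll Q$ and which also affect the precise shape of the error band used in the key identity. A cleaner but longer alternative that avoids Strassen is to first prove (a)$\Rightarrow$(b) for finitely supported $P,Q$ by linear-programming duality---feasibility of ``there exists a row-stochastic $R$ with $R^{\top}p=p'$ and $R^{\top}q=q'$'', with a separating-hyperplane argument in the infeasible case producing a linear functional on which $(P',Q')$ beats every deterministic post-processing of $(P,Q)$, which unwinds to a contradiction with (a)---and then to pass to general distributions by weak approximation together with compactness of the space of couplings. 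In either route the genuinely nontrivial step is the equivalence between domination of trade-off functions and the dilation order of the standard measures.
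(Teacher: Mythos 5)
The paper does not prove this statement: it is quoted verbatim as Theorem~10 of \cite{blackwell1950comparison} and used as a black box (indeed, the paper goes out of its way \emph{not} to rely on it for composition, proving \Cref{thm:n_steps} from the Neyman--Pearson lemma alone). So there is no in-paper proof to compare against; what you have written is a reconstruction of the classical Blackwell--Sherman--Stein argument specialised to dichotomies, and as a sketch it is correct. The easy direction is \Cref{lem:post}, as you say. For the converse, your chain of reductions is the standard one and the individual steps check out: the likelihood-ratio statistic $L=\frac{\mathrm{d}P}{\mathrm{d}(P+Q)}$ is sufficient, so each dichotomy is mutually post-processable with its standard experiment; your identity $\int(t-c)_+\,\mathrm{d}\pi=\tfrac12\bigl(1-c-\inf_\alpha[(1-c)\alpha+c\,\F(P,Q)(\alpha)]\bigr)$ follows by maximising the linear functional $(1-c)\alpha_\phi-c(1-\beta_\phi)=\int 2(t-c)\phi\,\mathrm{d}\pi$ over the symmetric error region, and correctly translates $\F(P,Q)\le \F(P',Q')$ into $\pi'\preceq_{\mathrm{cx}}\pi$ (equal barycenters $\tfrac12$ handle the affine part of a general convex $h$); Strassen then yields the martingale coupling, and your disintegration computation $\int 2t\,\Gamma(\mathrm{d}s,\mathrm{d}t)=2s\,\pi'(\mathrm{d}s)$ is exactly the verification that the resulting kernel carries $\bar P\mapsto\bar P'$ and $\bar Q\mapsto\bar Q'$. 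The one caveat you should make explicit rather than merely gesture at: the backward step on the $Z$ side (recovering $P',Q'$ from their standard experiment) needs a regular conditional distribution of the data given $L'$, so the theorem as proved this way requires $Y,Z$ to be standard Borel (or some comparable hypothesis); the paper's statement suppresses this, but Blackwell's original setting assumes it, and without it statement (b) can fail even though (a) holds. Your proposed alternative --- LP feasibility/duality for finitely supported distributions followed by weak approximation --- is also legitimate and is essentially how \cite{complexity} establishes the ``$(\ep,\delta)$ special case'' the paper mentions; the Strassen route is shorter once the dilation-order equivalence is in hand, while the LP route is more elementary and makes the separating-functional obstruction explicit.
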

\newcommand{\Ineq}{\mathrm{Ineq}}
To appreciate the implication of this theorem, we begin by observing that post-processing induces an order\footnote{This is in general not a partial order.} on pairs of distributions, which is called the Blackwell order (see, e.g., \cite{raginsky2011shannon}). Specifically, if the above condition (b) holds, then we write $(P,Q)\preceq_{\mathrm{Blackwell}}(P',Q')$ and interpret this as ``$(P,Q)$ is easier to distinguish than $(P',Q')$ in the Blackwell sense''. Similarly, when $\F(P,Q)\leqslant \F(P',Q')$, we write $(P,Q)\preceq_{\mathrm{tradeoff}}(P',Q')$ and interpret this as ``$(P,Q)$ is easier to distinguish than $(P',Q')$ in the testing sense''. In general, any privacy measure used in defining a privacy notion induces an order $\preceq$ on pairs of distributions. Assuming the post-processing property for the privacy notion, the induced order $\preceq$ must be consistent with $\preceq_{\mathrm{Blackwell}}$. Concretely, we
denote by $\Ineq(\preceq) = \{(P,Q; P', Q'): (P, Q) \preceq (P', Q')\}$ the set of all comparable pairs of the order $\preceq$. As is clear, a privacy notion satisfies the post-processing property if and only if the induced order $\preceq$ satisfies $\Ineq({\preceq})\supseteq \Ineq({\preceq_{\mathrm{Blackwell}}})$. 


Therefore, for any reasonable privacy notion, the set $\Ineq({\preceq})$ must be large enough to contain $\Ineq({\preceq_{\mathrm{Blackwell}}})$. However, it is also desirable to have a not too large $\Ineq({\preceq})$. For example, consider the privacy notion based on a trivial divergence $D_0$ with $D_0(P\|Q) \equiv 0$ for any $P,Q$. Note that $\Ineq({\preceq_{D_0}})$ is the largest possible and, meanwhile, it is not informative at all in terms of measuring the indistinguishability of two distributions.


The argument above suggests that going from the ``minimal'' order $\Ineq({\preceq_{\mathrm{Blackwell}}})$ to the ``maximal'' order $\Ineq({\preceq_{D_0}})$ would lead to information loss. Remarkably, $f$-DP is the most informative differential privacy notion from this perspective because its induced order $\preceq_{\mathrm{tradeoff}}$ satisfies $\Ineq({\preceq_{\mathrm{tradeoff}}}) = \Ineq({\preceq_{\mathrm{Blackwell}}})$. In stark contrast, this is not true for the order induced by other popular privacy notions such as R\'enyi differential privacy and $(\epsilon,\delta)$-DP. We prove this claim in \Cref{app:relation} and further justify the informativeness of $f$-DP by providing general tools that can losslessly convert $f$-DP guarantees into divergence based privacy guarantees.

\newcommand{\Symm}{\mathrm{Symm}}
\subsection{A Primal-Dual Perspective}
\label{sub:a_primal_dual_connection_with_}

In this subsection, we show that $f$-DP is equivalent to an infinite \textit{collection} of $(\ep,\delta)$-DP guarantees via the convex conjugate of the trade-off function. As a consequence of this, we can view $f$-DP as the \textit{primal} privacy representation and, accordingly, its \textit{dual} representation is the collection of $(\ep,\delta)$-DP guarantees. Taking this powerful viewpoint, many results from the large body of $(\epsilon, \delta)$-DP work can be carried over to $f$-DP in a seamless fashion. In particular, this primal-dual perspective is crucial to our analysis of ``privacy amplification by subsampling'' in \Cref{sec:subsampling}. All proofs are deferred to \Cref{app:fDP}.


First, we present the result that converts a collection of $(\epsilon, \delta)$-DP guarantees into an $f$-DP guarantee.
\begin{proposition}[Dual to Primal] \label{prop:DPtof}
Let $I$ be an arbitrary index set such that each $i\in I$ is associated with $\ep_i\in[0, \infty)$ and $\delta_i\in[0,1]$. A mechanism is $(\ep_i,\delta_i)$-DP for all $i\in I$ if and only if it is $f$-DP with
$$f = \sup_{i\in I} f_{\ep_i,\delta_i}.$$
\end{proposition}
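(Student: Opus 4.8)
The plan is to reduce the claim to \Cref{thm:privacy_testing} together with the elementary behaviour of pointwise suprema; the only genuine work will be to check that $f=\sup_{i\in I}f_{\ep_i,\delta_i}$ is itself a bona fide trade-off function, so that the phrase ``$M$ is $f$-DP'' is even meaningful.

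First I would translate each individual $(\ep_i,\delta_i)$-DP constraint into a trade-off statement. By \Cref{thm:privacy_testing}, $M$ is $(\ep_i,\delta_i)$-DP if and only if $T\big(M(S),M(S')\big)\ge f_{\ep_i,\delta_i}$ for every pair of neighboring datasets $S,S'$. Hence $M$ is $(\ep_i,\delta_i)$-DP for \emph{all} $i\in I$ if and only if, for every neighboring pair $S,S'$, the inequality $T\big(M(S),M(S')\big)(\alpha)\ge f_{\ep_i,\delta_i}(\alpha)$ holds for all $\alpha\in[0,1]$ and all $i\in I$. Since a function dominates a family of functions pointwise exactly when it dominates their pointwise supremum, this is equivalent to $T\big(M(S),M(S')\big)\ge\sup_{i\in I}f_{\ep_i,\delta_i}=f$ for every neighboring pair $S,S'$, which is precisely the definition of $f$-DP --- provided $f$ is a trade-off function. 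This handles both directions of the ``if and only if'' at once.

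The main step is therefore to verify, via \Cref{prop:trade-off}, that $f=\sup_{i\in I}f_{\ep_i,\delta_i}$ satisfies the four defining properties. Convexity, monotonicity, and the bounds $0\le f(\alpha)\le 1-\alpha$ are inherited from the family $\{f_{\ep_i,\delta_i}\}$, because each $f_{\ep_i,\delta_i}$ is a trade-off function (see \eqref{eq:fed}) and all of these properties are preserved under taking pointwise suprema. Convexity already gives continuity of $f$ on $(0,1)$, and $0\le f(\alpha)\le 1-\alpha$ forces continuity at $\alpha=1$. The only delicate point is continuity at $\alpha=0$: here I would note that, since $f$ is non-increasing, $\limsup_{\alpha\to 0^+}f(\alpha)\le f(0)$, while for the reverse direction, for each fixed $j\in I$ one has $\liminf_{\alpha\to 0^+}f(\alpha)\ge\liminf_{\alpha\to 0^+}f_{\ep_j,\delta_j}(\alpha)=f_{\ep_j,\delta_j}(0)$ using continuity of the single function $f_{\ep_j,\delta_j}$ at $0$; taking the supremum over $j$ yields $\liminf_{\alpha\to 0^+}f(\alpha)\ge\sup_{j\in I}f_{\ep_j,\delta_j}(0)=f(0)$. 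Thus $f$ is continuous everywhere and is a trade-off function, which closes the argument. One may further remark that each $f_{\ep_i,\delta_i}$ is symmetric and that symmetry too is preserved by the supremum, so $f$ is in fact a symmetric trade-off function, consistent with the running convention.

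I expect the continuity-at-$0$ verification to be the only subtle step, since the pointwise supremum of convex functions need not in general be continuous at an endpoint of the domain; one must genuinely exploit that each $f_{\ep_i,\delta_i}$ is individually right-continuous at $0$ (with value $1-\delta_i$) and combine this with the global monotonicity bound $f(\alpha)\le f(0)$. Everything else is routine bookkeeping with \Cref{thm:privacy_testing} and the trivial fact that $g\ge f_i$ for all $i$ iff $g\ge\sup_i f_i$.
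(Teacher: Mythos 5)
Your proof is correct and follows the same route the paper intends: the paper justifies this proposition in one line as an immediate consequence of \Cref{thm:privacy_testing} (the equivalence of $(\ep,\delta)$-DP and $f_{\ep,\delta}$-DP) together with the trivial fact that dominating every member of a family is the same as dominating its pointwise supremum. Your additional verification that $\sup_{i\in I}f_{\ep_i,\delta_i}$ is itself a (symmetric) trade-off function — in particular the continuity check at $\alpha=0$ via $\liminf_{\alpha\to 0^+}f(\alpha)\geqslant f_{\ep_j,\delta_j}(0)$ for each $j$ — is a correct and welcome filling-in of a detail the paper treats as routine.
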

This proposition follows easily from the equivalence of $(\ep,\delta)$-DP and $f_{\ep,\delta}$-DP. We remark that the function $f$ constructed above remains a symmetric trade-off function.

The more interesting direction is to convert $f$-DP into a collection of $(\ep,\delta)$-DP guarantees. Recall that the convex conjugate of a function $g$ defined on $(-\infty, \infty)$ is defined as
\begin{equation}\label{eq:conjugate}
g^*(y) = \sup_{-\infty < x < \infty} y x - g(x).
\end{equation}
To define the conjugate of a trade-off function $f$, we extend its domain by setting $f(x) = \infty$ for $x < 0$ and $x > 1$. With this adjustment, the supremum is effectively taken over $0 \le x \le 1$.

\begin{restatable}[Primal to Dual]{proposition}{ftoDPrep} \label{prop:ftoDP}
	For a symmetric trade-off function $f$, a mechanism is $f$-DP if and only if it is $\big(\ep,\delta(\ep)\big)$-DP for all $\ep\geqslant 0$ with $\delta(\ep)=1+f^*(-\e^{\ep})$.
\end{restatable}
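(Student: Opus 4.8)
The plan is to reduce the statement to the already-established equivalence between $(\epsilon,\delta)$-DP and $f_{\epsilon,\delta}$-DP (\Cref{thm:privacy_testing}) together with the dual-to-primal direction (\Cref{prop:DPtof}), so that the real content becomes a convex-analytic identity: for a symmetric trade-off function $f$, one has $f = \sup_{\epsilon \ge 0} f_{\epsilon,\delta(\epsilon)}$ with $\delta(\epsilon) = 1 + f^*(-\e^\epsilon)$. Granting this identity, \Cref{prop:DPtof} immediately gives that $f$-DP is equivalent to being $(\epsilon,\delta(\epsilon))$-DP for all $\epsilon \ge 0$, which is exactly the claim. So the whole proof is really about unpacking the convex conjugate.

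First I would record what $\delta(\epsilon) = 1 + f^*(-\e^\epsilon)$ means concretely. Writing $y = -\e^\epsilon \le -1$ and using $f^*(y) = \sup_{0 \le x \le 1} (yx - f(x))$ (recall $f$ is extended to be $+\infty$ off $[0,1]$), we get $\delta(\epsilon) = 1 + \sup_{0\le x\le 1}\big(-\e^\epsilon x - f(x)\big) = 1 - \inf_{0\le x\le 1}\big(\e^\epsilon x + f(x)\big)$. Equivalently, $\delta(\epsilon)$ is the smallest $\delta$ such that $f(x) \ge 1 - \delta - \e^\epsilon x$ for all $x \in [0,1]$; that is, $\delta(\epsilon)$ is precisely the smallest vertical intercept shift making the line of slope $-\e^\epsilon$ a lower bound for $f$. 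Since $f_{\epsilon,\delta}(x) = \max\{0,\, 1-\delta-\e^\epsilon x,\, \e^{-\epsilon}(1-\delta-x)\}$, this already shows $f \ge 1-\delta(\epsilon)-\e^\epsilon x$ on $[0,1]$; using symmetry of $f$ (so $f = f^{-1}$), the reflected bound $f(x) \ge \e^{-\epsilon}(1-\delta(\epsilon)-x)$ follows by applying the first bound at the reflected point and inverting. Also $f \ge 0$ trivially. Hence $f \ge f_{\epsilon,\delta(\epsilon)}$ for every $\epsilon \ge 0$, giving $f \ge \sup_{\epsilon\ge 0} f_{\epsilon,\delta(\epsilon)}$; combined with \Cref{prop:DPtof} this already yields the ``only if'' direction (if $M$ is $f$-DP then it is $(\epsilon,\delta(\epsilon))$-DP for all $\epsilon$).

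For the reverse inclusion $f \le \sup_{\epsilon\ge 0} f_{\epsilon,\delta(\epsilon)}$, which gives the ``if'' direction via \Cref{prop:DPtof} again, I would invoke the fact that a closed convex function equals the supremum of its affine minorants (the biconjugation / Fenchel–Moreau theorem). By \Cref{prop:trade-off}, $f$ is convex, non-increasing, continuous on $[0,1]$, with $0 \le f(x) \le 1-x$; extended by $+\infty$ outside $[0,1]$ it is a proper closed convex function, so $f = f^{**}$, i.e. $f(x) = \sup_{y}\big(yx - f^*(y)\big)$. Because $f$ is non-increasing on $[0,1]$, the supremum over $y$ is effectively attained at $y \le 0$ (any supporting line of a non-increasing convex function has non-positive slope — this needs a short argument handling the boundary, but on $[0,1]$ it is routine), and one checks the slopes $y \in (-1,0]$ are dominated by the reflected branch coming from slopes $\le -1$ via the symmetry $f = f^{-1}$, so it suffices to take $y = -\e^\epsilon$ ranging over $(-\infty,-1]$, i.e. $\epsilon \in [0,\infty)$. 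Then $yx - f^*(y) = -\e^\epsilon x - f^*(-\e^\epsilon) = 1-\delta(\epsilon) - \e^\epsilon x \le f_{\epsilon,\delta(\epsilon)}(x)$, so $f(x) = \sup_\epsilon (1-\delta(\epsilon)-\e^\epsilon x) \le \sup_\epsilon f_{\epsilon,\delta(\epsilon)}(x)$, as desired.

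The main obstacle is the bookkeeping around which slopes matter: showing that restricting the biconjugate supremum to slopes $-\e^\epsilon$ with $\epsilon \ge 0$ (rather than all real slopes) still recovers $f$. This is where symmetry of $f$ is essential — it lets the ``missing'' supporting lines with slopes in $(-1,0]$ be replaced by their reflections, which have slopes $\le -1$ — and one must also treat the endpoints $x=0$ (where $f(0)\le 1$, corresponding to $\delta(\infty) = \lim_{\epsilon\to\infty}\delta(\epsilon)$, possibly requiring a limiting argument) and $x=1$ carefully. Everything else is a direct translation through \Cref{thm:privacy_testing} and \Cref{prop:DPtof}.
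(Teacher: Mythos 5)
Your proposal is correct and follows essentially the same route as the paper: the paper's (much terser) proof likewise identifies $1-\delta(\ep)$ with the intercept $-f^*(-\e^\ep)$ of the supporting line of slope $-\e^\ep$ and then asserts that, by symmetry, the family $\{f_{\ep,\delta(\ep)}\}_{\ep\geqslant 0}$ envelopes $f$, which is exactly the identity $f=\sup_{\ep\geqslant 0}f_{\ep,\delta(\ep)}$ you establish via biconjugation before invoking \Cref{thm:privacy_testing} and \Cref{prop:DPtof}. Your write-up simply makes explicit the details (slopes in $(-1,0]$ handled by reflection, the limiting behaviour at $x=0$) that the paper leaves implicit.
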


\begin{figure}[!ht]
\centering
  \includegraphics[width=0.6\linewidth]
{./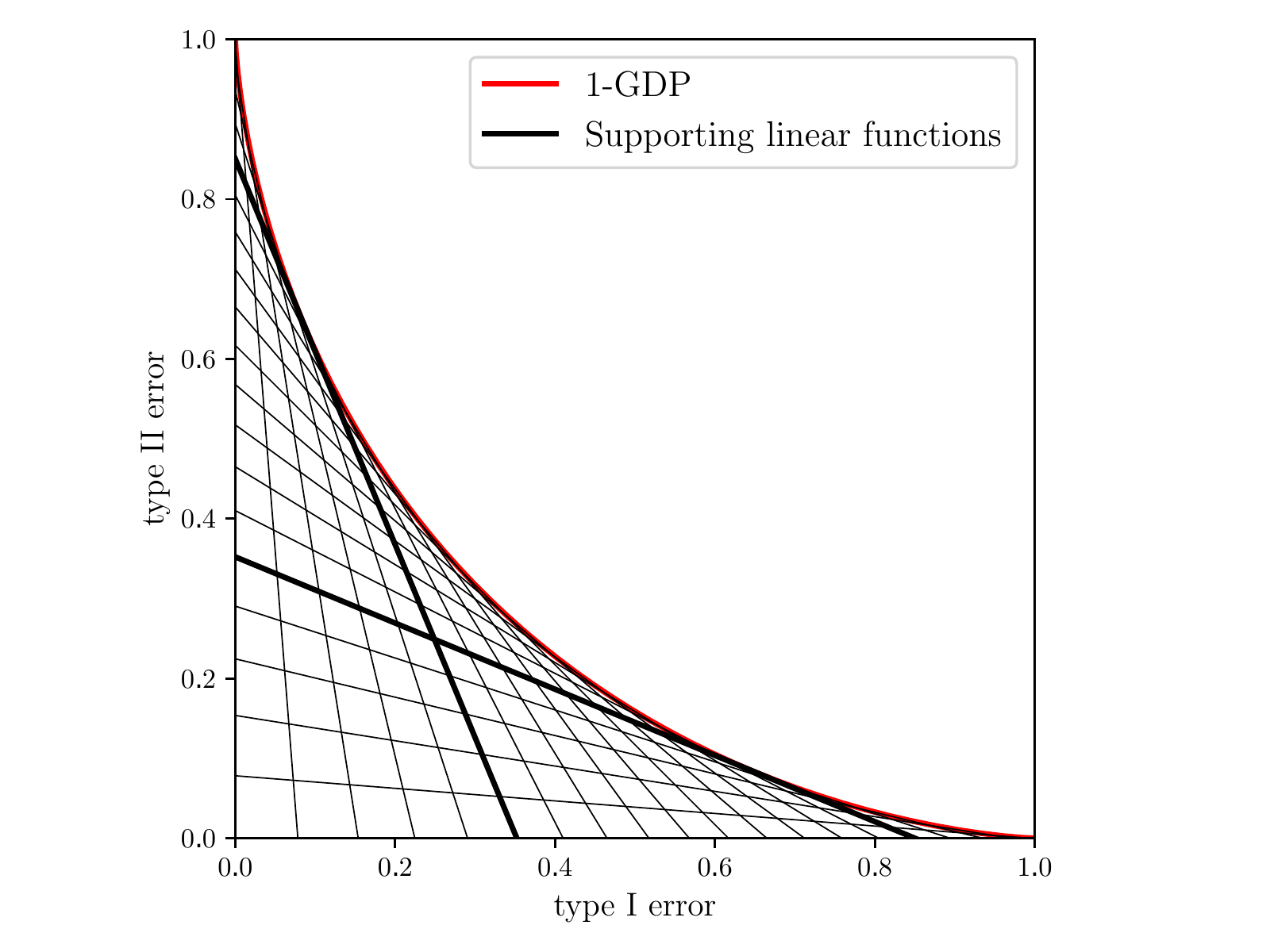}
  \captionof{figure}{Each $(\ep,\delta(\ep))$-DP guarantee corresponds to two supporting linear functions (symmetric to each other) to the trade-off function describing the complete $f$-DP guarantee. In general, characterizing a privacy guarantee using only a subset of $(\ep,\delta)$-DP guarantees (for example, only those with small $\delta$) would result in information loss.}
  \label{fig:envelope}
\end{figure}

For example, taking $f=G_\mu$, the following corollary provides a lossless conversion from GDP to a collection of $(\ep,\delta)$-DP guarantees. This conversion is exact and, therefore, any other $(\ep,\delta)$-DP guarantee derived for the Gaussian mechanism is implied by this corollary. See \Cref{fig:envelope} for an illustration of this result.
\begin{restatable}{corollary}{GDPtoDPrep}\label{corr:GDPtoDP}
    A mechanism is $\mu$-GDP if and only if it is $\big(\ep,\delta(\ep)\big)$-DP for all $\ep\geqslant0$, where
    \[
    \delta(\ep)= \Phi\Big( -\frac{\varepsilon}{\mu} +\frac{\mu}{2} \Big)-
    \e^{\varepsilon}\Phi\Big(- \frac{\varepsilon}{\mu} - \frac{\mu}{2} \Big).
    \]
\end{restatable}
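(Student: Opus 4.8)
The plan is to obtain this corollary as a direct instance of \Cref{prop:ftoDP} (primal to dual), specialized to the trade-off function $f = G_\mu$. Concretely, $\mu$-GDP means $G_\mu$-DP by definition, so once I check that $G_\mu$ is a \emph{symmetric} trade-off function, \Cref{prop:ftoDP} will tell me that $M$ is $\mu$-GDP if and only if it is $\big(\ep,\delta(\ep)\big)$-DP for all $\ep \geq 0$ with $\delta(\ep) = 1 + G_\mu^*(-\e^{\ep})$. The entire content of the corollary is then the evaluation of the convex conjugate $G_\mu^*$ at the point $-\e^{\ep}$.

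For symmetry, I would argue directly from the explicit formula \eqref{eq:Gmu}: setting $\beta = G_\mu(\alpha) = \Phi\big(\Phi^{-1}(1-\alpha)-\mu\big)$ and applying $\Phi^{-1}$ gives $\Phi^{-1}(1-\alpha) = \Phi^{-1}(\beta) + \mu$, and solving for $\alpha$ yields $\alpha = \Phi\big(\Phi^{-1}(1-\beta)-\mu\big) = G_\mu(\beta)$; hence $G_\mu$ is an involution and in particular $G_\mu^{-1} = G_\mu$, so it is symmetric. (Alternatively this follows from the remark after \Cref{prop:symmetry} together with translation invariance: $G_\mu^{-1} = T(\N(\mu,1),\N(0,1)) = T(\N(0,1),\N(\mu,1)) = G_\mu$.) That $G_\mu$ is a genuine trade-off function is already recorded right after \eqref{eq:Gmu}.

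Next I would compute $G_\mu^*(-\e^{\ep}) = \sup_{0\le x\le 1}\big\{-\e^{\ep}x - G_\mu(x)\big\}$. Since $G_\mu$ is convex, the objective is concave in $x$, so the supremum is attained where its derivative vanishes, i.e.\ where $G_\mu'(x) = -\e^{\ep}$. Differentiating \eqref{eq:Gmu} through the substitution $y = \Phi^{-1}(1-x)$ (so that $x = \Phi(-y)$ and $G_\mu(x) = \Phi(y-\mu)$) gives
\[
G_\mu'(x) = -\frac{\Phi'(y-\mu)}{\Phi'(y)} = -\exp\!\Big(\mu y - \tfrac{\mu^2}{2}\Big),
\]
which decreases continuously from $0$ to $-\infty$ as $x$ increases over $(0,1)$; hence $G_\mu'(x) = -\e^{\ep}$ has the unique interior solution $y^\star = \ep/\mu + \mu/2$. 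Substituting back, the maximizer is $x^\star = \Phi(-y^\star) = \Phi(-\ep/\mu - \mu/2)$ with $G_\mu(x^\star) = \Phi(y^\star - \mu) = \Phi(\ep/\mu - \mu/2)$, so that
\[
\delta(\ep) = 1 - \e^{\ep}\,\Phi\!\Big(-\frac{\ep}{\mu}-\frac{\mu}{2}\Big) - \Phi\!\Big(\frac{\ep}{\mu}-\frac{\mu}{2}\Big) = \Phi\!\Big(-\frac{\ep}{\mu}+\frac{\mu}{2}\Big) - \e^{\ep}\,\Phi\!\Big(-\frac{\ep}{\mu}-\frac{\mu}{2}\Big),
\]
using $1-\Phi(t) = \Phi(-t)$, which is exactly the claimed expression.

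I do not expect a serious obstacle here: modulo \Cref{prop:ftoDP}, this is a calculus exercise. The one spot deserving care is the differentiation of $G_\mu$ together with the verification that the maximizer $x^\star$ is an interior point of $[0,1]$ for \emph{every} $\ep\ge 0$ (so that the first-order condition legitimately identifies the conjugate rather than a boundary value); the Gaussian density-ratio identity $\Phi'(y-\mu)/\Phi'(y) = \e^{\mu y - \mu^2/2}$ makes both transparent. Implicitly $\mu>0$ throughout; the degenerate case $\mu=0$ corresponds to the non-private trade-off function $\alpha\mapsto 1-\alpha$ and can be excluded or handled as a limit.
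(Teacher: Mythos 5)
Your proposal is correct and follows essentially the same route as the paper: invoke \Cref{prop:ftoDP} with $f=G_\mu$ and evaluate $G_\mu^*(-\e^{\ep})$ via the first-order condition, landing at the same critical point $y^\star=\ep/\mu+\mu/2$ (the paper phrases the stationarity condition in the variable $t=\Phi^{-1}(1-x)$ rather than via $G_\mu'$, but this is the identical computation). Your explicit checks of symmetry and of the interior maximizer are details the paper leaves implicit, and they are both handled correctly.
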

This corollary has appeared earlier in \cite{balle2018improving}. Along this direction, \cite{balle2018privacy} further proposed ``privacy profile'', which in essence corresponds to an infinite collection of $(\ep,\delta)$. The notion of privacy profile mainly serves as an analytical tool in \cite{balle2018privacy}.

The primal-dual perspective provides a useful tool through which we can bridge the two privacy definitions. In some cases, it is easier to work with $f$-DP by leveraging the interpretation and informativeness of trade-off functions, as seen from the development of composition theorems for $f$-DP in \Cref{sec:composition-theorems}. Meanwhile, $(\ep,\delta)$-DP is more convenient to work with in the cases where the lower complexity of two parameters $\epsilon,\delta$ is helpful, for example, in the proof of the privacy amplification by subsampling theorem for $f$-DP. In short, our approach in \Cref{sec:subsampling} is to first work in the dual world and use existing subsampling theorems for $(\ep,\delta)$-DP, and then convert the results back to $f$-DP using a slightly more advanced version of \Cref{prop:ftoDP}.

\subsection{Group Privacy}\label{sub:group_privacy}
\newcommand{\group}{\mathbin{\hat{\circ}}}

The notion of $f$-DP can be extended to address privacy of a \textit{group} of individuals, and a question of interest is to quantify how privacy degrades as the group size grows. To set up the notation, we say that two datasets $S, S'$ are $k$-neighbors (where $k \ge 2$ is an integer) if there exist datasets $S = S_0, S_1, \ldots, S_k = S'$ such that $S_i$ and $S_{i+1}$ are neighboring or identical for all $i = 0, \ldots, k-1$. Equivalently, $S,S'$ are $k$-neighbors if they differ by at most $k$ individuals. Accordingly, a mechanism $M$ is said to be $f$-DP for \textit{groups of size $k$} if
\[ T\big(M(S),M(S')\big)\geqslant f\]
for all $k$-neighbors $S$ and $S'$.

In the following theorem, we use $h^{\circ k}$ to denote the $k$-fold iterative composition of a function $h$. For example, $h^{\circ 1} = h$ and $h^{\circ 2}(x) = h(h(x))$.
\begin{restatable}{theorem}{groupthm} \label{thm:group}
If a mechanism is $f$-DP, then it is $\left[1-(1-f)^{\circ k}\right]$-DP for groups of size $k$. In particular, if a mechanism is $\mu$-GDP, then it is $k\mu$-GDP for groups of size $k$. 
\end{restatable}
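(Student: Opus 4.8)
The plan is to reduce the group‑privacy bound to a ``triangle inequality'' for trade-off functions, applied along a chain of datasets connecting $S$ and $S'$, and then to carry out an elementary computation with $\Phi$ for the GDP specialization.

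First I would isolate the following key inequality: for any probability distributions $P,Q,R$ on a common space and any $\alpha\in[0,1]$,
\[
T(P,R)(\alpha)\;\ge\;T(Q,R)\big(1-T(P,Q)(\alpha)\big),
\]
equivalently $T(P,R)\ge 1-\big(1-T(Q,R)\big)\circ\big(1-T(P,Q)\big)$. To prove it, fix $\alpha$ and let $\phi$ be any rejection rule with $\alpha_\phi=\E_P[\phi]\le\alpha$. Viewing $\phi$ as a test of $P$ against $Q$, its type~II error is $1-\E_Q[\phi]\ge T(P,Q)(\alpha_\phi)\ge T(P,Q)(\alpha)$, using that $T(P,Q)$ is non-increasing; hence $\E_Q[\phi]\le 1-T(P,Q)(\alpha)$. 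Viewing the same $\phi$ as a test of $Q$ against $R$, its type~II error satisfies $1-\E_R[\phi]\ge T(Q,R)(\E_Q[\phi])\ge T(Q,R)\big(1-T(P,Q)(\alpha)\big)$, again by monotonicity of $T(Q,R)$. Taking the infimum over all such $\phi$ gives the claim.

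Next I would iterate. Given $k$-neighbors $S$ and $S'$, pick a chain $S=S_0,S_1,\ldots,S_k=S'$ with consecutive datasets neighboring or identical, and write $P_i=M(S_i)$. For each $i$ we have $T(P_{i-1},P_i)\ge f$: this is the $f$-DP hypothesis when $S_{i-1},S_i$ are neighbors, and when they are identical $T(P_{i-1},P_i)(\alpha)=1-\alpha\ge f(\alpha)$ since $f(\alpha)\le 1-\alpha$. I would then prove by induction on $j$ that $T(P_0,P_j)\ge 1-(1-f)^{\circ j}$: the base case $j=1$ is immediate, and for the step I apply the triangle inequality with $(P,Q,R)=(P_0,P_j,P_{j+1})$, bound $1-T(P_0,P_j)(\alpha)\le(1-f)^{\circ j}(\alpha)$ by the inductive hypothesis, and use $T(P_j,P_{j+1})\ge f$ together with monotonicity of $f$ (and the fact that $1-f$ is non-decreasing) to get $T(P_0,P_{j+1})(\alpha)\ge f\big((1-f)^{\circ j}(\alpha)\big)=1-(1-f)^{\circ(j+1)}(\alpha)$. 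Taking $j=k$ yields $T(M(S),M(S'))\ge 1-(1-f)^{\circ k}$ for all $k$-neighbors. I would also record that $1-(1-f)^{\circ k}$ is itself a trade-off function: since $1-f$ is concave and non-decreasing, $(1-f)^{\circ k}$ is concave and non-decreasing, so $1-(1-f)^{\circ k}$ is convex and non-increasing, and iterating $(1-f)(x)\ge x$ shows it is bounded above by $1-x$; hence it is a trade-off function by Proposition~\ref{prop:trade-off}.

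Finally, for the GDP specialization I would show $1-(1-G_\mu)^{\circ k}=G_{k\mu}$. Substituting $t=\Phi^{-1}(1-\alpha)$ in $G_\mu(\alpha)=\Phi(t-\mu)$ gives $1-G_\mu(\alpha)=\Phi(\mu-t)$; applying $1-G_\mu$ once more and using $\Phi^{-1}\big(1-\Phi(\mu-t)\big)=\Phi^{-1}\big(\Phi(t-\mu)\big)=t-\mu$ yields $(1-G_\mu)^{\circ2}(\alpha)=\Phi(2\mu-t)$, and by induction $(1-G_\mu)^{\circ k}(\alpha)=\Phi(k\mu-t)$; hence $1-(1-G_\mu)^{\circ k}(\alpha)=\Phi(t-k\mu)=G_{k\mu}(\alpha)$. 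Combined with the general bound, $\mu$-GDP implies $G_{k\mu}$-DP, i.e.\ $k\mu$-GDP, for groups of size $k$. The main obstacle is the triangle-inequality step; everything downstream is bookkeeping and elementary manipulation of $\Phi$. The one subtlety there is keeping the directions of all the monotonicity inequalities straight — $T(\cdot,\cdot)$ and $f$ are non-increasing while $1-f$ is non-decreasing — so that the two applications of monotonicity compose correctly.
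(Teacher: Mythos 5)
Your proof is correct and follows essentially the same route as the paper's: the key ``triangle inequality'' $T(P,R)\ge T(Q,R)\bigl(1-T(P,Q)(\cdot)\bigr)$ is exactly the paper's Lemma~\ref{lem:group} (proved there with the optimal test for $P$ vs $R$ rather than an arbitrary test plus an infimum, a cosmetic difference), and the iteration along the chain $S_0,\ldots,S_k$ and the $\Phi$-computation showing $1-(1-G_\mu)^{\circ k}=G_{k\mu}$ match the paper's argument. No gaps.
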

For completeness, $1-(1-f)^{\circ k}$ is a trade-off function and, moreover, remains symmetric if $f$ is symmetric. These two facts and \Cref{thm:group} are proved in \Cref{app:fDP}. As revealed in the proof, the privacy bound $1-(1-f)^{\circ k}$ in general cannot be improved, thereby showing that the group operation in the $f$-DP framework is \textit{closed} and \textit{tight}. In addition, it is easy to see that $1 - (1-f)^{\circ k} \leqslant 1 - (1-f)^{\circ (k-1)}$ by recognizing that the trade-off function $f$ satisfies $1-f(x)\geqslant x$. This is consistent with the intuition that detecting changes in groups of $k$ individuals becomes easier as the group size increases.

As an interesting consequence of \Cref{thm:group}, the group privacy of $\epsilon$-DP in the limit corresponds to the trade-off function of two Laplace distributions. Recall that the density of $\mathrm{Lap}(\mu,b)$ is $\frac1{2b}\e^{-|x-\mu|/b}$.
\begin{restatable}{proposition}{grouplimit}
\label{prop:group_limit}
Fix $\mu \ge 0$ and set $\epsilon = \mu/k$. As $k \to \infty$, we have
\[
1-(1-f_{\ep,0})^{\circ k} \to \F\big(\mathrm{Lap}(0,1),\mathrm{Lap}(\mu,1)\big).
\]
The convergence is uniform over $[0,1]$.
\end{restatable}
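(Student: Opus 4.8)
The plan is to pin down the limit function by an explicit Neyman--Pearson computation, and then analyze $(1-f_{\ep,0})^{\circ k}$ directly as a one-dimensional discrete dynamical system.

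\emph{Identifying the target.} By the Neyman--Pearson lemma (\Cref{thm:NPlemma}), $\F\big(\mathrm{Lap}(0,1),\mathrm{Lap}(\mu,1)\big)$ is attained by likelihood-ratio tests. The density ratio $p_\mu/p_0$ is non-decreasing in $x$ and is flat, equal to $\e^{-\mu}$ on $(-\infty,0]$ and to $\e^{\mu}$ on $[\mu,\infty)$. Splitting into the three regimes (a deterministic threshold in $(0,\mu)$; a randomized threshold on the flat top; a randomized threshold on the flat bottom) and integrating gives the explicit formula
\[
\ell(\alpha):=\F\big(\mathrm{Lap}(0,1),\mathrm{Lap}(\mu,1)\big)(\alpha)=
\begin{cases}
1-\e^{\mu}\alpha, & 0\le\alpha\le \tfrac12\e^{-\mu},\\[4pt]
\dfrac{\e^{-\mu}}{4\alpha}, & \tfrac12\e^{-\mu}\le\alpha\le\tfrac12,\\[8pt]
\e^{-\mu}(1-\alpha), & \tfrac12\le\alpha\le 1,
\end{cases}
\]
which is continuous (and symmetric about $y=x$). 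This is what we must produce in the limit.

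\emph{The dynamics.} Put $\ep=\mu/k$ and $h:=1-f_{\ep,0}$; unwinding \eqref{eq:fed}, $h$ is increasing and piecewise linear,
\[
h(\alpha)=\begin{cases}\e^{\ep}\alpha,& 0\le\alpha\le c_\ep,\\ 1-\e^{-\ep}(1-\alpha),& c_\ep\le\alpha\le1,\end{cases}
\qquad c_\ep:=\frac{1}{1+\e^{\ep}}\in\Big(0,\tfrac12\Big),\quad c_\ep\to\tfrac12 .
\]
Two facts drive everything: below $c_\ep$ the map multiplies by $\e^{\ep}$, and $h$ carries $[c_\ep,1]$ into itself with $1-h(\alpha)=\e^{-\ep}(1-\alpha)$ there. (Morally, $h$ is close to the time-$\ep$ flow of $\dot\alpha=\min(\alpha,1-\alpha)$, so $h^{\circ k}$ approximates its time-$\mu$ flow; the computation below is self-contained.) Hence, starting from $\alpha\in(0,c_\ep]$, the orbit is $h^{\circ j}(\alpha)=\e^{j\ep}\alpha$ up to the first index $j^\star_k=j^\star_k(\alpha)$ with $\e^{j^\star_k\ep}\alpha>c_\ep$, after which $1-h^{\circ j}(\alpha)=\e^{-(j-j^\star_k)\ep}\big(1-\e^{j^\star_k\ep}\alpha\big)$. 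Now fix $\alpha$ and let $k\to\infty$. If $\alpha\in[\tfrac12\e^{-\mu},\tfrac12)$, then $c_\ep\e^{-\mu}<\alpha<c_\ep$ for large $k$, so $1\le j^\star_k\le k$, and using $k\ep=\mu$,
\[
1-h^{\circ k}(\alpha)=\e^{\,j^\star_k\ep-\mu}\big(1-\e^{\,j^\star_k\ep}\alpha\big);
\]
the sandwich $\e^{(j^\star_k-1)\ep}\alpha\le c_\ep<\e^{j^\star_k\ep}\alpha$ forces $j^\star_k\ep\to\ln\frac1{2\alpha}$, giving limit $\frac{\e^{-\mu}}{4\alpha}=\ell(\alpha)$. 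If $\alpha\in[0,\tfrac12\e^{-\mu})$, then $\e^{k\ep}\alpha=\e^{\mu}\alpha<c_\ep$ for large $k$, so the orbit never leaves the lower regime and $1-h^{\circ k}(\alpha)=1-\e^{\mu}\alpha=\ell(\alpha)$ (and $h^{\circ k}(0)=0$). If $\alpha\in[\tfrac12,1]$, then $\alpha>c_\ep$, so the orbit stays in $[c_\ep,1]$ and $1-h^{\circ k}(\alpha)=\e^{-k\ep}(1-\alpha)=\e^{-\mu}(1-\alpha)=\ell(\alpha)$. This establishes pointwise convergence.

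\emph{Uniformity, and the main obstacle.} Each $L_k:=1-(1-f_{\ep,0})^{\circ k}$ is non-increasing, since $1-f_{\ep,0}$ is non-decreasing and hence so is $(1-f_{\ep,0})^{\circ k}$; and $\ell$ is continuous on $[0,1]$. A sequence of non-increasing functions on $[0,1]$ converging pointwise to a continuous function converges uniformly, by a Dini/P\'olya-type argument: given $\eta>0$ pick $0=t_0<\dots<t_m=1$ with $\ell(t_i)-\ell(t_{i+1})<\eta/3$, choose $N$ with $|L_k(t_i)-\ell(t_i)|<\eta/3$ for all $i$ and $k\ge N$, and for $\alpha\in[t_i,t_{i+1}]$ sandwich $L_k(\alpha)$ and $\ell(\alpha)$ between the values at $t_i,t_{i+1}$ using monotonicity. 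The one genuinely delicate point is the crossover analysis: the escape index $j^\star_k$ is not explicit, and the thresholds separating the three cases ($c_\ep\e^{-\mu}$ and $c_\ep$) drift with $k$ toward $\tfrac12\e^{-\mu}$ and $\tfrac12$. This is exactly why it is cleanest to settle pointwise convergence first --- for each fixed non-critical $\alpha$ the orbit eventually lies safely in one regime, so the drifting thresholds are harmless --- and then obtain uniformity for free from monotonicity together with continuity of $\ell$, avoiding any need for uniform control of $j^\star_k$.
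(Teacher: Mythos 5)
Your proposal is correct and follows essentially the same route as the paper: the same explicit piecewise formula for $\F\big(\mathrm{Lap}(0,1),\mathrm{Lap}(\mu,1)\big)$, the same analysis of the orbit of the piecewise-linear map $1-f_{\ep,0}$ split into three cases by the initial point (with the escape index pinned down by the sandwich $\e^{(j^\star-1)\ep}\alpha\le c_\ep<\e^{j^\star\ep}\alpha$), and the same P\'olya-type upgrade from pointwise to uniform convergence via monotonicity and continuity of the limit. The only cosmetic difference is that you derive the Laplace trade-off function by a direct Neyman--Pearson regime split, whereas the paper obtains it from its log-concavity formula $F(F^{-1}(1-\alpha)-\mu)$.
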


Two remarks are in order. First, $\F\big(\mathrm{Lap}(0,1),\mathrm{Lap}(\mu,1)\big)$ is not equal to $f_{\ep,\delta}$ for any $\ep,\delta$ and, therefore, $(\ep,\delta)$-DP is not expressive enough to measure privacy under the group operation. Second, the approximation in this theorem is very accurate even for small $k$. For example, for $\mu=1,k=4$, the function $1-(1-f_{\ep,0})^{\circ k}$ is within $0.005$ of $\F\big(\mathrm{Lap}(0,1),\mathrm{Lap}(\mu,1)\big)$ uniformly over $[0, 1]$. The proof of \Cref{prop:group_limit} is deferred to \Cref{app:fDP}.

\section{Composition and Limit Theorems}
\label{sec:composition-theorems}

\newcommand{\kl}{\mathrm{kl}}
\newcommand{\lk}{\mathrm{lk}}
\renewcommand{\Proc}{\mathrm{Proc}}

Imagine that an analyst performs a sequence of analyses on a private dataset, in which each analysis is informed by prior analyses on the same dataset. Provided that every analysis alone is private, the question is whether all analyses collectively are private, and if so, how the privacy degrades as the number of analyses increases, namely under composition. It is essential for a notion of privacy to gracefully handle composition, without which the privacy analysis of complex algorithms would be almost impossible.


Now, we describe the composition of two mechanisms. For simplicity, this section writes $X$ for the space of datasets and abuse notation by using $n$ to refer to the number of mechanisms in composition\footnote{As will be clear later, the use of $n$ is consistent with the literature on central limit theorems.}. Let $M_1: X \to Y_1$ be the first mechanism and $M_2: X \times Y_1 \to Y_2$ be the second mechanism. In brief, $M_2$ takes as input the output of the first mechanism $M_1$ in addition to the dataset. With the two mechanisms in place, the joint mechanism $M: X \to Y_1 \times Y_2$ is defined as
\begin{equation}\label{eq:M_2_comp}
M(S) = (y_1, M_2(S, y_1)),
\end{equation}
where $y_1 = M_1(S)$.\footnote{Alternatively, we can write $M(S) = (M_1(S), M_2(S, M_1(S)))$, in which case it is necessary to specify that $M_1$ should be run only once in this expression.} Roughly speaking, the distribution of $M(S)$ is constructed from the marginal distribution of $M_1(S)$ on $Y_1$ and the conditional distribution of $M_2(S, y_1)$ on $Y_2$ given $M_1(S) = y_1$. The composition of more than two mechanisms follows recursively. In general, given a sequence of mechanisms $M_i: X \times Y_1 \times \cdots \times Y_{i-1} \to Y_i$ for $i=1,2,\ldots, n$, we can recursively define the joint mechanism as their composition:
\[
M: X \to Y_1 \times \cdots \times Y_n.
\]
Put differently, $M(S)$ can be interpreted as the trajectory of a Markov chain whose initial distribution is given by $M_1(S)$ and the transition kernel $M_i(S, \cdots)$ at each step.


Using the language above, the goal of this section is to relate the privacy loss of $M$ to that of the $n$ mechanisms $M_1, \ldots, M_n$ in the $f$-DP framework. In short, Section~\ref{sub:composition_theorem} develops a general composition theorem for $f$-DP. In Sections~\ref{sub:a_berry_esseen_type_of_clt}, we identify a central limit theorem phenomenon of composition in the $f$-DP framework, which can be used as an approximation tool, just like we use the central limit theorem for random variables. This approximation is extended to and improved for $(\epsilon, \delta)$-DP in Section~\ref{sub:_ep_0_dp_}.

\subsection{A General Composition Theorem} 
\label{sub:composition_theorem}


The main thrust of this subsection is to demonstrate that the composition of private mechanisms is closed and tight\footnote{\Cref{sub:group_privacy} shows that $f$-DP is ``closed and tight'' in a similar sense, in terms of the guarantees of group privacy.} in the $f$-DP framework. This result is formally stated in Theorem~\ref{thm:n_steps}, which shows that the composed mechanism remains $f$-DP with the trade-off function taking the form of a certain product.  To define the product, consider two trade-off functions $f$ and $g$ that are given as $f = \F(P, Q)$ and $g = \F(P', Q')$ for some probability distributions $P, P', Q, Q'$.



\begin{definition} \label{def:product}
The tensor product of two trade-off functions $f = \F(P, Q)$ and $g = \F(P', Q')$ is defined as
$$f\otimes g := \F(P\times P', Q\times Q').$$
\end{definition}
Throughout the paper, write $f \otimes g (\alpha)$ for $(f \otimes g) (\alpha)$, and denote by $f^{\otimes n}$ the $n$-fold tensor product of $f$. The well-definedness of $f^{\otimes n}$ rests on the associativity of the tensor product, which we will soon illustrate.

By definition, $f\otimes g$ is also a trade-off function. Nevertheless, it remains to be shown that the tensor product is well-defined: that is, the definition is independent of the choice of distributions used to represent a trade-off function. More precisely, assuming $f = \F(P, Q) = \F(\tilde P, \tilde Q)$ for some distributions $\tilde P, \tilde Q$, we need to ensure that
\begin{equation*}\label{eq:tensor_welldef}
T(P\times P', Q \times Q') = T(\tilde{P} \times P', \tilde{Q} \times Q').
\end{equation*}
We defer the proof of this intuitive fact to \Cref{app:self}.  Below we list some other useful properties\footnote{These properties make the class of trade-off functions a \textit{commutative monoid}. Informally, a monoid is a group without the inverse operator.} of the tensor product of trade-off functions, whose proofs are placed in \Cref{app:CLT}.
\begin{enumerate}
\setlength\itemsep{0.15em}
\item The product $\otimes$ is commutative and associative.
\item If $g_1\geqslant g_2$, then $f\otimes g_1 \geqslant f\otimes g_2$.
\item $f \otimes \Id = \Id \otimes f = f$, where the identity trade-off function $\Id(x)=1-x$ for $0 \le x \le 1$.
\item $(f\otimes g)^{-1} = f^{-1}\otimes g^{-1}$. See the definition of inverse in \eqref{eq:inver_f}.
\end{enumerate}
Note that $\Id$ is the trade-off function of two identical distributions. Property 4 implies that when $f,g$ are symmetric trade-off functions, their tensor product $f\otimes g$ is also symmetric.

Now we state the main theorem of this subsection. Its proof is given in \Cref{app:self}.
\begin{theorem} \label{thm:n_steps}
Let $M_i(\cdot,y_1,\cdots,y_{i-1})$ be $f_i$-DP for all $y_1 \in Y_1, \ldots, y_{i-1}\in Y_{i-1}$. Then the $n$-fold composed mechanism $M:X\to Y_1\times\cdots\times Y_n$ is $f_1\otimes\cdots\otimes f_n$-DP.
\end{theorem}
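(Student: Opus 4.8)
The plan is to induct on $n$, reducing everything to the two–mechanism case, and to handle that case by realizing the pair of output distributions of the composed mechanism as a \emph{post-processing} of a genuine product pair, so that Lemma~\ref{lem:post} together with the definition of $\otimes$ does the rest. For the inductive step I would regard $M_1,\dots,M_{n-1}$ as a single mechanism $\bar M\colon X\to Y_1\times\cdots\times Y_{n-1}$, which is $f_1\otimes\cdots\otimes f_{n-1}$-DP by the induction hypothesis, and view $M$ as the composition of $\bar M$ with $M_n$ (treating $Y_1\times\cdots\times Y_{n-1}$ as the intermediate output space and $M_n(\cdot,y_1,\dots,y_{n-1})$ as the conditional mechanism). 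Associativity of $\otimes$ then turns the two-mechanism conclusion $(f_1\otimes\cdots\otimes f_{n-1})\otimes f_n$ into $f_1\otimes\cdots\otimes f_n$, so everything comes down to $n=2$.

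\textbf{The case $n=2$.} Fix neighboring datasets $S,S'$ and write $P_1=M_1(S)$, $Q_1=M_1(S')$ on $Y_1$, so that $T(P_1,Q_1)\ge f_1$. Since $f_2$ is a trade-off function, write $f_2=T(U,V)$ for some distributions $U,V$ on a space $Z$. For each $y_1\in Y_1$ we have $T\big(M_2(S,y_1),M_2(S',y_1)\big)\ge f_2=T(U,V)$, so Blackwell's theorem (Theorem~\ref{thm:blackwell}, direction (a)$\Rightarrow$(b)) supplies a randomized map $\Proc_{y_1}\colon Z\to Y_2$ with $\Proc_{y_1}(U)=M_2(S,y_1)$ and $\Proc_{y_1}(V)=M_2(S',y_1)$. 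I would then bundle these into a single post-processing $\Proc\colon Y_1\times Z\to Y_1\times Y_2$, $\Proc(y_1,z)=\big(y_1,\Proc_{y_1}(z)\big)$. Inspecting marginals and conditionals, $\Proc(P_1\times U)$ has first marginal $P_1$ and, given $y_1$, conditional $\Proc_{y_1}(U)=M_2(S,y_1)$, i.e.\ $\Proc(P_1\times U)=M(S)$, and likewise $\Proc(Q_1\times V)=M(S')$. Hence, by Lemma~\ref{lem:post}, Definition~\ref{def:product}, and monotonicity of $\otimes$ (properties 1--2) together with $T(P_1,Q_1)\ge f_1$,
\[
T\big(M(S),M(S')\big)=T\big(\Proc(P_1\times U),\Proc(Q_1\times V)\big)\ \ge\ T(P_1\times U,Q_1\times V)=T(P_1,Q_1)\otimes f_2\ \ge\ f_1\otimes f_2 .
\]
Since $S,S'$ were arbitrary neighbors, $M$ is $f_1\otimes f_2$-DP.

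\textbf{The main obstacle.} The only genuinely delicate point is measurability: for $\Proc$ to be a legitimate post-processing, the family $\{\Proc_{y_1}\}$ must assemble into a jointly measurable Markov kernel, which amounts to selecting the Blackwell map measurably in the problem data $\big(M_2(S,\cdot),M_2(S',\cdot)\big)$. I would either invoke a measurable-selection argument, or bypass Blackwell entirely with a direct Fubini computation: for any rejection rule $\phi$ on $Y_1\times Y_2$ set $\psi(y_1)=\int\phi(y_1,\cdot)\,\mathrm{d}M_2(S,y_1)$; pointwise $f_2$-DP of $M_2(\cdot,y_1)$ gives $\int\phi(y_1,\cdot)\,\mathrm{d}M_2(S',y_1)\le 1-f_2(\psi(y_1))$, and the Neyman--Pearson lemma (Theorem~\ref{thm:NPlemma}) yields, for each $y_1$, a rejection rule $\phi'_{y_1}$ on $Z$ with type I error $\psi(y_1)$ and type II error $1-f_2(\psi(y_1))$; testing $P_1\times U$ against $Q_1\times V$ with $(y_1,z)\mapsto\phi'_{y_1}(z)$ and using that trade-off functions are non-increasing then gives $\int f_2(\psi(y_1))\,\mathrm{d}Q_1(y_1)\ge (T(P_1,Q_1)\otimes f_2)\big(\int\psi\,\mathrm{d}P_1\big)$, which is exactly the desired bound on $1-\mathbb{E}_{M(S')}[\phi]$ in terms of $\mathbb{E}_{M(S)}[\phi]$. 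Either route carries the same mild measurable-selection bookkeeping, and I expect that to be the only technical friction in the whole argument.
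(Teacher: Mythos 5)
Your proposal is correct, and the reduction to $n=2$ by induction plus associativity matches the paper's structure (the paper proves the two-step case as \Cref{lem:two_steps} and recurses). Where you diverge is in the key step: your primary route invokes Blackwell's theorem (\Cref{thm:blackwell}) to realize $\big(M(S),M(S')\big)$ as a post-processing of $\big(P_1\times U,\,Q_1\times V\big)$ and then applies \Cref{lem:post}. This is essentially the argument of \cite{KOV}, and the paper explicitly \emph{avoids} it: its proof goes through a comparison lemma (\Cref{lem:comparison}) established purely with the Neyman--Pearson lemma, precisely to circumvent Blackwell. You are right that the measurability of the bundled kernel $(y_1,z)\mapsto\Proc_{y_1}(z)$ is the sore point of the Blackwell route; it is in fact heavier than you let on, since the transition kernel produced by Blackwell's theorem depends non-constructively on the whole pair $\big(M_2(S,y_1),M_2(S',y_1)\big)$, whereas in the Neyman--Pearson route one only needs to select, measurably in the level $\psi(y_1)$, a threshold test for the \emph{fixed} pair $(U,V)$ --- a one-parameter family. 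Your fallback ``direct Fubini computation'' is, up to starting from an arbitrary rejection rule rather than an optimal one, exactly the paper's proof of \Cref{lem:comparison} specialized to the case where one family of kernels is constant, so that route coincides with the paper's. One small slip: the Neyman--Pearson-optimal rule at level $\psi(y_1)$ for $U$ vs $V$ has type II error $f_2(\psi(y_1))$ (equivalently, power $1-f_2(\psi(y_1))$), not type II error $1-f_2(\psi(y_1))$; your subsequent displayed inequality uses the correct quantity, so this is only a typo. Net assessment: correct, with the primary route buying conceptual transparency (composition as post-processing of a product experiment) at the cost of Blackwell plus a measurable-selection argument, and the paper's route buying self-containedness and elementarity at the cost of the slicing bookkeeping you sketch in your fallback.
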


This theorem shows that the composition of mechanisms remains $f$-DP or, put differently, composition is closed in the $f$-DP framework. Moreover, the privacy bound $f_1\otimes\cdots\otimes f_n$ in Theorem~\ref{thm:n_steps} is \textit{tight} in the sense that it cannot be improved in general. To see this point, consider the case where the second mechanism completely ignores the output of the first mechanism. In that case, the composition obeys
$$
\begin{aligned}
T\big(M(S),M(S')\big) &= T\big(M_1(S)\times M_2(S),M_1(S')\times M_2(S')\big) \\
&= T\big(M_1(S),M_1(S')\big)\otimes T\big(M_2(S),M_2(S')\big).
\end{aligned}
$$
Next, taking neighboring datasets such that $T\big(M_1(S),M_1(S')\big) = f_1$ and $T\big(M_2(S),M_2(S')\big) = f_2$, one concludes that $f_1 \otimes f_2$ is the tightest possible bound on the two-fold composition. For comparison, the advanced composition theorem for $(\ep, \delta)$-DP does not admit a single pair of optimal parameters $\epsilon, \delta$ \cite{boosting}. In particular, no pair of $\ep,\delta$ can exactly capture the privacy of the composition of $(\ep,\delta)$-DP mechanisms. See \Cref{sub:_ep_0_dp_} and \Cref{fig:comp} for more elaboration.

In the case of GDP,  composition enjoys a simple and convenient formulation due to the identity
\[
G_{\mu_1} \otimes G_{\mu_2} \otimes \cdots \otimes G_{\mu_n} = G_{\mu},
\]
where $\mu = \sqrt{\mu_1^2+\cdots+\mu_n^2}$. This formula is due to the rotational invariance of Gaussian distributions with identity covariance. We provide the proof in \Cref{app:CLT}. The following corollary formally summarizes this finding.


\begin{corollary}\label{cor:gdp_comp}
The $n$-fold composition of $\mu_i$-GDP mechanisms is $\sqrt{\mu_1^2+\cdots+\mu_n^2}$-GDP.
\end{corollary}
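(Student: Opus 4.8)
The plan is to read the corollary off Theorem~\ref{thm:n_steps} together with the tensorization identity for Gaussian trade-off functions. Since each $M_i(\cdot,y_1,\dots,y_{i-1})$ is $\mu_i$-GDP, that is $G_{\mu_i}$-DP, for every choice of $y_1,\dots,y_{i-1}$, Theorem~\ref{thm:n_steps} shows directly that the composed mechanism $M$ is $G_{\mu_1}\otimes\cdots\otimes G_{\mu_n}$-DP. All that remains is to simplify this product, so the content reduces entirely to the identity $G_{\mu_1}\otimes\cdots\otimes G_{\mu_n}=G_\mu$ with $\mu=\sqrt{\mu_1^2+\cdots+\mu_n^2}$, which is already recorded in the excerpt and could simply be cited. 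I will nonetheless outline its proof, since that is where the substance lies.

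By associativity of $\otimes$ it suffices, by induction on $n$, to treat the case $n=2$, i.e.\ $G_{\mu_1}\otimes G_{\mu_2}=G_{\sqrt{\mu_1^2+\mu_2^2}}$. Using $G_{\mu_i}=\F(\N(0,1),\N(\mu_i,1))$ and Definition~\ref{def:product},
\[
G_{\mu_1}\otimes G_{\mu_2}=\F\big(\N(0,1)\times\N(0,1),\ \N(\mu_1,1)\times\N(\mu_2,1)\big)=\F\big(\N(0,I_2),\ \N(v,I_2)\big),
\]
where $v=(\mu_1,\mu_2)\in\mathbb{R}^2$ and $I_2$ is the $2\times 2$ identity. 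Now choose an orthogonal matrix $U$ with $Uv=(\nu,0)$, where $\nu:=\|v\|=\sqrt{\mu_1^2+\mu_2^2}$. The deterministic, bijective map $x\mapsto Ux$ sends $\N(0,I_2)$ to $\N(0,I_2)$ and $\N(v,I_2)$ to $\N((\nu,0),I_2)$, so applying it and its inverse as post-processing maps and invoking Lemma~\ref{lem:post} in both directions (equivalently, Theorem~\ref{thm:blackwell} once) gives $\F(\N(0,I_2),\N(v,I_2))=\F(\N(0,I_2),\N((\nu,0),I_2))$. Since $I_2$ is diagonal, $\N((\nu,0),I_2)=\N(\nu,1)\times\N(0,1)$, and therefore, by Definition~\ref{def:product} and the identity $h\otimes\Id=h$,
\[
\F\big(\N(0,I_2),\N((\nu,0),I_2)\big)=\F\big(\N(0,1),\N(\nu,1)\big)\otimes\F\big(\N(0,1),\N(0,1)\big)=G_\nu\otimes\Id=G_\nu.
\]
Chaining these equalities yields $G_{\mu_1}\otimes G_{\mu_2}=G_\nu$, and the induction closes; substituting back into Theorem~\ref{thm:n_steps} proves the corollary.

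The only genuinely nontrivial step is the reduction ``the trade-off function of two Gaussians sharing the identity covariance depends on the means only through their Euclidean distance'' --- concretely, the rotation step --- which relies on the rotational invariance of $\N(0,I_2)$ and on post-processing being applicable through a bijection in both directions (so that neither inequality in Lemma~\ref{lem:post} is lost). One must also be a little careful that the auxiliary coordinate is handled correctly, but this is exactly the instance $G_\nu\otimes\Id=G_\nu$ of the already-established property $f\otimes\Id=f$, so no new argument is needed. Everything else --- unfolding $\otimes$, the diagonal splitting of $I_2$, and the final substitution --- is routine bookkeeping; and if one simply cites the displayed identity $G_{\mu_1}\otimes\cdots\otimes G_{\mu_n}=G_\mu$, the corollary is an immediate consequence of Theorem~\ref{thm:n_steps}.
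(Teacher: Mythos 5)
Your proposal is correct and follows essentially the same route as the paper: reduce to the two-fold identity $G_{\mu_1}\otimes G_{\mu_2}=G_{\sqrt{\mu_1^2+\mu_2^2}}$ via associativity, view the tensor product as the trade-off function of two bivariate Gaussians with identity covariance, rotate the mean onto a coordinate axis (using invariance of trade-off functions under invertible post-processing), and absorb the trivial factor via $G_\nu\otimes\Id=G_\nu$, then invoke Theorem~\ref{thm:n_steps}. The paper's appendix proof of the tensorization identity is word-for-word this argument, so there is nothing further to add.
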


On a related note, the pioneering work \cite{KOV} is the first to take the hypothesis testing viewpoint in the study of privacy composition and to use Blackwell's theorem as an analytic tool therein. In particular, the authors offered a composition theorem for $(\ep,\delta)$-DP that improves on the advanced composition theorem \cite{boosting}. Following this work, \cite{complexity} provided a self-contained proof by essentially proving the ``$(\ep,\delta)$ special case'' of Blackwell's theorem. In contrast, our novel proof of \Cref{thm:n_steps} only makes use of the Neyman--Pearson lemma, thereby circumventing the heavy machinery of Blackwell's theorem. This simple proof better illuminates the essence of the composition theorem.

\subsection{Central Limit Theorems for Composition} 
\label{sub:a_berry_esseen_type_of_clt}

\newcommand{\bkl}{\boldsymbol{\kl}}
\newcommand{\bk}{\boldsymbol{\kappa_2}}
\newcommand{\bkk}{\boldsymbol{\kappa_3}}
\newcommand{\bkbar}{\boldsymbol{\bar{\kappa}_3}}


In this subsection, we identify a central limit theorem type phenomenon of composition in the $f$-DP framework. Our main results (\Cref{thm:Berry} and \Cref{thm:CLT}), roughly speaking, show that trade-off functions corresponding to small privacy leakage accumulate to $G_\mu$ for some $\mu$ under composition. Equivalently, the privacy of the composition of many ``very private'' mechanisms is best measured by GDP in the limit. This identifies GDP as the focal privacy definition among the family of $f$-DP privacy guarantees, including $(\epsilon,\delta)$-DP. More precisely, \emph{all} privacy definitions that are based on a hypothesis testing formulation of ``indistinguishability'' converge to the guarantees of GDP in the limit of composition. We remark that \cite{sommer2018privacy} proved a conceptually related central limit theorem for random variables corresponding to the privacy loss. This theorem is used to reason about the non-adaptive composition for $(\epsilon,\delta)$-DP. In
contrast, our central limit theorem is concerned with the optimal hypothesis testing trade-off functions for the composition theorem. Moreover, our theorem is applicable in the setting of composition, where each mechanism is informed by prior interactions with the same database.

From a computational viewpoint, these limit theorems yield an efficient method of approximating the composition of general $f$-DP mechanisms. This is very appealing for analyzing the privacy properties of algorithms that are comprised of many building blocks in a sequence. For comparison, the exact computation of privacy guarantees under composition can be computationally hard \cite{complexity} and, thus, tractable approximations are important. Using our central limit theorems, the computation of the exact overall privacy guarantee $f_1\otimes\cdots\otimes f_n$ in \Cref{thm:n_steps} can be reduced to the evaluation of a single mean parameter $\mu$ in a GDP guarantee. We give an exemplary application of this powerful technique in \Cref{sec:application_in_sgd}.



Explicitly, the mean parameter $\mu$ in the approximation depends on certain functionals of the trade-off functions\footnote{Although the trade-off function satisfies $f'(x) \le 0$ almost everywhere on $[0, 1]$, we prefer to use $|f'(x)|$ instead of $-f'(x)$ for aesthetic reasons.}:
\begin{align*}
	\kl(f) &:= -\int_0^1\log |f'(x)|\diff x\\
	\kappa_2(f)&:=\int_0^1\log^2 |f'(x)| \diff x\\
	\kappa_3(f)&:=\int_0^1\big|\log |f'(x)|\big|^3\diff x\\
	\bar{\kappa}_3(f)&:=\int_0^1\big|\log |f'(x)|+\kl(f)\big|^3\diff x.
\end{align*}
All of these functionals take values in $[0,+\infty]$, and the last is defined for $f$ such that $\kl(f) < \infty$. In essence, these functionals are calculating moments of the log-likelihood ratio of $P$ and $Q$ such that $f=T(P,Q)$. In particular, all of these functionals are 0 if $f(x) = \Id(x) = 1 - x$, which corresponds to zero privacy leakage. As its name suggests, $\kl(f)$ is the Kullback--Leibler (KL) divergence of $P$ and $Q$ and, therefore, $\kl(f) \ge 0$. Detailed elaboration on these functionals is deferred to \Cref{app:CLT}.



In the following theorem, $\boldsymbol{\kl}$ denotes the vector $\big(\kl(f_{1}),\ldots, \kl(f_{n})\big)$ and $\bk, \bkk,\bkbar$ are defined similarly; in addition, $\|\cdot\|_1$ and $\|\cdot\|_2$ are the $\ell_1$ and $\ell_2$ norms, respectively.
\begin{restatable}{theorem}{berryrep} \label{thm:Berry}
Let $f_1,\ldots, f_n$ be symmetric trade-off functions such that $\kappa_3(f_i) < \infty$ for all $1 \le i \le n$. Denote
\[
\mu:= \frac{2\|\bkl\|_1}{\sqrt{\|\bk\|_1 - \|\bkl\|_2^2}}\,\, ~ \text{and} \,\, ~ \gamma:=\frac{0.56\|\bkbar\|_1}{\big(\|\bk\|_1 - \|\bkl\|_2^2\big)^{3/2}}
\]
and assume $\gamma < \frac12$. Then, for all $\alpha\in[\gamma,1-\gamma]$, we have\footnote{We can extend $G_{{\mu}}$ to be 1 in $(-\infty,0)$ and 0 in $(1,+\infty)$ so that the assumption that $\alpha\in[\gamma,1-\gamma]$ can be removed.}
\begin{equation}\label{eq:lower_upper}
G_\mu(\alpha+\gamma)-\gamma\leqslant f_{1}\otimes f_{2} \otimes \cdots \otimes f_{n}(\alpha)\leqslant G_\mu(\alpha-\gamma)+\gamma.
\end{equation}

\end{restatable}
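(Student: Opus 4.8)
The plan is to reduce the tensor product of trade-off functions to a statement about sums of independent log-likelihood-ratio random variables, and then apply a Berry--Esseen bound. First I would make the key observation underlying the functionals $\kl, \kappa_2, \kappa_3, \bar\kappa_3$: if $f_i = \F(P_i, Q_i)$ is a symmetric trade-off function with density ratio $p_i/q_i$, then the Neyman--Pearson lemma says the optimal test thresholds the likelihood ratio, so the trade-off function $f_1 \otimes \cdots \otimes f_n = \F(P_1 \times \cdots \times P_n, Q_1 \times \cdots \times Q_n)$ is determined by the distribution of the log-likelihood ratio statistic $L = \sum_{i=1}^n \log \frac{q_i(\xi_i)}{p_i(\xi_i)}$ where $\xi_i \sim P_i$ independently (and its mirror under $\xi_i \sim Q_i$). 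Concretely, for a threshold test one has a parametric curve $\big(\alpha(t), \beta(t)\big) = \big(\P_P[L > t], \P_Q[L \le t]\big)$ tracing out the graph of $f_1\otimes\cdots\otimes f_n$. I would want to express the pointwise value $f_1\otimes\cdots\otimes f_n(\alpha)$ in terms of these two CDFs and then relate everything to the $G_\mu$ curve, whose own parametric description is $\big(\Phi(-t), \Phi(t-\mu)\big)$ — i.e. the log-likelihood ratio between $\N(0,1)$ and $\N(\mu,1)$ is itself Gaussian with mean $\pm\mu^2/2$ and variance $\mu^2$.

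Next I would identify the per-coordinate moments. Writing $\ell_i = \log \frac{q_i(\xi_i)}{p_i(\xi_i)}$ with $\xi_i \sim P_i$, a change of variables converts the integrals defining the functionals: one shows $\E[\ell_i] = -\kl(f_i)$ (this is why $\kl$ is the KL divergence, appearing with the sign dictated by the direction of the ratio), $\E[\ell_i^2] = \kappa_2(f_i)$, and $\E|\ell_i - \E\ell_i|^3 = \bar\kappa_3(f_i)$; the symmetry of $f_i$ is what lets us pass freely between the $P_i$-law and the $Q_i$-law of the increments and get a \emph{symmetric} two-sided statement. Summing, $L$ under $P$ has mean $-\|\bkl\|_1$, variance $\|\bk\|_1 - \|\bkl\|_2^2$, and third absolute central moment $\|\bkbar\|_1$. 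The parameter $\mu$ in the theorem is then reverse-engineered so that a Gaussian with this mean and variance matches the \emph{standardized} Gaussian log-likelihood ratio for $G_\mu$: matching forces $\mu^2 = \big(2\|\bkl\|_1\big)^2 / (\|\bk\|_1 - \|\bkl\|_2^2)$, which is exactly the displayed formula, and $\gamma$ is (up to the constant $0.56$) the classical Berry--Esseen constant times $\|\bkbar\|_1 / (\text{variance})^{3/2}$.

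Then the core step is a Berry--Esseen application: the CDF of the standardized $L$ under $P$ is within $\gamma$ (in sup norm) of the standard normal CDF, and likewise under $Q$. Translating a $\pm\gamma$ sup-norm error on both the $\alpha$-axis description and the $\beta$-axis description of the parametric curve into the claimed sandwich $G_\mu(\alpha+\gamma)-\gamma \le f_1\otimes\cdots\otimes f_n(\alpha) \le G_\mu(\alpha-\gamma)+\gamma$ is a monotonicity/geometry argument: since both $G_\mu$ and the tensor product are non-increasing, an error of $\gamma$ in the horizontal coordinate and $\gamma$ in the vertical coordinate of a point on the graph forces the graph to lie between the two shifted copies of $G_\mu$, and this is where the restriction $\alpha \in [\gamma, 1-\gamma]$ enters (so the shifted arguments stay in range). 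I expect the main obstacle to be precisely this last translation done rigorously: one must be careful that the optimal test may be randomized and that the likelihood ratio can have atoms, so the parametric curve is really a concatenation of the threshold-test points with interpolating segments, and one has to check the Berry--Esseen comparison survives passing to the lower convex envelope / closure that defines $\F$. A secondary technical point is justifying the change-of-variables identities for the functionals when $f_i$ is merely convex (hence differentiable a.e.) rather than smooth, and handling the possibility $\kl(f_i) = 0$ (no privacy leakage in a coordinate), which should be dispatched by a limiting/degenerate-case argument. Once the probabilistic picture is set up, the rest is bookkeeping with $\ell_1$ and $\ell_2$ norms of the moment vectors.
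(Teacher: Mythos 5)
Your proposal is correct and follows essentially the same route as the paper's proof: realize each $f_i$ as $T(P,Q_i)$ with $P$ uniform and $Q_i$ having density $|f_i'(1-x)|$, reduce the tensor product to threshold tests on the sum of independent log-likelihood ratios whose moments are exactly the functionals $\kl,\kappa_2,\bar\kappa_3$ (with symmetry supplying the matching moments under the alternative), apply the classical Berry--Esseen bound with constant $0.56$, and convert the two-sided CDF approximation into the sandwich via the Neyman--Pearson characterization of the optimal (possibly randomized) test. The technical caveats you flag — atoms in the likelihood ratio handled by randomization, and the change-of-variables identities for merely convex $f_i$ — are precisely the points the paper's proof addresses.
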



Loosely speaking, the lower bound in \eqref{eq:lower_upper} shows that the composition of $f_i$-DP mechanisms for $i = 1, \ldots, n$ is approximately $\mu$-GDP and, in addition, the upper bound demonstrates that the tightness of this approximation is specified by $\gamma$. In the case where all $f_i$ are equal to some $f\neq \Id$, the theorem reveals that the composition becomes blatantly non-private as $n \to \infty$ because $\mu \asymp \sqrt{n} \to \infty$. More interesting applications of the theorem, however, are cases where each $f_i$ is close to the ``perfect privacy'' trade-off function $\Id$ such that collectively $\mu$ is convergent and $\gamma$ vanishes as $n \to \infty$ (see the example in \Cref{sec:application_in_sgd}). For completeness, the condition $\kappa_3(f_i) < \infty$ (which implies that the other three functionals are also finite) for the use of this theorem excludes the case where $f_i(0) < 1$, in particular, $f_{\epsilon,\delta}$ in $(\epsilon, \delta)$-DP with
$\delta > 0$. We introduce an easy and general technique in \Cref{sub:_ep_0_dp_} to deal with this issue.


From a technical viewpoint, Theorem~\ref{thm:Berry} can be thought of as a Berry--Esseen type central limit theorem.
The detailed proof, as well as that of \Cref{thm:CLT}, is provided in \Cref{app:CLT}.

Next, we present an asymptotic version of \Cref{thm:Berry} for composition of $f$-DP mechanisms. In analogue to classical central limit theorems, below we consider a triangular array of mechanisms  $\{M_{n1},\ldots, M_{nn}\}_{n=1}^{\infty}$, where $M_{ni}$ is $f_{ni}$-DP for $1 \le i \le n$.



\begin{restatable}{theorem}{asymprep} \label{thm:CLT}
Let $\{f_{ni}: 1\leqslant i \leqslant n\}_{n=1}^{\infty}$ be a triangular array of symmetric trade-off functions and assume the following limits for some constants $K \ge 0$ and $s > 0$ as $n \to \infty$:
	\begin{enumerate}
		\item[\textup{1.}] $\sum_{i=1}^n \kl(f_{ni})\to K;$
		\item[\textup{2.}] $\max_{1\leqslant i\leqslant n} \kl(f_{ni}) \to 0;$
		\item[\textup{3.}] $\sum_{i=1}^n \kappa_2(f_{ni})\to s^2;$
		\item[\textup{4.}] $\sum_{i=1}^n \kappa_3(f_{ni})\to 0$.
	\end{enumerate}
Then, we have
	$$\lim_{n\to \infty} f_{n1}\otimes f_{n2} \otimes \cdots \otimes f_{nn} (\alpha) = G_{2K/s}(\alpha)$$
uniformly for all $\alpha \in [0,1]$.
\end{restatable}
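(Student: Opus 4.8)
The plan is to deduce \Cref{thm:CLT} from the non‑asymptotic Berry--Esseen bound of \Cref{thm:Berry} by a limiting argument, so that no fresh probabilistic machinery is needed; the genuine CLT content is already packaged in \Cref{thm:Berry}. First I would verify that for all sufficiently large $n$ the hypotheses of \Cref{thm:Berry} are met by the $n$-th row $f_{n1},\dots,f_{nn}$: assumption~4 forces $\kappa_3(f_{ni})<\infty$ for every $i$ once $n$ is large (otherwise $\sum_i\kappa_3(f_{ni})$ could not converge to $0$), and the quantity $\gamma=\gamma_n$ of \Cref{thm:Berry} will be shown to tend to $0$, hence to satisfy $\gamma_n<\tfrac12$ eventually. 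For such $n$, applying \Cref{thm:Berry} (with $G_\mu$ extended by $1$ on $(-\infty,0)$ and $0$ on $(1,\infty)$, as in its footnote) yields
\[
G_{\mu_n}(\alpha+\gamma_n)-\gamma_n\ \le\ f_{n1}\otimes\cdots\otimes f_{nn}(\alpha)\ \le\ G_{\mu_n}(\alpha-\gamma_n)+\gamma_n
\]
for every $\alpha\in[0,1]$, where $\mu_n,\gamma_n$ are built from $\bkl,\bk,\bkbar$ for row $n$.

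Next I would identify the limits of $\mu_n$ and $\gamma_n$. From assumptions~1 and~3, $\|\bkl\|_1=\sum_i\kl(f_{ni})\to K$ and $\|\bk\|_1=\sum_i\kappa_2(f_{ni})\to s^2$. The key extra observation is $\|\bkl\|_2^2=\sum_i\kl(f_{ni})^2\le\big(\max_i\kl(f_{ni})\big)\,\|\bkl\|_1\to 0\cdot K=0$, using assumption~2; this is exactly what prevents $\mu_n$ from blowing up, giving $\mu_n=2\|\bkl\|_1/\sqrt{\|\bk\|_1-\|\bkl\|_2^2}\to 2K/s$ (here $s>0$ is used). For $\gamma_n$ the denominator $(\|\bk\|_1-\|\bkl\|_2^2)^{3/2}$ tends to $s^3>0$, so it suffices to show $\|\bkbar\|_1=\sum_i\bar\kappa_3(f_{ni})\to 0$. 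I would obtain this from assumption~4 via the pointwise bound $\bar\kappa_3(f)\le 8\,\kappa_3(f)$: writing $g=\log|f'|$ and $\bar g=\int_0^1 g\,\diff x$ (so $\kl(f)=-\bar g$), the inequality $|g-\bar g|^3\le 4|g|^3+4|\bar g|^3$ (convexity of $t\mapsto t^3$) together with $|\bar g|^3\le\int_0^1|g|^3\,\diff x=\kappa_3(f)$ (Jensen) gives $\bar\kappa_3(f)=\int_0^1|g-\bar g|^3\,\diff x\le 8\kappa_3(f)$. Hence $\gamma_n\to 0$.

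Finally I would pass to the limit in the sandwich with $\mu_\infty:=2K/s$. Two elementary continuity facts do the work: since $\Phi$ is $\tfrac1{\sqrt{2\pi}}$-Lipschitz, $|G_\mu(\alpha)-G_{\mu'}(\alpha)|\le|\mu-\mu'|/\sqrt{2\pi}$ uniformly in $\alpha\in[0,1]$, so $G_{\mu_n}\to G_{\mu_\infty}$ uniformly; and $G_{\mu_\infty}$, continuous on the compact interval $[0,1]$ (its one-sided limits at $0$ and $1$ being $1$ and $0$, matching the extension), is uniformly continuous, so $\sup_\alpha|G_{\mu_\infty}(\alpha\pm\gamma_n)-G_{\mu_\infty}(\alpha)|\to 0$. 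Feeding these into the two displayed inequalities, together with $\gamma_n\to 0$, squeezes $f_{n1}\otimes\cdots\otimes f_{nn}$ between $G_{\mu_\infty}-o(1)$ and $G_{\mu_\infty}+o(1)$ uniformly on $[0,1]$, which is the assertion.

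The only actual labor is the bookkeeping in the middle paragraph — checking $\bar\kappa_3\lesssim\kappa_3$ so that assumption~4 controls $\gamma_n$, and checking $\|\bkl\|_2^2\to 0$ so that assumption~2 is what makes $\mu_n$ converge rather than diverge. I do not expect a real obstacle in this derivation; the substantive difficulty of the whole central limit phenomenon lives one level down, inside \Cref{thm:Berry}, where the Berry--Esseen estimate for the Neyman--Pearson statistic $\sum_i\log(\mathrm{d}P_{ni}/\mathrm{d}Q_{ni})$ is established (including the reduction — legitimate because each $f_{ni}$ is symmetric — to a sum of appropriately centered and scaled log‑likelihood ratios). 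Since that theorem is available to us here, the present proof is short.
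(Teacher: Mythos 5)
Your proposal is correct and follows essentially the same route as the paper: apply \Cref{thm:Berry} row-wise, show $\|\bkl\|_2^2\to 0$ from assumption~2 so that $\mu_n\to 2K/s$, show $\|\bkbar\|_1\to 0$ so that $\gamma_n\to 0$, and pass to the limit in the sandwich. Your two local shortcuts are both sound and in fact slightly cleaner than the paper's: the bound $\bar\kappa_3(f)\leqslant 8\kappa_3(f)$ (via $|g-\bar g|^3\leqslant 4|g|^3+4|\bar g|^3$ and Jensen) replaces the paper's more elaborate estimate $\bar\kappa_3\leqslant\kappa_3+3\,\kl\cdot\kappa_2+3\,\kl^2\sqrt{\kappa_2}+\kl^3$ plus Cauchy--Schwarz, and the Lipschitz-in-$\mu$ argument gives uniform convergence directly where the paper proves pointwise convergence and then invokes a P\'olya-type lemma.
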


Taken together, this theorem and \Cref{thm:n_steps} amount to saying that the composition $M_{n1} \otimes \ldots \otimes M_{nn}$ is asymptotically ${2K/s}$-GDP. In fact, this asymptotic version is a consequence of Theorem~\ref{thm:Berry} as one can show $\mu \to 2K/s$ and $\gamma \to 0$ for the triangular array of symmetric trade-off functions. This central limit theorem implies that GDP is the \emph{only} parameterized family of trade-off functions that can faithfully represent the effects of composition. In contrast, neither $\epsilon$- nor $(\epsilon,\delta)$-DP can losslessly be tracked under composition---the parameterized family of functions $f_{\epsilon,\delta}$ cannot represent the trade-off function that results from the limit under composition.



The conditions for use of this theorem are reminiscent of Lindeberg's condition in the central limit theorem for independent random variables. The proper scaling of the trade-off functions is that both $\kl(f_{ni})$ and $\kappa_2(f_{ni})$ are of order $O(1/n)$ for most $1 \le i \le n$. As a consequence, the cumulative effects of the moment functionals are bounded. Furthermore, as with Lindeberg's condition, the second condition in Theorem~\ref{thm:CLT} requires that no single mechanism has a significant contribution to the composition in the limit.


In passing, we remark that $K$ and $s$ satisfy the relationship $s = \sqrt{2K}$ in all examples of the application of \Cref{thm:CLT} in this paper, including \Cref{thm:DPCLT} and \Cref{thm:mixtureSGD} as well as their corollaries. As such, the composition is asymptotically $s$-GDP. A proof of this interesting observation or the construction of a counterexample is left for future work.

\subsection{Composition of $(\ep,\delta)$-DP: Beating Berry--Esseen} 
\label{sub:_ep_0_dp_}

\newcommand{\Bern}{\mathrm{Bern}}


Now, we extend central limit theorems to $(\ep,\delta)$-DP. As shown by \Cref{thm:privacy_testing}, $(\ep, \delta)$-DP is equivalent to $f_{\epsilon, \delta}$-DP and, therefore, it suffices to approximate the trade-off function $f_{\ep_{1},\delta_{1}}\otimes \cdots \otimes f_{\ep_{n},\delta_{n}}$ by making use of the composition theorem for $f$-DP mechanisms. As pointed out in Section~\ref{sub:a_berry_esseen_type_of_clt}, however, the moment conditions required in the two central limit theorems (Theorems~\ref{thm:Berry} and \ref{thm:CLT}) exclude the case where $\delta_i > 0$.

To overcome the difficulty caused by a nonzero $\delta$, we start by observing the useful fact that
\begin{equation}\label{eq:decomposition}
f_{\ep,\delta} = f_{\ep,0}\otimes f_{0,\delta}.
\end{equation}
This decomposition, along with the commutative and associative properties of the tensor product, shows
$$f_{\ep_{1},\delta_{1}}\otimes \cdots \otimes f_{\ep_{n},\delta_{n}} = \big(f_{\ep_{1},0}\otimes \cdots \otimes f_{\ep_{n},0}\big)\otimes \big(f_{0,\delta_{1}}\otimes \cdots \otimes f_{0,\delta_{n}}\big).$$
This identity allows us to work on the $\epsilon$ part and $\delta$ part separately. In short, the $\epsilon$ part $f_{\ep_{1},0}\otimes \cdots \otimes f_{\ep_{n},0}$ now can be approximated by $G_{\sqrt{\ep_1^2+\cdots+\ep_n^2}}$ by invoking Theorem~\ref{thm:CLT}. For the $\delta$ part, we can iteratively apply the rule
\begin{equation}\label{eq:delta}
f_{0,\delta_1}\otimes f_{0,\delta_2} = f_{0,1-(1-\delta_1)(1-\delta_2)}
\end{equation}
to obtain $f_{0,\delta_{1}}\otimes \cdots \otimes f_{0,\delta_{n}} = f_{0, 1 - (1 - \delta_1)(1 - \delta_2) \cdots (1 - \delta_n)}$. This rule is best seen via the interesting fact that $f_{0,\delta}$ is the trade-off function of shifted uniform distributions $T\big(U[0,1],U[\delta,1+\delta]\big)$.

Now, a central limit theorem for $(\epsilon, \delta)$-DP is just a stone's throw away. In what follows, the privacy parameters $\ep$ and $\delta$ are arranged in a triangular array $\{(\ep_{ni},\delta_{ni}):1\leqslant i\leqslant n\}_{n=1}^{\infty}$.
\begin{restatable}{theorem}{DPCLTrep}\label{thm:DPCLT}
Assume
$$\sum_{i=1}^n \ep_{ni}^2 \to \mu^2, \quad \max_{1\leqslant i\leqslant n} \ep_{ni}\to 0, \quad \sum_{i=1}^n \delta_{ni} \to \delta, \quad \max_{1\leqslant i\leqslant n} \delta_{ni}\to 0$$
for some nonnegative constants $\mu,\delta$ as $n \to \infty$. Then, we have
$$f_{\ep_{n1},\delta_{n1}}\otimes \cdots \otimes f_{\ep_{nn},\delta_{nn}} \to G_{\mu}\otimes f_{0, 1 - \e^{-\delta}}$$
uniformly over $[0, 1]$ as $n \to \infty$.
\end{restatable}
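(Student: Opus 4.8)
The plan is to peel off the two privacy parameters separately and reduce the analysis of the $\epsilon$-component to the $f$-DP central limit theorem \Cref{thm:CLT}, while handling the $\delta$-component exactly. Starting from the decomposition $f_{\ep,\delta}=f_{\ep,0}\otimes f_{0,\delta}$ in \eqref{eq:decomposition} and using commutativity and associativity of the tensor product, we rewrite
\[
f_{\ep_{n1},\delta_{n1}}\otimes\cdots\otimes f_{\ep_{nn},\delta_{nn}}=\big(f_{\ep_{n1},0}\otimes\cdots\otimes f_{\ep_{nn},0}\big)\otimes\big(f_{0,\delta_{n1}}\otimes\cdots\otimes f_{0,\delta_{nn}}\big),
\]
so it suffices to analyze the two factors and then recombine them. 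For the $\delta$-factor, iterating \eqref{eq:delta} gives $f_{0,\delta_{n1}}\otimes\cdots\otimes f_{0,\delta_{nn}}=f_{0,\Delta_n}$ with $\Delta_n=1-\prod_{i=1}^n(1-\delta_{ni})$; since $\log(1-\delta_{ni})=-\delta_{ni}+O(\delta_{ni}^2)$ and $\sum_i\delta_{ni}^2\leqslant(\max_i\delta_{ni})\sum_i\delta_{ni}\to 0$, we get $\sum_i\log(1-\delta_{ni})\to-\delta$, hence $\Delta_n\to 1-\e^{-\delta}$.

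For the $\epsilon$-factor I will apply \Cref{thm:CLT} to the symmetric trade-off functions $f_{ni}:=f_{\ep_{ni},0}$. The required moment functionals are easy to compute because $f_{\ep,0}$ is piecewise linear, with slope $-\e^{\ep}$ on $[0,1/(\e^\ep+1)]$ and slope $-\e^{-\ep}$ on $[1/(\e^\ep+1),1]$ (the kink is found by solving $1-\e^{\ep}\alpha=\e^{-\ep}(1-\alpha)$); this yields $\kl(f_{\ep,0})=\ep\tanh(\ep/2)$, $\kappa_2(f_{\ep,0})=\ep^2$, and $\kappa_3(f_{\ep,0})=\ep^3$. The four hypotheses of \Cref{thm:CLT} then follow directly from the assumptions: $\sum_i\kappa_2(f_{ni})=\sum_i\ep_{ni}^2\to\mu^2$ (so $s=\mu$); $\sum_i\kappa_3(f_{ni})=\sum_i\ep_{ni}^3\leqslant(\max_i\ep_{ni})\sum_i\ep_{ni}^2\to 0$; $\max_i\kl(f_{ni})\leqslant\tfrac12(\max_i\ep_{ni})^2\to 0$; and, using $\tanh(\ep/2)=\ep/2+O(\ep^3)$ together with $\sum_i\ep_{ni}^4\leqslant(\max_i\ep_{ni})^2\sum_i\ep_{ni}^2\to 0$, $\sum_i\kl(f_{ni})\to\mu^2/2$ (so $K=\mu^2/2$). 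Thus \Cref{thm:CLT} gives $f_{\ep_{n1},0}\otimes\cdots\otimes f_{\ep_{nn},0}\to G_{2K/s}=G_\mu$ uniformly on $[0,1]$. (The degenerate case $\mu=0$, excluded by the hypothesis $s>0$ of \Cref{thm:CLT}, is simpler: then $\sum_i\kl(f_{ni})\to 0$, and combining additivity of $\kl$ over tensor products with Pinsker's inequality and \eqref{eq:tv_norm} forces the $\epsilon$-factor to converge uniformly to $G_0=\Id$.)

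To recombine the two limits I will avoid any general continuity statement for $\otimes$ and instead exploit the concrete description $f_{0,\Delta}=\F\big(U[0,1],U[\Delta,1+\Delta]\big)$. By the Neyman--Pearson lemma one computes, for every trade-off function $h$, the explicit identity
\[
h\otimes f_{0,\Delta}(\alpha)=(1-\Delta)\,\bar h\!\left(\frac{\alpha}{1-\Delta}\right),\qquad\alpha\in[0,1],
\]
where $\bar h$ denotes $h$ extended by the value $0$ on $(1,\infty)$: the second coordinate produces a block of type-II mass that is rejected at zero type-I cost and a block carrying no $Q$-mass, leaving on the remaining event of probability $1-\Delta$ exactly the likelihood-ratio test governing $h$. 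Applying this with $h=f_{\ep_{n1},0}\otimes\cdots\otimes f_{\ep_{nn},0}$ and $\Delta=\Delta_n$, and using that $h\to G_\mu$ uniformly, $\Delta_n\to 1-\e^{-\delta}<1$, and $G_\mu$ is uniformly continuous on $[0,1]$ (so the shift $\alpha/(1-\Delta_n)\to\alpha/(1-\Delta)$ is absorbed uniformly in $\alpha$), we conclude
\[
f_{\ep_{n1},\delta_{n1}}\otimes\cdots\otimes f_{\ep_{nn},\delta_{nn}}\longrightarrow G_\mu\otimes f_{0,1-\e^{-\delta}}
\]
uniformly on $[0,1]$.

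I expect two steps to require the most care. The first is pinning down the constants: the identity $\tanh(\ep/2)=\ep/2+O(\ep^3)$ is exactly what produces $K=\mu^2/2$ and $s=\mu$, hence $2K/s=\mu$, and all the error terms have to be shown negligible using the Lindeberg-type condition $\max_i\ep_{ni}\to 0$ (without it, e.g.\ a single large $\ep_{ni}$ would spoil the identification). The second is making the final recombination uniform in $\alpha$, in particular near the kink $\alpha=1-\Delta_n$ of the $\delta$-factor; phrasing the tensor with $f_{0,\Delta}$ through the extended function $\bar h$ is precisely the device that keeps this estimate uniform and avoids a separate case analysis there.
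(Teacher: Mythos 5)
Your proposal is correct and follows essentially the same route as the paper's proof: the decomposition $f_{\ep,\delta}=f_{\ep,0}\otimes f_{0,\delta}$, the exact collapse of the $\delta$-factor to $f_{0,\Delta_n}$ with $\Delta_n\to 1-\e^{-\delta}$, the verification of the four hypotheses of \Cref{thm:CLT} via $\kl(f_{\ep,0})=\ep\tanh(\ep/2)$, $\kappa_2=\ep^2$, $\kappa_3=\ep^3$, and the recombination through the scaling identity \eqref{prop:ruibbit} together with uniform convergence of the $\epsilon$-factor. Your explicit treatment of the degenerate case $\mu=0$ (excluded by the $s>0$ hypothesis of \Cref{thm:CLT}) is a small but genuine addition that the paper's argument glosses over.
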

\begin{remark}
A formal proof is provided in \Cref{app:CLT}. The assumptions concerning $\{\delta_{ni}\}$ give rise to $1 - (1 - \delta_{n1})(1 - \delta_{n2}) \cdots (1 - \delta_{nn}) \to 1 - \e^{-\delta}$. In general, tensoring with $f_{0,\delta}$ is equivalent to scaling the graph of the trade-off function $f$ toward the origin by a factor of $1-\delta$. This property is specified by the following formula, and we leave its proof to \Cref{app:CLT}:
	\begin{equation}\label{prop:ruibbit}
	f\otimes f_{0,\delta}(\alpha) =
		\left\{
		\begin{array}{ll}
		(1-\delta)\cdot f(\frac{\alpha}{1-\delta}), 		& 0\leqslant \alpha \leqslant 1-\delta \\
		0, & 1-\delta\leqslant \alpha\leqslant 1.
		\end{array}
		\right.
	\end{equation}
In particular, $f\otimes f_{0,\delta}$ is symmetric if $f$ is symmetric. Note that \eqref{eq:decomposition} and \eqref{eq:delta} can be deduced by the formula above.
\end{remark}

This theorem interprets the privacy level of the composition using Gaussian and uniform distributions. Explicitly, the theorem demonstrates that, based on the released information of the composed mechanism, distinguishing between any neighboring datasets is at least as hard as distinguishing between the following two bivariate distributions:
\[
\N(0, 1)\times U[0, 1]  \text{ versus }  \N(\mu, 1)\times U[1 - \e^{-\delta}, 2 - \e^{-\delta}].
\]
We note that for small $\delta$, $\e^{-\delta}\approx 1-\delta$. So $U[1 - \e^{-\delta}, 2 - \e^{-\delta}]\approx U[\delta,1+\delta]$.

This approximation of the tensor product $f_{\ep_{n1},\delta_{n1}}\otimes \cdots \otimes f_{\ep_{nn},\delta_{nn}}$ using simple distributions is important from the viewpoint of computational complexity. Murtagh and Vadhan \cite{complexity} showed that, given a collection of $\{(\ep_i,\delta_i)\}_{i=1}^n$, finding the smallest $\ep$ such that $f_{\ep,\delta}\leqslant f_{\ep_{1},\delta_{1}}\otimes \cdots \otimes f_{\ep_{n},\delta_{n}}$ is \#P-hard\footnote{\#P is a complexity class that is ``even harder than'' NP (i.e.~a polynomial time algorithm for any \#P-hard problem would imply P=NP).  See, e.g.,  Ch. 9.~of \cite{arora2009computational}.} for any $\delta$. From the dual perspective (see \Cref{sub:a_primal_dual_connection_with_}), this negative result is equivalent to the
\#P-hardness of evaluating the convex conjugate $\big(f_{\ep_{1},\delta_{1}}\otimes \cdots \otimes f_{\ep_{n},\delta_{n}}\big)^*$ at any point. For completeness, we remark that \cite{complexity} provided an FPTAS\footnote{An approximation algorithm is called a fully polynomial-time approximation scheme (FPTAS) if its running time is polynomial in both the input size and the inverse of the relative approximation error. See, e.g., Ch.~8.~of \cite{vazirani2013approximation}.} to approximately find the smallest $\epsilon$ in $O(n^3)$ time for a \textit{single} $\delta$. In comparison, Theorem~\ref{thm:DPCLT} offers a \textit{global} approximation of the tensor product in $O(n)$ time using a closed-form expression, subsequently enabling an analytical approximation of the smallest
$\epsilon$ for each $\delta$.


\begin{figure}[!htp]
\centering
  \includegraphics[width=0.75\linewidth]{./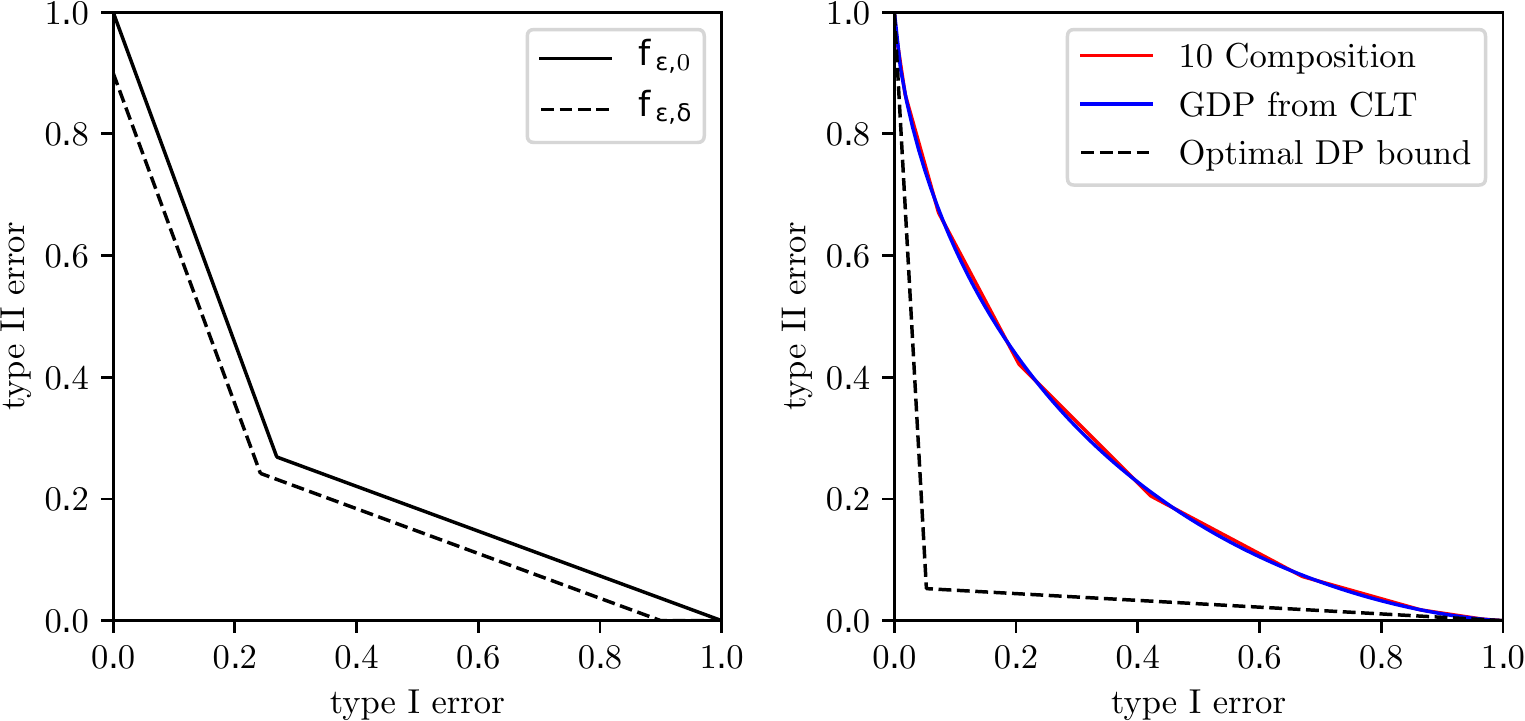}
  \captionof{figure}{Left: Tensoring with $f_{0,\delta}$ scales the graph towards the origin by a factor of $1-\delta$. Right: 10-fold composition of $(1/\sqrt{10},0)$-DP mechanisms, that is, $f_{\ep,0}^{\otimes n}$ with $n=10, \ep=1/\sqrt{n}.$ The dashed curve corresponds to $\ep=2.89,\delta = 0.001$. These values are obtained by first setting $\delta = 0.001$ and finding the smallest $\ep$ such that the composition is $(\ep,\delta)$-DP. Note that the central limit theorem approximation to the true trade-off curve is almost perfect, whereas the tightest possible approximation via $(\ep,\delta)$-DP is substantially looser.}
  \label{fig:comp}
\end{figure}

That being said, \Cref{thm:DPCLT} remains silent on the approximation error in applications with a moderately large number of $(\epsilon, \delta)$-DP mechanisms. Alternatively, we can apply \Cref{thm:Berry} to obtain a non-asymptotic normal approximation to $f_{\ep_{1},0}\otimes \cdots \otimes f_{\ep_{n},0}$ and use $\gamma$ to specify the approximation error. It can be shown that $\gamma = O(1/\sqrt{n})$ under mild conditions (\Cref{cor:e0Berry}). This bound, however, is not sharp enough for tight privacy guarantees if $n$ is not too large (note that $1/\sqrt{n} \approx 0.14$ if $n = 50$, for which exact computation is already challenging, if possible at all). Surprisingly, the following theorem establishes a $O(1/n)$ bound, thereby ``beating'' the classical Berry--Esseen bound.
\begin{theorem} \label{thm:fast}
Fix $\mu > 0$ and let $\ep = \mu/\sqrt{n}$. There is a constant $c > 0$ that only depends on $\mu$ satisfying
\[
G_{\mu}\left( \alpha+\tfrac{c}{n} \right) - \tfrac{c}{n} \leqslant f_{\ep,0}^{\otimes n}(\alpha)\leqslant G_{\mu}\left( \alpha-\tfrac{c}{n} \right) + \tfrac{c}{n}
\]
for all $n\geqslant 1$ and $c/n \le \alpha \le 1 - c/n$.
\end{theorem}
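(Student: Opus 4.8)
The plan is to reduce the statement to a quantitative comparison between two explicitly computable trade-off functions, avoiding any appeal to a generic Berry--Esseen bound. Write $f = f_{\ep,0}$ with $\ep = \mu/\sqrt n$. By \Cref{thm:privacy_testing}, $f$ is the trade-off function $T(P,Q)$ where $P,Q$ can be taken to be $\Bern$-type two-point distributions (or, conveniently, $P = \frac12(\delta_0+\delta_1)$ against the reweighting giving likelihood ratios $\e^{\pm\ep}$); concretely one may represent $f_{\ep,0}$ via two distributions on $\{0,1\}$ with $\P_P[0]/\P_Q[0]=\e^{\ep}$ and $\P_P[1]/\P_Q[1]=\e^{-\ep}$. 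Then $f^{\otimes n} = T(P^{\times n}, Q^{\times n})$, and by the Neyman--Pearson lemma the optimal tests thresholding the log-likelihood ratio $\log\frac{dP^{\times n}}{dQ^{\times n}} = \sum_{i=1}^n Z_i$, where the $Z_i$ are i.i.d.\ with $Z_i \in \{+\ep,-\ep\}$. Under $P$ (resp.\ $Q$) the variable $Z_i$ takes value $+\ep$ with probability $\frac{\e^{\ep}}{1+\e^{\ep}}$ (resp.\ $\frac{1}{1+\e^{\ep}}$). Thus the trade-off curve $f^{\otimes n}(\alpha)$ is \emph{exactly} described by the trade-off between two shifted binomial-type sums, and the heart of the matter is the local central limit behaviour of $\sum Z_i$.

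First I would set up the exact formula: the optimal test rejects when $\sum_i Z_i > t$ (with randomization at the boundary atom), so that $\alpha(t) = \P_P[\sum Z_i > t]$ and $\beta(t) = \P_Q[\sum Z_i \le t]$, and $f^{\otimes n}(\alpha(t)) = \beta(t)$. Since $\sum Z_i = \ep(2 N - n)$ where $N \sim \mathrm{Bin}(n, p_P)$ under $P$ with $p_P = \e^{\ep}/(1+\e^{\ep}) = \frac12 + \frac{\ep}{4} + O(\ep^3)$ (and $p_Q = \frac12 - \frac{\ep}{4} + O(\ep^3)$), the problem becomes: compare the binomial tail trade-off to $G_\mu$. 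The key computation is a Berry--Esseen \emph{plus one order} expansion of the binomial CDF: because the $Z_i$ are symmetric two-point variables (under the midpoint measure, or using the Edgeworth expansion), the first Edgeworth correction term---which carries the skewness and is the usual source of the $O(1/\sqrt n)$ error---either vanishes or is itself $O(1/\sqrt n)$ smaller than generic. Precisely, with $\ep = \mu/\sqrt n$ one has mean shift $n \cdot \frac{\ep}{2}\cdot 2 = $ of order $\sqrt n \cdot \ep \sim \mu$ in the right units and variance $n\ep^2 \cdot(\text{const}) \to \mu^2$, and the skewness of the centered sum is $O(\ep) = O(1/\sqrt n)$; feeding this into the Edgeworth expansion for lattice distributions gives $\sup_x |\P[\,\cdot \le x] - \Phi(\cdot) - (\text{correction})| = O(1/n)$, and crucially the leading correction, being proportional to the third cumulant $\sim \ep^3 n = \mu^3/\sqrt n$, contributes only $O(1/\sqrt n)\cdot\frac{1}{\sqrt n} = O(1/n)$ — wait, more carefully: the third cumulant of $\sum Z_i$ is $n\cdot\mathbb{E}[(Z_i - \bar Z)^3] \asymp n\ep^3 \cdot O(\ep) \asymp n\ep^4 = \mu^4/n \to 0$, because the \emph{centered} two-point variable is nearly symmetric (the asymmetry is itself $O(\ep)$). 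This is exactly where the extra power of $n$ comes from, and pinning this down rigorously is the main obstacle.

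So the proof has three stages. (1) Reduce to binomials via Neyman--Pearson, as above. (2) Prove the refined local/integral limit theorem: $\P_P[\sum Z_i \le x]$ and $\P_Q[\sum Z_i \le x]$ are each within $O(1/n)$ of appropriate Gaussian CDFs, using that the relevant third (centered) cumulant is $O(1/n)$ rather than $O(1/\sqrt n)$; a convenient route is to compare directly to the Gaussian at the level of characteristic functions, or to compare $\mathrm{Bin}(n,\frac12\pm\frac{\ep}{4})$ to $\mathrm{Bin}(n,\frac12)$ (for which the symmetric Berry--Esseen constant and the vanishing-skewness trick are classical) plus a $O(n\ep^2) = O(\mu^2)$-controlled shift. (3) Transfer the CDF bound to the trade-off curve: since $G_\mu(\alpha) = \Phi(\Phi^{-1}(1-\alpha)-\mu)$ is Lipschitz away from the endpoints with slope controlled on $[c/n, 1-c/n]$, a uniform $O(1/n)$ error in the two CDFs translates (composing with the quantile function, using $|G_\mu'|$ bounds) into the stated horizontal-and-vertical $c/n$ sandwich $G_\mu(\alpha + c/n) - c/n \le f^{\otimes n}(\alpha) \le G_\mu(\alpha - c/n) + c/n$. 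The constant $c$ absorbs the dependence on $\mu$ through the variance $\mu^2$, the skewness bound, and the Lipschitz constant of $G_\mu$ near the relevant range; one should track that these blow up only polynomially in $\mu$ so that a single $c = c(\mu)$ works for all $n \ge 1$ (handling small $n$ trivially by making $c$ large enough that the interval $[c/n, 1-c/n]$ is empty). The delicate point throughout is stage (2): making the "skewness is $O(1/\sqrt n)$ smaller than usual" heuristic into a clean $O(1/n)$ bound uniform in the lattice points, for which I would either invoke a sharp Edgeworth expansion for lattice variables (e.g.\ Esseen's or the Bhattacharya--Rao refinement) with explicit control of the remainder, or carry out a direct Stirling-formula estimate of the binomial coefficients $\binom{n}{k}$ in the central region $|k - n/2| = O(\sqrt n)$ and sum.
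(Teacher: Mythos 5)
Your reduction to binomials and your computation that the centered third cumulant of $\sum_i Z_i$ is only $O(1/n)$ are both on target, but stage (2) as you state it is false, and it is exactly the obstruction this theorem has to get around. You claim that $\P_P[\sum_i Z_i\le x]$ and $\P_Q[\sum_i Z_i\le x]$ are each within $O(1/n)$ of Gaussian CDFs because the skewness is small. Small skewness does not give this: for a lattice random variable the Edgeworth expansion of the CDF contains, besides the skewness term, a sawtooth correction of order $h/\sigma_n$ coming from the jump discontinuities, and here the span is $h=2\ep$ while $\sigma_n\asymp\sqrt{n}\,\ep=\mu$, so this term is $\Theta(1/\sqrt n)$ and does not vanish. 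Concretely, already for $\mathrm{Bin}(2m,1/2)$, which has exactly zero skewness, one has $\P[\mathrm{Bin}(2m,1/2)\le m]-\Phi(0)=\tfrac12\binom{2m}{m}4^{-m}=\Theta(1/\sqrt m)$, so the sup-norm distance between a binomial CDF and the Gaussian is genuinely $\Theta(1/\sqrt n)$ no matter how symmetric the increments are. Citing Esseen or Bhattacharya--Rao does not repair this unless you retain the sawtooth term explicitly and then show it cancels in the quantity you actually need --- which your write-up never does.

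What rescues the theorem, and what the paper's proof is built on, is that the trade-off function is not a composition of the two raw CDFs: the Neyman--Pearson randomization at the boundary atom, which you mention in stage (1) but never exploit, linearly interpolates between lattice points, and this interpolation is exactly equivalent to replacing the binomial $X$ by the smoothed variable $X+Y$ with $Y\sim U[0,1]$ independent, via $\alpha_{(k,c)}=\P[X+Y>k+c]$ and the analogous identity for $\beta$. The smoothed variable has a continuous piecewise-constant density, its characteristic function acquires a factor $\sin(t/2\sigma)/(t/2\sigma)$ that decays like $1/|t|$ and suppresses the lattice artifacts, and a Fourier-inversion argument then yields $\sup_x|F_n(x)-\Phi(x)|=O(1/n)$; your small-skewness observation enters at precisely this point to kill the would-be $O(1/\sqrt n)$ Edgeworth term, alongside a separate expansion showing $n(q-p)/\sigma=1+O(1/n)$. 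So the missing idea is the smoothing (continuity-correction) step; without it your stage (2) cannot be carried out as stated. Your stage (3), transferring a uniform $O(1/n)$ CDF bound into the two-sided sandwich on $f_{\ep,0}^{\otimes n}$ using the Lipschitz property of $\Phi$ and a quantile inversion, matches the paper's final step and is fine.
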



As with \Cref{thm:DPCLT}, this theorem can be extended to approximate DP ($\delta \ne 0$) by making use of the decomposition \eqref{eq:decomposition}. Our simulation studies suggest that $c \approx 0.1$ for $\mu = 1$, which is best illustrated in the right panel of \Cref{fig:comp}. Despite a fairly small $n = 10$, the difference between $G_1$ and its target $f_{\ep,0}^{\otimes n}$ is less than 0.013 in the pointwise sense. Interestingly, numerical evidence suggests the same $O(1/n)$ rate in the inhomogeneous composition provided that $\ep_1, \ldots, \ep_n$ are roughly the same size. A formal proof, or even a quantitative statement of this observation, constitutes an interesting problem for future investigation.


In closing this section, we highlight some novelties in the proof of \Cref{thm:fast}. Denoting $p_{\epsilon} = \frac{1}{1+\e^\ep}$ and $q_{\epsilon} = \frac{\e^\ep}{1+\e^\ep}$, \cite{KOV} presented a very useful expression (rephrased in our framework):
\[
f_{\ep,0}^{\otimes n} = \F\big(B(n,p_{\ep}),B(n, q_{\ep})\big),
\]
where $B(n, p)$ denotes the binomial distribution with $n$ trials and success probability $p$. However, directly approximating $f_{\ep,0}^{\otimes n}$ through these two binomial distributions is unlikely to yield 
an $O(1/n)$ bound because the Berry--Esseen bound is rate-optimal for binomial distributions. Our analysis, instead, rests crucially on a certain smoothing effect that comes for free in testing between the two distributions. It is analogous to the continuity correction for normal approximations to binomial probabilities. See the technical details in \Cref{app:CLT}.






\section{Amplifying Privacy by Subsampling}
\label{sec:subsampling}

\newcommand{\Sample}{\mathtt{Sample}}
\newcommand{\tm}{\widetilde{M}}
\newcommand{\Sn}{S\cup\{n\}}
\newcommand{\xv}{S}
%

Subsampling is often used prior to a private mechanism $M$ as a way to \textit{amplify} privacy guarantees. Specifically, we can construct a smaller dataset $\tilde S$ by flipping a fair coin for each individual in the original dataset $S$ to decide whether the individual is included in $\tilde{S}$. This subsampling scheme roughly shrinks the dataset by half and, therefore, we would expect that the induced mechanism applied to $\tilde S$ is about twice as private as the original mechanism $M$. Intuitively speaking, this privacy amplification is due to the fact that every individual enjoys perfect privacy if the individual is not included in the resulting dataset $\tilde S$, which happens with probability 50\%.

The claim above was first formalized in \cite{KLNRS} for $(\ep,\delta)$-DP. Such a privacy amplification property is, unfortunately, no longer true for the most natural previous relaxations of differential privacy aimed at recovering precise compositions (like concentrated differential privacy (CDP) \cite{concentrated,concentrated2}). Further modifications such as truncated CDP \cite{tcdp} have been introduced primarily to remedy this deficiency of CDP---but at the cost of extra complexity in the definition. Other relaxations like {\Renyi} differential privacy \cite{renyi} can be shown to satisfy a form of privacy amplification by subsampling, but both the analysis and the statement are complex \cite{wang2018subsampled}.


In this section, we show that these obstacles can be overcome by our hypothesis testing based relaxation of differential privacy. Explicitly, our main result is a simple, general, and easy-to-interpret subsampling theorem for $f$-DP. Somewhat surprisingly, our theorem significantly improves on the classical subsampling theorem for privacy amplification in the $(\ep,\delta)$-DP framework \cite{jon}. Note that this classical theorem continues to use $(\epsilon,\delta)$-DP to characterize the subsampled mechanism. However, $(\epsilon,\delta)$-DP is simply not expressive enough to capture the amplification of privacy.


\subsection{A Subsampling Theorem} 
\label{sec:subsample_theorems}

Given an integer $1 \le m \le n$ and a dataset $S$ of $n$ individuals, let $\Sample_m(\xv)$ be a subset of $S$ that is chosen uniformly at random among all the $m$-sized subsets of $\xv$. For a mechanism $M$ defined on $X^m$, we call $M\big(\Sample_m(\xv)\big)$ the subsampled mechanism, which takes as input an $n$-sized dataset. Formally, we use $M \circ \Sample_m$ to denote this subsampled mechanism. To clear up any confusion, note that intermediate result $\Sample_m(\xv)$ is not released and, in particular, this is different from the composition in \Cref{sec:composition-theorems}.

In brief, our main theorem shows that the privacy bound of the subsampled mechanism in the $f$-DP framework is given by an operator acting on trade-off functions. To introduce this operator, write the convex combination $f_p:= pf + (1-p)\Id$ for $0 \le p \le 1$, where $\Id(x) = 1-x$. Note that the trade-off function $f_p$ is asymmetric in general.
\begin{definition} \label{def:Cp}
For any $0 \le p \le 1$, define the operator $C_p$ acting on trade-off functions as
\[C_p(f) := \min\{f_p, f_p^{-1}\}^{**}.\]
We call $C_p$ the $p$-sampling operator.
\end{definition}
Above, the inverse $f_p^{-1}$ is defined in \eqref{eq:inver_f}. The biconjugate $\min\{f_p, f_p^{-1}\}^{**}$ is derived by applying the conjugate as defined in \eqref{eq:conjugate} twice to $\min\{f_p, f_p^{-1}\}$. For the moment, take for granted the fact that $C_p(f)$ is a symmetric trade-off function.


Now, we present the main theorem of this section.

\begin{theorem}\label{thm:subsample}
If $M$ is $f$-DP on $X^m$, then the subsampled mechanism $M\circ\Sample_m$ is $C_p(f)$-DP on $X^n$, where the sampling ratio $p=\frac{m}{n}$.
\end{theorem}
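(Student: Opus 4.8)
The plan is to exploit the primal--dual correspondence between $f$-DP and collections of $(\ep,\delta)$-DP guarantees (\Cref{prop:ftoDP,prop:DPtof}), since privacy amplification by subsampling is classically understood for $(\ep,\delta)$-DP. The argument has four steps. \emph{Step 1 (primal to dual).} By \Cref{prop:ftoDP}, the hypothesis that $M$ is $f$-DP on $X^m$ is equivalent to: $M$ is $\big(\ep,\delta(\ep)\big)$-DP for every $\ep\geqslant 0$, with $\delta(\ep)=1+f^*(-\e^\ep)$. \emph{Step 2 (apply classical subsampling).} Feed each such guarantee through the classical amplification-by-subsampling theorem for $(\ep,\delta)$-DP \cite{jon}: if a mechanism on $X^m$ is $(\ep,\delta)$-DP, then its $p$-subsampled version ($p=m/n$) on $X^n$ is $\big(\ep',p\delta\big)$-DP with $\e^{\ep'}=1+p(\e^\ep-1)$. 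Hence $M\circ\Sample_m$ is $\big(\ep',p\delta(\ep)\big)$-DP for all $\ep\geqslant 0$. \emph{Step 3 (dual to primal).} By \Cref{prop:DPtof}, $M\circ\Sample_m$ is $g$-DP with $g=\sup_{\ep\geqslant 0}f_{\ep',p\delta(\ep)}$. \emph{Step 4.} Prove the identity $g=C_p(f)$.

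The heart of the proof is Step 4, which I would carry out via supporting lines. The supporting affine minorant of $f$ with slope $-\e^\ep$ is $\alpha\mapsto -\e^\ep\alpha+1-\delta(\ep)$ (because $f^*(-\e^\ep)=\delta(\ep)-1$), and the elementary identity
\[
p\big(-\e^\ep\alpha+1-\delta(\ep)\big)+(1-p)(1-\alpha)=-\e^{\ep'}\alpha+1-p\delta(\ep)
\]
shows that the nontrivial decreasing linear piece of $f_{\ep',p\delta(\ep)}$ is exactly $p$ times a supporting minorant of $f$ plus $(1-p)$ times $\Id$; the remaining nontrivial piece of $f_{\ep',p\delta(\ep)}$ is its reflection across $y=x$. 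Since $f$ equals the supremum of its affine minorants and the slopes $-\e^\ep$ sweep out $(-\infty,-1]$ as $\ep$ ranges over $[0,\infty)$, taking $\sup_{\ep\geqslant 0}$ of the first family recovers $f_p=pf+(1-p)\Id$ wherever $f_p'\leqslant -1$ together with its tangent line of slope $-1$ beyond that point, while $\sup_{\ep\geqslant 0}$ of the reflected family does the symmetric thing for $f_p^{-1}$, and these two tangent lines coincide. Combining with the floor at $0$ and pasting the three parts, the envelope $g$ is precisely $\min\{f_p,f_p^{-1}\}^{**}=C_p(f)$ from \Cref{def:Cp}. As a by-product this confirms that $C_p(f)$ is a genuine symmetric trade-off function, since it is a supremum of the symmetric trade-off functions $f_{\ep',p\delta(\ep)}$, so \Cref{prop:trade-off} applies.

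The main obstacle is the convex-analytic bookkeeping in Step 4: one must treat the left-continuous inverse $f_p^{-1}$ with care (it need not be an ordinary inverse, cf.\ the footnote to \eqref{eq:inver_f}); one must not forget the $\ep=0$ member of the family, which after subsampling becomes $\big(0,p\delta(0)\big)$ and is exactly what supplies the straight slope-$(-1)$ middle segment of $C_p(f)$ that a single application of the classical $(\ep,\delta)$ theorem misses --- the source of the claimed improvement over \cite{jon}; and one must check that biconjugation neither over- nor under-shoots, including the degenerate regime $f(0)<1$. A secondary point worth spelling out is the mixture structure underlying the classical argument in Step 2: conditioning $\Sample_m(S)$ and $\Sample_m(S')$ on whether the record at which $S$ and $S'$ differ gets sampled decomposes the two output distributions as $(1-p)R+pP'$ and $(1-p)R+pQ'$ sharing the common component $R$, and a sampled subset that contains the differing record is a neighbour both of the corresponding subset omitting it and of the matching subset for $S'$, so all three pairwise trade-off functions among $R$, $P'$, $Q'$ are at least $f$ (using that same-weight mixtures preserve $f$-DP) --- precisely the structure that the classical $(\ep,\delta)$ subsampling argument consumes. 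Finally, for $f=G_\mu$ the resulting $C_p(G_\mu)$ can be checked against a direct computation for the subsampled Gaussian mechanism as a consistency test.
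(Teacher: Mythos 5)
Your proposal follows exactly the paper's own route: primal-to-dual via \Cref{prop:ftoDP}, the classical $(\ep,\delta)$ subsampling lemma of \cite{jon}, dual-to-primal via \Cref{prop:DPtof}, and an envelope computation identifying $\sup_{\ep\geqslant 0}f_{\ep',\delta'}$ with $\min\{f_p,f_p^{-1}\}^{**}$ --- your Step 4 is precisely the content of the paper's reparametrization \eqref{eq:fp} together with \Cref{prop:asymm_for_proof}. The one detail to make explicit is the verification that $f_p(\bar{x})\geqslant\bar{x}$ (the paper checks $f_p(\bar{x})\geqslant f_p(x^*)\geqslant f(x^*)=x^*\geqslant\bar{x}$), which is what ensures your three pieces paste together without overlap so that the envelope is the interpolated $\min\{f_p,f_p^{-1}\}^{**}$ rather than the overlapping case $\max\{f_p,f_p^{-1}\}$.
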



Appreciating this theorem calls for a better understanding of the operator $C_p$. In effect, $C_p$ performs a two-step transformation: symmetrization (taking the minimum of $f_p$ and its inverse $f_p^{-1}$) and convexification (taking the largest convex lower envelope of $\min\{f_p, f_p^{-1}\}$). The convexification step is seen from convex analysis that the biconjugate $h^{**}$ of any function $h$ is the greatest convex lower bound of $h$. As such, $C_p(f)$ is convex and, with a bit more analysis, \Cref{prop:trade-off} ensures that $C_p(f)$ is indeed a trade-off function. As an aside, $C_p(f) \le \min\{f_p, f_p^{-1}\} \le f_p$. See \Cref{fig:subsample} for a graphical illustration.


Next, the following facts concerning the $p$-sampling operator qualitatively illustrate this privacy amplification phenomenon.
\begin{enumerate}
\item If $0\leqslant p\leqslant q\leqslant 1$ and $f$ is symmetric, we have $f=C_1(f)\leqslant C_q(f)\leqslant C_p(f)\leqslant C_0(f)= \Id$. That is, as the sampling ratio declines from 1 to 0, the privacy guarantee interpolates monotonically between the original $f$ and the perfect privacy guarantee $\Id$. This monotonicity follows from the fact that $g \ge h$ is equivalent to $g^{-1} \ge h^{-1}$ for any trade-off functions $g$ and $h$.

\item
If two trade-off functions $f$ and $g$ satisfy $f \ge g$, then $C_p(f) \ge C_p(g)$. This means that if a mechanism is more private than the other, using the same sampling ratio, the subsampled mechanism of the former remains more private than that of the latter, at least in terms of lower bounds.

\item For any $0 \le p \le 1$, $C_p(\Id) = \Id$. That is, perfect privacy remains perfect privacy with subsampling.
\end{enumerate}

Explicitly, we provide a formula to calculate $C_p(f)$ for a symmetric trade-off function $f$. Letting $x^*$ be the unique fixed point of $f$, that is $f(x^*)= x^*$, we have
\begin{equation}\label{eq:Cp_expression}
		C_p(f)(x)=
			\left\{
			\begin{array}{ll}
			f_p(x),&x\in[0,x^*] \\
			x^*+f_p(x^*)-x, & x\in[x^*,f_p(x^*)]\\
			f_p^{-1}(x), &x\in[f_p(x^*),1].
			\end{array}
			\right.
	\end{equation}
This expression is almost self-evident from the left panel of \Cref{fig:subsample}. Nevertheless, a proof of this formula is given in \Cref{app:property}. This formula, together with \Cref{thm:subsample}, allows us to get a closed-form characterization of the privacy amplification for $(\epsilon, \delta)$-DP.
\begin{corollary}\label{cor:ep_d}
If $M$ is $(\epsilon, \delta)$-DP on $X^m$, then the subsampled mechanism $M\circ\Sample_m$ is $C_p(f_{\epsilon, \delta})$-DP on $X^n$, where
\begin{equation}\label{eq:ep_d_sump_for}
C_p(f_{\epsilon, \delta})(\alpha) = \max\left\{f_{\epsilon', \delta'}(\alpha), 1 - p\delta - p \, \frac{\e^{\ep} - 1}{\e^{\ep} + 1} - \alpha \right\}.
\end{equation}
Above, $\epsilon'= \log(1-p + p \e^\ep), \delta' = p\delta$, and $p = \frac{m}{n}$.
\end{corollary}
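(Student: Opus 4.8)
The plan is to obtain \Cref{cor:ep_d} as an immediate consequence of the subsampling theorem \Cref{thm:subsample} together with the closed-form expression \eqref{eq:Cp_expression} for the $p$-sampling operator on a symmetric trade-off function. Applying \Cref{thm:subsample} with $f = f_{\epsilon,\delta}$ gives at once that $M\circ\Sample_m$ is $C_p(f_{\epsilon,\delta})$-DP with $p=m/n$; since $f_{\epsilon,\delta}$ is a symmetric trade-off function, everything reduces to evaluating $C_p(f_{\epsilon,\delta})$ through \eqref{eq:Cp_expression}, which needs three ingredients: the fixed point $x^*$ of $f_{\epsilon,\delta}$, the convex combination $f_p = p f_{\epsilon,\delta} + (1-p)\Id$, and the left-continuous inverse $f_p^{-1}$.

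First I would record the piecewise-linear structure of $f_{\epsilon,\delta}$: on $[0,x^*]$ it equals $1-\delta-e^{\epsilon}\alpha$ and on $[x^*,1-\delta]$ it equals $e^{-\epsilon}(1-\delta-\alpha)$, these two nontrivial pieces meeting at $x^* = (1-\delta)/(1+e^{\epsilon})$, which by the symmetry of $f_{\epsilon,\delta}$ (the two pieces are exchanged under inversion) is precisely its fixed point. On $[0,x^*]$ one then computes $f_p(\alpha) = p(1-\delta-e^{\epsilon}\alpha)+(1-p)(1-\alpha) = 1-p\delta-(1-p+pe^{\epsilon})\alpha$, which I would recognize as the leading linear piece $1-\delta'-e^{\epsilon'}\alpha$ of $f_{\epsilon',\delta'}$ with $\epsilon'=\log(1-p+pe^{\epsilon})$ and $\delta'=p\delta$. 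By the same ``inverse exchanges pieces'' symmetry, $f_p^{-1}$ restricted to $[f_p(x^*),1]$ reproduces the two trailing pieces of $f_{\epsilon',\delta'}$, namely $e^{-\epsilon'}(1-\delta'-\alpha)$ followed by the flat tail $0$. The remaining, middle piece in \eqref{eq:Cp_expression} is the affine function $\alpha\mapsto x^*+f_p(x^*)-\alpha$ of slope $-1$; evaluating $f_p(x^*) = 1-p\delta-(1-p+pe^{\epsilon})x^*$ and collecting terms in $x^*+f_p(x^*)$ yields the affine term appearing as the second argument of the maximum in \eqref{eq:ep_d_sump_for}.

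It then remains to show that the piecewise function output by \eqref{eq:Cp_expression} is the pointwise maximum of $f_{\epsilon',\delta'}$ and that affine function. On $[0,x^*]$ and on $[f_p(x^*),1]$ the expression \eqref{eq:Cp_expression} coincides with $f_{\epsilon',\delta'}$, and there the affine line lies below it: the two agree at the endpoints $x^*$ and $f_p(x^*)$, and convexity of $f_{\epsilon',\delta'}$ together with the fact that its one-sided slopes at those endpoints are respectively more negative than $-1$ and less negative than $-1$ forces $f_{\epsilon',\delta'}$ above the line. On $[x^*,f_p(x^*)]$ the affine line is exactly the chord of the convex function $f_{\epsilon',\delta'}$ joining $(x^*,f_{\epsilon',\delta'}(x^*))$ and $(f_p(x^*),x^*)$, hence it lies above $f_{\epsilon',\delta'}$. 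One small verification makes these claims rigorous: that $f_{\epsilon',\delta'}$ genuinely equals its leading linear piece on all of $[0,x^*]$ (equivalently, that $x^*$ does not exceed the fixed point $(1-\delta')/(1+e^{\epsilon'})$ of $f_{\epsilon',\delta'}$), which boils down to the elementary inequality $(1-\delta)(2-p+pe^{\epsilon}) \le (1-p\delta)(1+e^{\epsilon})$, valid for all $\epsilon\ge 0$, $0\le\delta\le 1$, $0\le p\le 1$.

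I do not anticipate a genuine obstacle: all the conceptual content sits in \Cref{thm:subsample} and in the derivation of \eqref{eq:Cp_expression}, and what is left is bookkeeping. The one mildly delicate step is the last paragraph, namely translating the piecewise description coming out of \eqref{eq:Cp_expression} into the compact ``maximum of two functions'' form of \eqref{eq:ep_d_sump_for} — the case analysis of which of the two arguments dominates on each subinterval, together with the algebraic simplification of $x^*+f_p(x^*)$. Degenerate cases such as $\epsilon=0$, $\delta$ close to $1$, or $p\in\{0,1\}$ should be noted in passing but are immediate.
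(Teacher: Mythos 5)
Your overall route is exactly the one the paper intends (the paper gives no separate proof of this corollary beyond ``apply \Cref{thm:subsample} and evaluate via \eqref{eq:Cp_expression}''), and almost all of your bookkeeping is right: $x^*=(1-\delta)/(1+\e^{\ep})$ is indeed the fixed point and the junction of the two nontrivial pieces; $f_p$ on $[0,x^*]$ is the leading piece $1-\delta'-\e^{\ep'}\alpha$ of $f_{\ep',\delta'}$; $f_p^{-1}$ on $[f_p(x^*),1]$ gives the trailing pieces; the chord argument on $[x^*,f_p(x^*)]$ is sound; and the inequality you isolate, $(1-\delta)(2-p+p\e^{\ep})\le(1-p\delta)(1+\e^{\ep})$, is correct and reduces to $(1-p)(\e^{\ep}-1+2\delta)\ge 0$.

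The gap is precisely in the step you wave through as ``collecting terms in $x^*+f_p(x^*)$.'' Using $f(x^*)=x^*$ one gets $x^*+f_p(x^*)=x^*+px^*+(1-p)(1-x^*)=1-p+2px^*$, and substituting $x^*=(1-\delta)/(1+\e^{\ep})$ yields
\[
x^*+f_p(x^*)\;=\;1-p\,\frac{\e^{\ep}-1+2\delta}{\e^{\ep}+1}\;=\;1-p\delta-p(1-\delta)\,\frac{\e^{\ep}-1}{\e^{\ep}+1},
\]
which carries an extra factor $(1-\delta)$ relative to the second argument of the maximum in \eqref{eq:ep_d_sump_for}. The two expressions agree only when $\delta=0$ or $\ep=0$, so the identity you assert in order to match \eqref{eq:ep_d_sump_for} verbatim is false in general; your argument actually proves the corollary with $1-p\delta-p(1-\delta)\frac{\e^{\ep}-1}{\e^{\ep}+1}-\alpha$ as the affine term (the displayed formula in the statement appears to be missing this factor). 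You should carry out this computation explicitly rather than asserting it, and state the corrected constant; everything else in your derivation then goes through.
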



\begin{figure}[!htp]
\centering
  \includegraphics[width=.7\linewidth]
{./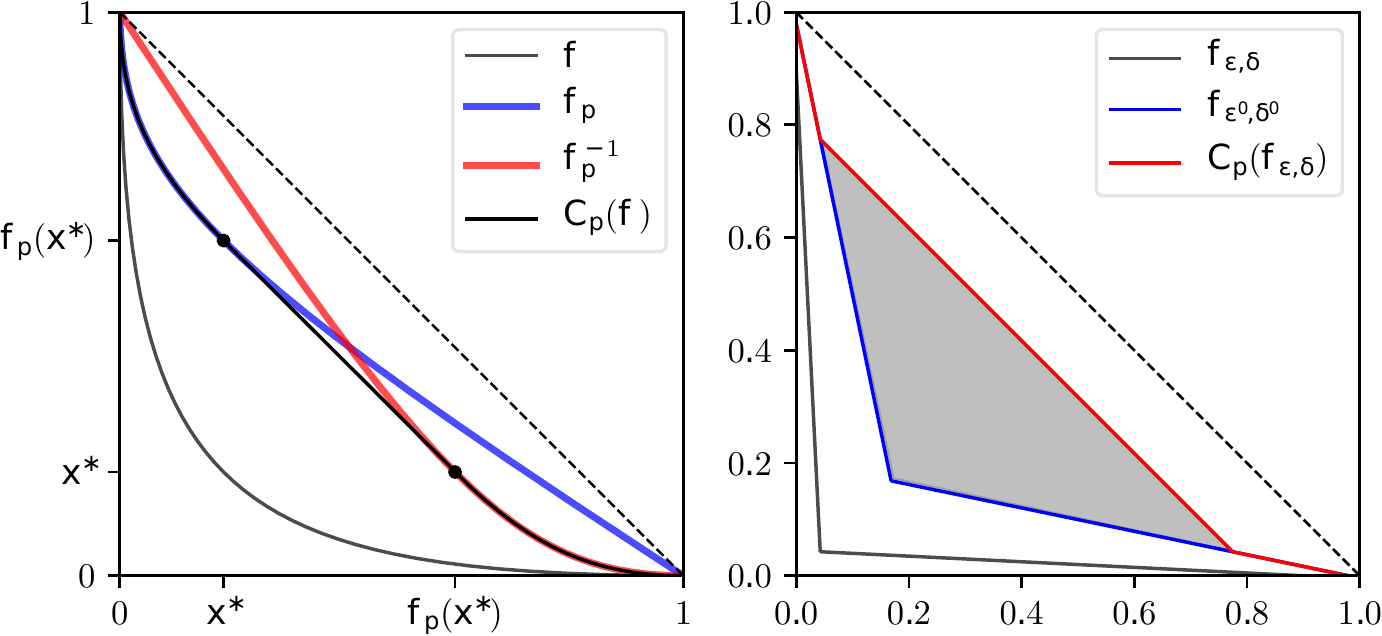}
  \captionof{figure}{The action of $C_p$. Left panel: $f=G_{1.8}, p=0.35$. Right panel: $\ep = 3,\delta=0.1,p=0.2$. The subsampling theorem \ref{thm:subsample} results in a significantly tighter trade-off function compared to the classical theorem for $(\ep,\delta)$-DP.}
  \label{fig:subsample}
\end{figure}


For comparison, we now present the existing bound on the privacy amplification by subsampling for $(\ep,\delta)$-DP. To be self-contained, \Cref{app:property} gives a proof of this result, which primarily follows \cite{jon} .

\begin{restatable}[\cite{jon}]{lemma}{DPsubsamplerep}\label{lem:DPsubsample}
If $M$ is $(\ep,\delta)$-DP, then $M\circ\Sample_m$ is $(\ep',\delta')$-DP with $\ep'$ and $\delta'$ defined in \Cref{cor:ep_d}.
\end{restatable}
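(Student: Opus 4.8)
The plan is to argue directly in the $(\epsilon,\delta)$-DP language, conditioning on whether the subsampling step selects the single index at which the two neighboring datasets disagree. Fix neighboring datasets $S=(x_1,\ldots,x_n)$ and $S'=(x_1',\ldots,x_n')$ differing only in record $j$, fix a measurable event $E$, and set $p=m/n$. Since $\epsilon'=\log(1-p+p\e^{\epsilon})$ and $\delta'=p\delta$ do not change if we interchange $S$ and $S'$, it suffices to prove
\[
\P\big[(M\circ\Sample_m)(S)\in E\big]\leqslant \e^{\epsilon'}\,\P\big[(M\circ\Sample_m)(S')\in E\big]+\delta'
\]
for every ordered neighboring pair and every $E$. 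Let $T\subseteq[n]$ be the random $m$-subset of indices chosen by $\Sample_m$ and condition on $\{j\in T\}$, an event of probability $p$. On $\{j\notin T\}$ the subsampled datasets $(x_i)_{i\in T}$ and $(x_i')_{i\in T}$ coincide, so the conditional probability of landing in $E$ is a common value $P_0$; let $P_1$ and $P_1'$ be the corresponding conditional probabilities for $S$ and for $S'$ given $\{j\in T\}$. This gives the convex decompositions $\P[(M\circ\Sample_m)(S)\in E]=(1-p)P_0+pP_1$ and $\P[(M\circ\Sample_m)(S')\in E]=(1-p)P_0+pP_1'$.

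The core step is to bound $P_1$ by a convex combination of $P_0$ and $P_1'$ using only the $(\epsilon,\delta)$-DP of $M$. First, for every realization of $T$ with $j\in T$ the $m$-record datasets $(x_i)_{i\in T}$ and $(x_i')_{i\in T}$ are neighbors, so averaging the $(\epsilon,\delta)$-DP inequality over $T$ conditioned on $\{j\in T\}$ yields $P_1\leqslant\e^{\epsilon}P_1'+\delta$. Second, I would couple the law of $T$ given $\{j\in T\}$ with the law of $T$ given $\{j\notin T\}$: draw $T$ uniformly among the $m$-subsets of $[n]\setminus\{j\}$, delete a uniformly random element $K\in T$, and form the index set $\{j\}\cup(T\setminus\{K\})$. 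A short combinatorial check shows the two marginals of this coupling are exactly the two conditional laws, while for every realization the datasets indexed by $\{j\}\cup(T\setminus\{K\})$ and by $T$ differ in a single record --- the record at position $K$, common to $S$ and $S'$, versus $x_j$ --- and are therefore neighbors; averaging the $(\epsilon,\delta)$-DP inequality over the coupling gives $P_1\leqslant\e^{\epsilon}P_0+\delta$. Combining these, $P_1\leqslant\e^{\epsilon}\big(\lambda P_0+(1-\lambda)P_1'\big)+\delta$ for every $\lambda\in[0,1]$.

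Finally I would substitute this bound into $(1-p)P_0+pP_1$ and choose the weight $\lambda=(1-p)(1-\e^{-\epsilon})\in[0,1]$. A direct computation then shows the coefficients of $P_0$ and of $P_1'$ on the two sides coincide, so that
\[
(1-p)P_0+pP_1\leqslant(1-p+p\e^{\epsilon})\big[(1-p)P_0+pP_1'\big]+p\delta=\e^{\epsilon'}\,\P\big[(M\circ\Sample_m)(S')\in E\big]+\delta',
\]
using $\e^{\epsilon'}=1-p+p\e^{\epsilon}$ and $\delta'=p\delta$. As this holds for all neighboring pairs and all events, $M\circ\Sample_m$ is $(\epsilon',\delta')$-DP; the case $p=1$ (that is, $m=n$) is trivial because $\Sample_m$ is then the identity.

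I expect the main obstacle to be the second inequality $P_1\leqslant\e^{\epsilon}P_0+\delta$. It requires recognizing that one must compare the ``$j$ selected'' branch against the ``$j$ not selected'' branch --- not merely against the ``$j$ selected'' branch for $S'$ --- and then building a coupling between two subsampling distributions on index sets of \emph{different} sizes whose marginals are uniform and whose coupled pairs are neighboring datasets. The concluding algebra is routine once one notices that $\lambda=(1-p)(1-\e^{-\epsilon})$ is forced: the constraint coming from the $P_0$-coefficient gives $\lambda\leqslant(1-p)(1-\e^{-\epsilon})$ and the constraint from the $P_1'$-coefficient gives the reverse inequality, so this is the unique weight for which the final bound closes.
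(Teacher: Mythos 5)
Your proof is correct and follows essentially the same route as the paper's: you condition on whether the differing index is sampled, derive the two inequalities $P_1\leqslant \e^{\epsilon}P_1'+\delta$ and $P_1\leqslant\e^{\epsilon}P_0+\delta$ (exactly the paper's constraints $B\geqslant kA+b$ and $B\geqslant k\bar{A}+b$, which it obtains by a replacement-and-double-counting argument equivalent to your coupling), and then combine them. The only differences are presentational --- you argue with events and an explicit convex weight $\lambda=(1-p)(1-\e^{-\epsilon})$, whereas the paper argues with arbitrary rejection rules and closes via a small linear program --- and your algebra checks out.
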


Using the language of the $f$-DP framework, \Cref{lem:DPsubsample} states that $M\circ\Sample_m$ is $f_{\epsilon',\delta'}$-DP. \Cref{cor:ep_d} improves on \Cref{lem:DPsubsample} because, as is clear from \eqref{eq:ep_d_sump_for},
\[
C_p(f_{\epsilon, \delta}) \ge f_{\epsilon',\delta'}.
\]
The right panel of \Cref{fig:subsample} illustrates \Cref{lem:DPsubsample} and our \Cref{cor:ep_d} for $\epsilon = 3, \delta = 0.1$, and $p = 0.2$. In effect, the improvement is captured by the shaded triangle enclosed by $C_p(f_{\epsilon, \delta})$ and $f_{\epsilon',\delta'}$, revealing that the minimal sum of type I and type II errors in distinguishing two neighboring datasets with subsampling can be significantly lower than the prediction of \Cref{lem:DPsubsample}. This gain is only made possible by the flexibility of trade-off functions in the sense that $C_p(f_{\epsilon, \delta})$ \textit{cannot} be expressed within the $(\epsilon, \delta)$-DP framework. The unavoidable loss in the $(\epsilon,\delta)$-DP representation of the subsampled mechanism is compounded when analyzing the composition of many private mechanisms. 


In the next subsection, we prove \Cref{thm:subsample} by making use of \Cref{lem:DPsubsample}. Its proof implies that \Cref{thm:subsample} holds for any subsampling scheme for which \Cref{lem:DPsubsample} is true. In particular, it holds for the subsampling scheme described at the beginning of this section, that is, independent coin flips for every data item. 



\subsection{Proof of the Subsampling Theorem} 
\label{sub:proof_of_subsample_theorems}
The proof strategy is as follows. First, we convert the $f$-DP guarantee into an infinite collection of $(\ep,\delta)$-DP guarantees by taking a dual perspective that is enabled by \Cref{prop:ftoDP}. Next, by applying the classical subsampling theorem (that is, \Cref{lem:DPsubsample}) to these $(\ep,\delta)$-DP guarantees, we conclude that the subsampled mechanism satisfies a new infinite collection of $(\ep,\delta)$-DP guarantees. Finally, \Cref{prop:DPtof} allows us to convert these new privacy guarantees back into an $\tilde{f}$-DP guarantee, where $\tilde f$ can be shown to coincide with $C_p(f)$.


\begin{proof}[Proof of \Cref{thm:subsample}]
Provided that $M$ is $f$-DP, from \Cref{prop:ftoDP} it follows that $M$ is $\big(\ep,\delta(\ep)\big)$-DP with $\delta(\ep) = 1+f^*(- \e^{\ep})$ for all $\ep\geqslant 0$. Making use of \Cref{lem:DPsubsample}, the subsampled mechanism $M \circ \Sample_m$ satisfies the following collection of $(\ep',\delta')$-DP guarantees for all $\ep\geqslant 0$:
\[
\ep' =\log(1 - p + p\e^\ep),\quad \delta' =p\big(1+f^*(-\e^{\ep})\big).
\]
Eliminating the variable $\epsilon$ from the two parametric equations above, we can relate $\ep'$ to $\delta'$ using
\begin{equation}\label{eq:fp}
\delta' = 1+f_p^*(-\e^{\ep'}),
\end{equation}
which is proved in \Cref{app:property}. The remainder of the proof is devoted to showing that $(\epsilon', \delta')$-DP guarantees for all $\ep' \ge 0$ is equivalent to the $C_p(f)$-DP guarantee.


At first glance, \eqref{eq:fp} seems to enable the use of \Cref{prop:ftoDP}. Unfortunately, that would be invalid because $f_p$ is asymmetric. To this end, we need to extend \Cref{prop:ftoDP} to general trade-off functions. To avoid conflicting notation, let $g$ be a generic trade-off function, not necessarily symmetric. Denote by $\bar{x}$ be the smallest point such that $g'(x)=-1$, that is, $\bar{x} = \inf\{x\in[0,1]:g'(x)=-1\}$.\footnote{For simplicity, the proof assumes differentiable trade-off functions. If $g$ is not differentiable, use the definition $\bar{x} = \inf\{x\in[0,1]: -1 \in \partial g(x) \}$ instead. This adjustment applies to other parts of the proof.} As a special instance of \Cref{prop:asymm_env} in the appendix, the following result serves our purpose.
\begin{proposition}\label{prop:asymm_for_proof}
If $g(\bar{x})\geqslant\bar{x}$ and a mechanism $M$ is $(\ep,1+g^*(-\e^{\ep}))$-DP for all $\ep\geqslant 0$, then $M$ is $\min\{g,g^{-1}\}^{**}$-DP.
\end{proposition}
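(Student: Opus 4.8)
The plan is to deduce this from the dual-to-primal direction (\Cref{prop:DPtof}) together with a one-variable convex-analysis computation. Since each $f_{\ep,\,1+g^*(-\e^\ep)}$ is a symmetric trade-off function, \Cref{prop:DPtof} shows that a mechanism is $(\ep,1+g^*(-\e^\ep))$-DP for every $\ep\ge 0$ if and only if it is $\tilde f$-DP, where $\tilde f:=\sup_{\ep\ge 0}f_{\ep,\,1+g^*(-\e^\ep)}$ (and $\tilde f$ is automatically a symmetric trade-off function). So the whole statement reduces to the analytic identity $\tilde f=\min\{g,g^{-1}\}^{**}$, and in fact only the inequality $\tilde f\ge\min\{g,g^{-1}\}^{**}$ is needed for the implication as stated.

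The first step is to unpack the pieces of $\tilde f$. Writing $\delta(\ep)=1+g^*(-\e^\ep)$, the function $f_{\ep,\delta(\ep)}$ is the pointwise maximum of the constant $0$ and two affine functions that are reflections of one another across the line $y=x$. Since $-g^*(-\e^\ep)$ is by definition the largest intercept $c$ for which $x\mapsto -\e^\ep x+c$ is a minorant of $g$, the first of these lines---call it $L_\ep$---is precisely the largest affine minorant of $g$ of slope $-\e^\ep$. Reflection across $y=x$ is a bijection from the affine minorants of $g$ onto those of $g^{-1}$ that turns a line of slope $a$ into a line of slope $1/a$; applying it to $L_\ep$, and using the conjugate identity $(g^{-1})^*(y)=-y\,g^*(1/y)$ valid for $y<0$, one finds that the second line $R_\ep$ is exactly the largest affine minorant of $g^{-1}$ of slope $-\e^{-\ep}$. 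As $\ep$ runs over $[0,\infty)$ these two families of slopes sweep out $(-\infty,-1]$ and $[-1,0)$ respectively, so $\tilde f=\max\bigl\{0,\ \sup_{\ep\ge0}L_\ep,\ \sup_{\ep\ge0}R_\ep\bigr\}$.

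The inequality $\tilde f\ge\min\{g,g^{-1}\}^{**}$ then needs no hypothesis on $g$: recall that the biconjugate $\min\{g,g^{-1}\}^{**}$ is the supremum of all affine minorants of $\min\{g,g^{-1}\}$. Given such a minorant $\ell$, if its slope is $\ge 0$ then $\ell\le\ell(1)\le\min\{g,g^{-1}\}(1)=0\le\tilde f$ (using $g(1)=g^{-1}(1)=0$); if its slope lies in $(-\infty,-1]$ it is an affine minorant of $g$ of some slope $-\e^\ep$, $\ep\ge 0$, hence $\ell\le L_\ep\le\tilde f$; and if its slope lies in $[-1,0)$ it is an affine minorant of $g^{-1}$ of slope $-\e^{-\ep}$, $\ep\ge 0$, hence $\ell\le R_\ep\le\tilde f$. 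Taking the supremum over all $\ell$ gives the claim, and with \Cref{prop:DPtof} this already establishes \Cref{prop:asymm_for_proof}.

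The reverse inequality $\tilde f\le\min\{g,g^{-1}\}^{**}$---which refines the conclusion to show $\min\{g,g^{-1}\}^{**}$ is the tight trade-off function, and is where the hypothesis $g(\bar x)\ge\bar x$ is used---is the main obstacle. Since $0$, each $L_\ep$, and each $R_\ep$ is affine, it suffices to show each of them lies below $\min\{g,g^{-1}\}$ (any affine minorant of $\min\{g,g^{-1}\}$ is automatically below its biconjugate). By construction $L_\ep\le g$ and $R_\ep\le g^{-1}$, so the content is $L_\ep\le g^{-1}$ and $R_\ep\le g$; using that an affine function lies below a convex function exactly when it does so at the point matching its slope, both of these collapse to the single inequality
\[
x_0+\e^\ep g(x_0)\ \ge\ g(x_\ep)+\e^\ep x_\ep\qquad(\ep\ge 0),
\]
where $x_\ep\in[0,\bar x]$ is the contact point of $L_\ep$ with $g$ (so $g$ has slope $-\e^\ep$ there) and $x_0\in[\bar x,1]$ is the point at which $g$ has slope $-\e^{-\ep}$. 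I would prove this by a monotonicity argument in $\ep$: at $\ep=0$ both sides reduce to $\bar x+g(\bar x)$, while the difference of the two sides has $\ep$-derivative $\e^\ep\bigl(g(x_0)-x_\ep\bigr)$, and $g(x_0)-x_\ep$ decreases from $g(\bar x)-\bar x$ at $\ep=0$---nonnegative exactly by hypothesis---down to $g(1)-0=0$ as $\ep\to\infty$, hence the difference is nondecreasing and starts at $0$. (When $g$ is not differentiable or strictly convex, the contact points and slopes should be read through subgradients and the left-continuous inverse as in the statement, and boundary regimes such as contact at $x=0$ are handled by the same bookkeeping; this is routine but requires some care.) Combining the two inequalities yields $\tilde f=\min\{g,g^{-1}\}^{**}$.
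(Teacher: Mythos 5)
Your proof of the proposition itself is complete by the end of your third paragraph, and it is correct; it also takes a genuinely different route from the paper's. The paper retains only the family of supporting lines $-\e^\ep x-g^*(-\e^\ep)$ of $g$, computes their upper envelope $f_{\mathrm{env}}$ in closed form (equal to $g$ on $[0,\bar x]$, then a slope-$(-1)$ segment, then $0$), upgrades $f_{\mathrm{env}}$-DP to $\max\{f_{\mathrm{env}},f_{\mathrm{env}}^{-1}\}$-DP via \Cref{prop:symmetry}, and finally matches this maximum against the explicit piecewise formula for $\min\{g,g^{-1}\}^{**}$. You instead invoke \Cref{prop:DPtof} so that both supporting lines of each $f_{\ep,\delta(\ep)}$ are kept from the start --- the second family being, via $(g^{-1})^*(y)=-y\,g^*(1/y)$, exactly the supporting lines of $g^{-1}$ with slopes in $[-1,0)$ --- and then compare with $\min\{g,g^{-1}\}^{**}$ abstractly, by sorting the affine minorants of $\min\{g,g^{-1}\}$ according to slope. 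Your route buys a clean observation: the inequality $\tilde f\geqslant\min\{g,g^{-1}\}^{**}$, which is all the stated implication requires, uses no hypothesis on $g$. This is consistent with the paper's \Cref{prop:asymm_env}, since in the complementary case $\bar x>g(\bar x)$ the envelope is $\max\{g,g^{-1}\}$, which still dominates $\min\{g,g^{-1}\}^{**}$.

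Your fourth paragraph, however --- the reverse inequality, offered as the tightness half and as the place where the hypothesis enters --- contains a genuine gap. The quantity $g(x_0(\ep))-x_\ep$ is not monotone in $\ep$ and need not remain nonnegative, so the derivative of your difference $D(\ep)$ can change sign. Concretely, take $g$ piecewise linear through $(0,1)$, $(0.3,0.4)$, $(1,0)$; then $\bar x=0.3$ and $g(\bar x)=0.4\geqslant\bar x$, but for $\e^{\ep}=1.9$ the slope $-\e^{-\ep}\approx-0.53$ is flatter than the final slope $-4/7$ of $g$, so $x_0(\ep)=1$ and $g(x_0(\ep))=0$ while $x_\ep=0.3$, giving $g(x_0)-x_\ep=-0.3<0$. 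In this example $D$ first increases and then decreases back to $0$ at $\e^{\ep}=2$; the desired conclusion $D\geqslant 0$ still holds, but not because $D$ is nondecreasing. The statement you are after is true --- it is exactly the content of the paper's verification that $\max\{f_{\mathrm{env}},f_{\mathrm{env}}^{-1}\}=\min\{g,g^{-1}\}^{**}$ when $g(\bar x)\geqslant\bar x$ --- but the monotonicity argument does not deliver it, and the paper's explicit description of the envelope and its inverse is the safer path. None of this affects the proposition as stated, since only the first inequality is needed there.
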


The proof of the present theorem would be complete if \Cref{prop:asymm_for_proof} can be applied to the collection of privacy guarantees in \eqref{eq:fp}for $f_p$. To use \Cref{prop:asymm_for_proof}, it suffices to verify the condition $f_p(\bar{x})\geqslant \bar{x}$ where $\bar{x}$ is the smallest point such that $f_p'(x)=-1$. Let $x^*$ be the (unique) fixed point of $f$. To this end, we collect a few simple facts:
	\begin{itemize}
		\item First, $f'(x^*) = -1$. This is because the graph of $f$ is symmetric with respect to the $45^\circ$ line passing through the origin.
		\item Second, $\bar{x}\leqslant x^*$. This is because $f_p'(x^*)=pf'(x^*)+(1-p)\Id'(x^*)=-1$ and, by definition, $\bar{x}$ can only be smaller.
	\end{itemize}
With these facts in place, we get
\[f_p(\bar{x})\geqslant f_p(x^*) \geqslant f(x^*) = x^* \geqslant \bar{x}\]
by recognizing that $f_p$ is decreasing and $f_p \ge f$. Hence, the proof is complete.

\end{proof}

\section{Application: Privacy Analysis of Stochastic Gradient Descent} 
\label{sec:application_in_sgd}
One of the most important algorithms in machine learning and optimization is stochastic gradient descent (SGD). This is an iterative optimization method used to train a wide variety of models, for example, deep neural networks. SGD has also served as an important benchmark in the development of private optimization: as an iterative algorithm, the tightness of its privacy analysis crucially depends on the tightness with which composition can be accounted for. The analysis also crucially requires a privacy amplification by subsampling argument. 

The first asymptotically optimal analysis of differentially private SGD was given by \cite{bassily2014private}. Because of the inherent limits of $(\epsilon,\delta)$-DP, however, this original analysis did not give meaningful privacy bounds for realistically sized datasets. This is in part what motivated the development of divergence based relaxations of differential privacy. Unfortunately, these relaxations cannot be directly applied to the analysis of SGD due to the lack of a privacy amplification by subsampling theorem. In response, Abadi et al.~\cite{deep} circumvented this challenge by developing the moments accountant---a numeric technique tailored specifically to repeated application of subsampling, followed by a Gaussian mechanism---to give privacy bounds for SGD that are strong enough to give non-trivial guarantees when training deep neural networks on real datasets. But this analysis is ad-hoc in the sense that it uses a tool designed specifically for the analysis of SGD. 

In this section, we use the general tools we have developed so far to give a simple and improved analysis of the privacy of SGD. In particular, the analysis rests crucially on the compositional and subsampling properties of $f$-DP.

\subsection{Stochastic Gradient Descent and Its Privacy Analysis} 
\label{sub:noisy_sgd}
The private variant of the SGD algorithm is described in \Cref{alg:dpsgd}. As we will see, from the perspective of its privacy analysis, it can simply be viewed as a repeated composition of Gaussian mechanisms operating on subsampled datasets.
\begin{algorithm}[htb]
	\caption{\texttt{NoisySGD}}\label{alg:dpsgd}
	\begin{algorithmic}[1]
	\State{\bf Input:} Dataset $S = (x_1,\ldots,x_n)$, loss function $L(\theta, x)$.
	\Statex \hspace{1.35cm}Parameters: initial state $\theta_0$, learning rate $\eta_t$, batch size $m$, time horizon $T$,
	\Statex \hspace{3.55cm}noise scale $\sigma$, gradient norm bound $C$.
		\For{$t = 1, \ldots, T$}
		\State {\bf Subsampling:}
		\Statex {\hspace{0.55cm}Take a uniformly random subsample $I_t\subseteq \{1, \ldots, n\}$ with batch size $m$}
		\Comment{$\Sample_m$ in \Cref{sec:subsampling}}
		\For{$i\in I_t$}
			\State {\bf Compute gradient:}
			\Statex \hspace{1.1cm} {$v_t^{(i)} \gets \nabla_{\theta} L(\theta_t, x_i)$}		
			\State {\bf Clip gradient:}
			\Statex \hspace{1.1cm} {$\bar{v}_t^{(i)} \gets v_t^{(i)} / \max\big\{1, \|v_t^{(i)}\|_2/C\big\}$}
		\EndFor
		\State {\bf Average, perturb, and descend:}
		\Statex \hspace{0.55cm} {$\theta_{t+1} \gets \theta_{t} - \eta_t \Big(\frac{1}{m} \sum_i\bar{v}_t^{(i)}+\N(0, \frac{4\sigma^2 C^2}{m^2} I)\Big)$}
                 \Comment{$I$ is an identity matrix}
		\EndFor
		\State {\bf Output} $\theta_T$
	\end{algorithmic}
\end{algorithm}


To analyze the privacy of \texttt{NoisySGD}, we start by building up the privacy properties from the inner loop. Let $V$ be the vector space where parameter $\theta$ lives in and $M:X^m\times V\to V$ be the mechanism that executes lines 4-7 in \Cref{alg:dpsgd}. Here $m$ denotes the batch size. In effect, what $M$ does in iteration $t$ can be expressed as
$$M(S_{I_t},\theta_t) = \theta_{t+1},$$
where $S_{I_t}$ is the subset of the dataset $S$ indexed by $I_t$. Next, we turn to the analysis of the subsampling step (line 3) and use $\tm$ to denote its composition with $M$, that is, $\tm = M\circ \Sample_m $. Taken together, $\tm$ executes lines 3-7 and maps from $X^n\times V$ to $V$.

The mechanism we are ultimately interested in
\begin{align*}
	\mathrm{\texttt{NoisySGD}}:X^n&\to V\times V\times\cdots\times V\\
	S&\mapsto(\theta_1,\theta_2,\ldots, \theta_T)
\end{align*}
is simply the composition of $T$ copies of $\tm$. To see this fact, note that the trajectory $(\theta_1,\theta_2,\ldots, \theta_T)$ is obtained by iteratively running
\[
\theta_{j+1} = \tm(S,\theta_j)
\]
for $j = 0, \ldots, T-1$. Let $M$ be $f$-DP. Straightforwardly, $\tm$ is $C_{m/n}(f)$-DP by \Cref{thm:subsample}. Then, from the composition theorem (\Cref{thm:n_steps}), we can immediately prove that \texttt{NoisySGD} is $C_{m/n}(f)^{\otimes T}$-DP.

Hence, it suffices to give a bound on the privacy of $M$. For simplicity, we now focus on a single step and drop the subscript $t$. Recognizing that changing one of the $m$ data points only affects one $v^{(i)}$, the sensitivity of $\frac{1}{m} \sum_i\bar{v}_t^{(i)}$ is at most $\frac{2C}{m}$ due to the clipping operation. Making use of \Cref{thm:g_mech}, adding Gaussian noise $N(0, \sigma^2 \cdot\frac{4C^2}{m^2} I)$ to the average gradient renders this step $\frac1{\sigma}$-GDP. Since that the gradient update following the gradient averaging step is deterministic, we conclude that $M$ satisfies $\frac1{\sigma}$-GDP.


In summary, the discussion above has proved the following theorem:
\begin{theorem} \label{thm:sgdcompo}
\Cref{alg:dpsgd} is $C_{m/n}(G_{\sigma^{-1}})^{\otimes T}$-DP.
\end{theorem}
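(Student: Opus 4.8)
The plan is to assemble the guarantee from the inside out, mirroring the three operations that make up one step of \Cref{alg:dpsgd}: a Gaussian perturbation of an averaged clipped gradient, a subsampling step, and then a $T$-fold composition over iterations. First I would isolate the per-iteration mechanism $M:X^m\times V\to V$ executing lines 4--7 and show that, for \emph{every} fixed current parameter $\theta$, it is $\tfrac1\sigma$-GDP. The crucial point is that clipping guarantees $\|\bar v^{(i)}\|_2\le C$, so changing a single record alters $\tfrac1m\sum_i\bar v^{(i)}$ by at most $\tfrac{2C}{m}$ in $\ell_2$ norm; thus the $\ell_2$-sensitivity of the averaged clipped gradient is $\tfrac{2C}{m}$. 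Adding $\N(0,\tfrac{4\sigma^2C^2}{m^2}I)$ is the Gaussian mechanism with per-coordinate noise standard deviation $\tfrac{2\sigma C}{m}$, so by the (vector-valued analogue of) \Cref{thm:g_mech} this step is $\mu$-GDP with $\mu=\tfrac{2C/m}{2\sigma C/m}=\tfrac1\sigma$. Since the remaining update $\theta_{t+1}=\theta_t-\eta_t(\cdot)$ is a deterministic map of the perturbed gradient, it is a (degenerate) post-processing, so \Cref{prop:post_process} gives that $M(\cdot,\theta)$ is $G_{1/\sigma}$-DP for all $\theta$.

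Next I would incorporate subsampling. Writing $\tm=M\circ\Sample_m$ for the mechanism executing lines 3--7, \Cref{thm:subsample} with sampling ratio $p=m/n$ shows $\tm(\cdot,\theta)$ is $C_{m/n}(G_{\sigma^{-1}})$-DP, and this holds uniformly over $\theta\in V$ because the sensitivity bound above never depended on $\theta$. Finally I would close with the composition theorem: the output $(\theta_1,\ldots,\theta_T)$ of \texttt{NoisySGD} is exactly the trajectory obtained by iterating $\theta_{j+1}=\tm(S,\theta_j)$, so \texttt{NoisySGD} is the composition, in the sense of \eqref{eq:M_2_comp}, of $T$ copies of $\tm$ where the $t$-th copy receives the previously released $\theta_t$ as side input. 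Since each $\tm(\cdot,\theta_1,\ldots,\theta_{t-1})=\tm(\cdot,\theta_{t-1})$ is $C_{m/n}(G_{\sigma^{-1}})$-DP for every value of the earlier outputs, \Cref{thm:n_steps} yields that \texttt{NoisySGD} is $C_{m/n}(G_{\sigma^{-1}})^{\otimes T}$-DP.

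The only step that is not pure bookkeeping is the lift of \Cref{thm:g_mech} from a univariate statistic to the vector-valued gradient here: one needs $\F\big(\N(a,\sigma^2 I),\N(b,\sigma^2 I)\big)=G_{\|a-b\|_2/\sigma}$. I would prove this by rotating coordinates so that $a-b$ lies along the first axis, after which the likelihood ratio --- hence the Neyman--Pearson optimal test and its trade-off function --- depends only on the first coordinate, reducing to the one-dimensional Gaussian testing problem already analyzed. Everything else is a matter of chaining \Cref{thm:g_mech}, \Cref{prop:post_process}, \Cref{thm:subsample}, and \Cref{thm:n_steps} in the right order, the one recurring caveat being that all per-step guarantees must hold uniformly over the adaptively chosen optimization state $\theta$; this is exactly what gradient clipping buys us, since it makes the relevant sensitivity independent of $\theta$.
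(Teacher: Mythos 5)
Your proposal is correct and follows essentially the same route as the paper: establish $\tfrac1\sigma$-GDP for the inner Gaussian perturbation via the sensitivity bound $\tfrac{2C}{m}$ from clipping plus post-processing, apply \Cref{thm:subsample} to get $C_{m/n}(G_{\sigma^{-1}})$-DP for $\tm=M\circ\Sample_m$, and then invoke \Cref{thm:n_steps} for the $T$-fold composition. Your explicit treatment of the vector-valued Gaussian mechanism via rotation to reduce to one dimension is a detail the paper leaves implicit, and your argument for it is the standard (and correct) one.
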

To clear up any confusion, we remark that this $C_{m/n}(G_{\sigma^{-1}})^{\otimes T}$-DP mechanism does not release the subsampled indices.

The use of \Cref{thm:sgdcompo} requires the evaluation of $C_{m/n}(G_{\sigma^{-1}})^{\otimes T}$. However, numerical computation of this tensor product is computationally cumbersome. As a matter of fact, the moment's accountant technique applied to the present problem is basically equivalent to direct computing $C_{m/n}(G_{\sigma^{-1}})^{\otimes T}$. In contrast, our central limit theorems provide an entirely different tool by analytically approximating $C_{m/n}(G_{\sigma^{-1}})^{\otimes T}$ in a way that becomes nearly exact as $T$ grows. The next two subsections presents two such results, corresponding to our two central limit theorems (\Cref{thm:Berry} and \Cref{thm:CLT}), respectively. An asymptotic privacy analysis of \texttt{NoisySGD} is given in \Cref{sub:asymptotic_analysis_of_noisy_sgd} by developing a general limit theorem for composition of subsampled mechanisms. A Berry--Esseen type analysis is shown in \Cref{sub:berry_esseen_for_privacy_of_sgd}.

\begin{figure}[!tp]
\centering
  \includegraphics[width=\linewidth]
{./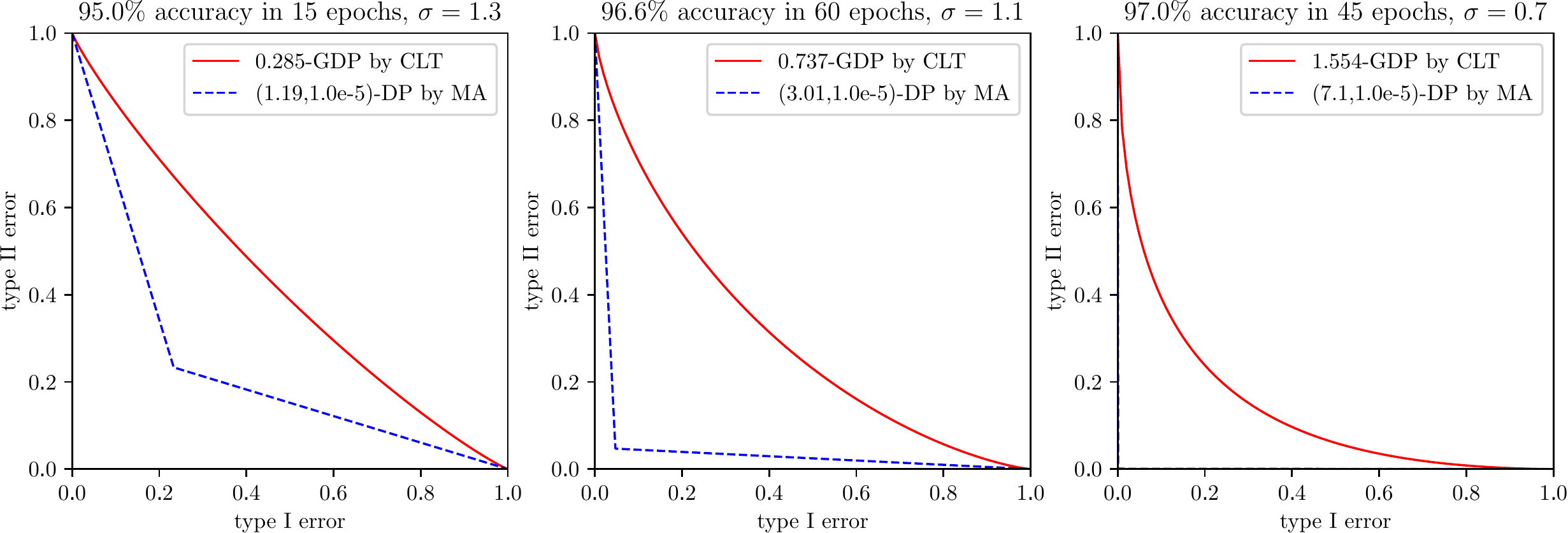}
  \captionof{figure}{Comparison of the GDP bounds derived from our method, and the $(\ep,\delta)$-DP bounds derived using the moments accountant \cite{deep}. All three experiments run \Cref{alg:dpsgd} on the entire MNIST dataset with $n=60,000$ data points, batch size $m=256$, learning rates $\eta_t$ set to 0.25, 0.15, and 0.25, respectively, and clipping thresholds $C$ set to 1.5, 1.0, 1.5, respectively. The red lines are obtained via \Cref{thm:SGDlimit}, while the blue dashed lines are produced by the tensorflow/privacy library. See \url{https://github.com/tensorflow/privacy} for the detail of the experiments.}
  \label{fig:compareSGD}
\end{figure}


\subsection{Asymptotic Privacy Analysis} 
\label{sub:asymptotic_analysis_of_noisy_sgd}
In this subsection, we first consider the limit of $C_p(f)^{\otimes T}$ for a general trade-off function $f$, then plug in $f = G_{\sigma^{-1}}$ for the analysis of \texttt{NoisySGD}. The more general approach is useful for analyzing other iterative algorithms. 

Recall from \Cref{sec:subsampling} that a $p$-subsampled $f$-DP mechanism is $C_p(f)$-DP, where $C_p(f)$ is defined as
\[
	C_p(f)(x)=
		\left\{
		\begin{array}{ll}
		f_p(x),&x\in[0,x^*] \\
		x^*+f_p(x^*)-x, & x\in[x^*,f_p(x^*)]\\
		f_p^{-1}(x), &x\in[f_p(x^*),1],
		\end{array}
		\right.
\]
where $x^*$ is the unique fixed point of $f$. We will let the sampling fraction $p$ tend to 0 as $T$ approaches infinity.
In the following, $a_+^2$ is a short-hand for $(\max\{a,0\})^2$.
\begin{restatable}{theorem}{mixturerep} \label{thm:mixtureSGD}
	Suppose $f$ is a symmetric trade-off function such that $f(0)=1$ and $\int_0^1(f'(x)+1)^4\diff x<+\infty$. Furthermore, assume $p\sqrt{T} \to p_0$ as $T\to\infty$ for some constant $p_0 > 0$. Then we have the uniform convergence
	$$C_p(f)^{\otimes T}\to G_{p_0\sqrt{2\chi^2_+(f)}}$$
as $T \to \infty$, where
	\[\chi^2_+(f)=\int_0^1\big(|f'(x)|-1\big)_+^2\diff x.\]
\end{restatable}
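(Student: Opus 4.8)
The plan is to realize the composition as $C_p(f)^{\otimes T}=g_{T1}\otimes\cdots\otimes g_{TT}$ with $g_{Ti}:=C_{p}(f)$ a symmetric trade-off function, and then to apply the asymptotic central limit theorem \Cref{thm:CLT} to this triangular array. Everything then reduces to computing, as $p\to 0^+$, the three functionals $\kl\big(C_p(f)\big)$, $\kappa_2\big(C_p(f)\big)$, $\kappa_3\big(C_p(f)\big)$ and checking that, under $p\sqrt T\to p_0$, they meet conditions 1--4 of \Cref{thm:CLT}. The case $f=\Id$ (where $\chi^2_+(f)=0$ and $C_p(\Id)=\Id$) makes the statement trivially true, so I would assume $f\ne\Id$, which by convexity, symmetry, and $f(0)=1$ forces $0<\chi^2_+(f)<\infty$ once $\int_0^1(f'+1)^4<\infty$ is granted.

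The first step is a ``folding'' reduction of the functionals to an integral over $[0,x^*]$, where $x^*$ is the unique fixed point of $f$. Since $f$ is symmetric, $f'(x^*)=-1$, and convexity gives $f'\le -1$ on $[0,x^*]$. Set $u(x):=|f'(x)|-1\ge 0$ for $x\in[0,x^*]$. Reading off \eqref{eq:Cp_expression}: on $[0,x^*]$ we have $C_p(f)=f_p$ with $|f_p'(x)|=1+p\,u(x)$; on the middle segment $[x^*,f_p(x^*)]$ the slope is $-1$, so this piece contributes $0$ to each functional; and on $[f_p(x^*),1]$ the graph of $C_p(f)$ is the reflection of the first piece across $y=x$, so the substitution $x\mapsto f_p(x)$ (justified by replacing $[0,x^*]$ with $[\varepsilon,x^*]$, using that convex functions are absolutely continuous on compact subintervals, and letting $\varepsilon\to 0$) turns its contribution into a reweighting of the first. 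Carrying this out gives
\begin{align*}
\kl\big(C_p(f)\big) &= \int_0^{x^*}\log\!\big(1+p\,u(x)\big)\,p\,u(x)\,\diff x,\\
\kappa_2\big(C_p(f)\big) &= \int_0^{x^*}\log^2\!\big(1+p\,u(x)\big)\,\big(2+p\,u(x)\big)\,\diff x,\\
\kappa_3\big(C_p(f)\big) &= \int_0^{x^*}\log^3\!\big(1+p\,u(x)\big)\,\big(2+p\,u(x)\big)\,\diff x.
\end{align*}

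The second step is to peel off the leading order by dominated convergence. The hypothesis $\int_0^1(f'(x)+1)^4\diff x<\infty$ is precisely the statement that $\int_0^{x^*}u^4<\infty$, whence $\int_0^{x^*}u^2=\chi^2_+(f)<\infty$ and $\int_0^{x^*}u^3<\infty$ as well. Using $0\le \log(1+t)/t\le 1$ and $\log(1+t)/t\to 1$ as $t\to 0$, the $\kl$-integrand divided by $p^2$ equals $\big(\log(1+pu)/(pu)\big)u^2$, is dominated by $u^2$, and tends pointwise to $u^2$; hence $\kl\big(C_p(f)\big)=\big(1+o(1)\big)p^2\chi^2_+(f)$. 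The same argument with dominating functions $u^2(2+u)$ and $u^3(2+u)$ gives $\kappa_2\big(C_p(f)\big)=\big(1+o(1)\big)\,2p^2\chi^2_+(f)$ and $\kappa_3\big(C_p(f)\big)=O(p^3)$. Finally, plugging $p=p_T$ with $Tp_T^2\to p_0^2$ and $Tp_T^3\to 0$, conditions 1--4 of \Cref{thm:CLT} hold with $K=p_0^2\chi^2_+(f)$, $\max_i\kl\to 0$, $s^2=2p_0^2\chi^2_+(f)>0$, and $\sum_i\kappa_3\to 0$; therefore $C_p(f)^{\otimes T}\to G_{2K/s}$ uniformly on $[0,1]$, and $2K/s=2p_0^2\chi^2_+(f)\big/\big(p_0\sqrt{2\chi^2_+(f)}\big)=p_0\sqrt{2\chi^2_+(f)}$, as claimed.

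I expect the main obstacle to be the behaviour of $f'$ near the endpoints of $[0,1]$: $f'$ can diverge near $0$ (for instance $G_\mu'(0)=-\infty$), so $u$ is unbounded, and both the reflection change-of-variables (where $f_p$ need not be Lipschitz) and the interchange of limit and integral take place exactly at this singularity --- this is what the fourth-moment hypothesis is calibrated to control, and the rest is routine bookkeeping. A smaller point that must be handled carefully is that $f'(x^*)=-1$ together with convexity is what fixes the sign of $u$ on each side of $x^*$ and thereby collapses $\chi^2_+(f)$ to the integral $\int_0^{x^*}u^2$ that actually shows up in the computation.
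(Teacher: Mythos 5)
Your proposal is correct and follows essentially the same route as the paper: the paper's Lemmas \ref{lem:functionals2} and \ref{lem:CpMoments} carry out exactly your ``folding'' reduction of $\kl$, $\kappa_2$, $\kappa_3$ to integrals of $g=|f'|-1$ over $[0,x^*]$, and the paper then applies the same dominated-convergence argument (using $\log(1+t)\le t$ with dominating functions $g^2$, $g^2(2+g)$, $g^3(2+g)$) before invoking \Cref{thm:CLT} with $K=p_0^2\chi_+^2(f)$ and $s^2=2K$. Your explicit treatment of the degenerate case $f=\Id$ (where $s=0$ and \Cref{thm:CLT} is inapplicable) is a minor point the paper glosses over.
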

This theorem has implications for the design of iterative private mechanisms involving subsampling as a subroutine. One way to bound the privacy of such a mechanism is to let the sampling ratio $p$ go to zero as the total number of iterations $T$ goes to infinity. The theorem says that the correct scaling between the two values is $p\sim 1/\sqrt{T}$ and, furthermore, gives an explicit form of the limit.



In order to analyze \texttt{NoisySGD}, we need to compute the quantity $\chi^2_+(G_\mu)$. This can be done by directly working with its definition. In \Cref{app:SGD}, we provide a different approach by relating $\chi^2_+(f)$ to $\chi^2$-divergence.
\begin{restatable}{lemma}{chirep} \label{lem:chi2GDP}
We have	\[\chi^2_+(G_\mu) = \e^{\mu^2}\cdot\Phi(3\mu/2)+3\Phi(-\mu/2)-2.\]
\end{restatable}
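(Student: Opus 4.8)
The plan is a direct computation starting from the explicit formula $G_\mu(\alpha)=\Phi\big(\Phi^{-1}(1-\alpha)-\mu\big)$ in \eqref{eq:Gmu}. Write $\phi=\Phi'$ for the standard normal density and set $u=\Phi^{-1}(1-\alpha)$. By the chain rule, $\tfrac{\mathrm d}{\mathrm d\alpha}\Phi^{-1}(1-\alpha)=-1/\phi(u)$, so
\[
G_\mu'(\alpha) = -\frac{\phi(u-\mu)}{\phi(u)} = -\exp\!\Big(\mu u - \tfrac{\mu^2}{2}\Big),
\qquad\text{hence}\qquad |G_\mu'(\alpha)| = \exp\!\Big(\mu u - \tfrac{\mu^2}{2}\Big).
\]
In particular $|G_\mu'(\alpha)|>1$ precisely when $u>\mu/2$, i.e.\ when $0\le \alpha<\Phi(-\mu/2)$; on the complementary interval the positive part in the definition of $\chi^2_+$ vanishes.

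Next I would change variables from $\alpha$ to $u$ via $\alpha=1-\Phi(u)$, $\diff\alpha=-\phi(u)\,\diff u$, which carries $\alpha\in[0,\Phi(-\mu/2))$ to $u\in(\mu/2,\infty)$ (the limits flip, cancelling the minus sign). This turns $\chi^2_+(G_\mu)=\int_0^1\big(|G_\mu'(x)|-1\big)_+^2\diff x$ into
\[
\chi^2_+(G_\mu) = \int_{\mu/2}^{\infty}\Big(e^{\mu u-\mu^2/2}-1\Big)^2\phi(u)\,\diff u .
\]
Expanding the square gives $e^{2\mu u-\mu^2}-2e^{\mu u-\mu^2/2}+1$, and each of the three resulting integrals is evaluated by completing the square in the Gaussian exponent: the first equals $e^{\mu^2}\int_{\mu/2}^\infty\phi(u-2\mu)\,\diff u=e^{\mu^2}\Phi(3\mu/2)$, the second equals $\int_{\mu/2}^\infty\phi(u-\mu)\,\diff u=\Phi(\mu/2)$, and the third equals $\Phi(-\mu/2)$. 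Collecting terms and using $\Phi(\mu/2)=1-\Phi(-\mu/2)$ to simplify $-2\Phi(\mu/2)+\Phi(-\mu/2)=3\Phi(-\mu/2)-2$ yields exactly $\chi^2_+(G_\mu)=e^{\mu^2}\Phi(3\mu/2)+3\Phi(-\mu/2)-2$.

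There is no substantive obstacle; the only care needed is in tracking the orientation of the substitution (so the sign works out) and in identifying the sub-interval on which the positive part is active. For an alternative, one could instead invoke the relation—promised in \Cref{app:SGD}—between $\chi^2_+(f)$ and a one-sided truncation of the $\chi^2$-divergence between $\N(\mu,1)$ and $\N(0,1)$, whose untruncated value is $e^{\mu^2}-1$; but the self-contained calculation above already gives the result.
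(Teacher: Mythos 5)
Your computation is correct and amounts to essentially the same argument as the paper's: both reduce to the identical integral $\int_{\mu/2}^{\infty}\big(\e^{\mu x-\mu^2/2}-1\big)^2\varphi(x)\,\diff x$ and evaluate it by expanding the square and completing the square in each Gaussian exponent. The only cosmetic difference is that the paper reaches that integral by first invoking \Cref{prop:chiplus} to identify $\chi^2_+(G_\mu)$ with the $\chi^2_+$-divergence between $\N(\mu,1)$ and $\N(0,1)$, whereas you perform the change of variables $\alpha=1-\Phi(u)$ directly on the defining integral of $\chi^2_+(G_\mu)$.
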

When using SGD to train large models, we typically perform a very large number of iterations, so it is reasonable to consider the parameter regime in which $n\to\infty,T\to\infty$. The batch size can also vary with these quantities. The following theorem is a direct consequence of \Cref{thm:sgdcompo,thm:mixtureSGD} and \Cref{lem:chi2GDP}.
\begin{restatable}{corollary}{SGDlimitrep} \label{thm:SGDlimit}
	If $m\sqrt{T}/n \to c$, then \textup{\texttt{NoisySGD}} is asymptotically $\mu$-GDP with
	\[\mu = \sqrt{2}c\cdot\sqrt{\e^{\sigma^{-2}}\cdot\Phi(1.5\sigma^{-1})+3\Phi(-0.5\sigma^{-1})-2}.\]
\end{restatable}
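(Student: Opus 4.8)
The plan is to simply chain the three ingredients named in the statement. First I would invoke \Cref{thm:sgdcompo}, which asserts that \texttt{NoisySGD} is $C_{m/n}(G_{\sigma^{-1}})^{\otimes T}$-DP. This reduces the problem to understanding the limiting trade-off function of the tensor power $C_p(f)^{\otimes T}$ with $f=G_{\sigma^{-1}}$ and sampling ratio $p=m/n$, which is exactly the object handled by \Cref{thm:mixtureSGD}.

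Second, I would verify that the hypotheses of \Cref{thm:mixtureSGD} hold for $f=G_{\sigma^{-1}}$. Symmetry of $G_\mu$ is already on record, and $G_{\sigma^{-1}}(0)=\Phi\big(\Phi^{-1}(1)-\sigma^{-1}\big)=\Phi(+\infty)=1$. The only point requiring a short computation is the moment condition $\int_0^1\big(G_{\sigma^{-1}}'(x)+1\big)^4\diff x<\infty$: using the substitution $u=\Phi^{-1}(1-\alpha)$ one gets $G_\mu'(\alpha)=-\e^{\mu u-\mu^2/2}$, so that $\int_0^1|G_\mu'(\alpha)+1|^4\diff\alpha$ is controlled by $\int_{-\infty}^{\infty}\e^{4\mu u}\phi(u)\diff u$, which is finite because the Gaussian density dominates the exponential. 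Finally, the scaling hypothesis $p\sqrt{T}\to p_0$ becomes $m\sqrt{T}/n\to c$, which is precisely our assumption, with $p_0=c$.

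Third, \Cref{thm:mixtureSGD} then yields the uniform convergence $C_{m/n}(G_{\sigma^{-1}})^{\otimes T}\to G_{c\sqrt{2\chi^2_+(G_{\sigma^{-1}})}}$, i.e.\ \texttt{NoisySGD} is asymptotically $c\sqrt{2\chi^2_+(G_{\sigma^{-1}})}$-GDP. I would then substitute the closed form from \Cref{lem:chi2GDP} with $\mu=\sigma^{-1}$, namely $\chi^2_+(G_{\sigma^{-1}})=\e^{\sigma^{-2}}\Phi(3\sigma^{-1}/2)+3\Phi(-\sigma^{-1}/2)-2$, and simplify $\mu=c\sqrt{2\chi^2_+(G_{\sigma^{-1}})}=\sqrt{2}\,c\,\sqrt{\e^{\sigma^{-2}}\Phi(1.5\sigma^{-1})+3\Phi(-0.5\sigma^{-1})-2}$, which is the claimed expression.

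All the substantive work lives upstream, in the proofs of \Cref{thm:mixtureSGD} and \Cref{lem:chi2GDP}; at the level of this corollary the only genuine obstacle is checking the integrability hypothesis of \Cref{thm:mixtureSGD}, and even that is routine once the explicit formula for $G_\mu'$ is available. The remaining risk is purely bookkeeping: making sure the sampling ratio fed to $C_p$ is $p=m/n$, that $p_0=c$, and that the factor $\sqrt{2}$ from the normalization $\mu=p_0\sqrt{2\chi^2_+(f)}$ is tracked correctly.
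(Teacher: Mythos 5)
Your proposal is correct and follows essentially the same route as the paper: cite \Cref{thm:sgdcompo} and \Cref{lem:chi2GDP}, and observe that the only hypothesis of \Cref{thm:mixtureSGD} needing verification is $\int_0^1(G_{\sigma^{-1}}'(x)+1)^4\,\diff x<\infty$, which both you and the paper check via the substitution $y=\Phi^{-1}(1-x)$, the identity $G_a'(x)=-\e^{ay-a^2/2}$, and the finiteness of Gaussian moment generating functions. The bookkeeping you flag ($p=m/n$, $p_0=c$, the factor $\sqrt{2}$) is handled exactly as you describe.
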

First, we remark that the condition in the theorem is consistent with the analysis of private SGD in \cite{bassily2014private}, which considers $m = 1$ and $T = O(n^2)$. We also note that in the deep learning literature, the quantity $\frac{m}{n}\cdot\sqrt{T}$ is generally quite small. The convention in this literature is to reparameterize the number of gradient steps $T$ by the number of ``epochs'' $E$, which is the number of sweeps of the entire dataset. The relationship between these parameters is that  $E = Tm/n$. In this reparameterization, our assumption is that $Em/n\to c^2$. Concretely, the AlexNet \cite{krizhevsky2012imagenet} sets the parameters as $m=128, E\approx90$ on the ILSVRC-2010 dataset with $n\approx 1.2\times 10^6$, leading to $Em/n < 0.01$. Many other prominent implementations\footnote{See the webpage of Gluon CV Toolkit \cite{he2018bag,zhang2019bag} for a collection of such hyperparameters on computer vision
  tasks.} also lead to a small value of $Em/n$. 


\subsection{A Berry--Esseen Privacy Bound} 
\label{sub:berry_esseen_for_privacy_of_sgd}
Now, we apply the Berry--Esseen style central limit theorem (\Cref{thm:Berry}) for the privacy analysis of \texttt{NoisySGD}, with the advantage of giving sharp privacy guarantees. However, the disadvantage is that the expressions it yields are more unwieldy: they are computer evaluable, so usable in implementations, but do not admit simple closed forms. 

The individual components in \Cref{thm:Berry} have the form $C_p(G_\mu)$ with $p = m/n, \mu = \sigma^{-1}$. It suffices to evaluate the moment functionals on $C_p(G_\mu)$. This is done in the following lemma.
\begin{restatable}{lemma}{functionalGmurep}\label{lem:functionalGmu}
	Let $Z(x) = \log(p\cdot \e^{\mu x-\mu^2/2}+1-p)$ and $\varphi(x) = \frac{1}{\sqrt{2\pi}}\e^{-x^2/2}$ be the density of the standard normal distribution. Then
	\begin{align*}
		\kl\big(C_p(G_\mu)\big) &= p\int_{\mu/2}^{+\infty} Z(x)\cdot\big(\varphi(x-\mu)-\varphi(x)\big)\diff x\\
		\kappa_2\big(C_p(G_\mu)\big)&= \int_{\mu/2}^{+\infty} Z^2(x)\cdot\big(p\varphi(x-\mu)+(2-p)\varphi(x)\big)\diff x\\
		\bar{\kappa}_3\big(C_p(G_\mu)\big)&=\int_{\mu/2}^{+\infty} \big|Z(x)-\kl\big(C_p(G_\mu)\big)\big|^3\cdot(p\varphi(x-\mu)+(1-p)\varphi(x))\diff x\\
		&+\int_{\mu/2}^{+\infty} \big|Z(x)+\kl\big(C_p(G_\mu)\big)\big|^3\cdot\varphi(x)\diff x.
	\end{align*}
\end{restatable}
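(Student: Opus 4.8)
The plan is to compute the three moment functionals $\kl$, $\kappa_2$, and $\bar\kappa_3$ directly from their integral definitions, using the explicit closed form for $C_p(G_\mu)$ given in \eqref{eq:Cp_expression}. The key observation is that these functionals are defined as integrals of powers of $\log|f'(x)|$ against the uniform measure on $[0,1]$, but because $C_p(G_\mu)$ is the trade-off function of an explicit pair of distributions, it is much cleaner to change variables and re-express everything as integrals against Gaussian densities. Recall that for a trade-off function $f = \F(P,Q)$, the likelihood ratio test realizes the optimal boundary, so along the nontrivial (strictly decreasing, non-linear) portion of the graph we have the parametrization $\alpha = \P_P[\text{reject}]$, $f(\alpha) = \P_Q[\text{accept}]$, and $|f'(\alpha)|$ equals the likelihood ratio $\frac{dQ}{dP}$ evaluated at the corresponding threshold. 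Differentiating and substituting turns $\int_0^1 \log^k|f'(x)|\,dx$ into an integral of $\log^k(\text{likelihood ratio})$ against the appropriate mixture of $P$ and $Q$.

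Concretely, I would first identify the pair of distributions underlying $C_p(G_\mu)$. From the proof of \Cref{thm:subsample} and the formula \eqref{eq:Cp_expression}, the relevant pair is the one distinguishing a subsampled mechanism: informally, $P = \N(0,1)$ versus $Q = p\,\N(\mu,1) + (1-p)\N(0,1)$ (the privacy curve $f_p = p G_\mu + (1-p)\Id$ before symmetrization), with symmetrization and convexification accounting for the three-piece structure. The likelihood ratio at the point indexed by the standard-normal quantile is exactly $p\,\e^{\mu x - \mu^2/2} + (1-p)$, which is the quantity $\e^{Z(x)}$ in the statement — so $Z(x) = \log|f'|$ along the first piece $x \in [0, x^*]$. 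The symmetric piece $x \in [f_p(x^*), 1]$ contributes the reflected copy, which after the change of variables to $\N(0,1)$ coordinates gives $-Z(x)$ weighted by $\varphi(x)$, and the flat middle piece $x \in [x^*, f_p(x^*)]$ where $|f'| = 1$ contributes $\log 1 = 0$ and hence drops out of every integral. The fixed point $x^* = G_\mu(x^*)$ translates into the threshold $\mu/2$ in $x$-coordinates, which is why all integrals run from $\mu/2$ to $+\infty$. Carrying this out: $\kl$ picks up $Z(x)$ against $p(\varphi(x-\mu) - \varphi(x))$ (the $Q-P$ part surviving after the $(1-p)\varphi(x)$ and $\varphi(x)$ terms cancel between the two symmetric pieces), $\kappa_2$ picks up $Z^2(x)$ against the sum of the weights from both non-flat pieces, namely $p\varphi(x-\mu) + (1-p)\varphi(x)$ from the first piece and $\varphi(x)$ from the reflected piece, totaling $p\varphi(x-\mu) + (2-p)\varphi(x)$; and $\bar\kappa_3$ splits into the contribution $|Z(x) - \kl|^3$ from the first piece with its weight $p\varphi(x-\mu) + (1-p)\varphi(x)$ and $|Z(x) + \kl|^3$ from the reflected piece with weight $\varphi(x)$, matching the claimed formula.

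I expect the main obstacle to be the careful bookkeeping of the change of variables across the piecewise definition — in particular, verifying that (i) the measure pushed forward from uniform on $[0,1]$ under the inverse of the trade-off curve is exactly the stated Gaussian weight on each piece, with the correct Jacobian $|f'|$, (ii) the endpoints of the three pieces transform to $\mu/2$ and $+\infty$ correctly (using $x^* = \Phi(-\mu/2)$ or the analogous identity for $G_\mu$), and (iii) the reflected piece, which is parametrized by $f_p^{-1}$, contributes $-Z$ rather than $Z$ and with weight $\varphi(x)$ alone rather than the mixture — this is because on that branch the roles of the two hypotheses are swapped, so the base measure is $Q$ restricted appropriately, but the $\N(\mu,1)$ component of $Q$ maps into a region that was already covered by the first piece. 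A clean way to handle (iii) is to use the symmetry $C_p(G_\mu) = C_p(G_\mu)^{-1}$ together with the fact, established in \eqref{eq:inver_f} and Proposition~\ref{prop:symmetry}, that inverting a trade-off function swaps $P$ and $Q$; this forces the two symmetric pieces to contribute mirror-image integrands and lets me assemble the weights by a symmetry argument rather than a second explicit computation. Once the substitution is set up, everything else is a routine change of variables, so I would keep the exposition at the level of stating the substitution and the resulting integrals, deferring the elementary verification that $\log|C_p(G_\mu)'| = Z$ on the first piece to a short computation using $G_\mu'(\alpha) = -\e^{\mu\Phi^{-1}(1-\alpha) - \mu^2/2}$.
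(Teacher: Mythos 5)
Your proposal is correct and is essentially the paper's own computation: the paper routes the same idea through \Cref{lem:functionals2} (folding the moment integrals onto $[0,x^*]$ using symmetry, with the flat middle piece contributing nothing) and \Cref{lem:CpMoments}, then performs the change of variables $y=\Phi^{-1}(1-x)$ so that $x^*=\Phi(-\mu/2)$ becomes the threshold $\mu/2$ and $|f_p'|$ becomes $\e^{Z(y)}$. One bookkeeping caveat: you have the piece attributions swapped --- the first piece $[0,x^*]$ carries $\log|C_p(G_\mu)'|=+Z$ against the plain weight $\varphi(y)\,\mathrm{d}y$, while the reflected piece carries $-Z$ against the Jacobian-corrected weight $\bigl(p\varphi(y-\mu)+(1-p)\varphi(y)\bigr)\mathrm{d}y$, not the other way around; since this double swap leaves the (integrand, weight) pairings unchanged, your final formulas for $\kl$, $\kappa_2$, and $\bar{\kappa}_3$ are nonetheless correct.
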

We can plug these expressions into \Cref{thm:Berry} and get
\begin{restatable}{corollary}{SGDBerryrep} \label{thm:SGDBerry}
Let $p = m/n, \mu = \sigma^{-1}$ and
	\begin{align*}
		\tilde{\mu}=\frac{2\sqrt{T}\cdot\kl\big(C_p(G_\mu)\big)}{\sqrt{\kappa_2\big(C_p(G_\mu)\big) - \kl^2\big(C_p(G_\mu)\big)}}, \quad		\gamma=\frac{0.56}{\sqrt{T}}\cdot \frac{\bar{\kappa}_3\big(C_p(G_\mu)\big)}{\big(\kappa_2\big(C_p(G_\mu)\big) - \kl^2\big(C_p(G_\mu)\big)\big)^{\frac{3}{2}}}.
	\end{align*}
\textup{\texttt{NoisySGD}} is $f$-DP with
	\begin{equation*}
		f(\alpha) = \max\{G_{\tilde{\mu}}(\alpha+\gamma)-\gamma,0\}.
	\end{equation*}
\end{restatable}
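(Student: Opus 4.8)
The plan is to reduce the statement to a single application of the Berry--Esseen bound \Cref{thm:Berry} in the special case of a homogeneous composition. By \Cref{thm:sgdcompo}, \texttt{NoisySGD} is $g^{\otimes T}$-DP where $g := C_{m/n}(G_{\sigma^{-1}})$; writing $p = m/n$ and $\mu = \sigma^{-1}$, it therefore suffices to prove that $g^{\otimes T}(\alpha) \ge \max\{G_{\tilde\mu}(\alpha+\gamma)-\gamma, 0\}$ for every $\alpha \in [0,1]$. Indeed, $f$-DP is monotone in this sense: if the optimal trade-off function between $\texttt{NoisySGD}(S)$ and $\texttt{NoisySGD}(S')$ dominates $g^{\otimes T}$, and $g^{\otimes T}$ dominates $f$ pointwise, then \texttt{NoisySGD} is $f$-DP.

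First I would check that \Cref{thm:Berry} applies to the $T$-tuple $f_1 = \cdots = f_T = g$. Since the $p$-sampling operator $C_p$ produces a symmetric trade-off function, $g$ is a symmetric trade-off function; and the finiteness condition $\kappa_3(g) < \infty$ follows from the explicit integral formulas in \Cref{lem:functionalGmu}, whose integrands are powers of the affine-growth function $Z(x) = \log(p\e^{\mu x - \mu^2/2}+1-p)$ against a Gaussian density on $[\mu/2,\infty)$ and hence converge (note $Z(\mu/2) = 0$, so there is no singularity at the left endpoint either). Consequently $\kl(g)$, $\kappa_2(g)$, $\bar\kappa_3(g)$ are all finite, and $\kappa_2(g) - \kl(g)^2 > 0$ as long as $g \ne \Id$, i.e. whenever $p > 0$ and $\mu > 0$, so the quantities $\tilde\mu$ and $\gamma$ in the statement are well-defined. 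Specializing the $\ell_1$ and $\ell_2$ norms in \Cref{thm:Berry} to the homogeneous case gives $\|\bkl\|_1 = T\kl(g)$, $\|\bk\|_1 = T\kappa_2(g)$, $\|\bkl\|_2^2 = T\kl(g)^2$, and $\|\bkbar\|_1 = T\bar\kappa_3(g)$; substituting these and cancelling the powers of $T$ turns the $\mu$ and $\gamma$ of \Cref{thm:Berry} into precisely the $\tilde\mu$ and $\gamma$ of the corollary.

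Assuming $\gamma < \tfrac12$ --- the regime in which the bound is non-vacuous --- the lower half of \eqref{eq:lower_upper} then reads $g^{\otimes T}(\alpha) \ge G_{\tilde\mu}(\alpha+\gamma) - \gamma$ for $\alpha \in [\gamma, 1-\gamma]$, and using the convention from the footnote to \Cref{thm:Berry} (extend $G_{\tilde\mu}$ by $1$ on $(-\infty,0)$ and by $0$ on $(1,\infty)$) this extends to all $\alpha \in [0,1]$. Combining with the trivial bound $g^{\otimes T} \ge 0$ valid for any trade-off function yields $g^{\otimes T}(\alpha) \ge \max\{G_{\tilde\mu}(\alpha+\gamma)-\gamma, 0\} = f(\alpha)$, as required. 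To close, I would record that $f$ is itself a bona fide trade-off function: it is the pointwise maximum of $0$ and the convex non-increasing map $\alpha \mapsto G_{\tilde\mu}(\alpha+\gamma)-\gamma$, which is bounded above by $1-\alpha-2\gamma \le 1-\alpha$, so \Cref{prop:trade-off} certifies that ``\texttt{NoisySGD} is $f$-DP'' is a well-posed claim. The only real friction in this argument --- everything structural having been absorbed into \Cref{thm:sgdcompo}, \Cref{thm:Berry}, and \Cref{lem:functionalGmu} --- is the verification that the moment functionals of $C_p(G_\mu)$ are finite and that $\kappa_2 - \kl^2$ is strictly positive, which is exactly the place where one must lean on the content of \Cref{lem:functionalGmu} rather than treat these functionals as opaque symbols.
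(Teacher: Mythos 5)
Your proposal is correct and is essentially the paper's own argument: the paper proves this corollary by applying \Cref{thm:sgdcompo} and then "plugging in" the moment functionals of $C_p(G_\mu)$ from \Cref{lem:functionalGmu} into the homogeneous case of \Cref{thm:Berry}, exactly as you do. The extra care you take over finiteness of the moment functionals, the cancellation of the powers of $T$, and the extension of $G_{\tilde\mu}$ outside $[\gamma,1-\gamma]$ only makes explicit what the paper leaves implicit.
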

We remark that $G_{\tilde{\mu}}$ can be set to 0 in $(1,+\infty)$ so that $f$ is well-defined when $\alpha>1-\gamma$.



\section{Discussion}
\label{sec:discussion}

In this paper, we have introduced a new framework for private data analysis that we refer to as $f$-differential privacy, which generalizes $(\ep, \delta)$-DP and has a number of attractive properties that escape the difficulties of prior work. This new privacy definition uses trade-off functions of hypothesis testing as a measure of indistinguishability of two neighboring datasets rather than a few parameters as in prior differential privacy relaxations. Our $f$-DP retains an interpretable hypothesis testing semantics and is expressive enough to losslessly reason about composition, post-processing, and group privacy by virtue of the informativeness of trade-off functions. Moreover, $f$-DP admits a central limit theorem that identifies a simple and single-parameter family of privacy
definitions as focal: Gaussian differential privacy. Precisely, all hypothesis testing based definitions of privacy converge to Gaussian differential privacy in the limit under composition, which implies that Gaussian differential privacy is the unique such definition that can tightly handle composition. The central limit theorem and its Berry--Esseen variant give a tractable analytical approach to tightly analyzing the privacy cost of iterative methods such as SGD. Notably, $f$-DP is \emph{dual} to $(\ep, \delta)$-DP in a constructive sense, which gives the ability to import results proven for $(\epsilon,\delta)$-DP. This powerful perspective allows us to obtain an easy-to-use privacy amplification by subsampling theorem for $f$-DP, which in particular significantly improves on the
state-of-the-art counterpart in the $(\ep, \delta)$-DP setting.

We see several promising directions for future work using and extending the $f$-DP framework. First, \Cref{thm:fast} can possibly be extended to the inhomogeneous case where trade-off functions are different from each other in the composition. Such an extension would allow us to apply the central limit theorem for privacy approximation with strong finite-sample guarantees to a broader range of problems. Second, it would be of interest to investigate whether the privacy guarantee of the subsampled mechanism in \Cref{thm:subsample} can be improved for some trade-off functions. Notably, we have shown in \Cref{app:property} that this bound is tight if the trade-off function $f = 0$, that is, the original mechanism is blatantly non-private. Third, the notion of $f$-DP naturally has a \textit{local} realization where the obfuscation of the sensitive information is applied at the individual record level. In this setting, what are the fundamental limits of estimation with local $f$-DP
guarantees \cite{duchi2018minimax}? In light of \cite{duchi2018right}, what is the correct complexity measure in local $f$-DP estimation? If it is not the Fisher information, can we identify an alternative to the Fisher information for some class of trade-off functions? Moreover, we recognize that an adversary in differentially private learning may set different pairs of target type I and type II errors. For example, an adversary that attempts to control type I and II errors at 10\% and 10\%, respectively, can behave very differently from one who aims to control the two errors at 0.1\% and 99\%, respectively. An important question is to address the trade-offs between resources such as privacy and statistical efficiency and target type I and type II errors in the framework of $f$-DP. 

Finally, we wish to remark that $f$-DP can possibly offer a mathematically tractable and flexible framework for minimax estimation under privacy constraints
(see, for example, \cite{cai2019cost,bun2018fingerprinting,dwork2015robust}). Concretely, given a candidate estimator satisfying $(\epsilon, \delta)$-DP appearing in the upper bound and a possibly loose lower bound under the $(\ep, \delta)$-DP constraint, we can replace the $(\epsilon, \delta)$-DP constraint by the $f$-DP constraint where $f$ is the tightest trade-off function characterizing the estimation procedure. As is clear, the $f$-DP constraint is more stringent than the $(\ep, \delta)$-DP constraint by recognizing the primal-dual conversion (see \Cref{prop:ftoDP}). While the upper bound remains the same as the estimator continues to satisfy the new privacy constraint, the lower bound can be possibly improved due to a more stringent constraint. It would be of great interest to investigate to what extent this $f$-DP based approach can reduce the gap between upper and lower bounds minimax estimation under privacy constraints.

Ultimately, the test of a privacy definition lies not just in its power and semantics, but also in its ability to usefully analyze diverse algorithms. In this paper, we have given convincing evidence that $f$-DP is up to the task. We leave the practical evaluation of this new privacy definition to future work.




\subsection*{Acknowledgements}
\label{sec:acknowledgements}

This work was supported in part by NSF grants CCF-1763314 and CNS-1253345, and by the Sloan Foundation, and by a sub-contract for the DARPA Brandeis program.


{\small
\bibliographystyle{alpha}
\bibliography{privacy}
}

\clearpage
\appendix
\addcontentsline{toc}{section}{Appendices}

\section{Technical Details in \Cref{sec:fDP}}
\label{app:fDP}
The Neyman--Pearson lemma is a useful tool in our paper. The following statement is adapted from \cite{lehmann2006testing}.
\begin{theorem} [Neyman-Pearson lemma]\label{thm:NPlemma}
Let $P$ and $Q$ be probability distributions on $\Omega$ with densities $p$ and $q$, respectively. For the hypothesis testing problem $H_0: P$ vs $H_1: Q$, a test $\phi:\Omega\to[0,1]$ is the most powerful test at level $\alpha$ if and only if there are two constants $h\in[0,+\infty]$ and $c\in[0,1]$ such that $\phi$ has the form
	\[
	\phi(\omega)=
	\left\{
	\begin{array}{ll}
	1, 		& \text{if } p(\omega)>hq(\omega), \\
	c, 		& \text{if } p(\omega)=hq(\omega), \\
	0, 		& \text{if } p(\omega)<hq(\omega),
	\end{array}
	\right.
	\]
	and $\E_P[\phi]=\alpha$.
\end{theorem}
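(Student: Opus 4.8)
The plan is to prove the two implications separately; the crux is the ``if'' direction, which I would split into (i) showing that for every $\alpha\in[0,1]$ there is a test $\phi^\ast$ of the displayed form with $\E_P[\phi^\ast]=\alpha$, and (ii) showing that every such $\phi^\ast$ is most powerful. The ``only if'' direction then follows by a short comparison argument against the $\phi^\ast$ produced in (i).

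First I would pass to densities $p,q$ with respect to the dominating measure $\mu = P+Q$ and study the likelihood ratio $p/q$ as a $[0,+\infty]$-valued function on $\Omega$ (with the usual conventions for quotients and for products involving $0$ and $+\infty$, so that the sets $\{p>hq\}$, $\{p=hq\}$, $\{p<hq\}$ still partition $\Omega$ when $h\in\{0,+\infty\}$). For step (i), I would note that $h\mapsto P(p>hq)$ is non-increasing, equals $1$ near $h=0$, tends to $0$ as $h\to+\infty$, and jumps only at those $h$ with $P(p=hq)>0$; choosing the randomization constant $c\in[0,1]$ to interpolate across such a jump lets me realize any prescribed type-I level $\alpha$. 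This is a routine generalized-inverse construction applied to the distribution function of $p/q$ under $P$.

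For step (ii), I would take an arbitrary competitor $\psi:\Omega\to[0,1]$ with $\E_P[\psi]\le\alpha$ and compare powers via the sign of $\phi^\ast-\psi$ on the three regions appearing in the definition of $\phi^\ast$: on $\{p>hq\}$ we have $\phi^\ast-\psi\ge 0$, on $\{p<hq\}$ we have $\phi^\ast-\psi\le 0$, and on $\{p=hq\}$ the relevant linear combination of $p$ and $q$ vanishes, so $\phi^\ast-\psi$ times that combination has a constant sign $\mu$-a.e.; integrating against $\mu$ and using $\E_P[\psi]\le\E_P[\phi^\ast]=\alpha$ together with $h\ge 0$ yields the desired comparison of $\E_Q[\phi^\ast]$ and $\E_Q[\psi]$, the cases $h=0$ and $h=+\infty$ being checked directly. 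For ``only if'', given a most powerful $\phi$ at level $\alpha$ and a test $\phi^\ast$ of the displayed form at level $\alpha$ from (i), equality of powers forces the integrand in the comparison to vanish $\mu$-a.e., which pins $\phi$ to the displayed form off $\{p=hq\}$; a separate standard argument (excluding the degenerate situation where power $1$ is already attainable at a strictly smaller level) gives $\E_P[\phi]=\alpha$.

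The step I expect to be the main obstacle is the careful handling of the boundary set $\{p=hq\}$ and the edge thresholds $h\in\{0,+\infty\}$: interpolating via $c$ to hit $\alpha$ exactly in (i), concluding in the converse that $\phi$ coincides with $\phi^\ast$ only \emph{off} $\{p=hq\}$ yet still has the required form, and applying the Neyman--Pearson-style inequality with the right conventions for products of $0$ and $\infty$. The analytic content throughout is elementary; the work is entirely in not dropping these corner cases.
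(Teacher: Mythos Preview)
The paper does not prove this theorem; it is stated as a classical result adapted from Lehmann's textbook and then used as a tool, so there is no proof in the paper to compare against. Your outline is the standard textbook argument and is essentially what one finds in Lehmann.

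One genuine issue, though, is not with your strategy but with the statement as printed. If you carry out your step (ii) exactly, the pointwise inequality $(\phi^\ast-\psi)(p-hq)\ge 0$ integrates to
\[
\E_P[\phi^\ast-\psi]\;\ge\; h\,\E_Q[\phi^\ast-\psi].
\]
With $\E_P[\phi^\ast]=\alpha$ and $\E_P[\psi]\le\alpha$ (or $=\alpha$), the left side is nonnegative, but for $h>0$ this yields no lower bound on $\E_Q[\phi^\ast]-\E_Q[\psi]$; indeed under the equality constraint it gives $\E_Q[\phi^\ast]\le\E_Q[\psi]$, the wrong direction. The reason is that the displayed test in the paper has the likelihood-ratio inequality reversed: the Neyman--Pearson test rejects when $q/p$ is large, i.e.\ $\phi=1$ on $\{q>hp\}$ (equivalently $\{p<h'q\}$), not on $\{p>hq\}$. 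The paper's own applications of the lemma (for instance in the proofs of Proposition~2.2 and Theorem~3.4) use the correct direction, thresholding on the alternative-to-null ratio. So your plan is fine once you orient the inequality correctly; just be aware that following the statement verbatim would make step (ii) fail for this sign reason rather than for any defect in your argument.
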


Now, we use the Neyman--Pearson lemma to prove the proposition which gives sufficient and necessary conditions for trade-off functions.
\tradeoffthm*
In the entire appendix, from \ref{app:fDP} to \ref{app:SGD}, we will use $\T$ to denote the class of trade-off functions, and $\T^S$ the subclass of symmetric trade-off functions.
\begin{proof}[Proof of \Cref{prop:trade-off}]

	``only if'':
	Suppose $f=T(P_0,P_1)$. It is obviously non-increasing. The randomized testing rule that blindly rejects with probability $p$ achieves $(p,1-p)$ errors. It is suboptimal at level $p$, so $f(p)\leqslant 1-p$.

	Convexity follows from randomizing over two rejections rules. For given $\alpha,\alpha',t$, all in $[0,1]$, let $\phi$ and $\phi'$ be the rejection rules achieving errors $(\alpha,f(\alpha))$ and $(\alpha,f(\alpha))$ respectively. The rejection rule $\phi_t = t\phi+(1-t)\phi'$ achieves errors $t\alpha+(1-t)\alpha'$ and $tf(\alpha)+(1-t)f(\alpha')$. It is suboptimal at level $t\alpha+(1-t)\alpha'$, so we have
	\[f(t\alpha+(1-t)\alpha')\leqslant t\alpha+(1-t)\alpha'.\]

	As we remarked in the footnote, continuity in $(0,1]$ follows from properties we have proved.
	At 0 it requires a closer look at Neyman-Pearson lemma. Suppose $\alpha_n\to0$. Without loss of generality we can assume $\alpha_n$ is decreasing. We want to show $f(\alpha_n)\to f(0)$. Neyman-Pearson lemma tells us the optimal test $\phi_n$ at level $\alpha_n$ must have the form
	$$\phi_n(\omega)=
\left\{
\begin{array}{ll}
1, 		& \frac{p_1(\omega)}{p_0(\omega)}>h_n\\
c_n, & \frac{p_1(\omega)}{p_0(\omega)}=h_n\\
0, & \frac{p_1(\omega)}{p_0(\omega)}<h_n
\end{array}
\right.
$$
for some $c_n\in[0,1]$ and $h_n\in[0,+\infty]$. The fact that $\alpha_n$ is decreasing implies that $\phi_n(\omega)$ is monotone decreasing in $n$ except on a measure zero set, so it has a pointwise limit $\phi(\omega)$. Furthermore, $\phi(\omega)$ must be in the same form as $\phi_n$. Again by Neyman-Pearson lemma, $\phi$ must be the optimal test at level $\E_{P_0}[\phi]$. By dominated convergence theorem, $\E_{P_i}[\phi] = \lim_{n\to\infty}\E_{P_i}[\phi_n]$ for $i=0,1$. When $i=0$, this translates to $\E_{P_0}[\phi] = \lim_{n\to\infty} \alpha_n = 0$. So $\phi$ is at level 0. When $i=1$, we have
\[\E_{P_1}[\phi] = \lim_{n\to\infty}\E_{P_1}[\phi_n] = \lim_{n\to\infty} 1-f(\alpha_n)\]
where the second equality follows from the optimality of $\phi_n$. So
\[\lim_{n\to\infty} f(\alpha_n) = 1-\E_{P_1}[\phi] = f(0).\]
Here the second equality follows from the optimality of $\phi$. In fact, this argument works not just for 0 but for arbitrary $\alpha\in[0,1]$.

	``if'':
	Given $f$, we need to find $P,Q$. The common measurable space is the unit interval $[0,1]$. $P$ is the uniform distribution. $Q$ has density $-f'(1-x)$ on $[0,1)$ and an atom at 1 with $Q[\{1\}] = 1-f(0)$. In fact, $Q$ is constructed to have cdf $f(1-x)$, with the slight twist that the cdf is reset to be 1 at 1, because cdf has to be right continuous.

	It's easy to verify that $Q$ is indeed a probability distribution on $[0,1]$ using the properties of $f$.

	The likelihood ratio is simply $-f'(1-x)$ when $x<1$. At 1, it is 0 if $f(0)=1$ and $+\infty$ if $f(0)<1$. By convexity of $f$ it is non-decreasing, so the likelihood ratio rejection regions have the form $[h,1]$. Type I error is $P[h,1] = 1-h$. Type II error is $Q[0,h) = f(1-h)$.

\end{proof}
An equivalent object to our trade-off function is the ``testing region'' used in \cite{KOV}. For a trade-off function $f$, we define a special version of epigraph of $f$ as
\[\epi(f):= \{(\alpha,\beta)\mid \alpha\in[0,1], f(\alpha)\leqslant \beta \leqslant 1-\alpha\}.\]
Let $P,Q$ be distributions on $\Omega$. Recall that for a testing rule $\phi:\Omega\to[0,1]$, we denote its type I and type II errors by
$\alpha_\phi = \E_{P}[\phi], \quad \beta_\phi = 1 - \E_{Q}[\phi]$.
It is easy to see that $\epi(f)$ consists of all achievable type I and type II error pairs that is better than random guessing. Namely
	\[\epi(f)= \{(\alpha_\phi,\beta_\phi)\mid \phi:\Omega\to[0,1] \text{ measurable, } \alpha_\phi+\beta_\phi\leqslant 1\}.\]
This means considering $f$ and $\epi(f)$ are equivalent. For more information on the testing region, see Chapter 12 of \cite{itlectures}.

Next we justify the default assumption of symmetry.
\symmrep*
\begin{lemma}\label{lem:symmetry}
	If $f=\F(P,Q)$, then $f^{-1}=\F(Q,P)$.
\end{lemma}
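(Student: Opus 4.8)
\textbf{Proof proposal for Lemma~\ref{lem:symmetry}.}
The plan is to reduce both $\F(Q,P)$ and $\F(P,Q)$ to infima over the same set of rejection rules and then connect them through the involution $\phi\mapsto 1-\phi$. Writing out the definition for the swapped testing problem (now with null $Q$ and alternative $P$), we have $\F(Q,P)(\alpha)=\inf\{\,1-\E_P[\phi]:\E_Q[\phi]\le\alpha\,\}$. The substitution $\psi:=1-\phi$ is a bijection of $\{$rejection rules$\}$ onto itself, and under it $1-\E_P[\phi]=\E_P[\psi]=\alpha_\psi$ while the constraint $\E_Q[\phi]\le\alpha$ becomes $1-\E_Q[\psi]\le\alpha$, i.e. $\beta_\psi\le\alpha$, where $\alpha_\psi,\beta_\psi$ are now the type~I and type~II errors for the \emph{original} problem $H_0:P$ vs.\ $H_1:Q$. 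Hence $\F(Q,P)(\alpha)=\inf\{\alpha_\psi:\beta_\psi\le\alpha\}$; this is exactly the infimum defining $f=\F(P,Q)$, but with the roles of the two error levels interchanged. (Geometrically, this is just the statement that the achievable-error region $\epi(\F(Q,P))$ is the reflection of $\epi(f)$ across the diagonal, obtained by sending each test to its complement.)

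The second step is to identify this ``swapped'' infimum with the generalized inverse $f^{-1}(\alpha)=\inf\{t\in[0,1]:f(t)\le\alpha\}$, arguing both inequalities directly. For $\ge$: given any rule $\psi$ with $\beta_\psi\le\alpha$, the definition of $f$ gives $f(\alpha_\psi)=\inf\{\beta_{\psi'}:\alpha_{\psi'}\le\alpha_\psi\}\le\beta_\psi\le\alpha$, so $\alpha_\psi$ lies in the set $\{t:f(t)\le\alpha\}$ and therefore $\alpha_\psi\ge f^{-1}(\alpha)$; taking the infimum over $\psi$ yields $\F(Q,P)(\alpha)\ge f^{-1}(\alpha)$. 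For $\le$: set $t_0:=f^{-1}(\alpha)$. Since $f$ is continuous and non-increasing (Proposition~\ref{prop:trade-off}), the set $\{t:f(t)\le\alpha\}$ is closed and up-closed, hence equals $[t_0,1]$, so $f(t_0)\le\alpha$; by the Neyman--Pearson lemma (Theorem~\ref{thm:NPlemma}) there is a most powerful test $\psi$ at level $t_0$, which satisfies $\alpha_\psi=t_0$ and $\beta_\psi=f(t_0)\le\alpha$, so $t_0$ is admissible in the infimum and $\F(Q,P)(\alpha)\le t_0$. Both infima are over nonempty sets ($\psi\equiv 1$ gives $\beta_\psi=0$, and $t=1$ gives $f(1)\le 0$), so no degenerate cases arise, and we conclude $\F(Q,P)=f^{-1}$.

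I expect no serious obstacle here; the only point requiring a little care is the $\le$ direction, where one needs the optimal type~II error at a prescribed level to be genuinely \emph{attained} (so that the point $(t_0,f(t_0))$ is achieved, not merely approached). This is precisely what the Neyman--Pearson lemma provides, together with continuity of $f$ at $t_0$; everything else is bookkeeping with the definitions of the trade-off function and of $f^{-1}$.
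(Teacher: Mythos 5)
Your proof is correct and takes essentially the same route as the paper's: both rest on the observation that complementing each test ($\phi\mapsto 1-\phi$) reflects the achievable error region across the diagonal, so that $\F(Q,P)$ is the reflection of $\F(P,Q)$, and then identify this reflection with the generalized inverse $f^{-1}$. The only cosmetic difference is that the paper performs the identification via the epigraph equivalence $f(\alpha)\leqslant\beta\Leftrightarrow f^{-1}(\beta)\leqslant\alpha$, whereas you verify the two inequalities directly, invoking the Neyman--Pearson lemma for attainment in the $\leqslant$ direction; both arguments are sound.
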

\begin{proof}[Proof of \Cref{lem:symmetry}]
	The lemma is best illustrated from the epigraph point of view. It is an immediate consequence of the following claim: $(\alpha,\beta)\in \epi(f)$ if and only if $(\beta,\alpha)\in \epi(f^{-1})$, which is to say, $f(\alpha)\leqslant \beta \leqslant 1-\alpha$ if and only if $f^{-1}(\beta)\leqslant \alpha \leqslant 1-\beta$. In order to show this, the definition of $f^{-1}$, together with continuity of $f$, implies that $f(\alpha)\leqslant \beta\Leftrightarrow f^{-1}(\beta)\leqslant \alpha$. This justfies the claim and hence the lemma.
\end{proof}
\begin{proof}[Proof of \Cref{prop:symmetry}]
	Let $S,S'$ be neighboring datasets. Since $M$ is $f$-DP, we know that
	\begin{equation}\label{eq:hubei}
		\F\big(M(S),M({S'})\big)\geqslant f,\quad\F\big(M(S'),M({S})\big)\geqslant f.
	\end{equation}
	It follows easily from definition that if $f,g\in\T$ satisfy $g\geqslant f$, then $g^{-1}\geqslant f^{-1}$.
	So by \Cref{lem:symmetry} and the second inequality in \eqref{eq:hubei},
	$$\F\big(M(S),M({S'})\big) = \F\big(M(S'),M({S})\big)^{-1} \geqslant f^{-1}.$$
	Together with the first inequality in \eqref{eq:hubei}, we see for all neighboring datasets we have
	$$\F\big(M(S),M({S'})\big)\geqslant \max\{f,f^{-1}\}.$$

	It is straightforward to verify that if $f $ and $ g$ are both convex,
	continuous, non-increasing and below $\Id$ then $\max\{f,g\}$ also satisfy these properties. By \Cref{prop:trade-off}, we have $\max\{f,g\}\in\T$. So $f^\mathrm{S}=\max\{f,f^{-1}\}$ is in $\T$. The proof is complete.
\end{proof}

Recall that \Cref{eq:Gmu} states that
\[
	\F\big(\N(0,1), \N(\mu,1)\big)(\alpha) = \Phi\big(\Phi^{-1}(1-\alpha)-\mu\big).
\]
\begin{proof}[Proof of \Cref{eq:Gmu}]
	When $\mu\geqslant 0$, likelihood ratio of $\N(\mu,1)$ and $\N(0,1)$ is $\frac{\varphi(x-\mu)}{\varphi(x)} = \e^{\mu x - \frac{1}{2}\mu^2}$, a monotone increasing function in $x$. So the likelihood ratio tests must be thresholds: reject if the sample is greater than some $t$ and accept otherwise. Assuming $X\sim \N(0,1)$, the corresponding type I and type II errors are
	\[\alpha(t) = \P[X>t] = 1-\Phi(t),\quad \beta(t) = \P[X+\mu\leqslant t] = \Phi(t-\mu).\]
	Solving $\alpha$ from $t$, $t=\Phi^{-1}(1-\alpha)$. So
	\[G_\mu(\alpha) = \beta(\alpha) = \Phi\big(\Phi^{-1}(1-\alpha)-\mu\big).\]
\end{proof}

\postrep*
\begin{proof}[Proof of Lemma~\ref{lem:post}]
The idea is that whatever can be done with the processed outcome can also be done with the original outcome. Formally, if an optimal test $\phi:Z\to[0,1]$ for the problem $\Proc (P)$ vs $\Proc (Q)$ at level $\alpha$ can achieve type II error $\beta = \F\big(\Proc (P),\Proc (Q)\big)(\alpha)$, then it is easy to verify that $\phi\circ\Proc :Y\to[0,1]$ has the same errors $\alpha,\beta$ for the problem $P$ vs $Q$. The optimal error $T(P,Q)(\alpha)$ can only be smaller than $\beta$.
\end{proof}

The next result is a generalization of \Cref{eq:Gmu}, together with the interesting inverse.

Let $P$ be a probability distribution on $\R$ with density $p$, cdf $F:\R\to[0,1]$, quantile $F^{-1}:[0,1]\to[-\infty,+\infty]$ and $\xi$ be a random variable from the distribution $P$. Then we have
\begin{proposition}\label{prop:logconcave}
	$\F(\xi,t+\xi)(\alpha) = F(F^{-1}(1-\alpha)-t)$ holds for every $t>0$ if and only if the density $p$ is log-concave.
\end{proposition}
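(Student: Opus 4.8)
The plan is to reduce the statement to a one-dimensional likelihood-ratio computation via the Neyman--Pearson lemma and then recognize the condition as log-concavity of $p$. First I would set up the likelihood ratio for testing $H_0 : \xi \sim P$ against $H_1 : t + \xi \sim P$; the densities are $p(x)$ and $p(x-t)$, so the likelihood ratio is $L_t(x) = p(x-t)/p(x)$. By the Neyman--Pearson lemma (\Cref{thm:NPlemma}), the optimal tests at any level are determined by the super-level sets $\{x : L_t(x) > h\}$ of $L_t$. The key dichotomy: if $L_t$ is monotone non-decreasing for every $t > 0$, then every such super-level set is a half-line $[c, +\infty)$ (up to boundary/randomization subtleties), so the optimal test is ``reject if $\xi > c$'', and then a direct computation — exactly as in the proof of \eqref{eq:Gmu} — gives type I error $\alpha = 1 - F(c)$, type II error $\beta = F(c - t)$, hence $\beta = F(F^{-1}(1-\alpha) - t)$, which is the claimed formula.

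Next I would handle the ``only if'' direction by contrapositive: suppose the monotone likelihood ratio property fails for some $t$. The cleanest route is to observe that $\log L_t(x) = \log p(x - t) - \log p(x)$, and that $\log p$ is concave if and only if for every $t > 0$ the increment $x \mapsto \log p(x) - \log p(x-t)$ is non-increasing in $x$ (a standard characterization of concavity via monotone increments), equivalently $L_t$ is non-decreasing for every $t>0$. So ``$p$ log-concave'' $\iff$ ``$L_t$ non-decreasing for all $t > 0$''. It then remains to argue that if $L_t$ is \emph{not} monotone for some $t$, the optimal rejection region at some level is strictly not a half-line, so that the resulting trade-off function $\F(\xi, t+\xi)$ strictly exceeds $F(F^{-1}(1-\cdot)-t)$ at some point — here one exploits that a non-monotone $L_t$ has a super-level set that is a disjoint union of at least two intervals, and by the Neyman--Pearson lemma this set achieves a strictly smaller type II error at its type I level than the half-line threshold test with the same $\alpha$ would. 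One must also check that the threshold test actually realizes the value $F(F^{-1}(1-\alpha)-t)$, so that the two genuinely differ; the strictness of the Neyman--Pearson optimality (any non-likelihood-ratio test of the same size is strictly worse unless it agrees a.e. with a likelihood-ratio test) delivers this.

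The main obstacle I anticipate is the careful handling of the ``only if'' direction: translating the \emph{failure} of monotonicity of $L_t$ into a \emph{strict} gap in the trade-off function, while correctly accounting for (i) possible atoms / flat pieces of $L_t$ where $p$ vanishes or is constant, (ii) the need to pick a level $\alpha$ and a concrete witnessing set, and (iii) edge effects at $\alpha \in \{0,1\}$ and at $\pm\infty$ in the support. A secondary technical point is the equivalence ``$\log p$ concave $\iff$ $L_t$ non-decreasing for all $t>0$'': this is elementary but should be stated with the right regularity caveats (log-concavity of a density is typically taken in the sense that $p$ is log-concave on the interval where it is positive, with $p=0$ outside; one uses the convention $\log 0 = -\infty$ and checks the increment characterization survives). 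Modulo these measure-theoretic details, the heart of the argument is exactly the monotone-likelihood-ratio computation already carried out for $G_\mu$, now run in reverse to characterize when it is available.
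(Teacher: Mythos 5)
Your ``if'' direction is exactly the paper's: log-concavity gives a monotone likelihood ratio $L_t(x)=p(x-t)/p(x)$ for every $t>0$, Neyman--Pearson (\Cref{thm:NPlemma}) then forces half-line rejection regions, and the threshold computation from the proof of \eqref{eq:Gmu} yields $\beta = F(F^{-1}(1-\alpha)-t)$. For the ``only if'' direction you take a genuinely different and considerably heavier route. The paper's argument is a short direct one: since $F(F^{-1}(1-\alpha)-t)$ is \emph{assumed} to equal the trade-off function $T(\xi,t+\xi)$, it must be convex by \Cref{prop:trade-off}; differentiating the explicit formula gives $f_t'(\alpha) = -p(x-t)/p(x)$ at $x=F^{-1}(1-\alpha)$, so convexity immediately forces the likelihood ratio to be monotone in $x$ for every $t$, and the increment characterization then gives concavity of $\log p$. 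Your contrapositive plan instead must convert a failure of monotonicity of $L_t$ into a \emph{strict} gap between the Neyman--Pearson curve and the threshold curve at some level, which requires (i) showing the non-monotonicity occurs on a set of positive measure, (ii) invoking the strict/uniqueness part of Neyman--Pearson to conclude the half-line test is strictly suboptimal there, and (iii) the boundary bookkeeping you list. None of these steps is wrong, but they constitute real work that you have flagged rather than carried out, and the paper's convexity trick makes all of it unnecessary: one never needs to exhibit a witnessing level or a better test, only to observe that the hypothesized formula has the wrong shape to be a trade-off function unless $p$ is log-concave. If you finish your route, do make the positive-measure point precise, since two isolated points where $L_t$ decreases would not change the trade-off function at all.
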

In particular, normal density is log-concave, so the expression of $G_\mu$ is a special case.
\begin{proof}[Proof of \Cref{prop:logconcave}]
	For convenience let
	\begin{equation*}\label{eq:logconcave}
		f_t(\alpha) := F(F^{-1}(1-\alpha)-t).
	\end{equation*}
	``if'': This is the easier direction. Fix $t>0$ and consider the log likelihood ratio of $\xi$ and $t+\xi$:
	$$\mathrm{llk} = \log p(x-t)-\log p(x).$$
	llk is increasing in $x$ because of log-concavity, so according to Neyman-Pearson lemma, the optimal rejection rule must have the form $1_{[h,+\infty)}$. Hence by a similar calculation as of Gaussian case, the trade-off function indeed has the form $f_t$.

	``only if'': We are given that $\F(\xi,t+\xi) = f_t$ holds for every $t>0$, and we want to show that $p$ is log-concave. Now that $f_t$ is a trade-off function  for every $t>0$, it must be convex. By chain rule$$f_t'(\alpha) = (-1)\cdot\frac{p(F^{-1}(1-\alpha)-t)}{p(F^{-1}(1-\alpha))}.$$
	Fix any $t>0$, convexity implies $f_t'(\alpha)$ is increasing in $\alpha$ for any $\alpha\in[0,1]$. Setting $x = F^{-1}(1-\alpha)$, we know $\frac{p(x-t)}{p(x)}$ is increasing in $x$ for all $x\in\R$, hence also $\log p(x-t)-\log p(x)$.

	For convenience let $g=\log p$. We know $g(x-t)-g(x)$ is increasing in $x, \forall t>0$. Equivalently, $g'(x-t)-g'(x)>0, \forall x, \forall t>0$, which means $g'(x)$ is decreasing, i.e. $g=\log p$ is concave. The proof is complete.
\end{proof}

Next we prove results presented in \Cref{sub:a_primal_dual_connection_with_}.
\ftoDPrep*
\begin{proof}[Proof of \Cref{prop:ftoDP}]
    The tangent line of $f$ with slope $k$ has equation $y=kx-f^*(k)$, so when $k=-\e^\ep$ the equation is
    \[y = -\e^\ep x-f^*(-\e^\ep).\]
    Compare it to $f_{\ep,\delta}$, we see $1-\delta = -f^*(-\e^\ep)$. By symmetry, the collection $\{f_{\ep,1+f^*(-\e^{\ep})}\}_{\ep\geqslant0}$ envelopes the function $f$.
\end{proof}
\GDPtoDPrep*
\begin{proof}[Proof of \Cref{corr:GDPtoDP}]
    By \Cref{prop:ftoDP}, $\mu$-GDP is equivalent to $(\ep,1+G_\mu^*(-\e^{\ep}))$-DP, so it suffices to compute the expression of $G_\mu^*(-\e^\ep)$.

    Recall that $G_\mu(x) = \Phi\big(\Phi^{-1}(1-x)-\mu\big)$.
    By definition,
    \[G_\mu^*(y) = \sup_{x\in[0,1]} yx-\Phi\big(\Phi^{-1}(1-x)-\mu\big).\]
    Let $t = \Phi^{-1}(1-x)$. Equivalently, $x = 1-\Phi(t)=\Phi(-t)$. Do the change of variable and we have
    \[G_\mu^*(y) = \sup_{t\in\R} ~y\Phi(-t)-\Phi(t-\mu).\]
    From the shape of $G_\mu$ we know the supremum must be achieved at the unique critical point. Setting the derivative of the objective function to be zero yields
    \begin{align*}
        \frac{\diff}{\diff t} \big[y\Phi(-t)-\Phi(t-\mu)\big] &= 0\\
        -y\varphi(-t) - \varphi(t-\mu) &=0\\
        y\e^{-\frac{1}{2}t^2}+\e^{-\frac{1}{2}(t-\mu)^2}&=0\\
        y+\e^{\mu t-\frac{1}{2}\mu^2}&=0
    \end{align*}
    So $t = \frac{\mu}{2}+\frac{1}{\mu}\log(-y)$. Plug this back in the expression of $G_\mu^*$ and we have
    \[G_\mu^*(y) = y\Phi\Big(-\frac{\mu}{2}-\frac{1}{\mu}\log(-y)\Big)-\Phi\Big(-\frac{\mu}{2}+\frac{1}{\mu}\log(-y)\Big).\]
    When $y=-\e^\ep$,
    \[G_\mu^*(-\e^\ep) = -\e^\ep\Phi\Big(-\frac{\mu}{2}-\frac{\ep}{\mu}\Big)-\Phi\Big(-\frac{\mu}{2}+\frac{\ep}{\mu}\Big).\]
    $1+G_\mu^*(-\e^{\ep})$ agrees with the stated formula in \Cref{corr:GDPtoDP}. The proof is complete.
\end{proof}


The rest of the section is devoted to group privacy results. The main theorem is

\groupthm*

For convenience we define an operation $\group$, which is function composition with a slight twist. For $f,g\in\T$,
\[f\group g (x) := f\big(1-g(x)\big).\]
$f^{\group k}$ is defined iteratively:
\[f^{\group k} = \underbrace{f\group\cdots\group f}_k.\]
Notice that $f\group g = 1-(1-f)\circ(1-g)$, so $f^{\group k} = 1-(1-f)^{\circ k}$.
\begin{lemma}
	The operation $\group$ has the following properties for $f,g\in\T$:
	\begin{enumerate}
		\item[(a)] $f\group g \in\T$.
		\item[(b)] $(f\group g)^{-1}=(g^{-1})\group (f^{-1}) $. In particular, if $f\in\T^S$, then $f^{\group k}\in\T^S$.
	\end{enumerate}
\end{lemma}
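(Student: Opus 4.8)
The plan is to prove (a) by checking the four defining properties of a trade-off function from \Cref{prop:trade-off}, and to prove the inverse identity in (b) by unwinding the definition \eqref{eq:inver_f} of the generalized inverse. Both arguments rest on the identity $f\group g = 1-(1-f)\circ(1-g)$ recorded just above the lemma, which converts $\group$ into ordinary function composition.

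For (a): since $g$ maps $[0,1]$ into $[0,1]$, so does $1-g$, and hence $f\group g = f\circ(1-g)$ is a well-defined map $[0,1]\to[0,1]$, continuous as a composition of continuous functions. It is non-increasing because $1-g$ is non-decreasing and $f$ is non-increasing; and it lies below $\Id$ because $f(\alpha)\le 1-\alpha$ at $\alpha=1-g(x)$ gives $f\group g(x)\le g(x)\le 1-x$. The only substantive point is convexity, which follows from the standard fact that the composition of a convex non-increasing function with a concave function is convex: here $f$ is convex and non-increasing, while $1-g$ is concave since $g$ is convex. \Cref{prop:trade-off} then gives $f\group g\in\T$.

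For (b): I would use the equivalence $f(\alpha)\le\beta\iff f^{-1}(\beta)\le\alpha$, valid for any trade-off function by continuity --- exactly the step already established in the proof of \Cref{lem:symmetry} --- together with the analogous statement for $g$. The computation then reads
\begin{align*}
(f\group g)^{-1}(\beta) &= \inf\{t\in[0,1]:f(1-g(t))\le\beta\}
= \inf\{t\in[0,1]:f^{-1}(\beta)\le 1-g(t)\}\\
&= \inf\{t\in[0,1]:g(t)\le 1-f^{-1}(\beta)\}
= \inf\{t\in[0,1]:g^{-1}(1-f^{-1}(\beta))\le t\}\\
&= g^{-1}\big(1-f^{-1}(\beta)\big) = \big(g^{-1}\group f^{-1}\big)(\beta),
\end{align*}
where the last line also uses that $g^{-1}$ and $f^{-1}$ are themselves trade-off functions (\Cref{lem:symmetry}), so that $g^{-1}\group f^{-1}$ is a legitimate instance of $\group$. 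This is the claimed identity.

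The ``in particular'' is then an induction. Function composition is associative, hence so is $\group$ (again from $f\group g = 1-(1-f)\circ(1-g)$); combined with (a) this gives $f^{\group k}\in\T$ for every $k$, and iterating (b) together with associativity gives $(f^{\group k})^{-1}=(f^{-1})^{\group k}$. Thus if $f=f^{-1}$ then $(f^{\group k})^{-1}=f^{\group k}$, i.e.\ $f^{\group k}\in\T^S$. I expect no serious obstacle; the two spots meriting a moment's care are the convexity-composition fact in (a) and the endpoint behavior of the generalized inverse in (b), the latter already handled inside the proof of \Cref{lem:symmetry}.
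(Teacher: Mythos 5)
Your proof is correct and follows essentially the same route as the paper's: part (a) is verified via the four conditions of \Cref{prop:trade-off} (with convexity from the convex-decreasing-composed-with-concave fact), and part (b) is the same equivalence chain $f(\alpha)\leqslant\beta\iff f^{-1}(\beta)\leqslant\alpha$, merely run in the opposite direction (starting from $(f\group g)^{-1}$ rather than from $(g^{-1})\group(f^{-1})$). The explicit induction for the ``in particular'' clause is a harmless elaboration of what the paper leaves implicit.
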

\begin{proof}
\begin{enumerate}
	\item[(a)]
	By \Cref{prop:trade-off}, it suffices to check the four properties for $f\group g$. Monotonicity and continuity are obvious. Convexity follows by the well-known fact that decreasing convex function composed with a concave function is convex. Finally, because $f(x)\leqslant 1-x,g(x)\leqslant 1-x$, we have
	\[f\group g (x) = f\big(1-g(x)\big)\leqslant 1-\big(1-g(x)\big) = g(x)\leqslant 1-x.\]
	\item[(b)]
	Recall that $f^{-1}(y) = \inf\{x\in[0,1]:f(x)\leqslant y\}$. We have
	\begin{align*}
		\big[(g^{-1})\group (f^{-1})\big](y)
		=g^{-1}\big(1-f^{-1}(y)\big)=\inf\{x\in[0,1]:g(x)\leqslant 1-f^{-1}(y)\}.
	\end{align*}
	For any two numbers $x,y\in[0,1]$, we have the following equivalence chain:
	$$g(x)\leqslant 1-f^{-1}(y) \Leftrightarrow f^{-1}(y)\leqslant 1-g(x) \Leftrightarrow f(1-g(x))\leqslant y\Leftrightarrow f\group g (x)\leqslant y.$$
	So
	\begin{align*}
		\big[(g^{-1})\group (f^{-1})\big](y)
		&=\inf\{x\in[0,1]:f\group g (x)\leqslant y\} = (f\group g)^{-1}(y).
	\end{align*}
	That is, $(g^{-1})\group (f^{-1}) = (f\group g)^{-1}$.
	The proof is complete.
\end{enumerate}
\end{proof}



\Cref{thm:group} is an immediate consequence of the following lemma:
\begin{lemma} \label{lem:group}
	Suppose $\F(P,Q)\geqslant f,\F(Q,R)\geqslant g$, then $\F(P,R)\geqslant g\group f$.
\end{lemma}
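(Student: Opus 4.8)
The plan is to exploit the hypothesis-testing characterization of trade-off functions together with the data-processing/post-processing mindset, but carried out by hand at the level of rejection rules. Let $\phi: \Omega \to [0,1]$ be any rejection rule for the problem $P$ versus $R$. Write $\alpha = \E_P[\phi]$ for its type I error and $\beta = 1 - \E_R[\phi]$ for its type II error; the goal is to show $\beta \ge (g \group f)(\alpha) = g\big(1 - f(\alpha)\big)$. I would introduce the intermediate quantity $t := \E_Q[\phi]$, which is the "power against $Q$" of the very same test $\phi$. The key observation is that $\phi$ is simultaneously a rejection rule for the problem $P$ versus $Q$ (with type I error $\alpha$ and type II error $1 - t$) and for the problem $Q$ versus $R$ (with type I error $t$ and type II error $\beta$).

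First I would apply the hypothesis $\F(P,Q) \ge f$ to $\phi$ viewed as a test for $P$ versus $Q$: since its type I error is $\alpha$, optimality of $\F(P,Q)$ at level $\alpha$ gives $1 - t \ge \F(P,Q)(\alpha) \ge f(\alpha)$, hence $t \le 1 - f(\alpha)$. Next I would apply the hypothesis $\F(Q,R) \ge g$ to $\phi$ viewed as a test for $Q$ versus $R$: its type I error is $t$, so $\beta \ge \F(Q,R)(t) \ge g(t)$. Now I need to combine these. Since $g$ is non-increasing (Proposition \ref{prop:trade-off}) and $t \le 1 - f(\alpha)$, we get $g(t) \ge g\big(1 - f(\alpha)\big)$. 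Chaining the inequalities yields
\[
\beta \ge g(t) \ge g\big(1 - f(\alpha)\big) = (g \group f)(\alpha).
\]
Taking the infimum over all $\phi$ with $\E_P[\phi] \le \alpha$ — and noting that if $\E_P[\phi] = \alpha' \le \alpha$ then $\beta \ge g(1 - f(\alpha')) \ge g(1 - f(\alpha))$ because $f$ is non-increasing and $g$ is non-increasing, so monotonicity again saves us — gives $\F(P,R)(\alpha) \ge g\big(1 - f(\alpha)\big)$ for every $\alpha \in [0,1]$, which is exactly $\F(P,R) \ge g \group f$.

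The only mild subtlety — and the step I would be most careful about — is the bookkeeping with the two monotonicity uses and the definition of the trade-off function as an infimum over tests with type I error \emph{at most} $\alpha$ rather than exactly $\alpha$; one must make sure the chain $\alpha' \le \alpha \Rightarrow 1 - f(\alpha') \ge 1 - f(\alpha) \Rightarrow g(1-f(\alpha')) \ge g(1-f(\alpha))$ (wait: $g$ non-increasing \emph{reverses}, so from $1 - f(\alpha') \ge 1 - f(\alpha)$ one gets $g(1-f(\alpha')) \le g(1-f(\alpha))$) — so actually the cleanest route is to bound $\beta \ge g(t)$ with $t \le 1 - f(\alpha')$ directly and then use $g$ non-increasing on $t \le 1 - f(\alpha) $, avoiding $\alpha'$ entirely by working with $\alpha$ from the start via $\E_P[\phi] \le \alpha$. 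Concretely: $1 - t \ge f(\E_P[\phi])$ and $f$ non-increasing with $\E_P[\phi] \le \alpha$ gives $1 - t \ge f(\alpha)$ is \emph{false} in general; rather $1 - t \ge f(\E_P[\phi])$ only. So the honest argument keeps $\alpha' = \E_P[\phi]$, obtains $\beta \ge g(1 - f(\alpha'))$, and then observes that as $\alpha'$ ranges over $[0,\alpha]$ the quantity $g(1-f(\alpha'))$ is minimized — by convexity/monotonicity of these trade-off functions — at $\alpha' = \alpha$, since $g \group f$ is itself a trade-off function hence non-increasing. I would state this last point as a one-line consequence of part (a) of the preceding lemma ($g \group f \in \T$, so non-increasing). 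This is routine but is the place where a careless sign could creep in, so it deserves explicit attention in the writeup.
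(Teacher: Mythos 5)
Your proof is correct and follows essentially the same route as the paper's: evaluate a test $\phi$ for $P$ vs.\ $R$ against the two intermediate problems via the quantity $\E_Q[\phi]$, then chain the two suboptimality bounds using the monotonicity of $g$. The only cosmetic difference is that the paper takes $\phi$ to be the Neyman--Pearson-optimal test at level exactly $\alpha$, which sidesteps the $\alpha' \leqslant \alpha$ bookkeeping you resolve at the end via the monotonicity of $g \group f$.
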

\begin{proof}
	Fix $\alpha\in[0,1]$. Suppose $\phi$ is the optimal testing rules of the problem $P$ vs $R$ at the level of $\alpha$. Then we know the type I error $\E_P[\phi]=\alpha$ and the type II error achieves the optimal value, i.e.
	\[1-\E_R[\phi]=\F(P,R)(\alpha).\]

	$\phi$ is suboptimal as a testing rule for the problem $Q$ vs $R$, so the type I and II errors must be above the trade-off function $g$. That is,
	\[1-\E_R[\phi]\geqslant \F(Q,R)(\E_Q[\phi]) \geqslant g(\E_Q[\phi]).\]
	Similarly, $\phi$ is also suboptimal for the problem $P$ vs $Q$. So $1-\E_Q[\phi]\geqslant f(\E_P[\phi]) = f(\alpha)$. Equivalently,
	\[\E_Q[\phi]\leqslant 1-f(\alpha).\]
	Put them together
	\begin{align*}
	\F(P,R)(\alpha) &= 1-\E_R[\phi]\\
	&\geqslant g(\E_Q[\phi])\\
	&\geqslant g\big(1-f(\alpha)\big) \quad\quad(g \text{ is decreasing})\\
	&=g\group f(\alpha).
	\end{align*}
	This completes the proof.
\end{proof}

\begin{proof}[Proof of \Cref{thm:group}]
	Suppose $S$ and $S'$ are $k$-neighbors, i.e. there exist datasets $S = S_0, S_1, \ldots, S_k = S'$ such that $S_i$ and $S_{i+1}$ are neighboring or identical for all $i = 0, \ldots, k-1$. By privacy of $M$, we know $T\big(M(S_i),M(S_{i+1})\big)\geqslant f$. Iteratively apply \Cref{lem:group} and we have
\[ T\big(M(S),M(S_2)\big)\geqslant f\group f, \quad T\big(M(S),M(S_3)\big)\geqslant f^{\group3} \quad \ldots \quad T\big(M(S),M(S')\big)\geqslant f^{\group k}.\]
We know that $f^{\group k} = 1-(1-f)^{\circ k}$, so the $f$-DP part of the claim is done.

The GDP part of the claim follows by an easy formula: $G_\mu\group G_{\mu'} = G_{\mu+\mu'}$.
To see this, recall that $G_\mu(\alpha)=\Phi(\Phi^{-1}(1-\alpha)-\mu)$.
\begin{align*}
	G_\mu\group G_{\mu'}(\alpha) = G_\mu\big(1-G_{\mu'}(\alpha)\big) = \Phi\big(\Phi^{-1}(G_{\mu'}(\alpha))-\mu\big) = \Phi\big(\Phi^{-1}(1-\alpha)-\mu-\mu'\big) = G_{\mu+\mu'}(\alpha).
\end{align*}
\end{proof}
In fact, similar conclusion holds for any log-concave noise. See \Cref{prop:logconcave}.

\grouplimit*
As What makes it even more interesting is the convergence occurs with very small $k$. In \Cref{fig:group} we set $\ep=0.5$ and $f = 1-(1-f_{\ep,0})^{\circ 2}$. So the blue curve in the last panel is $1-(1-f)^{\circ 2} = 1-(1-f_{\ep,0})^{\circ 4}$. Next we set $\mu=k\ep = 4\cdot 0.5 = 2$. It turns out these numbers are good enough for the condition $k\ep\to\mu$, because the predicted limit $\F\big(\mathrm{Lap}(0,1),\mathrm{Lap}(\mu,1)\big)$ (orange curve in the last panel) is almost indistinguishable from the blue curve $1-(1-f_{\ep,0})^{\circ 4}$.
\begin{figure}[!htp]
	\begin{center}
		\includegraphics[width=0.8\linewidth]{./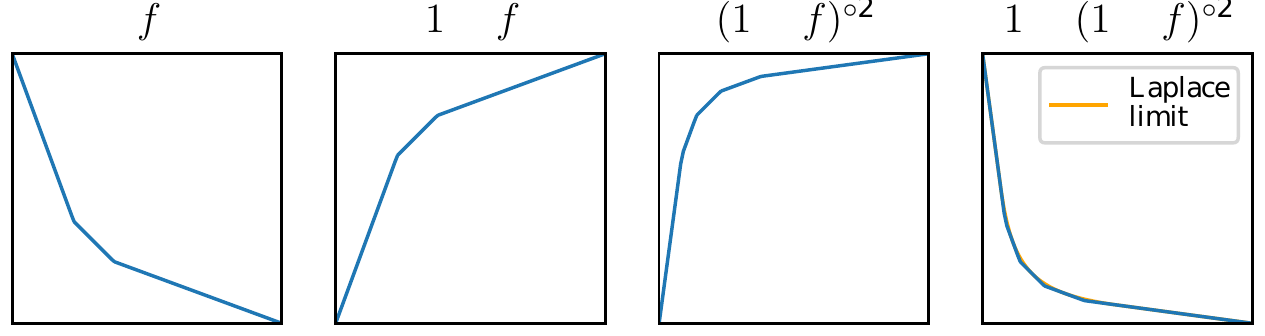}
	\end{center}
	\captionof{figure}{Group privacy corresponds to function composition. Here $f = 1-(1-f_{\ep,0})^{\circ 2}$ with $\ep=0.5$, so the blue curve in the last panel is $1-(1-f)^{\circ 2} = 1-(1-f_{\ep,0})^{\circ 4}$. Orange curve is the predicted limit $T\big(\mathrm{Lap}(0,1),\mathrm{Lap}(2,1)\big)$. The distinction is almost invisible even when $k$ is only 4.}
	\label{fig:group}
\end{figure}

\begin{lemma} \label{lem:lap}
	The trade-off function between Laplace distributions has expression
\begin{align*}
	\F\big(\mathrm{Lap}(0,1),\mathrm{Lap}(\mu,1)\big)(\alpha)=
	\left\{
	\begin{array}{ll}
		1-\e^{\mu}\alpha, 		& \alpha < \e^{-\mu}/2, \\
		\e^{-\mu}/4\alpha, & \e^{-\mu}/2\leqslant\alpha\leqslant1/2,\\
		\e^{-\mu}(1-\alpha), & \alpha>1/2.
	\end{array}
	\right.
\end{align*}
\end{lemma}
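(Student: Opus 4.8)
\textbf{Proof proposal for \Cref{lem:lap}.} The plan is to repeat, verbatim in spirit, the computation used to establish \eqref{eq:Gmu}: identify the likelihood ratio, observe it is monotone so that the Neyman--Pearson lemma (\Cref{thm:NPlemma}) reduces optimal tests to half-line threshold tests, and then eliminate the threshold. Write $p(x)=\tfrac12\e^{-|x|}$ and $q(x)=\tfrac12\e^{-|x-\mu|}$ for the densities of $\mathrm{Lap}(0,1)$ and $\mathrm{Lap}(\mu,1)$. A case analysis on the signs of $x$ and $x-\mu$ gives, for $\mu\ge 0$, that the likelihood ratio $L(x)=q(x)/p(x)=\e^{|x|-|x-\mu|}$ equals $\e^{-\mu}$ on $(-\infty,0]$, equals $\e^{2x-\mu}$ on $[0,\mu]$, and equals $\e^{\mu}$ on $[\mu,\infty)$; in particular $L$ is non-decreasing, so by \Cref{thm:NPlemma} the likelihood ratio rejection regions are the half-lines $[t,\infty)$.

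Next I would compute the two error functions by integrating the Laplace densities over the relevant regimes. For the type I error $\alpha(t)=\P_{\mathrm{Lap}(0,1)}[X\ge t]$ one gets $\alpha(t)=1-\tfrac12\e^{t}$ for $t\le 0$ and $\alpha(t)=\tfrac12\e^{-t}$ for $t\ge 0$; for the type II error $\beta(t)=\P_{\mathrm{Lap}(\mu,1)}[X<t]$ one gets $\beta(t)=\tfrac12\e^{t-\mu}$ for $t\le\mu$ and $\beta(t)=1-\tfrac12\e^{-(t-\mu)}$ for $t\ge\mu$. Eliminating $t$ in the three regimes then yields the claimed formula: $t\ge\mu$ corresponds to $\alpha\le\tfrac12\e^{-\mu}$ with $\beta=1-\e^{\mu}\alpha$; $0\le t\le\mu$ corresponds to $\tfrac12\e^{-\mu}\le\alpha\le\tfrac12$ with $\beta=\e^{-\mu}/(4\alpha)$; and $t\le 0$ corresponds to $\alpha\ge\tfrac12$ with $\beta=\e^{-\mu}(1-\alpha)$. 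A quick consistency check at the breakpoints $\alpha=\tfrac12\e^{-\mu}$ and $\alpha=\tfrac12$ confirms the three pieces agree, and the thresholds $t\in(-\infty,\infty)$ already sweep every level $\alpha\in[0,1]$.

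I do not expect a genuine obstacle here; the single point requiring a moment of care is that $L$ has two flat pieces, at value $\e^{-\mu}$ on $(-\infty,0]$ and at value $\e^{\mu}$ on $[\mu,\infty)$, where \Cref{thm:NPlemma} permits randomized tests. One should note that introducing such randomization moves the pair $(\alpha,\beta)$ only along the same boundary segment already traced out by the half-line tests (e.g.\ randomizing on $(-\infty,0]$ reproduces exactly the linear segment $\beta=\e^{-\mu}(1-\alpha)$ for $\alpha\ge\tfrac12$), so no new points of the trade-off curve arise and the family $\{[t,\infty)\}_{t\in\mathbb{R}}$ is already optimal at every level. Everything else is elementary integration of the exponential density.
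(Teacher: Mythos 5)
Your proof is correct. It differs from the paper's route in packaging rather than in substance: the paper derives \Cref{lem:lap} by invoking \Cref{prop:logconcave} (for any log-concave location family, $\F(\xi,\mu+\xi)(\alpha)=F(F^{-1}(1-\alpha)-\mu)$) and then doing a case analysis on the Laplace cdf and quantile function, whereas you inline the underlying Neyman--Pearson argument for this specific pair: compute the likelihood ratio $\e^{|x|-|x-\mu|}$ explicitly on its three regimes, note monotonicity, parametrize the errors by the threshold $t$, and eliminate $t$. The two computations land on the same three cases (your regimes $t\ge\mu$, $0\le t\le\mu$, $t\le 0$ correspond exactly to the paper's positive/negative-regime bookkeeping for $F^{-1}(1-\alpha)-\mu$), and all of your formulas for $\alpha(t)$, $\beta(t)$, and the eliminated expressions check out, including the breakpoint values $\beta=1/2$ at $\alpha=\e^{-\mu}/2$ and $\beta=\e^{-\mu}/2$ at $\alpha=1/2$. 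What the paper's approach buys is brevity and reuse, since \Cref{prop:logconcave} is already proved in general; what yours buys is self-containedness and a transparent explanation of why the curve has exactly three pieces (two flat pieces and one strictly increasing piece of the likelihood ratio), together with the correct observation that randomizing on the flat pieces only retraces the two linear segments and contributes no new boundary points.
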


\begin{figure}[!htp]
	\begin{center}
		\includegraphics[width=0.7\linewidth]{./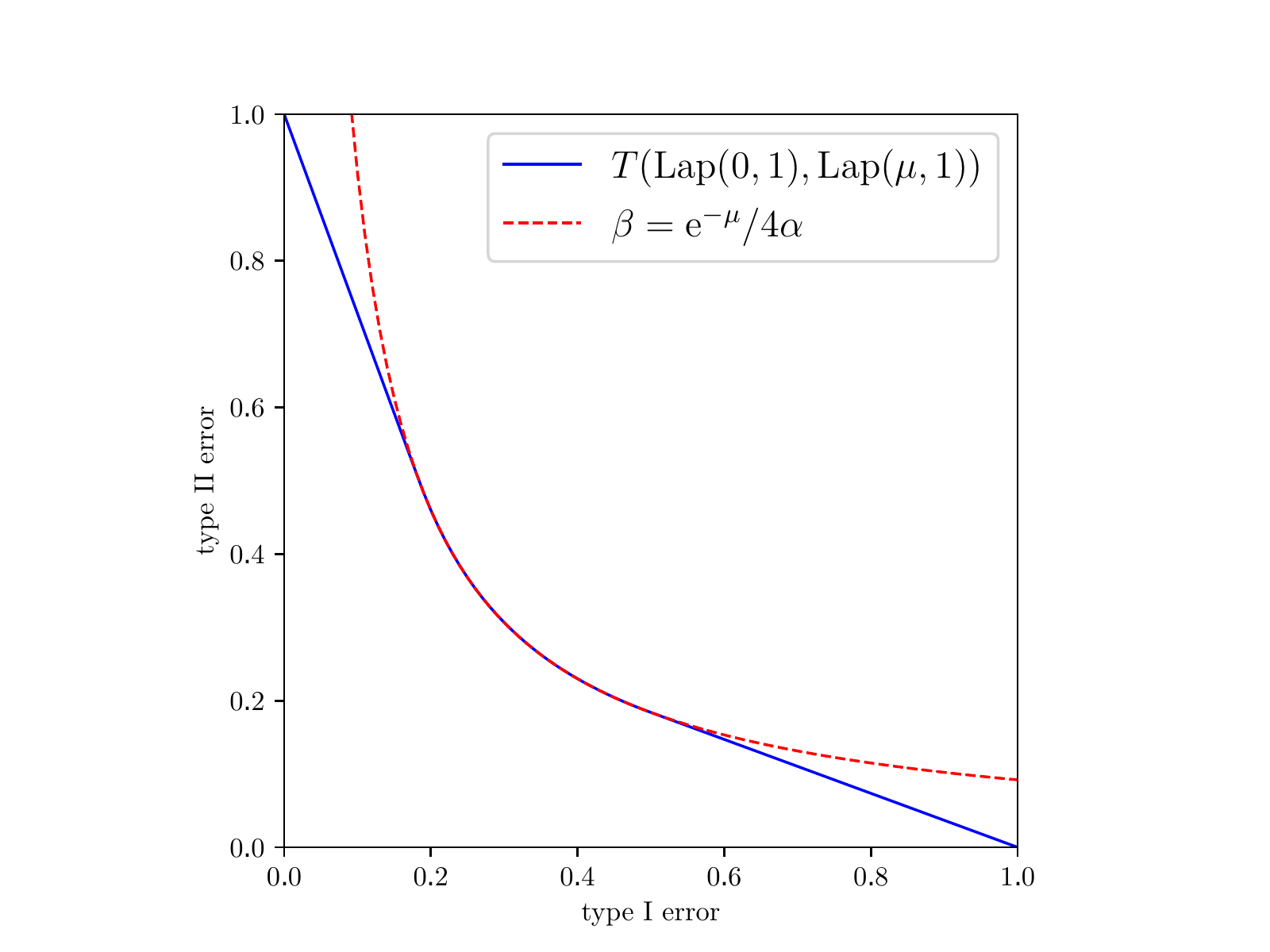}
	\end{center}
	\captionof{figure}{
	Graph of $
	\F\big(\mathrm{Lap}(0,1),\mathrm{Lap}(\mu,1)\big)$ with $\mu=1$. It agrees with the reciprocal function in the middle.
	}
	\label{fig:laplace}
\end{figure}
The graph of this function with $\mu = 1$ is illustrated in \Cref{fig:laplace}. In general, it consists of two symmetric line segments: $(0,1)$ connecting $(\e^{-\mu}/2, 1/2)$ and $(1/2,\e^{-\mu}/2)$ connecting $(1,0)$. Then $(\e^{-\mu}/2, 1/2)$ is connected to $(1/2,\e^{-\mu}/2)$ by the reciprocal function. It is easy to check that this function is $C^1$, i.e. has continuous derivative.
\begin{proof}[Proof of \Cref{lem:lap}]
Let $F$ be the cdf of $\mathrm{Lap}(0,1)$. By \Cref{prop:logconcave},
$$\F\big(\mathrm{Lap}(0,1),\mathrm{Lap}(\mu,1)\big)(\alpha) = F\big(F^{-1}(1-\alpha)-\mu\big).$$
Easy calculation yields
\begin{align*}
	F(x)=
	\left\{
	\begin{array}{ll}
	\e^{x}/2, 		& x \leqslant 0, \\
	1-\e^{-x}/2, & x > 0.
	\end{array}
	\right.
\end{align*}
So we must expect to divide into several categories. We will refer to the above two expressions as negative and positive regimes.

When $\alpha>1/2$, we are in negative regime. Solving $\e^{x}/2 = 1-\alpha$ gives us $F^{-1}(1-\alpha) = \log 2(1-\alpha)<0$. An additional $-\mu$ keeps us in negative regime, so
\[F\big(F^{-1}(1-\alpha)-\mu\big) = \exp\big(F^{-1}(1-\alpha)-\mu\big)/2 = \e^{\log 2(1-\alpha)-\mu}/2 = \e^{-\mu}(1-\alpha).\]
When $\alpha\leqslant1/2$, solving $1-\e^{-x}/2 = 1-\alpha$ gives us $F^{-1}(1-\alpha) = -\log 2\alpha\geqslant0$. If $-\log 2\alpha - \mu \leqslant 0$, i.e. $\e^{-\mu}/2\leqslant \alpha$, we are in negative regime and
\[F\big(F^{-1}(1-\alpha)-\mu\big) = \e^{-\log 2\alpha-\mu}/2 = \e^{-\mu}/4\alpha.\]
If $-\log 2\alpha - \mu > 0$, i.e. $\alpha<\e^{-\mu}/2$, we are in positive regime and
\[F\big(F^{-1}(1-\alpha)-\mu\big) = 1-\e^{\log 2\alpha+\mu}/2 = 1-\e^{\mu}\alpha.\]
The proof is complete.
\end{proof}


\begin{proof}[Proof of \Cref{prop:group_limit}]
For simplicity assume $\mu=1$. All arguments carry over for general $\mu$.

Let $f_n = 1-f_{\ep,0} = 1-f_{1/n,0}$. Fix $x_0$ and let $x_{n,k} = f_n^{\circ k}(x_0) = (1-f_{\ep,0})^{\circ k}(x_0)$. We are interested in showing
\[\lim_{n\to\infty}1-x_{n,n} = \F\big(\mathrm{Lap}(0,1),\mathrm{Lap}(1,1)\big)(x_0).\]
First we make a general observation: the sequence $\{x_{n,k}\}$ is increasing in $k$ for any $n$. This is because $f_{\ep,0}(x) \geqslant 1-x$ and hence $f_n(x)\geqslant x$.

\newcommand{\thresh}{\frac{1}{1+\e^{\frac{1}{n}}}}
Let $\theta_n = \thresh$. By the expression of $f_{\ep,0}$, we obtain the following two dynamics:
\begin{align*}
\begin{array}{rcll}
	f_n(x) &=& \e^{\frac{1}{n}}x, & \text{ if } x\leqslant \theta_n,\\
	1-f_n(x) &=& \e^{-\frac{1}{n}}(1-x),& \text{ if } x\geqslant \theta_n.
\end{array}
\end{align*}
The sequence $\{x_{n,k}\}$ evolves according to one of the two formula, potentially different for eack $k$. We will refer to $x\leqslant \theta_n$ case as \textit{linear dynamics} and $x\geqslant \theta_n$ case as \textit{flip linear regime} for evident reason.
For any $x_0$ and $n$, since $\{x_{n,k}\}$ is increasing in $k$, there exists a moment such that linear dynamics governs before and flip linear dynamics governs after. Extreme cases are one of the dynamics governs from $k=0$ to $n$.
We divide the analysis into three cases depending on the initial location $x_0$:
\begin{itemize}
	\item[(a)] $x_0< \frac{1}{2\e}$. In this case, for large enough $n$, the linear dynamics governs all the time. To see this, notice that $\theta_n$ increases to $\frac{1}{2}$ as $n\to\infty$. So for large enough $n$, $x_0< \frac{1}{\e}\cdot \theta_n$. It's easy to see that $x_{n,k}$ never exceeds $\theta_n$. Hence $x_{n,n} = \e x_0$.
	\item[(b)] $x_0\geqslant\frac{1}{2} = \sup_n \theta_n$. $x_{n,k}$ is born above threshold and remains above forever. Flip linear dynamics governs all the $n$ steps, so $1-x_{n,n} =\e^{-1}(1-x_0)$.
	\item[(c)] $\frac{1}{2e}\leqslant x < \frac{1}{2}$. Let $t$ be the time of dynamics change. More precisely,
	\begin{equation}\label{eq:group_threshold}
		t-1 = \max\{k:\e^{\frac{k}{n}}x\leqslant \theta_n.\}
	\end{equation}
	and
	\[x_{n,t} = \e^{\frac{1}{n}}x_{n,t-1}=\e^{\frac{t}{n}}x_0,\quad 1-x_{n,n} = \e^{-\frac{n-t}{n}}(1-x_{n,t}).\]
	Taking $n\to\infty$ in \eqref{eq:group_threshold} (using $\liminf$ and $\limsup$ when necessary), we know $\e^{\frac{t}{n}}\to\frac{1}{2x_0}$. So
	\[\lim_{n\to\infty}1-x_{n,n} = \lim_{n\to\infty}\e^{\frac{t}{n}-1}(1-x_{n,t}) = \lim_{n\to\infty}\e^{\frac{t}{n}-1}(1-\e^{\frac{t}{n}}x_0) = \e^{-1}\cdot\frac{1}{2x_0}(1-\frac{1}{2}) = \e^{-1}\cdot\frac{1}{4x_0}.\]
\end{itemize}
Collecting all three cases, we have
\begin{align*}
	\lim_{n\to\infty}1-x_{n,n} =
	\left\{
	\begin{array}{ll}
		1-\e x_0, 		& x_0< \frac{1}{2\e}, \\
		\e^{-1}\cdot\frac{1}{4x_0}, & \e^{-\mu}/2\leqslant\alpha\leqslant1/2,\\
		\e^{-1}(1-x_0), & x_0\geqslant\frac{1}{2}.
	\end{array}
	\right.
\end{align*}
By \Cref{lem:lap}, this agrees with $\F\big(\mathrm{Lap}(0,1),\mathrm{Lap}(1,1)\big)$.
Uniform convergence comes for free for trade-off functions once we have pointwise convergence. This is a direct consequence of \Cref{lem:uniform} below, which will be used multiple times in this paper.
\end{proof}
	\begin{lemma} \label{lem:uniform}
		Let $f_n:[a,b]\to\R$ be a sequence of non-increasing functions. If $f_n$ has pointwise limit $f:[a,b]\to\R$ where $f$ is continuous on $[a,b]$, then the limit is uniform.
	\end{lemma}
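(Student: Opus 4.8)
The plan is to run the classical Pólya-type argument for uniform convergence of monotone functions, which turns on the interplay between the monotonicity of the $f_n$ and the continuity of the limit. First I would record the elementary fact that $f$ is itself non-increasing: for $a \le x \le y \le b$ we have $f_n(x) \ge f_n(y)$ for every $n$, and passing to the pointwise limit gives $f(x) \ge f(y)$. Since $f$ is continuous on the compact interval $[a,b]$, it is uniformly continuous there, which is the property I will actually use.

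Now fix $\epsilon > 0$. By uniform continuity, choose $\delta > 0$ so that $|f(s) - f(t)| < \epsilon$ whenever $|s - t| < \delta$, and take a finite partition $a = t_0 < t_1 < \cdots < t_m = b$ of mesh less than $\delta$; in particular $f(t_{i-1}) - f(t_i) < \epsilon$ for each $i$. Since we are dealing with only finitely many grid points, pointwise convergence gives an $N$ such that $|f_n(t_i) - f(t_i)| < \epsilon$ for all $n \ge N$ and all $i = 0, \ldots, m$.

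For an arbitrary $x \in [a,b]$, pick $i$ with $x \in [t_{i-1}, t_i]$ and chain the four ingredients — monotonicity of $f_n$, the pointwise estimate at the grid points, the partition bound on $f$, and monotonicity of $f$:
\[
f_n(x) \le f_n(t_{i-1}) \le f(t_{i-1}) + \epsilon \le f(t_i) + 2\epsilon \le f(x) + 2\epsilon,
\]
and symmetrically $f_n(x) \ge f_n(t_i) \ge f(t_i) - \epsilon \ge f(t_{i-1}) - 2\epsilon \ge f(x) - 2\epsilon$. Hence $\sup_{x \in [a,b]} |f_n(x) - f(x)| \le 2\epsilon$ for all $n \ge N$, which is exactly uniform convergence.

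There is essentially no genuine obstacle here; the only care required is bookkeeping of inequality directions in the chain — since $f_n$ and $f$ are non-increasing, replacing the argument by a smaller grid point increases the value and by a larger one decreases it — together with choosing the partition fine enough that the gaps $f(t_{i-1}) - f(t_i)$ are negligible. Notably, no equicontinuity of the family $\{f_n\}$ and no compactness beyond what underlies uniform continuity of $f$ is needed.
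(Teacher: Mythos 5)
Your proof is correct and follows essentially the same Pólya-type argument as the paper: partition $[a,b]$ finely using uniform continuity of $f$, apply pointwise convergence at the finitely many grid points, and use monotonicity of $f_n$ to interpolate. The only cosmetic difference is that you invoke monotonicity of the limit $f$ where the paper uses uniform continuity directly to compare $f(x)$ with its value at a grid point, yielding a $2\epsilon$ bound instead of $\epsilon$ — immaterial.
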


	This is an easy variant of P\'{o}lya's theorem (\cite{polya1920zentralen}. See also Theorem 2.6.1 in \cite{lehmann2004elements}). For completeness, we provide a proof.
	\begin{proof}[Proof of \Cref{lem:uniform}]
	We are going to show that for every $\ep>0$, there exists $N$ such that
	$$|f_n(x)-f(x)|< \ep, \quad\forall x\in[a,b], \forall n\geqslant N.$$
	Since $f$ is continuous on a closed interval, it is uniformly continuous. So for a fixed $\ep>0$, we can find $\delta>0$ such that whenever $x,y\in[a,b]$ satisfies $|x-y|<\delta$, we have $|f(x)-f(y)|<\ep/2$.\\
	Then we can divide $[a,b]$ into small intervals $a=x_0<x_1<\cdots<x_{m-1}<x_m=b$ such that each interval is shorter than $\delta$. For these $m+1$ points we can find $N$ such that
	\begin{equation}\label{eq:lsy}
		|f_n(x_i)-f(x_i)|< \frac{\ep}{2}, \quad\forall 0\leqslant i\leqslant m, \forall n\geqslant N.
	\end{equation}
	We claim this $N$ works for our purpose. For any $x\in[a,b]$, there exists a sub-interval that contains it, namely $[x_i,x_{i+1}]$. By monotonicity of $f_n$ we have
	\begin{equation}\label{eq:liu}
		f_n(x_{i+1})-f(x)\leqslant f_n(x)-f(x)\leqslant f_n(x_i)-f(x).
	\end{equation}
	Now for $n\geqslant N$,
	\begin{align*}
		f_n(x_i)-f(x) &= [f_n(x_i)-f(x_i)]+[f(x_i)-f(x)]\\
					&< \frac{\ep}{2} + [f(x_i)-f(x)] &&\text{(By \eqref{eq:lsy})}\\
					&<\frac{\ep}{2} +\frac{\ep}{2} =\ep. &&(\text{uniform continuity of $g$})
	\end{align*}
	This shows the right hand side of \eqref{eq:liu} is less than $\ep$. A similar argument for the left hand side yields
	$$|f_n(x)-f(x)|<\ep,$$
	which justifies the choice of $N$.
	\end{proof}


\section{Conversion from $f$-DP to Divergence Based DP} 
\label{app:relation}
As the title suggests, the central question of this section is the conversion from $f$-DP to divergence based DP. It boils down to the conversion from trade-off functions to various divergences. We first introduce the most general tool, and then give explicit formula for a large class of divergences, including {\Renyi} divergence. At the end we prove the claim we made in \Cref{sec:conn-with-blackw} that privacy notion based on R\'enyi divergence does not induce a strictly larger order than the Blackwell order.


Suppose we have a ``divergence'' $D(\cdot\|\cdot)$, which takes in a pair of probability distributions on a common measurable space and outputs a number. We say $D$ satisfies data processing inequality if $D\big(\mathrm{Proc}(P)\|\mathrm{Proc}(Q)\big)\geqslant D(P\|Q)$ for any post-processing Proc.
\begin{proposition} \label{thm:functional}
	If $D(\cdot\|\cdot)$ satisfies data processing inequality, then there exists a functional $l_D:\T\to\R$ that computes $D$ through the trade-off function:
	$$D(P\|Q) = l_D\big(\F(P,Q)\big).$$
\end{proposition}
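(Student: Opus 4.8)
The plan is to invoke Blackwell's theorem (Theorem~\ref{thm:blackwell}) to show that $D(P\|Q)$ depends on the pair $(P,Q)$ \emph{only} through the trade-off function $\F(P,Q)$, and then to define $l_D$ by reading off that common value. So the whole argument reduces to a well-definedness check.

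First I would record that every $f\in\T$ is realized as $\F(P,Q)$ for some pair $(P,Q)$: this is precisely the ``if'' direction of Proposition~\ref{prop:trade-off}, which exhibits explicit $P,Q$ on $[0,1]$ with $\F(P,Q)=f$. This guarantees that the functional we are about to construct is defined on all of $\T$. The substance is then the following: if $\F(P,Q)=\F(P',Q')$, with $P,Q$ on a space $Y$ and $P',Q'$ on a space $Z$, then $D(P\|Q)=D(P'\|Q')$. Since $\F(P,Q)\le\F(P',Q')$, the equivalence $(a)\Leftrightarrow(b)$ in Theorem~\ref{thm:blackwell} yields a randomized map $\Proc\colon Y\to Z$ with $\Proc(P)=P'$ and $\Proc(Q)=Q'$, and the data processing inequality gives
\[
D(P'\|Q')=D\big(\Proc(P)\big\|\Proc(Q)\big)\ge D(P\|Q).
\]
The reverse inequality $\F(P',Q')\le\F(P,Q)$ supplies a randomized map in the opposite direction, and the same argument gives $D(P\|Q)\ge D(P'\|Q')$; combining, $D(P\|Q)=D(P'\|Q')$.

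It then only remains to assemble these observations: define $l_D(f):=D(P\|Q)$ for an arbitrary pair $(P,Q)$ with $\F(P,Q)=f$ (one exists by Proposition~\ref{prop:trade-off}, and the value is independent of the choice by the previous step). Then $l_D\colon\T\to\R$ is well defined and, by construction, $D(P\|Q)=l_D\big(\F(P,Q)\big)$ for all $P,Q$, as claimed. I do not expect a genuine obstacle here; the crux is the two-directional Blackwell argument for well-definedness, and the only points that need a little care are that Theorem~\ref{thm:blackwell} is being applied across possibly different sample spaces (fine, since it is stated for distributions on arbitrary spaces) and that realizability of \emph{every} trade-off function is genuinely used, so that $l_D$ has domain all of $\T$ rather than only the trade-off functions arising from some given family of mechanisms.
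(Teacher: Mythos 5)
Your proof is correct and follows essentially the same route as the paper: both arguments invoke Blackwell's theorem to produce post-processings in each direction and then apply the data processing inequality twice to conclude that $D(P\|Q)$ depends only on $\F(P,Q)$, after which $l_D$ is defined by this common value (the paper additionally isolates the one-directional monotonicity statement as a lemma for later reuse, but the well-definedness argument is identical). Your remark that realizability of every $f\in\T$ via Proposition~\ref{prop:trade-off} is needed for $l_D$ to have domain all of $\T$ is a correct and worthwhile point that the paper leaves implicit.
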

\begin{proof}
	It's almost immediate from the following
	\begin{lemma} \label{lem:onlyonce}
	If $\F(P',Q')\geqslant\F(P,Q)$, then $D(P'\|Q') \leqslant D(P\|Q)$. In particular, $\F(P,Q)=\F(P',Q')$ implies $D(P\|Q) = D(P'\|Q')$.
	\end{lemma}
	To see why the lemma holds, notice by Blackwell's theorem, $\F(P',Q')\geqslant\F(P,Q) $ implies that there is a Proc such that $P'=\mathrm{Proc}(P),Q'=\mathrm{Proc}(Q)$, and by data processing inequality, $D(P'\|Q') \leqslant D(P\|Q)$.

	The lemma implies the existence of $l_D$ because we can define $l_D(f) = D(P\|Q)$ through any pair $P,Q$ such that $T(P,Q)=f$. This definition is independent of the choice of $P$ and $Q$.
\end{proof}
An immediate corollary is
\begin{corollary}\label{cor:wellbeing}
	If two trade-off functions $f,g$ satisfy $f\geqslant g$, then $l_D(f)\le l_D(g)$.
\end{corollary}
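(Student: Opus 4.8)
\noindent The plan is to read the claim off the construction of $l_D$ in the proof of \Cref{thm:functional} together with \Cref{lem:onlyonce}; indeed the corollary is essentially \Cref{lem:onlyonce} with the two pairs of distributions interchanged. Recall that $l_D$ was \emph{defined} by setting $l_D(h) = D(P\|Q)$ for any probability distributions $P,Q$ realizing $h = \F(P,Q)$ --- such a pair exists for every trade-off function $h$ by definition --- and that \Cref{lem:onlyonce} is precisely the statement that this value does not depend on the chosen representing pair.

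First I would fix a pair $P,Q$ with $\F(P,Q) = f$ and a pair $P',Q'$ with $\F(P',Q') = g$, so that by definition $l_D(f) = D(P\|Q)$ and $l_D(g) = D(P'\|Q')$. The hypothesis $f \geqslant g$ then reads $\F(P',Q') \leqslant \F(P,Q)$, which is exactly condition (a) of Blackwell's theorem (\Cref{thm:blackwell}) with $(P',Q')$ in the ``source'' role and $(P,Q)$ in the ``target'' role. Hence there is a randomized map $\Proc$ with $\Proc(P') = P$ and $\Proc(Q') = Q$, and the data processing inequality for $D$ gives $D(P\|Q) = D(\Proc(P') \| \Proc(Q')) \leqslant D(P'\|Q')$. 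Rewriting both sides through $l_D$ yields $l_D(f) \leqslant l_D(g)$, as claimed.

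There is essentially no obstacle, since all the machinery is already in place; the one point needing care is the direction of the inequalities, i.e.\ remembering that a pointwise-larger trade-off function encodes a \emph{harder} testing problem and hence, for any divergence obeying the data processing inequality, a \emph{smaller} value of $D$. It is also worth noting that the well-definedness of $l_D$ --- which is what makes the choice of representing pairs $(P,Q)$ and $(P',Q')$ immaterial --- is itself the content of \Cref{lem:onlyonce}, so nothing is needed beyond \Cref{thm:functional}.
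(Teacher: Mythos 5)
Your proof is correct and follows exactly the paper's route: the corollary is an immediate restatement of \Cref{lem:onlyonce}, which the paper itself proves via Blackwell's theorem plus the data processing inequality, just as you do. The only caution worth recording is the one you already flagged — the direction of the inequality (larger trade-off function means harder testing, hence smaller divergence) — and you have it right.
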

\paragraph{Example: $F$-divergence} 
Let $P,Q$ be a pair of distributions with density $p$ and $q$ with respect to some common dominating measure. For a convex function $F:(0,+\infty)\to\R$ such that $F(1) = 0$, the $F$-divergence $D_F(P\|Q)$ is defined as (see \cite{liese2006divergences})
\begin{align*}
	D_F(P\|Q) &= \int_{\{pq>0\}} F\Big(\frac{p}{q}\Big) \diff Q + F(0)Q[p=0]+\tau_F P[q=0]
\end{align*}
where $F(0) = \lim_{t\to 0^+} {F(t)}$ and $\tau_F:=\lim_{t\to+\infty} \frac{F(t)}{t}$.
We further set the rules $F(0) \cdot 0 = \tau_F\cdot 0=0$ even if $F(0)=+\infty$ or $\tau_F=+\infty$.
\begin{proposition} \label{prop:fdiv}
Let $z_f = \inf\{x\in[0,1]:f(x)=0\}$ be the first zero of $f$. The functional $l_F:\T\to\R$ that computes $F$-divergence has expression
$$l_F(f) = \int_0^{z_f} F\big({\big|f'(x)\big|}^{-1}\big)\cdot \big|f'(x)\big| \diff x + F(0)\cdot(1-f(0))+\tau_F\cdot (1-z_f).$$
In particular, when $f\in\T^S$ and $f(0)=1$, we have
\begin{equation}\label{eq:fdiv}
	l_F(f) = \int_0^1 F\big({\big|f'(x)\big|}^{-1}\big)\cdot \big|f'(x)\big| \diff x.
\end{equation}
\end{proposition}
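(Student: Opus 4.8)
The plan is to combine Proposition~\ref{thm:functional} with an explicit computation on one conveniently chosen pair of distributions. By Proposition~\ref{thm:functional} (and Lemma~\ref{lem:onlyonce}), the functional value $l_F(f)$ equals $D_F(P\|Q)$ for \emph{any} pair with $\F(P,Q)=f$, so it suffices to evaluate $D_F$ on the canonical representative built in the proof of Proposition~\ref{prop:trade-off}: take $\Omega=[0,1]$, let $P$ be the uniform (Lebesgue) distribution, and let $Q$ have density $-f'(1-x)$ on $[0,1)$ together with an atom of mass $1-f(0)$ at $x=1$. That proof already verifies $\F(P,Q)=f$, so the only work left is to unwind the definition of $D_F$ for this pair.

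To do this I would fix the common dominating measure $\mu=\lambda_{[0,1]}+\delta_1$, so that the $\mu$-densities are $p(x)=1$ on $[0,1)$ with $p(1)=0$, and $q(x)=-f'(1-x)$ on $[0,1)$ with $q(1)=1-f(0)$. Then I identify the three regions appearing in the definition of $D_F$: on $\{pq>0\}$ one has $p/q=1/|f'(1-x)|$ and $\diff Q=|f'(1-x)|\,\diff x$, so the substitution $y=1-x$ turns $\int_{\{pq>0\}}F(p/q)\,\diff Q$ into $\int_0^{z_f}F\big(|f'(y)|^{-1}\big)\,|f'(y)|\,\diff y$; the set $\{p=0\}$ is exactly the atom $\{1\}$, which carries $Q$-mass $1-f(0)$, contributing $F(0)(1-f(0))$; and $\{q=0\}$ corresponds (under $y=1-x$) to $\{y:f'(y)=0\}$, contributing $\tau_F$ times its Lebesgue measure. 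Adding these three gives precisely the claimed formula for $l_F(f)$. The main obstacle, and the place where a little care is needed, is this last measure-theoretic bookkeeping: I must show $\{y\in[0,1]:f'(y)=0\}$ has Lebesgue measure exactly $1-z_f$. This follows from a short convexity argument — if $f$ were constant on some subinterval of $[0,z_f)$ it would (being convex and non-increasing with $f(1)=0$) be forced to stay flat all the way to $1$, contradicting $f(1)=0$; hence $f$ is strictly decreasing on $[0,z_f]$, so $f'<0$ a.e.\ there and $f'=0$ a.e.\ on $[z_f,1]$. The same argument also confirms that the atom of $Q$ is genuinely where $p$ vanishes, which is what produces the $F(0)(1-f(0))$ term rather than having it absorbed into the integral.

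Finally, for the ``in particular'' statement I would observe that for any trade-off function $f\ge 0$ with $f(1)=0$, the set $\{t:f(t)=0\}$ is nonempty and $f^{-1}(0)=\inf\{t\in[0,1]:f(t)\le 0\}=\inf\{t:f(t)=0\}=z_f$. When $f\in\T^S$, i.e.\ $f=f^{-1}$, this yields $z_f=f^{-1}(0)=f(0)$, so the hypothesis $f(0)=1$ forces $z_f=1$. Then $1-z_f=0$ and $1-f(0)=0$, and using the stated conventions $F(0)\cdot0=\tau_F\cdot0=0$ the two boundary terms disappear, leaving $l_F(f)=\int_0^1 F\big(|f'(x)|^{-1}\big)\,|f'(x)|\,\diff x$, which is \eqref{eq:fdiv}.
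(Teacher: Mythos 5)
Your proposal is correct and follows essentially the same route as the paper: invoke Proposition~\ref{thm:functional} to reduce to the canonical pair $(P,Q)$ from the proof of Proposition~\ref{prop:trade-off}, identify the three regions in the definition of $D_F$, and substitute $y=1-x$. The only difference is that you supply a short convexity argument for why $f'$ cannot vanish on a positive-measure subset of $[0,z_f)$, a fact the paper simply asserts; that added care is welcome but does not change the proof.
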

\begin{proof}[Proof of \Cref{prop:fdiv}]
	For a given trade-off function $f$, in order to determine $l_F(f)$, it suffices to find $P,Q$ such that $f=T(P,Q)$ and then use the property $l_F(f) = D_F(P\|Q)$. Such a pair is constructed in the proof of \Cref{prop:trade-off}: $P$ is the uniform distribution on $[0,1]$ and $Q$ has density $|f'(1-x)|$ on $[0,1)$ and an atom at $1$ with $Q[\{1\}] = 1-f(0)$. When we set the dominating measure $\mu$ to be Lebesgue in $[0,1)$ and have an atom at 1 with measure 1, the densities $p$ and $q$ have expressions
	\[p(x) = 
	\left\{
	\begin{array}{ll}
		1, 		& x\in[0,1), \\
		0, & x=1.
	\end{array}
	\right.
	\quad\text{ and }\quad q(x) = 
	\left\{
	\begin{array}{ll}
		|f'(1-x)|, 		& x\in[0,1), \\
		1-f(0), & x=1.
	\end{array}
	\right.\]
	Readers should keep in mind that the value at 1 matters because the base measure $\mu$ has an atom there. For a trade-off function $f$, its derivative $f'(x)$ never vanishes before $f$ hits zero, i.e. $f'(x)>0$ for $x<z_f$ and $f'(x)=0$ for $x\ge z_f$. Equivalently, $\{q>0\} = (1-z_f,1]$ and $\{q=0\} = [0,1-z_f]$.
	So
	\begin{align*}
		D_F(P\|Q)
		&= \int_{\{pq>0\}} F\Big(\frac{p}{q}\Big) \diff Q + F(0)Q[p=0]+\tau_F P[q=0]\\
		&= \int_{1-z_f}^1 F(|f'(1-x)|^{-1}) \cdot |f'(1-x)| \diff x+ F(0)\cdot(1-f(0)) + \tau_F\cdot (1-z_f)\\
		&=\int^{z_f}_0 F(|f'(x)|^{-1}) \cdot |f'(x)| \diff x+ F(0)\cdot(1-f(0)) + \tau_F\cdot (1-z_f).
	\end{align*}
	Starting from the second line, the integral is Lebesgue integral. Now the proof is complete.
\end{proof}
Because of the generality of $F$-divergence, \Cref{eq:fdiv} has broad applications. Many important divergences can be computed via a simple formula. Below are some of the examples.
\begin{itemize}
	\item \textbf{Total variation distance} corresponds to $F(t) = \frac{1}{2}|t-1|$. Easy calculation yields
	\[l_{\mathrm{TV}}(f) = \frac{1}{2}\int_0^1 \big|1+f'(x)\big|\diff x.\]
	\item \textbf{KL divergence} corresponds to $F(t) = t\log t$. We have
	\[l_{\mathrm{KL}}(f) =-\int_0^1 \log\big|f'(x)\big|\diff x.
	\]
	This functional plays an important role in our central limit theorem. We call it $\mathrm{kl}(f)$ there.
	\item \textbf{{Power} divergence} of order $\alpha$ corresponds to $F_\alpha(t) =\frac{t^\alpha-\alpha(t-1)-1}{\alpha(\alpha-1)}$. The corresponding functional is
	\begin{align*}
		l_{F_\alpha}(f) = 
		\left\{
		\begin{array}{ll}
			\frac{1}{\alpha(\alpha-1)}\Big(\int_0^1 \big|f'(x)\big|^{1-\alpha}\diff x - 1\Big), 		& z_f = 1, \\
			+\infty, & z_f<1.
		\end{array}
		\right.
	\end{align*}
	\item \textbf{{\Renyi} divergence} of order $\alpha$ is defined as
	\[D_\alpha(P\|Q) = \tfrac{1}{\alpha-1}\log \big(\E_P(\tfrac{p}{q})^{\alpha-1}\big)=\tfrac{1}{\alpha-1}\log\int p^\alpha q^{1-\alpha}.\]
	It is related to power divergence of order $\alpha$ by
	\begin{equation}\label{eq:renyipower}
		D_\alpha(P\|Q) = \frac{1}{\alpha-1}\cdot \log\big(\alpha(\alpha-1)D_{F_\alpha}(P\|Q)+1\big).
	\end{equation}
	So the corresponding functional, which we denote by $l_\alpha^{\text{R{\'e}nyi}}$, has expression
	\begin{equation}\label{eq:renyi_functional}
		l_\alpha^{\text{R{\'e}nyi}}(f)=
		\left\{
		\begin{array}{ll}
			\frac{1}{\alpha-1}\log \int_0^1 \big|f'(x)\big|^{1-\alpha}\diff x, 		& z_f = 1, \\
			+\infty, & z_f<1.
		\end{array}
		\right.
	\end{equation}
\end{itemize}
\begin{proof}[Proof of \Cref{eq:renyipower}]
	\begin{align*}
		D_{F_\alpha}(P\|Q) &= \int q\cdot F_\alpha\Big(\frac{p}{q}\Big)\\
		&= \int q\cdot \frac{(\frac{p}{q})^\alpha - \alpha(\frac{p}{q}-1)-1}{\alpha(\alpha-1)}\\
		&=\frac{1}{\alpha(\alpha-1)}\cdot \int p^\alpha q^{1-\alpha} + 0 -\frac{1}{\alpha(\alpha-1)}\\
		&=\frac{1}{\alpha(\alpha-1)}\Big(e^{(\alpha-1)D_\alpha(P\|Q)}-1\Big).
	\end{align*}
	Solving for $D_\alpha(P\|Q)$ yields \eqref{eq:renyipower}.
\end{proof}
Introduced in \cite{renyi}, a mechanism $M$ is said to be $(\alpha, \epsilon)$-{\Renyi} differentially private (RDP) if
for all neighboring pairs $S, S'$ it holds that
\begin{equation}\label{eq:RDP}
D_\alpha(M(S) \| M(S')) \le \epsilon,
\end{equation}
A few other DP definitions, including zero concentrated differential privacy (zCDP) \cite{concentrated2} and truncated concentrated differential privacy (tCDP) \cite{tcdp}, are defined through imposing bounds in the form of \eqref{eq:RDP} with certain collections of $\alpha$. The following proposition provides the general conversion from $f$-DP to RDP via \eqref{eq:renyi_functional}.
\begin{proposition} \label{lem:}
	If a mechanism is $f$-DP, then it is $\big(\alpha,l_\alpha^{\text{R{\'e}nyi}}(f)\big)$-RDP for any $\alpha>1$.
\end{proposition}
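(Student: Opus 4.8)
The plan is to deduce the statement from the general divergence-to-trade-off correspondence of \Cref{thm:functional} together with the explicit {\Renyi} functional \eqref{eq:renyi_functional} already computed in this appendix; no new calculation is needed.

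First I would verify that the {\Renyi} divergence $D_\alpha$ satisfies the data processing inequality for every $\alpha>1$. By \eqref{eq:renyipower}, $D_\alpha(P\|Q)$ is the image of the power divergence $D_{F_\alpha}(P\|Q)$ under the map $t\mapsto \tfrac{1}{\alpha-1}\log\big(\alpha(\alpha-1)t+1\big)$. The power divergence is an $F$-divergence, with $F_\alpha$ convex and $F_\alpha(1)=0$, so it satisfies the classical data processing inequality for $F$-divergences \cite{liese2006divergences}; and since $\alpha>1$ the displayed map is strictly increasing on the relevant range, so it preserves the inequality, yielding $D_\alpha\big(\Proc(P)\|\Proc(Q)\big)\geqslant D_\alpha(P\|Q)$ for every post-processing channel $\Proc$.

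With data processing in hand, \Cref{thm:functional} furnishes a functional $l_{D_\alpha}\colon\T\to\R$ with $D_\alpha(P\|Q)=l_{D_\alpha}\big(\F(P,Q)\big)$ for all $P,Q$ on a common space, and running the construction in the proof of \Cref{prop:fdiv} through \eqref{eq:renyipower} identifies $l_{D_\alpha}$ with the explicit functional $l_\alpha^{\text{R{\'e}nyi}}$ of \eqref{eq:renyi_functional} (when $z_f<1$ the formula returns $+\infty$, which only makes the conclusion vacuously true). Then, for a mechanism $M$ that is $f$-DP and any neighboring datasets $S,S'$, the defining inequality $\F\big(M(S),M(S')\big)\geqslant f$, combined with \Cref{cor:wellbeing} (the monotonicity $g\geqslant h\Rightarrow l_D(g)\leqslant l_D(h)$, here with $D=D_\alpha$), gives $D_\alpha\big(M(S)\|M(S')\big)=l_\alpha^{\text{R{\'e}nyi}}\big(\F(M(S),M(S'))\big)\leqslant l_\alpha^{\text{R{\'e}nyi}}(f)$, which is precisely the $\big(\alpha,l_\alpha^{\text{R{\'e}nyi}}(f)\big)$-RDP guarantee.

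The main (and essentially only) obstacle is the first step: one has to check that the reparameterization \eqref{eq:renyipower} points in the right direction, which genuinely uses $\alpha>1$ — for $0<\alpha<1$ the factor $\alpha(\alpha-1)$ is negative and the monotonicity reverses — and one has to be careful to invoke \Cref{cor:wellbeing} for the bona fide divergence $D_\alpha$ rather than attempting to re-derive data processing from it, which would be circular. Everything after Step 1 is routine bookkeeping with the tools already developed.
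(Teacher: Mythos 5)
Your proposal is correct and is exactly the argument the paper intends: the proposition is an immediate consequence of \Cref{thm:functional} (existence of $l_{D_\alpha}$ given the data processing inequality), the identification of $l_{D_\alpha}$ with $l_\alpha^{\text{R{\'e}nyi}}$ via \Cref{prop:fdiv} and \eqref{eq:renyipower}, and the monotonicity in \Cref{cor:wellbeing} applied to $\F\big(M(S),M(S')\big)\geqslant f$. Your extra care in checking that the reparameterization \eqref{eq:renyipower} is increasing for $\alpha>1$ is a worthwhile detail the paper leaves implicit, but it does not change the route.
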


Specializing to the most important subclass, we have a simple expression.
\begin{corollary}
	If a mechanism is $\mu$-GDP, then it is $(\alpha,\frac{1}{2}\mu^2\alpha)$-RDP for any $\alpha>1$.
\end{corollary}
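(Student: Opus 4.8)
The plan is to apply the general conversion established just above---if a mechanism is $f$-DP then it is $\big(\alpha, l_\alpha^{\text{R{\'e}nyi}}(f)\big)$-RDP for every $\alpha > 1$---to the trade-off function $f = G_\mu$, and then to show that the functional $l_\alpha^{\text{R{\'e}nyi}}(G_\mu)$ evaluates to exactly $\tfrac12\mu^2\alpha$. The first thing to check is that the finite branch of the formula \eqref{eq:renyi_functional} applies, i.e.\ that $z_{G_\mu} = 1$; this is immediate since $G_\mu(x) = \Phi\big(\Phi^{-1}(1-x) - \mu\big) > 0$ for every $x < 1$. So the whole problem reduces to computing $\int_0^1 |G_\mu'(x)|^{1-\alpha}\,\diff x$.

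For this, I would first differentiate \eqref{eq:Gmu}. Setting $u = \Phi^{-1}(1-x)$ and using $\varphi(u)\,\tfrac{\diff u}{\diff x} = -1$, the chain rule gives $G_\mu'(x) = -\varphi(u-\mu)/\varphi(u)$, hence $|G_\mu'(x)| = \e^{\mu u - \mu^2/2}$. Then I would change variables back via $x = \Phi(-u)$, $\diff x = -\varphi(u)\,\diff u$, which (keeping track of the reversed orientation: $x=0 \leftrightarrow u=+\infty$ and $x=1 \leftrightarrow u=-\infty$) turns the integral into $\int_{-\infty}^{\infty} \e^{(1-\alpha)(\mu u - \mu^2/2)}\varphi(u)\,\diff u$. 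The Gaussian moment generating identity $\int \e^{tu}\varphi(u)\,\diff u = \e^{t^2/2}$ with $t = (1-\alpha)\mu$ evaluates this to $\exp\!\big(\tfrac{\mu^2}{2}[(1-\alpha)^2 - (1-\alpha)]\big) = \exp\!\big(\tfrac{\mu^2}{2}\alpha(\alpha-1)\big)$.

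Substituting into \eqref{eq:renyi_functional} gives $l_\alpha^{\text{R{\'e}nyi}}(G_\mu) = \tfrac{1}{\alpha-1}\log \exp\!\big(\tfrac{\mu^2}{2}\alpha(\alpha-1)\big) = \tfrac{\mu^2\alpha}{2}$, and the general conversion then yields the stated $\big(\alpha, \tfrac12\mu^2\alpha\big)$-RDP guarantee. Every step here is a routine one-line computation, so there is no genuine obstacle; the only places demanding a little attention are verifying $z_{G_\mu}=1$ so that one is in the finite case of \eqref{eq:renyi_functional}, and correctly handling the orientation of the substitution $u = \Phi^{-1}(1-x)$ (equivalently, one could observe that $|G_\mu'|$ is symmetric about the fixed point of $G_\mu$ and integrate directly, or recognize the integral as $\E_P(p/q)^{\alpha-1}$ for the Gaussian pair $P=\N(0,1),Q=\N(\mu,1)$ and compute it as a Gaussian exponential moment).
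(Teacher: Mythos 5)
Your proposal is correct and follows essentially the same route as the paper: both reduce the claim to evaluating $l_\alpha^{\text{R{\'e}nyi}}(G_\mu)=D_\alpha\big(\N(0,1)\|\N(\mu,1)\big)$ and obtaining $\tfrac12\mu^2\alpha$. The only difference is that the paper cites Proposition 7 of \cite{renyi} for this Gaussian R\'enyi divergence value, whereas you carry out the (correct) change-of-variables and Gaussian moment computation explicitly.
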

\begin{proof}
By the property of $l_\alpha^{\text{R{\'e}nyi}}$, we know $l_\alpha^{\text{R{\'e}nyi}}(G_\mu) = D_\alpha\big(\N(0,1)\|\N(\mu,1)\big)$.
	Easy calculation shows $D_\alpha\big(\N(0,1)\|\N(\mu,1)\big) = \frac{1}{2}\mu^2\alpha$. Readers can refer to Proposition 7 in \cite{renyi} for a detailed derivation.
\end{proof}

The functional $l_D$ allows a consistent, easy conversion from an $f$-DP guarantee to all divergence based DP guarantees. The above conversion to RDP is, among all, the most useful example. On the other hand, conversion from divergence, either to trade-off function or to other divergences, often requires case by case analysis, sometimes significantly non-trivial. What's worse is that it is often hard to tell whether a given conversion between divergences is improvable or already lossless. For conversion between $F$-divergences, a systematic approach called joint range is developed in \cite{harremoes2011pairs}, but it is still significantly more complicated than \Cref{eq:fdiv}. On the other hand, \Cref{thm:functional} means conversion from trade-off to divergence is lossless and unimprovable.

This fine-grainedness of trade-off function (see also \Cref{sec:conn-with-blackw}) is somewhat expected: it summarizes the indistinguishability of a pair of distribution by a \textit{function}, which is an infinite dimensional object. In contrast, divergences usually just summarize by a number, which is obviously less informative by a function.

Connecting back to informativeness, we argue that, even when we consider $\{D_\alpha(P\|Q):\alpha>1\}$ as an infinite dimensional object, it still does not induce Blackwell's order. In the language of \Cref{sec:conn-with-blackw}, $\Ineq({\preceq_{\textup{R\'enyi}}})\supsetneq \Ineq({\preceq_{\mathrm{Blackwell}}})$. In other words, there are two pairs of distributions, one easier to distinguish than the other in the R\'enyi sense, but not in the Blackwell sense.

Let $P_\ep$ and $Q_{\ep}$ denote Bernoulli distributions with success probabilities $\frac{\e^\ep}{1+\e^\ep}$ and $\frac{1}{1+\e^\ep}$, respectively.
\begin{proposition}\label{prop:renyi_fail}
There exists $\ep>0$ such that the following two statements are both true:	
\begin{enumerate}
\item[(a)] 
For all $\alpha>1$, $D_\alpha(P_\ep \| Q_\ep) \leqslant D_\alpha\big(\N(0,1) \| \N(\ep,1)\big)$;
\item[(b)] $\TV(P_\ep, Q_\ep)> \TV\big(\N(0,1), \N(\ep,1)\big)$.
\end{enumerate}
\end{proposition}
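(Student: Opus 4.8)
The plan is to compute both R\'enyi divergences in closed form, reduce statement (a) to an elementary one-variable monotonicity fact, and then secure (b) by a first-order Taylor comparison near $\ep=0$.

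First I would write the two divergences explicitly. Since $Q_\ep$ is the Bernoulli distribution obtained from $P_\ep$ by interchanging its two atoms, a direct evaluation of $\sum_x P_\ep(x)^\alpha Q_\ep(x)^{1-\alpha}$ gives
\[
D_\alpha(P_\ep\|Q_\ep)=\frac{1}{\alpha-1}\log\frac{\e^{\alpha\ep}+\e^{(1-\alpha)\ep}}{1+\e^{\ep}},
\]
whereas a standard Gaussian computation (completing the square in $\int p^{\alpha}q^{1-\alpha}$) gives $D_\alpha\big(\N(0,1)\|\N(\ep,1)\big)=\tfrac12\alpha\ep^{2}$. Hence (a) is equivalent, after multiplying through by $\alpha-1>0$ and exponentiating, to
\[
\e^{\alpha\ep}+\e^{(1-\alpha)\ep}\ \le\ (1+\e^{\ep})\,\e^{\alpha(\alpha-1)\ep^{2}/2}\qquad\text{for all }\alpha>1 .
\]

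The key step is a hyperbolic substitution. Using $\e^{\alpha\ep}+\e^{(1-\alpha)\ep}=2\e^{\ep/2}\cosh\!\big((\alpha-\tfrac12)\ep\big)$, $1+\e^{\ep}=2\e^{\ep/2}\cosh(\ep/2)$, and the identity $\alpha(\alpha-1)\ep^{2}=\big((\alpha-\tfrac12)\ep\big)^{2}-(\ep/2)^{2}$, the inequality collapses to
\[
\cosh(t)\,\e^{-t^{2}/2}\ \le\ \cosh(u)\,\e^{-u^{2}/2},\qquad t:=(\alpha-\tfrac12)\ep,\quad u:=\tfrac{\ep}{2},
\]
with $t>u>0$ because $\alpha>1$. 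Now $\psi(t):=\cosh(t)\,\e^{-t^{2}/2}$ satisfies $\psi'(t)=\cosh(t)\,(\tanh t-t)\,\e^{-t^{2}/2}<0$ for every $t>0$ (since $\tanh t<t$ there), so $\psi$ is strictly decreasing on $(0,\infty)$ and the inequality follows. This in fact proves (a) for \emph{every} $\ep>0$ and every $\alpha>1$; the limit $\alpha\to1^{+}$ merely recovers $D_{\mathrm{KL}}(P_\ep\|Q_\ep)=\ep\tanh(\ep/2)\le\ep^{2}/2$.

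It then remains to choose $\ep$ so that (b) holds. Here $\TV(P_\ep,Q_\ep)=\frac{\e^{\ep}-1}{\e^{\ep}+1}=\tanh(\ep/2)$, while $\TV\big(\N(0,1),\N(\ep,1)\big)=2\Phi(\ep/2)-1$ (this value also drops out of minimizing $\alpha+G_\ep(\alpha)$). Both quantities vanish at $\ep=0$; differentiating in $\ep$ at $0$ yields $\tfrac12$ on the Bernoulli side and $1/\sqrt{2\pi}<\tfrac12$ on the Gaussian side, so $\tanh(\ep/2)>2\Phi(\ep/2)-1$ for all sufficiently small $\ep>0$. Any such $\ep$ satisfies both (a) and (b), which proves the proposition. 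The only genuinely non-routine point is spotting the substitution $t=(\alpha-\tfrac12)\ep$, $u=\ep/2$ that converts the $\alpha$-uniform inequality of (a) into the single-variable monotonicity of $\psi$; everything else is bookkeeping and an elementary Taylor comparison.
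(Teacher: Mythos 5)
Your proposal is correct and follows essentially the same route as the paper: part (a) is reduced, via the identity $\e^{\alpha\ep}+\e^{(1-\alpha)\ep}=2\e^{\ep/2}\cosh\big((\alpha-\tfrac12)\ep\big)$, to the monotone decrease of $\cosh(t)\,\e^{-t^2/2}$ on $(0,\infty)$ (proved by the same $\tanh t\le t$ derivative computation), and part (b) is the same first-order comparison of $\tanh(\ep/2)$ against $2\Phi(\ep/2)-1$ using $1/\sqrt{2\pi}<1/2$. No gaps.
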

Surprisingly, although the whole collection of {\Renyi} divergences asserts that the pair $\big(\N(0,1) ,\N(\ep,1)\big)$ is easier to distinguish than $(P_\ep , Q_\ep)$, one can nevertheless achieve smaller summed type I and type II errors when trying to distinguish $P_\ep, Q_\ep$.
Fact (a) equivalently says that $\big(\N(0,1) ,\N(\ep,1)\big)\preceq_{\text{R\'enyi}} (P_\ep \| Q_\ep)$, while fact (b) excludes the possibility that $\big(\N(0,1) ,\N(\ep,1)\big)\preceq_{\mathrm{Blackwell}} (P_\ep , Q_\ep)$, since otherwise data processing inequality of the total variation distance would imply $\TV(P_\ep, Q_\ep)\leqslant \TV\big(\N(0,1), \N(\ep,1)\big)$.

We point out that (a) in \Cref{{prop:renyi_fail}} in fact holds for all $\ep\geqslant0$, which is proved in \cite{concentrated2}, partially based on numerical evidence. Our proof is entirely analytical.
\begin{proof}[Proof of \Cref{prop:renyi_fail}]
	\begin{align*}
		D_\alpha(P_\ep\|Q_\ep)
		&= \frac{1}{\alpha-1}\log (p^\alpha q^{1-\alpha}+q^\alpha p^{1-\alpha})\\
		&= \frac{1}{\alpha-1}\log\, \frac{\e^{\ep\alpha}+\e^{\ep(1-\alpha)}}{1+\e^\ep}.\\
		&= \frac{1}{\alpha-1}\log\, \frac{\e^{\ep(\alpha-\frac{1}{2})}+\e^{\ep(\frac{1}{2}-\alpha)}}{\e^{-\frac{\ep}{2}}+\e^{\frac{\ep}{2}}}\\
		&= \frac{1}{\alpha-1}\log\, \frac{\cosh \ep(\alpha-\frac{1}{2})}{\cosh \frac{\ep}{2}}
	\end{align*}
	Now we claim that $\cosh x \cdot \e^{-\frac{1}{2}x^2}$ is monotone decreasing for $x\geqslant0$. To see this, simply take the derivative
	$$\big(\cosh x \cdot \e^{-\frac{1}{2}x^2}\big)' = \sinh x\cdot \e^{-\frac{1}{2}x^2} + \cosh x \cdot (-x) \cdot \e^{-\frac{1}{2}x^2} = (\tanh x - x)\cdot\cosh x\cdot \e^{-\frac{1}{2}x^2}.$$
	It is easy to show $\tanh x\leqslant x$ for $x\geqslant0$. Hence the derivative is always non-positive, which justifies the claimed monotonicity.
	Since $\alpha>1,\ep\geqslant0$, we have $\ep(\alpha-\frac{1}{2})\geqslant\frac{\ep}{2}\geqslant 0$. By the monotonicity,
	\begin{align*}
		\cosh \ep(\alpha-\frac{1}{2}) \cdot \e^{-\frac{1}{2}\ep^2(\alpha-\frac{1}{2})^2} &\leqslant \cosh \frac{\ep}{2} \cdot \e^{-\frac{1}{2}\cdot (\frac{\ep}{2})^2}.
	\end{align*}
	That is,
	\[
		\frac{\cosh \ep(\alpha-\frac{1}{2})}{\cosh \frac{\ep}{2}} \leqslant \e^{\frac{1}{2}\ep^2\alpha(\alpha-1)}.
	\]
	So for any $\ep\geqslant0$,
	\begin{align*}
		D_\alpha(P_\ep\|Q_\ep) = \frac{1}{\alpha-1}\cdot \log\, \frac{\cosh \ep(\alpha-\frac{1}{2})}{\cosh \frac{\ep}{2}} \leqslant \frac{1}{2}\ep^2\alpha = D_\alpha\big(\N(0,1)\|\N(\ep,1)\big).
	\end{align*}
	For the second part, easy calculation and Taylor expansion yields
	\begin{align*}
		\mathrm{TV}(P_\ep,Q_\ep) &= \frac{\e^\ep-1}{\e^\ep+1} = \tanh \frac{\ep}{2} = \frac{\ep}{2} +o(\ep^2),\\
		\mathrm{TV}\big(\N(0,1),\N(\ep,1)\big) &= 1-2\Phi(-\frac{\ep}{2}) = \int_{-\frac{\ep}{2}}^{\frac{\ep}{2}}\varphi(x)\diff x \\
		&= \varphi(0)\cdot\ep+o(\ep^2) = \frac{1}{\sqrt{2\pi}}\cdot\ep+o(\ep^2).
	\end{align*}
	Since $\frac{1}{\sqrt{2\pi}}<\frac{1}{2}$, for small enough $\ep$, $\mathrm{TV}\big(\N(0,1),\N(\ep,1)\big)< \mathrm{TV}(P_\ep,Q_\ep)$.
\end{proof}

In summary, \Cref{thm:functional} and \Cref{prop:fdiv} provide general tools to losslessly convert a trade-off function to divergences and hence justifies the fine-grainedness of trade-off functions. This is complementary to the informativeness argument in \Cref{sec:conn-with-blackw}.


\section{A Self-contained Proof of the Composition Theorem}
\label{app:self}
In this section we prove the well-definedness of $\otimes$ and Composition \Cref{thm:n_steps}.


We begin with the setting of the key lemma, which compares indistinguishability of two pairs of \textit{randomized algorithms}. Let $K_1, K_1': Y \to Z_1$ and $K_2, K_2': Y \to Z_2$ be two pairs of randomized algorithms. Suppose the following is true for these four algorithms: for each fixed input $y\in Y$, testing problem $K_1(y) $ vs $ K_1'(y)$ is harder than $K_2(y) $ vs $ K_2'(y)$. In mathematical language, let $f_i^y = T\big(K_i(y), K_i'(y)\big)$ (See the left panel of \Cref{fig:comparison}). The above assumption amounts to saying $f_1^y\geqslant f_2^y$. So far we have fixed the input $y$. In the two pairs of testing problems, if the input of the null comes from $P$ and the input of the alternative comes from $P'$, then intuitively both testing problems become easier than when inputs are fixed, because now the inputs also provide information. Formally, the observation comes from input-output joint distribution $\big(P, K_i(P)\big)$ or $\big(P', K_i'(P')\big)$ (with a little abuse of notation). Let $f_i = T\big( (P, K_i(P)), (P', K_i'(P')) \big), i=1,2$ be the trade-off functions of the joint testing problems (See the right panel of \Cref{fig:comparison}). As discussed, we expect that $f_1\leqslant f_1^y, f_2\leqslant f_2^y$ for all $y$. But what about $f_1$ and $f_2$? Which joint testing problem is harder? The following lemma answers the question.

\begin{lemma} \label{lem:comparison}
	If $f_1^y\geqslant f_2^y$ for all $y\in Y$, then $f_1\geqslant f_2$.
\end{lemma}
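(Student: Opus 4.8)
The plan is to prove this by a direct argument resting only on the Neyman--Pearson lemma (\Cref{thm:NPlemma}), in keeping with the paper's stated goal of avoiding Blackwell's theorem. It suffices to fix a level $\alpha\in[0,1]$ and show $f_1(\alpha)\geqslant f_2(\alpha)$. So I would take an arbitrary rejection rule $\phi_1\colon Y\times Z_1\to[0,1]$ for the joint problem $(P,K_1(P))$ versus $(P',K_1'(P'))$ with type I error $\alpha_{\phi_1}\leqslant\alpha$, and construct from it a rejection rule $\phi_2$ on $Y\times Z_2$ whose errors are no worse; then $f_2(\alpha)\leqslant\beta_{\phi_2}\leqslant\beta_{\phi_1}$, and taking the infimum over all such $\phi_1$ gives $f_2(\alpha)\leqslant f_1(\alpha)$.

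The construction is pointwise in $y$. For each $y$, the slice $\phi_1(y,\cdot)\colon Z_1\to[0,1]$ is a test for $K_1(y)$ versus $K_1'(y)$; write $a(y)=\E_{K_1(y)}[\phi_1(y,\cdot)]$ and $b(y)=1-\E_{K_1'(y)}[\phi_1(y,\cdot)]$ for its type I and type II errors. By definition of the trade-off function, $b(y)\geqslant f_1^y(a(y))$, and the hypothesis $f_1^y\geqslant f_2^y$ then yields $b(y)\geqslant f_2^y(a(y))$. Now I would invoke \Cref{thm:NPlemma}: for each $y$ pick a most powerful test $\psi_y\colon Z_2\to[0,1]$ at level $a(y)$ for $K_2(y)$ versus $K_2'(y)$, so that $\E_{K_2(y)}[\psi_y]=a(y)$ and $1-\E_{K_2'(y)}[\psi_y]=f_2^y(a(y))\leqslant b(y)$. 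Setting $\phi_2(y,z_2):=\psi_y(z_2)$ and integrating over $y$,
\[
\alpha_{\phi_2}=\E_{y\sim P}\big[\E_{K_2(y)}[\psi_y]\big]=\E_{y\sim P}[a(y)]=\alpha_{\phi_1}\leqslant\alpha,
\]
\[
\beta_{\phi_2}=\E_{y\sim P'}\big[1-\E_{K_2'(y)}[\psi_y]\big]=\E_{y\sim P'}\big[f_2^y(a(y))\big]\leqslant\E_{y\sim P'}[b(y)]=\beta_{\phi_1},
\]
which is exactly what is needed.

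The one genuinely delicate point, and the main obstacle, is measurability: one must choose the family $\{\psi_y\}_{y\in Y}$ so that $(y,z_2)\mapsto\psi_y(z_2)$ is jointly measurable (and, along the way, check that $y\mapsto a(y)$ is measurable, which follows from joint measurability of $\phi_1$ and the kernel property of $K_1$). Here I would use that \Cref{thm:NPlemma} pins $\psi_y$ down as a likelihood-ratio test with threshold $h(y)$ and tie-breaking constant $c(y)$ determined by the level constraint $\E_{K_2(y)}[\psi_y]=a(y)$; since the power of such tests is monotone in the threshold, $h(\cdot)$ can be realized as an infimum over a measurable family and $c(\cdot)$ is then given by a measurable formula, yielding a measurable selection. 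Once this is in place, the computation above is routine bookkeeping and the proof is complete.
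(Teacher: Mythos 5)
Your proposal is correct and is essentially the paper's own argument: slice the given test for the first joint problem along $y$, replace each slice by the Neyman--Pearson optimal test for $K_2(y)$ vs $K_2'(y)$ at the same conditional level, and glue; the only cosmetic difference is that you start from an arbitrary level-$\alpha$ test and take an infimum, where the paper starts directly from the optimal one. Your explicit attention to the measurability of the selection $y\mapsto\psi_y$ is a point the paper passes over silently, but it does not change the route.
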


\usetikzlibrary{shapes,arrows}
\tikzstyle{line}=[draw] 
\usetikzlibrary{positioning,patterns,arrows,decorations.pathreplacing}
 \tikzset{main node/.style={},
            }
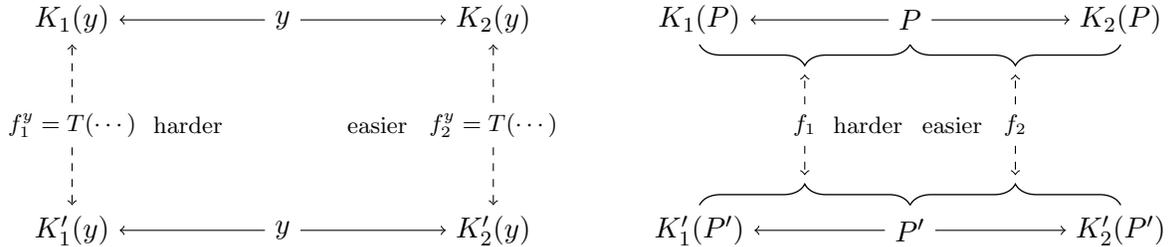
\begin{figure}[!htp]
	\begin{minipage}{.5\textwidth}
	  \centering
	\begin{tikzpicture}
\def\scale{0.7}
\def\h{4*\scale}
\def\x{4*\scale}
\def\y{0*\scale}
\def\algo{0.2}

    \node[main node] (sDown) at (\x,0) {$y$};
    \node[main node] (T1Down) at (0,\y) {$K_1'(y)$};
    \node[main node] (T2Down) at (2*\x,\y) {$K_2'(y)$};
    \node[main node] (sUp) at (\x,\y+\h+\y) {$y$};
    \node[main node] (T1Up) at (0,\y+\h) {$K_1(y)$};
    \node[main node] (T2Up) at (2*\x,\y+\h) {$K_2(y)$};
    \node[main node] (trade1) at (0,\y+\h*0.5) {\footnotesize{$f_1^y = T(\cdots)$}};
    \node[main node] (trade2) at (\x*2,\y+\h*0.5) {\footnotesize{$f_2^y = T(\cdots)$}};
    \node[main node] (hard) at (\x*0.55,\h*0.5) {\footnotesize harder};
    \node[main node] (easy) at (\x*1.45,\h*0.5) {\footnotesize easier};

\draw [->] (sDown) -- node[below] {} (T1Down); 
\draw [->] (sDown) -- node[below] {} (T2Down);
\draw [->] (sUp) -- node[above] {} (T1Up); 
\draw [->] (sUp) -- node[above] {} (T2Up);
\draw [->,dashed] (trade1) -- (T1Up);
\draw [->,dashed] (trade1) -- (T1Down);
\draw [->,dashed] (trade2) -- (T2Up);
\draw [->,dashed] (trade2) -- (T2Down); 
\def\s{0.55}
\end{tikzpicture}
	\end{minipage}
	\begin{minipage}{.5\textwidth}
	  \centering
		
\begin{tikzpicture}
\def\scale{0.7}
\def\h{4*\scale}
\def\x{4*\scale}
\def\y{0*\scale}
\def\algo{0.2}

    \node[main node] (sDown) at (\x,0) {$P'$};
    \node[main node] (T1Down) at (0,\y) {$K_1'(P')$};
    \node[main node] (T2Down) at (2*\x,\y) {$K_2'(P')$};
    \node[main node] (sUp) at (\x,\y+\h+\y) {$P$};
    \node[main node] (T1Up) at (0,\y+\h) {$K_1(P)$};
    \node[main node] (T2Up) at (2*\x,\y+\h) {$K_2(P)$};
    \node[main node] (trade1) at (\x*0.5,\y+\h*0.5) {\footnotesize{$f_1$}};
    \node[main node] (trade2) at (\x*1.5,\y+\h*0.5) {\footnotesize{$f_2$}};
    \node[main node] (hard) at (\x*0.8,\h*0.5) {\footnotesize harder};
    \node[main node] (easy) at (\x*1.2,\h*0.5) {\footnotesize easier};

\draw [->] (sDown) -- node[below] {} (T1Down); 
\draw [->] (sDown) -- node[below] {} (T2Down);
\draw [->] (sUp) -- node[above] {} (T1Up); 
\draw [->] (sUp) -- node[above] {} (T2Up);
\def\amp{8pt}
\def\shift{0.4cm}
\draw [thick,decorate,line width=0.5pt,decoration={brace,amplitude=\amp,mirror},xshift=0.0pt,yshift=-0cm] (T1Up.south) -- coordinate[pos=0.5, yshift=-\shift] (lu) (T1Up.south-|sUp) ;
\draw [thick,decorate,line width=0.5pt,decoration={brace,amplitude=\amp,mirror},xshift=0.0pt,yshift=-0cm] (T1Up.south-|sUp) -- coordinate[pos=0.5, yshift=-\shift] (ru) (T2Up.south) ;
\draw [thick,decorate,line width=0.5pt,decoration={brace,amplitude=\amp,mirror},xshift=0.0pt,yshift=-0cm] (T1Down.north-|sDown) -- coordinate[pos=0.5, yshift=\shift] (ld) (T1Down.north) ;
\draw [thick,decorate,line width=0.5pt,decoration={brace,amplitude=\amp,mirror},xshift=0.0pt,yshift=-0cm] (T2Down.north) -- coordinate[pos=0.5, yshift=\shift] (rd) (T1Down.north-|sDown) ;
\draw [->,dashed] (trade1) -- (lu);
\draw [->,dashed] (trade1) -- (ld);
\draw [->,dashed] (trade2) -- (ru);
\draw [->,dashed] (trade2) -- (rd); 
\def\s{0.55}
\end{tikzpicture}
	\end{minipage}
		\captionof{figure}{Assumption (left) and conclusion (right) of \Cref{lem:comparison}. Solid arrows indicate (random) mapping and dashed arrows indicate the trade-off function of the two ends. For example, $f_1$ in the right panel is the trade-off function of two joint distributions: $\big(P,K_1(P)\big)$ and $\big(P',K_1(P')\big)$.}
	  \label{fig:comparison}
\end{figure}

Let's first use the lemma to show the well-definedness of $\otimes$ and the composition theorem. Its own proof comes afterwards.
Recall that in \Cref{def:product}, $f\otimes g$ is defined as $T(P\times P',Q\times Q')$ if $f=T(P,Q), g = T(P',Q')$. To show this definition does not depend on the choice of $P,Q$ and $P',Q'$, it suffices to verify that when $f = T(P,Q) = T(\tilde{P},\tilde{Q})$, we have $T(P\times P', Q \times Q') = T(\tilde{P} \times P', \tilde{Q} \times Q')$. The following lemma is slightly stronger than what we need, but will be useful later.
\begin{lemma} \label{lem:welldefined}
If $T(P,Q) \geqslant T(\tilde{P},\tilde{Q})$, then
\[ T(P\times P', Q \times Q') \geqslant T(\tilde{P} \times P', \tilde{Q} \times Q').\]
As a consequence, if the assumption holds with an equality, then so does the conclusion.
\end{lemma}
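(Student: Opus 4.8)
# Proof Proposal for Lemma \ref{lem:welldefined}

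The plan is to deduce this lemma directly from \Cref{lem:comparison}, by choosing the randomized algorithms appropriately. The key observation is that the product distribution $P \times P'$ can be viewed as first sampling from $P'$ and then, \emph{independently of that draw}, sampling from $P$. This "independence" means the second coordinate carries no information about which hypothesis is in effect for the first coordinate, so the machinery of \Cref{lem:comparison}, which is built exactly to compare such joint testing problems, should apply verbatim.

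Concretely, I would take the input space of \Cref{lem:comparison} to be the sample space of $P'$ (and $Q'$), and set the output space $Z_1 = Z_2$ to be the sample space of $P$ (equivalently $\tilde P$, $Q$, $\tilde Q$, which live on a common space since $T(P,Q)$ and $T(\tilde P, \tilde Q)$ are both defined). Define $K_1, K_1' : Y \to Z_1$ by $K_1(y) := P$ and $K_1'(y) := \tilde P$ — that is, constant randomized maps that ignore their input and output a fresh draw from $P$, respectively $\tilde P$. Similarly set $K_2(y) := Q$ and $K_2'(y) := \tilde Q$. With $P$ playing the role of the "$P$" in \Cref{lem:comparison} and $Q$ the role of "$P'$" there (apologies for the notational collision — I would relabel in the actual write-up), the per-input trade-off functions are $f_1^y = T\big(K_1(y), K_1'(y)\big) = T(P, \tilde P)$ — wait, this is not what we want.

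Let me reorganize: the correct assignment is to make the \emph{first pair} of algorithms produce the distributions appearing on the left side of the desired inequality and the \emph{second pair} produce those on the right, but with the roles of null/alternative aligned so that the per-input trade-off is $T(P, Q)$ versus $T(\tilde P, \tilde Q)$. So set $K_1(y) \equiv P$, $K_1'(y) \equiv Q$ (independent of $y$), giving $f_1^y = T(P,Q)$; and $K_2(y) \equiv \tilde P$, $K_2'(y) \equiv \tilde Q$, giving $f_2^y = T(\tilde P, \tilde Q)$. The hypothesis $T(P,Q) \geqslant T(\tilde P, \tilde Q)$ is exactly $f_1^y \geqslant f_2^y$ for all $y$. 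Now feed the null-input $P'$ and alternative-input $Q'$ into \Cref{lem:comparison}. Since $K_1$ ignores its input, the joint distribution $\big(P', K_1(P')\big)$ is the product $P' \times P = P \times P'$ up to reordering coordinates, and likewise $\big(Q', K_1'(Q')\big) = Q \times Q'$, $\big(P', K_2(P')\big) = \tilde P \times P'$, $\big(Q', K_2'(Q')\big) = \tilde Q \times Q'$. (Reordering coordinates is a measurable bijection, hence a post-processing in both directions, so by \Cref{lem:post} it preserves trade-off functions.) Thus $f_1 = T(P \times P', Q \times Q')$ and $f_2 = T(\tilde P \times P', \tilde Q \times Q')$, and \Cref{lem:comparison} gives $f_1 \geqslant f_2$, which is the claim.

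The final sentence of the lemma — that equality in the hypothesis forces equality in the conclusion — follows immediately by applying the inequality twice, once with $T(P,Q) \geqslant T(\tilde P, \tilde Q)$ and once with the reverse. I expect the only real subtlety to be bookkeeping: making sure the "constant" randomized algorithms are genuinely valid Markov kernels (they are — a kernel that outputs a fixed distribution regardless of input is measurable), and handling the harmless coordinate-swap between $P \times P'$ and $P' \times P$ cleanly via post-processing invariance. No genuinely hard step is anticipated; the weight of the argument is entirely carried by \Cref{lem:comparison}.
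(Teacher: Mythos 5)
Your proof is correct and follows essentially the same route as the paper: instantiate \Cref{lem:comparison} with constant randomized algorithms that ignore their input and output $P, Q, \tilde P, \tilde Q$, so that the input--output joint distributions become the product distributions in question. Your extra care about the coordinate swap between $P' \times P$ and $P \times P'$ is a harmless refinement of a step the paper leaves implicit.
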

\usetikzlibrary{positioning,patterns,arrows,decorations.pathreplacing}
 \tikzset{main node/.style={},
            }
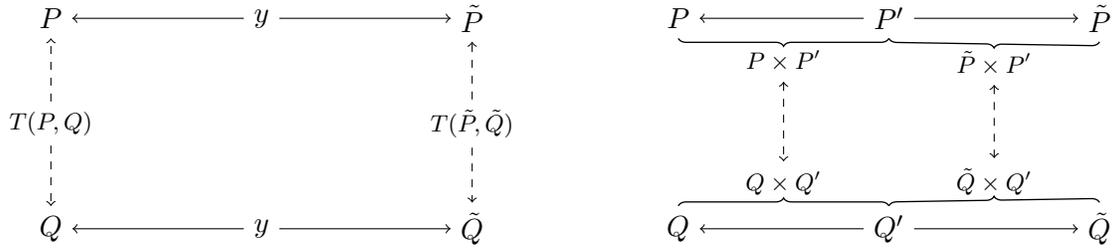
\begin{figure}[!htp]
\begin{minipage}{.5\textwidth}
  \centering
\begin{tikzpicture}
\def\scale{0.7}
\def\h{4*\scale}
\def\x{4*\scale}
\def\y{0*\scale}
\def\algo{0.2}

    \node[main node] (sDown) at (\x,0) {$y$};
    \node[main node] (T1Down) at (0,\y) {$Q$};
    \node[main node] (T2Down) at (2*\x,\y) {$\tilde{Q}$};
    \node[main node] (sUp) at (\x,\y+\h+\y) {$y$};
    \node[main node] (T1Up) at (0,\y+\h) {$P$};
    \node[main node] (T2Up) at (2*\x,\y+\h) {$\tilde{P}$};
    \node[main node] (trade1) at (0,\y+\h*0.5) {\footnotesize{$T(P,Q)$}};
    \node[main node] (trade2) at (\x*2,\y+\h*0.5) {\footnotesize{$T(\tilde{P},\tilde{Q})$}};

\draw [->] (sDown) -- (T1Down); 
\draw [->] (sDown) -- (T2Down);
\draw [->] (sUp) -- (T1Up); 
\draw [->] (sUp) -- (T2Up);
\draw [->,dashed] (trade1) -- (T1Up);
\draw [->,dashed] (trade1) -- (T1Down);
\draw [->,dashed] (trade2) -- (T2Up);
\draw [->,dashed] (trade2) -- (T2Down);
\end{tikzpicture}
\end{minipage}
\begin{minipage}{.5\textwidth}
  \centering
\begin{tikzpicture}
\def\scale{0.7}
\def\h{4*\scale}
\def\x{4*\scale}
\def\y{0*\scale}
\def\algo{0.2}

    \node[main node] (sDown) at (\x,0) {$Q'$};
    \node[main node] (T1Down) at (0,\y) {$Q$};
    \node[main node] (T2Down) at (2*\x,\y) {$\tilde{Q}$};
    \node[main node] (sUp) at (\x,\y+\h+\y) {$P'$};
    \node[main node] (T1Up) at (0,\y+\h) {$P$};
    \node[main node] (T2Up) at (2*\x,\y+\h) {$\tilde{P}$};
\def\amp{3pt}
\def\shift{0.3cm}
\def\size{\footnotesize}
\draw [thick,decorate,line width=0.5pt,decoration={brace,amplitude=\amp,mirror},xshift=0.0pt,yshift=-0cm] (T1Up.south) -- node[pos=0.5, yshift=-\shift] (lu) {\size$P\times P'$} (T1Up.south-|sUp) ;
\draw [thick,decorate,line width=0.5pt,decoration={brace,amplitude=\amp,mirror},xshift=0.0pt,yshift=-0cm] (T1Up.south-|sUp) -- node[pos=0.5, yshift=-\shift] (ru) {\size$ \tilde{P}\times P'$} (T2Up.south) ;
\draw [thick,decorate,line width=0.5pt,decoration={brace,amplitude=\amp,mirror},xshift=0.0pt,yshift=-0cm] (T1Down.north-|sDown) -- node[pos=0.5, yshift=\shift] (ld) {\size$Q\times Q'$} (T1Down.north) ;
\draw [thick,decorate,line width=0.5pt,decoration={brace,amplitude=\amp,mirror},xshift=0.0pt,yshift=-0cm] (T2Down.north) -- node[pos=0.5, yshift=\shift] (rd) {\size$\tilde{Q}\times Q'$} (T1Down.north-|sDown) ;

\draw [->] (sDown) -- (T1Down); 
\draw [->] (sDown) -- (T2Down);
\draw [->] (sUp) -- (T1Up); 
\draw [->] (sUp) -- (T2Up);

\draw [<->,dashed] (lu) -- (ld);
\draw [<->,dashed] (ru) -- (rd);
\end{tikzpicture}
\end{minipage}
	\captionof{figure}{\Cref{lem:comparison} implies well-definedness of $\otimes$.}
  \label{fig:well-definedness}
\end{figure}
\begin{proof}
	In order to fit it into the setting of \Cref{lem:comparison}, let the algorithms output a random variable independent of the input $y$. See \Cref{fig:well-definedness}. The input-output joint distributions are just product distributions, so by the comparison \cref{lem:comparison},
\[ T(P\times P', Q \times Q') \geqslant T(\tilde{P} \times P', \tilde{Q} \times Q').\]

	When $T(P,Q) = T(\tilde{P},\tilde{Q})$, we can apply the lemma in both directions and conclude that 
\[ T(P\times P', Q \times Q') = T(\tilde{P} \times P', \tilde{Q} \times Q').\]
	The proof is complete.
\end{proof}

Now that we have justified the definition of the composition tensor $\otimes$, \cref{lem:welldefined} can be written in a concise way:
\begin{equation}\label{eq:ordering}
	g_1\geqslant g_2\Rightarrow f\otimes g_1\geqslant f\otimes g_2.
\end{equation}
This is actually the second property we listed after the definition of $\otimes$.

For composition theorem, we prove the following two steps version:
\begin{lemma} \label{lem:two_steps}
	Suppose in a two-step composition, the two components $M_1:X\to Y, M_2:X\times Y \to Z$ satisfy
		\begin{enumerate}
			\item $M_1$ is $f$-DP;
			\item $M_2(\cdot,y):X\to Z$ is $g$-DP for each fixed $y\in Y$.
		\end{enumerate}
		Then the composition $M:X\to Y\times Z$ is $f\otimes g$-DP.
\end{lemma}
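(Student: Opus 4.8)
The plan is to reduce \Cref{lem:two_steps} to the two lemmas already in hand: the comparison lemma \Cref{lem:comparison} (which absorbs the randomness of an adaptively chosen algorithm applied to random inputs) and the product monotonicity lemma \Cref{lem:welldefined}. I would apply them in sequence---\Cref{lem:comparison} first to handle the second mechanism $M_2$, then \Cref{lem:welldefined} to handle the first mechanism $M_1$---and the desired bound $f\otimes g$ will fall out because, by \Cref{def:product} together with well-definedness of $\otimes$, we may compute $f\otimes g$ using \emph{any} distributions representing $f$ and $g$.

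First I would fix neighboring datasets $S,S'$ and set $P=M_1(S)$, $P'=M_1(S')$ on $Y$, so that $\F(P,P')\ge f$ by hypothesis 1. For each $y\in Y$, let $K_1(y)=M_2(S,y)$ and $K_1'(y)=M_2(S',y)$; hypothesis 2 together with the fact that $S,S'$ are neighbors gives $\F\big(K_1(y),K_1'(y)\big)\ge g$ for \emph{every} $y$. The key identification is that the law of $M(S)$ is exactly the input-output joint distribution $\big(P,K_1(P)\big)$ in the sense of \Cref{lem:comparison}---draw $y_1\sim P$, then $z\sim K_1(y_1)$---and similarly $M(S')=\big(P',K_1'(P')\big)$.

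Next I would pick distributions $U',V'$ with $g=\F(U',V')$ and invoke \Cref{lem:comparison} with the \emph{constant} maps $K_2(y)\equiv U'$, $K_2'(y)\equiv V'$. Its hypothesis $f_1^y\ge f_2^y$ becomes $\F\big(K_1(y),K_1'(y)\big)\ge \F(U',V')=g$, which holds; its conclusion then reads
\[
\F\big(M(S),M(S')\big)\ \ge\ \F\big((P,U'),(P',V')\big)\ =\ \F(P\times U',\,P'\times V'),
\]
the equality because constant maps produce product laws. Finally, choosing $U,V$ with $f=\F(U,V)$, the inequality $\F(P,P')\ge f=\F(U,V)$ and \Cref{lem:welldefined} give $\F(P\times U',P'\times V')\ge \F(U\times U',V\times V')=f\otimes g$. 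Chaining the two inequalities yields $\F\big(M(S),M(S')\big)\ge f\otimes g$; since $S,S'$ were arbitrary neighbors, $M$ is $f\otimes g$-DP.

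The step I expect to require the most care is verifying the hypothesis of \Cref{lem:comparison}: one must use that $M_2(\cdot,y)$ is $g$-DP for \emph{each} $y$, i.e. the adaptivity of the composition, to conclude $\F\big(M_2(S,y),M_2(S',y)\big)\ge g$ uniformly in $y$---not merely for the value of $y$ that $M_1$ happens to output. The remaining work is bookkeeping: confirming measurability of $y\mapsto K_1(y)$ so the joint law is well defined, checking that $M(S)$ really is the joint distribution $\big(P,K_1(P)\big)$, and using well-definedness of $\otimes$ so that the closing identity $\F(U\times U',V\times V')=f\otimes g$ is legitimate. The full $n$-step statement \Cref{thm:n_steps} then follows by induction, folding $M_2,\dots,M_n$ into a single mechanism at each stage.
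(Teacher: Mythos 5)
Your proposal is correct and follows essentially the same route as the paper: apply \Cref{lem:comparison} with the second pair of kernels taken to be constant maps outputting distributions representing $g$, then use \Cref{lem:welldefined} (equivalently, the monotonicity property \eqref{eq:ordering}) to replace $\F\big(M_1(S),M_1(S')\big)$ by $f$. The only cosmetic difference is that the paper phrases the final step via the identity $\F\big(M_1(S)\times Q, M_1(S')\times Q'\big) = \F\big(M_1(S),M_1(S')\big)\otimes g$ rather than introducing explicit representatives $U,V$ of $f$.
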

\usetikzlibrary{positioning,patterns,arrows,decorations.pathreplacing}
 \tikzset{main node/.style={},
            }
\begin{figure}[!htp]
\begin{minipage}{.5\textwidth}
  \centering
\begin{tikzpicture}
\def\scale{0.7}
\def\h{4*\scale}
\def\S{4*\scale}
\def\y{0*\scale}
\def\algo{0.2}

    \node[main node] (sDown) at (\S,0) {$y$};
    \node[main node] (T1Down) at (0,\y) {$M_2(S',y)$};
    \node[main node] (T2Down) at (2*\S,\y) {$Q'$};
    \node[main node] (sUp) at (\S,\y+\h+\y) {$y$};
    \node[main node] (T1Up) at (0,\y+\h) {$M_2(S,y)$};
    \node[main node] (T2Up) at (2*\S,\y+\h) {$Q$};
    \node[main node] (trade1) at (0,\y+\h*0.5) {{$\F(\cdots)$}};
    \node[main node] (trade2) at (\S*2,\y+\h*0.5) {$g$};

\draw [->] (sDown) -- (T1Down); 
\draw [->] (sDown) -- (T2Down);
\draw [->] (sUp) -- (T1Up); 
\draw [->] (sUp) -- (T2Up);
\draw [->,dashed] (trade1) -- (T1Up);
\draw [->,dashed] (trade1) -- (T1Down);
\draw [->,dashed] (trade2) -- (T2Up);
\draw [->,dashed] (trade2) -- (T2Down);
\end{tikzpicture}
\end{minipage}
\begin{minipage}{.5\textwidth}
  \centering
\begin{tikzpicture}
\def\scale{0.7}
\def\h{4*\scale}
\def\S{4*\scale}
\def\y{0*\scale}
\def\algo{0.2}

    \node[main node] (sDown) at (\S,0) {$M_1(S')$};
    \node[main node] (T1Down) at (0,\y) {$M_2\big(S',M_1(S')\big)$};
    \node[main node] (T2Down) at (2*\S,\y) {$Q'$};
    \node[main node] (sUp) at (\S,\y+\h+\y) {$M_1(S)$};
    \node[main node] (T1Up) at (0,\y+\h) {$M_2\big(S,M_1(S)\big)$};
    \node[main node] (T2Up) at (2*\S,\y+\h) {$Q$};
\def\amp{3pt}
\def\shift{0.3cm}
\def\size{\footnotesize}
\draw [thick,decorate,line width=0.5pt,decoration={brace,amplitude=\amp,mirror},xshift=0.0pt,yshift=-0cm] (T1Up.south) -- node[pos=0.5, yshift=-\shift] (lu) {\size$M(S)$} (T1Up.south-|sUp) ;
\draw [thick,decorate,line width=0.5pt,decoration={brace,amplitude=\amp,mirror},xshift=0.0pt,yshift=-0cm] (T1Up.south-|sUp) -- node[pos=0.5, yshift=-\shift] (ru) {\size$M_1(S)\otimes Q$} (T1Up.south-|T2Up) ;
\draw [thick,decorate,line width=0.5pt,decoration={brace,amplitude=\amp,mirror},xshift=0.0pt,yshift=-0cm] (T1Down.north-|sDown) -- node[pos=0.5, yshift=\shift] (ld) {\size$M(S')$} (T1Down.north) ;
\draw [thick,decorate,line width=0.5pt,decoration={brace,amplitude=\amp,mirror},xshift=0.0pt,yshift=-0cm] (T1Down.north-|T2Down) -- node[pos=0.5, yshift=\shift] (rd) {\size$M_1(S')\otimes Q'$} (T1Down.north-|sDown) ;

\draw [->] (sDown) -- (T1Down); 
\draw [->] (sDown) -- (T2Down);
\draw [->] (sUp) -- (T1Up); 
\draw [->] (sUp) -- (T2Up);
\draw [<->,dashed] (lu) -- (ld);
\draw [<->,dashed] (ru) -- (rd);
\end{tikzpicture}
\end{minipage}
\captionof{figure}{\Cref{lem:comparison} implies \Cref{lem:two_steps}.}
  \label{fig:comparison_proof}
\end{figure}

\begin{proof}[Proof of \Cref{lem:two_steps}]
	Let $Q,Q'$ be distributions such that $g = \F(Q,Q')$. Fix a pair of neighboring datasets $S$ and $S'$ and set everything as in \Cref{fig:comparison_proof}. The input $y$ is an element in the output space of $M_1$. Arrows to the left correspond to the mechanism $M_2$, while arrows to the right ignore the input $y$ and output $Q,Q'$ respectively.

	Here $f_1^y$ in \Cref{lem:comparison} is $T\big(M_2(S,y),M_2(S',y)\big)\geqslant g$, so the condition in \Cref{lem:comparison} checks. Consequently,
\begin{align*}
	\F\big(M(S),M(S')\big)&\geqslant \F\big(M_1(S)\times Q,M_1(S')\times Q'\big) &&\text{(\Cref{lem:comparison})}\\
	&= \F\big(M_1(S),M_1(S')\big) \otimes \F(Q,Q')&&\text{(Def. of $\otimes$)}\\
	&= \F\big(M_1(S),M_1(S')\big) \otimes g\\
	&\geqslant f\otimes g&&\text{(Privacy of $M_1$ and \eqref{eq:ordering})}
\end{align*}
The proof is complete.
\end{proof}

Now we prove \Cref{lem:comparison}. The proof is basically careful application of Neyman-Pearson Lemma \ref{thm:NPlemma}.

\begin{proof}[Proof of \Cref{lem:comparison}]
	In order to further simplify the notations, for $i=1,2$, let $\mu_i$ and $\mu_i'$ be the joint distributions $\big(P,K_i(P)\big)$ and $\big(P',K_i'(P')\big)$ respectively. Then $f_1 = \F(\mu_1,\mu_1'), f_2 = \F(\mu_2,\mu_2')$ and we need to show that the testing problem $\mu_1$ vs $\mu_1'$ is harder than $\mu_2$ vs $\mu_2'$.

	Consider the testing problem $\mu_1$ vs $\mu_1'$. For $\alpha\in[0,1]$, let $\phi_1:Y\times Z_1\to[0,1]$ be the optimal rejection rule at level $\alpha$. By definition of trade-off function, the power of this test is $1-f_1(\alpha)$.
	Formally,
	\[\E_{\mu_1}[\phi_1] = \alpha, \quad \E_{\mu_1'}[\phi_1] = 1-f_1(\alpha).\]
	It suffices to construct a rejection rule $\phi_2:Y\times Z_2\to[0,1]$ for the problem $\mu_2$ vs $\mu_2'$, at the same level $\alpha$ but with greater power, i.e.
	$$\E_{\mu_2} [\phi_2] = \alpha~~ \text{ and  }~~ \E_{\mu_2'} [\phi_2]\geqslant \E_{\mu_1'}[\phi_1] = 1-f_1(\alpha).$$
	If such $\phi_2$ exists, then by the sub-optimality of $\phi_2$ for the problem $\mu_2$ vs $\mu_2'$,
	\[1-f_2(\alpha)\geqslant \E_{\mu_2'} [\phi_2] \geqslant 1-f_1(\alpha),\]
	which is what we want.

	For $y\in Y$, let $\phi_1^y:Z_1\to[0,1]$ be the slice of $\phi_1$ at $y$, i.e. $\phi_1^y(z_1) = \phi_1(y,z_1)$. This is a rejection rule for the problem $K_1(y)$ vs $K_1'(y)$, sub-optimal in general. The type I error is
	\[\alpha^y := \E_{z_1\sim K_1(y)}[\phi_1^y(z_1)].\]
	The power is
	\[\E_{z_1\sim K_1'(y)}[\phi_1^y(z_1)]\leqslant 1-f_1^y(\alpha^y).\]
	The last inequality holds because $f_1^y=T\big(K_1(y),K_1'(y)\big)$ and that $\phi_1^y$ is sub-optimal for this problem.
	Let $\phi_2^y:Z_2\to[0,1]$ be the optimal rejection rule for the testing $K_2(y)$ vs $K_2'(y)$ at level $\alpha^y$. Construction of $\phi_2:Y\times Z_2\to[0,1]$ is simply putting together these slices $\phi_2^y$. Formally, $\phi_2(y,z_2) = \phi_2^y(z_2)$. Its level is $\alpha$ because $\alpha^y$ are averaged in terms of the same distribution $P$. More precisely,
	\begin{align*}
		\E_{\mu_2}[\phi_2] &= \E_{y\sim P}\big[\E_{z_2\sim K_2(y)}[\phi_2^y(z_2)]\big] &&\text{(Construction of $\phi_2$)}\\
		&= \E_{y\sim P}[\alpha^y]&&\text{($\phi_2^y$ has level $\alpha^y$)}\\
		&= \E_{y\sim P}\big[\E_{z_1\sim K_1(y)}[\phi_1^y(z_1)]\big]&&\text{(Def. of $\alpha^y$)}\\
		&= \E_{\mu_1}[\phi_1] = \alpha.
	\end{align*}
	Let's compute its power:
	\begin{align*}
		\E_{\mu_2'}[\phi_2] &= \E_{y\sim P'}\big[\E_{z_2\sim K_2'(y)}[\phi_2^y(z_2)]\big]\\
		&= \E_{y\sim P'}\big[1-f^y_2(\alpha^y)\big] &&\text{($\phi_2^y$ is optimal)}\\
		&\geqslant \E_{y\sim P'}\big[1-f^y_1(\alpha^y)\big]&&\text{($f^y_1\geqslant f^y_2$)}\\
		&\geqslant\E_{y\sim P'}\big[\E_{z_1\sim K_1'(y)}[\phi_1^y(z_1)]\big] &&\text{($\phi_1^y$ is sub-optimal)}\\
		&=\E_{\mu_1'}[\phi_1] = 1-f_1(\alpha).&&\text{(Optimality of $\phi_1$ for $\mu_1$ vs $\mu_1'$)}
	\end{align*}
	So $\phi_2$ constructed this way does have the desired level and power. The proof is complete.
\end{proof} 
\section{Omitted Proofs in \Cref{sec:composition-theorems}}
\label{app:CLT}


We first collect the basic properties of $\otimes$ listed in \Cref{sub:composition_theorem}.
\begin{proposition}\label{prop:tensor_properties}
	The product $\otimes$ defined in \Cref{def:product} has the following properties:
	\begin{enumerate}
	\setlength\itemsep{0.15em}
	\setcounter{enumi}{-1}
	\item The product $\otimes$ is well-defined.
	\item The product $\otimes$ is commutative and associative.
	\item If $g_1\geqslant g_2$, then $f\otimes g_1 \geqslant f\otimes g_2$.
	\item $f \otimes \Id = \Id \otimes f = f$.
	\item $(f\otimes g)^{-1} = f^{-1}\otimes g^{-1}$.

	\item For GDP, $G_{\mu_1} \otimes G_{\mu_2} \otimes \cdots \otimes G_{\mu_n} = G_{\mu}$, where $\mu = \sqrt{\mu_1^2+\cdots+\mu_n^2}$.
	\end{enumerate}
\end{proposition}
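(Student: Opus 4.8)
The plan is to dispatch the six items in order, drawing throughout on three facts already available: the post-processing inequality $\F(\Proc(P),\Proc(Q)) \geqslant \F(P,Q)$ (\Cref{lem:post}), the inversion identity $\F(P,Q)^{-1} = \F(Q,P)$ (\Cref{lem:symmetry}), and the comparison lemma (\Cref{lem:comparison}) together with its corollary \Cref{lem:welldefined}. Item~0 is precisely the ``equality'' half of \Cref{lem:welldefined}, and item~2 is exactly \eqref{eq:ordering}, which was deduced from the same lemma; so neither requires new work.

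For item~1, the idea is that all the product measures in sight are related by measurable bijections that are legitimate (deterministic) post-processings. For commutativity, the coordinate swap $\sigma(y_1,y_2) = (y_2,y_1)$ satisfies $\sigma(P\times P') = P'\times P$ and $\sigma(Q\times Q') = Q'\times Q$, so applying \Cref{lem:post} to $\sigma$ and then to $\sigma^{-1}=\sigma$ sandwiches $\F(P\times P', Q\times Q')$ and $\F(P'\times P, Q'\times Q)$, forcing equality, i.e.\ $f\otimes g = g\otimes f$. Associativity is the same argument applied to the reassociation bijection $\rho\colon ((y_1,y_2),y_3)\mapsto (y_1,(y_2,y_3))$; here one first invokes well-definedness (item~0) so that $(f\otimes g)\otimes h$ and $f\otimes(g\otimes h)$ are each realized as a trade-off function of a triple product measure, and then $\rho$ (a bijective post-processing, with $\rho^{-1}$ likewise) identifies the two via \Cref{lem:post} applied in both directions. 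For item~3, write $\Id = \F(R,R)$ for an arbitrary distribution $R$ (two identical distributions have trade-off function $1-x$, achieved by the constant test). The projection $\pi(y_1,y_2) = y_1$ and the randomized embedding $\iota(y_1) = (y_1,\text{ fresh draw from }R)$ are both post-processings with $\pi(P\times R) = P$ and $\iota(P) = P\times R$, and likewise for $Q$; two applications of \Cref{lem:post} then give $f \geqslant f\otimes\Id \geqslant f$, hence $f\otimes\Id = f$, and $\Id\otimes f = f$ follows from commutativity. Item~4 is a one-line computation: with $f = \F(P,Q)$ and $g = \F(P',Q')$, \Cref{lem:symmetry} gives $f^{-1}\otimes g^{-1} = \F(Q\times Q', P\times P') = \F(P\times P', Q\times Q')^{-1} = (f\otimes g)^{-1}$.

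The substantive item is item~5, which by associativity reduces to the two-fold case $G_{\mu_1}\otimes G_{\mu_2} = G_{\mu}$ with $\mu = \sqrt{\mu_1^2+\mu_2^2}$. Unfolding \Cref{def:product}, $G_{\mu_1}\otimes G_{\mu_2} = \F\big(\N(0,I_2),\, \N((\mu_1,\mu_2), I_2)\big)$. Choose an orthogonal matrix $U$ with $U(\mu_1,\mu_2)^\top = (\mu,0)^\top$; since $U$ fixes the standard Gaussian and maps $\N(v,I_2)$ to $\N(Uv,I_2)$, and since $U$ and $U^{-1}$ are bijective deterministic post-processings, \Cref{lem:post} applied in both directions yields $\F\big(\N(0,I_2),\N((\mu_1,\mu_2),I_2)\big) = \F\big(\N(0,I_2),\N((\mu,0),I_2)\big)$. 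The second coordinate now carries $\N(0,1)$ under both hypotheses, so the projection/embedding argument of item~3 shows this equals $\F\big(\N(0,1),\N(\mu,1)\big) = G_\mu$ by \eqref{eq:Gmu}; the general $n$ follows by induction on associativity.

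I expect the only real care to be needed in verifying that these coordinate-level maps genuinely qualify as the randomized algorithms $\Proc\colon Y\to Z$ to which \Cref{lem:post} applies and that product measures transform as claimed (routine measure theory, but worth a sentence each), and in the two-directional use of \Cref{lem:post} in item~5, which is what upgrades the post-processing inequality to an equality. An alternative route to item~5 that sidesteps the rotational-invariance bookkeeping is to argue directly from Neyman--Pearson (\Cref{thm:NPlemma}): the log-likelihood ratio of $\N((\mu_1,\mu_2),I_2)$ versus $\N(0,I_2)$ has, under each hypothesis, the same Gaussian law as the log-likelihood ratio of $\N(\mu,1)$ versus $\N(0,1)$, and since the trade-off function is determined by the joint law of the likelihood ratio under the two hypotheses, the two trade-off functions coincide; this is the step I would write out most carefully.
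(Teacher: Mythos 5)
Your proposal is correct and follows essentially the same route as the paper: items 0 and 2 are delegated to the well-definedness/comparison lemma, items 1 and 3 are handled by invariance of trade-off functions under (bijective or projection/embedding) post-processings, item 4 by flipping the arguments of $\F(\cdot,\cdot)$ via \Cref{lem:symmetry}, and item 5 by rotational invariance of the isotropic Gaussian followed by tensoring with $\Id$. The only cosmetic differences are that you justify the bijection-invariance by applying \Cref{lem:post} in both directions where the paper simply asserts it, and you use an arbitrary $R$ with projection/embedding for the identity element where the paper takes $R$ degenerate; both are fine.
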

Property 0 and 2 are already proved in \Cref{app:self}. So we only prove 1,3,4,5 here.
\begin{proof}[Proof of Properties (1,3,4,5)]
	We will assume $f=T(P,P'), g =T(Q,Q')$ in the entire proof. The upshot is that
\[T(P,P')\otimes T(Q,Q') = T(P\times Q,P'\times Q').\]
\begin{enumerate}
	\item Commutativity:
		$$f\otimes g = T(P,P')\otimes T(Q,Q') = T(P\times Q,P'\times Q') \stackrel{(a)}{=} T(Q\times P,Q'\times P') =T(Q,Q') \otimes T(P,P') = g\otimes f.$$
		In step $(a)$, we switch the order of the components of the product, which obviously keeps the trade-off function unchanged.

		Associativity:
		Let $h=T(R,R')$.
		\begin{align*}
			(f\otimes g)\otimes h &= T(P\times Q,P'\times Q') \otimes T(R,R') =  T(P\times Q\times R,P'\times Q'\times R')\\
			f\otimes (g\otimes h) &= T(P,P')\otimes T(Q\times R,Q'\times R') =  T(P\times Q\times R,P'\times Q'\times R')
		\end{align*}
		So $(f\otimes g)\otimes h = f\otimes (g\otimes h)$.
	\item Let $R$ be an arbitrary degenerate distribution, i.e. $R$ puts mass 1 on a single point. Then $\Id = T(R,R)$ and
		\[f\otimes \Id = T(P\times R, P'\times R) = T(P,P') = f.\]
	\item
		By \Cref{lem:symmetry}, taking the inverse amounts to flipping the arguments of $T(\cdot,\cdot)$.
		\begin{align*}
			(f\otimes g)^{-1} = T(P'\times Q',P\times Q) = T(P',P)\otimes T(Q',Q) = f^{-1}\otimes g^{-1}.
		\end{align*}
	\item Let $\bmu=(\mu_1,\mu_2)\in\R^2$ and $I_2$ be the $2\times 2$ identity matrix. Then
		\begin{align*}
			G_{\mu_1}\otimes G_{\mu_2} &= T\big(\N(0,1),\N(\mu_1,1)\big)\otimes T\big(\N(0,1),\N(\mu_1,1)\big)\\
			&= T\big(\N(0,1)\times \N(0,1),\N(\mu_1,1)\times \N(\mu_2,1)\big)\\
			&= T\big(\N(0,I_2),\N(\bmu,I_2)\big)
		\end{align*}
		Again we use the invariance of trade-off functions under invertible transformations. $\N(0,I_2)$ is rotation invariant, So we can rotate $\N(\bmu,I_2)$ so that the mean is $(\sqrt{\mu_1^2+\mu_2^2},0)$. Continuing the calculation
		\begin{align*}
			G_{\mu_1}\otimes G_{\mu_2} &=T\big(\N(0,I_2),\N(\bmu,I_2)\big)\\
			&= T\big(\N(0,1)\times \N(0,1),\N(\sqrt{\mu_1^2+\mu_2^2},1)\times \N(0,1)\big)\\
			&= T\big(\N(0,1),\N(\sqrt{\mu_1^2+\mu_2^2},1)\big)\otimes T\big(\N(0,1),\N(0,1)\big)\\
			&= G_{\sqrt{\mu_1^2+\mu_2^2}}\otimes \Id\\
			&= G_{\sqrt{\mu_1^2+\mu_2^2}}.
		\end{align*}
\end{enumerate}
\end{proof}



	

The following proposition explains why our central limit theorems need $f_n$ to approach $\Id$.

\begin{proposition} \label{prop:trivial_limit}
	For any trade-off function $f$ that is not $\Id$,
	$$\lim_{n\to+\infty} f^{\otimes n}(\alpha)=0, \quad\forall\alpha\in(0,1].$$
	In fact, the convergence is exponentially fast.
\end{proposition}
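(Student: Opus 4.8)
The plan is to show that $f^{\otimes n}$ decays exponentially by relating the tensor product to a hypothesis testing problem on product measures and invoking a large-deviations / Chernoff-type bound for i.i.d. samples. Write $f = \F(P,Q)$ for some pair of distributions $P \ne Q$ (they must differ since $f \ne \Id$), so that by well-definedness of the tensor product $f^{\otimes n} = \F(P^{\otimes n}, Q^{\otimes n})$. The quantity $f^{\otimes n}(\alpha)$ is then the minimal type II error of testing $H_0 : P^{\otimes n}$ against $H_1 : Q^{\otimes n}$ at type I level $\alpha$, based on an i.i.d.\ sample of size $n$. Intuitively, distinguishing two fixed distinct product measures from many independent draws becomes trivially easy, so both errors can be driven to $0$ exponentially fast; the statement only asks for the type II error at a \emph{fixed} type I level $\alpha > 0$, which is even easier.

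The key steps, in order: (i) Reduce to exhibiting, for each fixed $\alpha \in (0,1]$, a sequence of rejection rules $\phi_n$ with type I error $\le \alpha$ and type II error $\le c_1 e^{-c_2 n}$ for constants $c_1, c_2 > 0$ depending only on $P, Q$ and $\alpha$; then $f^{\otimes n}(\alpha) \le \beta_{\phi_n}$ gives the claim. (ii) Use the likelihood-ratio / Neyman--Pearson test (\Cref{thm:NPlemma}): let $\phi_n$ reject when the log-likelihood ratio $\Lambda_n = \sum_{i=1}^n \log\frac{dQ}{dP}(\omega_i)$ exceeds a threshold $t_n$. Under $P^{\otimes n}$, $\Lambda_n$ is a sum of i.i.d.\ terms each with negative mean $-\kl(f) = -D_{\mathrm{KL}}(P\|Q) < 0$ (strictly negative since $P \ne Q$); under $Q^{\otimes n}$ it has positive mean $D_{\mathrm{KL}}(Q\|P) > 0$. (iii) Choose the threshold $t_n = 0$ (or any fixed value strictly between $-D_{\mathrm{KL}}(P\|Q)$ and $D_{\mathrm{KL}}(Q\|P)$, scaled by $n$): by a Chernoff bound, $\alpha_{\phi_n} = P^{\otimes n}[\Lambda_n > 0] \le e^{-c n}$ and $\beta_{\phi_n} = Q^{\otimes n}[\Lambda_n \le 0] \le e^{-c' n}$ for constants $c, c' > 0$, provided the relevant moment generating functions are finite on a neighborhood of the origin. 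Since $e^{-cn} \le \alpha$ for $n$ large, $\phi_n$ is admissible at level $\alpha$ eventually, and $f^{\otimes n}(\alpha) \le \beta_{\phi_n} \le e^{-c'n} \to 0$ exponentially. For the finitely many small $n$ one uses $f^{\otimes n} \le f^{\otimes 1} \le \Id$, or absorbs them into the constant.

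The main technical obstacle is handling distributions for which $P$ and $Q$ are not mutually absolutely continuous, or for which the Chernoff moment generating function $\E_P[(dQ/dP)^s]$ fails to be finite in a neighborhood of $s \in (0,1)$ — for instance when $f(0) < 1$, i.e.\ $Q$ has mass on a $P$-null set. In that degenerate case the log-likelihood ratio takes value $+\infty$ with positive $Q$-probability and the Chernoff argument must be adapted: one can instead observe directly that there is a fixed event $A$ with $P(A) = 0$ and $Q(A) = p > 0$, so the rule ``reject iff some coordinate lands in $A$'' has type I error exactly $0$ and type II error $(1-p)^n \to 0$ exponentially, which already settles the claim for $\alpha > 0$ (indeed $f^{\otimes n}(0) \le (1-p)^n$). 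A clean write-up would therefore split into the mutually-absolutely-continuous case (Chernoff bound on $\Lambda_n$) and the singular case (the elementary rejection rule above), or — more uniformly — pick any measurable set $A$ with $P(A) \ne Q(A)$ (such $A$ exists precisely because $P \ne Q$) and build the test from the empirical frequency of $A$, using Hoeffding's inequality on the indicator variables $\mathbf 1_{\omega_i \in A}$; this avoids all integrability concerns and yields the exponential rate in one stroke.
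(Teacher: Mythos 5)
Your proposal is correct, but it takes a genuinely different route from the paper. The paper's proof never constructs a test at a fixed level $\alpha$: it instead bounds $\min_{\alpha}\bigl(\alpha+f^{\otimes n}(\alpha)\bigr)=1-\TV(P^{n},Q^{n})$, shows $\TV(P^{n},Q^{n})\to 1$ exponentially via the tensorization identity for the Hellinger distance together with the sandwich $\tfrac12 H^{2}\leqslant \TV\leqslant H$, and then uses convexity of $f^{\otimes n}$ (interpolating through the minimizing point $(\alpha_n, f^{\otimes n}(\alpha_n))$) to propagate the conclusion to every fixed $\alpha\in(0,1]$. You work directly at a fixed level: exhibit a sequence of tests $\phi_n$ with $\alpha_{\phi_n}\leqslant\alpha$ eventually and $\beta_{\phi_n}$ exponentially small, and conclude $f^{\otimes n}(\alpha)\leqslant\beta_{\phi_n}$. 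Your Neyman--Pearson/Chernoff version needs the integrability caveats you correctly flag, but your fallback --- fix a measurable $A$ with $P(A)\neq Q(A)$, test on the empirical frequency of $A$, and apply Hoeffding --- is fully rigorous, handles the singular case for free, and needs no moment conditions; it is arguably more elementary than the paper's Hellinger argument and dispenses with the convexity step, since the bound is obtained pointwise in $\alpha$ from the start. What the paper's route buys is a rate expressed in the Hellinger affinity $\bigl(1-\tfrac12 H^{2}(P,Q)\bigr)^{n}$, which is the natural quantity governing the decay and is typically sharper than the Hoeffding rate $\exp\bigl(-n(P(A)-Q(A))^{2}/2\bigr)$; what yours buys is directness and minimal machinery. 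Both are complete proofs of the stated claim.
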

\begin{proof}[Proof of \Cref{prop:trivial_limit}]
	For any trade-off function $f$, let $P,Q$ be probability measures such that $\F(P,Q)=f$. The existence is guaranteed by \Cref{prop:trade-off}. It is well-known that $1-\mathrm{TV}(P,Q)$ is the minimum sum of type I and type II error, namely,
	$$1-\mathrm{TV}(P,Q) = \min_{\alpha\in[0,1]} \alpha+f(\alpha).$$
	We claim that the following limit suffices to prove the theorem:
	\begin{equation}\label{eq:dtv}
		\lim_{n\to\infty} \mathrm{TV}(P^n,Q^{ n})=1.
	\end{equation}
	To see why it suffices, recall that by definition $\F(P^n,Q^{ n})=f^{\otimes n}$. Hence
	$$1-\mathrm{TV}(P^n,Q^{ n}) = \min_{\alpha\in[0,1]} \alpha+f^{\otimes n}(\alpha).$$
	Let $\alpha_n$ be the type i error that achieves minimum in the above equation, i.e.
	$$\alpha_n+f^{\otimes n}(\alpha_n) = 1-\mathrm{TV}(P^n,Q^{ n}).$$
	The total variation limit \eqref{eq:dtv} implies $\alpha_n\to0$ and $f^{\otimes n}(\alpha_n)\to0$. For each $n$, consider the piecewise linear function that interpolates $(0,1),(\alpha_n,f^{\otimes n}(\alpha_n))$ and $(1,0)$, which will be denoted by $h_n$. By the convexity of $f^{\otimes n}$ we know that $f^{\otimes n}\leqslant h_n$ in $[0,1]$. It suffices to show that $h_n(\alpha)\to 0,\forall \alpha\in(0,1]$. Since $\alpha_n\to0$, for large enough	$n$, $h_n(\alpha)$ is evaluated on the lower linear segment of $h_n$. So $h_n(\alpha)\leqslant h_n(\alpha_n)\leqslant f^{\otimes n}(\alpha_n) \to 0$. This yields the desired limit of $f^{\otimes n}$.\\
	Now we use Hellinger distance $H^2(P,Q) := \E_Q\big[(1-\sqrt{\frac{P}{Q}})^2\big]$ to show the total variation limit \eqref{eq:dtv}.\\
	An elementary inequality relating total variation and Hellinger distance is
	$$\frac{1}{2}H^2(P,Q)\leqslant \mathrm{TV}(P,Q)\leqslant H(P,Q).$$
	Another nice property of Hellinger distance is it tensorizes in the following sense:
	$$1-\frac{H^2(P^n,Q^{ n})}{2} = \Big(1-\frac{H^2(P,Q)}{2}\Big)^n.$$
	$f$ is not the diagonal $\alpha\mapsto1-\alpha$, so $P\neq Q$. Hence $\mathrm{TV}(P,Q)> 0$. By the second inequality in the sandwich bound, $H^2(P,Q)>0$. By the tensorization property, $H^2(P^n,Q^{ n})\to 2$. By the first inequality in the sandwich bound and that $\mathrm{TV}$ is bounded by 1 we have
	$$ \frac{1}{2} H^2(P^n,Q^{ n})\leqslant \mathrm{TV}(P^n,Q^{ n})\leqslant1.$$This shows $\mathrm{TV}(P^n,Q^{ n})\to 1$ and completes the proof.
\end{proof}

Now we set out the journey to prove the Berry-Esseen style central limit theorem \ref{thm:Berry}. We first restate the theorem.
\berryrep*
Our approach is to consider the log-likelihood ratio between the distributions of the composition mechanism on neighboring datasets. This log-likelihood ratio can be reduced to the sum of \textit{independent} components that each correspond to the log-likelihood ratio of a trade-off function in the tensor product. This reduction allows us to carry over the classical Berry--Esseen bound to Theorem~\ref{thm:Berry}.

As the very first step, let's better understand the functionals $\kl,\kappa_2$ and $\bar{\kappa}_3$ used in the statement of the theorem. We focus on symmetric $f$ with $f(0)=1$, although some of the following discussion generalizes beyond that subclass. Recall that
\begin{align*}
	\kl(f) &= -\int_0^1\log |f'(x)|\diff x\\
	\kappa_2(f)&=\int_0^1\big(\log |f'(x)|\big)^2\diff x\\
	\bar{\kappa}_3(f)&=\int_0^1\big|\log |f'(x)|+\kl(f)\big|^3\diff x
\end{align*}
First we finish the argument mentioned in \Cref{sub:a_berry_esseen_type_of_clt} that these functionals are well-defined and take values in $[0,+\infty]$. For $\kappa_2$ and $\bar{\kappa}_3$, as well as the non-central version $\kappa_3$, the argument is easy because the integrands are non-negative.

For $\kl$, the only possible singularities of the integrand is 0 and 1. If 1 is singular then $\log |f'(x)|\to -\infty$ near 1. This is okay because the functionals are allowed to take value $+\infty$. We need to rule out the case when 0 is a singularity and $\int^{\epsilon}_0\log |f'(x)|\diff x=+\infty$. That cannot happen because $\log |f'(x)|\leqslant |f'(x)|-1$ and $|f'(x)| = -f'(x)$ is integrable in $[0,1]$ as it is the derivative of $-f$, an absolute continuous function. 
Non-negativity of $\kl$ follows from Jensen's inequality.

In the discussion of \Cref{prop:fdiv}, we showed that $\kl(T(P,Q)) = D_{\mathrm{KL}}(P\|Q)$. This explains the name of this functional. In fact, $\kappa_2$ also corresponds to a divergence called \textit{exponential divergence} (\cite{eguchi1985differential}).

We introduce a notation that will be useful in the calculation below. For a trade-off function $f$, let $Df$ be a function with the following expression:
\[Df(x) = |f'(1-x)| = -f'(1-x).\]
In fact, this is the density introduce in the proof of \Cref{prop:trade-off}.

By a simple change of variable, the three functionals can be re-written as
\begin{align*}
	\kl(f) &= -\int_0^1\log Df(x)\diff x\\
	\kappa_2(f)&=\int_0^1\big(\log Df(x)\big)^2\diff x\\
	\bar{\kappa}_3(f)&=\int_0^1\big|\log Df(x)+\kl(f)\big|^3\diff x
\end{align*}
The following ``shadows'' of the above functionals will appear in the proof:
\begin{align*}
	\lk(f) &:= \int_0^1Df(x)\log Df(x)\diff x\\
	\tilde{\kappa}_2(f)&:=\int_0^1Df(x)\big(\log Df(x)\big)^2\diff x\\
	\tilde{\kappa}_3(f)&:=\int_0^1Df(x)\big|\log Df(x)-\lk(f)\big|^3\diff x
\end{align*}
These functionals are also well-defined on $\T$ and take values in $[0,+\infty]$. The argument is similar to that of $\kl, \kappa_2$ and $\bar{\kappa}_3$.


The following calculations turn out to be useful in the proof.
\begin{proposition}\label{prop:functionals}
	Suppose $f\in\T^S$ and $f(0)=1$. Then
	\begin{align*}
		\kl(f) &= \lk(f)\\
		\kappa_2(f)&= \tilde{\kappa}_2(f)\\
		\bar{\kappa}_3(f)&=\tilde{\kappa}_3(f).
	\end{align*}
\end{proposition}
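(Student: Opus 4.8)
The plan is to exploit the explicit representation of a trade-off function constructed in the proof of \Cref{prop:trade-off}: take $P$ to be the uniform distribution on $[0,1]$ and $Q$ the distribution with density $Df(x) = -f'(1-x)$ on $[0,1)$. Since $f(0)=1$, the distribution $Q$ has no atom at $1$, and $f = T(P,Q)$. In this model $Df$ is exactly the likelihood ratio $\diff Q/\diff P$, and the three left-hand functionals are moments of $\log Df$ under $P$, while the three right-hand functionals $\lk(f),\tilde\kappa_2(f),\tilde\kappa_3(f)$ are the corresponding moments under $Q$. Thus everything reduces to a single \emph{reflection identity}: for every measurable $\phi\ge 0$,
\[
\int_0^1 \phi\big(Df(x)\big)\,\diff x \;=\; \int_0^1 Df(x)\,\phi\!\Big(\tfrac{1}{Df(x)}\Big)\,\diff x .
\]
Granting this, $\phi(u) = -\log u$ gives $\kl(f) = \lk(f)$; $\phi(u) = (\log u)^2$ gives $\kappa_2(f) = \tilde\kappa_2(f)$; and $\phi(u) = |\log u + \kl(f)|^3$ gives $\bar\kappa_3(f) = \int_0^1 Df(x)\,|{-\log Df(x)} + \kl(f)|^3\,\diff x$, which equals $\tilde\kappa_3(f)$ after replacing $\kl(f)$ by $\lk(f)$ using the identity just proved.

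To establish the reflection identity I would first note that a symmetric $f$ with $f(0)=1$ is forced to be a genuine involution of $[0,1]$. Indeed, from $f=f^{-1}$ and the definition \eqref{eq:inver_f} one gets $z_f := \inf\{x: f(x)=0\} = f^{-1}(0) = f(0) = 1$, so $f>0$ on $[0,1)$ and $f(1)=0$; moreover $f$ can have no flat piece in $(0,1)$, since then $f^{-1}=f$ would be discontinuous there, contradicting convexity. Hence $f:[0,1]\to[0,1]$ is a continuous strictly decreasing bijection with $f\circ f=\mathrm{id}$; being convex it is absolutely continuous with $f'<0$ a.e. Differentiating $f\circ f=\mathrm{id}$ by the ordinary chain rule (valid at every point where $f$ is differentiable and whose $f$-image is a point of differentiability of $f$ --- a set of full measure) yields $f'(f(t))\,f'(t)=1$ a.e., i.e.\ $-f'(f(t)) = 1/(-f'(t))$. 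Now apply the change-of-variables formula for the absolutely continuous monotone bijection $f$, namely $\int_0^1 \psi(y)\,\diff y = \int_0^1 \psi(f(t))\,|f'(t)|\,\diff t$ for $\psi\ge0$: starting from $\int_0^1 \phi(Df(x))\,\diff x = \int_0^1 \phi(-f'(y))\,\diff y$ (substituting $y = 1-x$), this becomes $\int_0^1 \phi\big(-f'(f(t))\big)\,(-f'(t))\,\diff t = \int_0^1 \phi\big(1/(-f'(t))\big)\,(-f'(t))\,\diff t$, and substituting $t = 1-x$ back turns this into $\int_0^1 \phi\big(1/Df(x)\big)\,Df(x)\,\diff x$, which is the identity.

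The genuine (if routine) obstacle is the real-analytic bookkeeping. The function $f$ need not be Lipschitz near $0$ or $1$ (e.g.\ $f'(0^+)$ may be $-\infty$), so one must invoke the change-of-variables theorem for absolutely continuous functions rather than its elementary $C^1$ version, and one must check that the chain rule for $f\circ f$ holds almost everywhere. One also has to handle the cases in which a functional equals $+\infty$: apply the reflection identity separately to $\phi^+$ and $\phi^-$ (both nonnegative) and recombine, using that the ``small'' pieces $\int_0^1 (\log Df)^+\,\diff x \le \int_0^1 Df\,\diff x = 1$ and $\int_0^1 Df\,(\log Df)^-\,\diff x \le 1/\e$ are finite, so no $\infty-\infty$ ever arises and the two sides are finite or infinite together. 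As a cross-check and alternative for the first identity, $\kl(f) = D_{\mathrm{KL}}(P\|Q)$ and $\lk(f) = D_{\mathrm{KL}}(Q\|P)$, and symmetry of $f$ together with \Cref{thm:blackwell} (giving Procs in both directions between $(P,Q)$ and $(Q,P)$) and the data-processing inequality applied both ways already forces $D_{\mathrm{KL}}(P\|Q) = D_{\mathrm{KL}}(Q\|P)$; the reflection-identity route is preferable here only because it dispatches $\kappa_2$ and the centered $\bar\kappa_3$ by exactly the same computation.
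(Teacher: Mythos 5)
Your proof is correct and follows essentially the same route as the paper's: both arguments rest on the facts that a symmetric $f$ with $f(0)=1$ is a genuine involution of $[0,1]$, that consequently $f'(f(t))\,f'(t)=1$ a.e., and on the change of variables $y=f(t)$ (the paper phrases this as computing $Df^{-1}$ via the inverse-function rule and substituting $y=f^{-1}(1-x)$, which is the same calculation). Your packaging of the three identities into one reflection identity, and your attention to the absolutely continuous change-of-variables theorem and the $\infty-\infty$ bookkeeping, are sound refinements of the paper's computation rather than a different method.
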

\begin{proof}
	Our approach, taking $\kappa_2$ as example, is to show $\kappa_2(f^{-1}) = \tilde{\kappa}_2(f)$. By definition of symmetry, $f^{-1} = f$ and hence the desired result follows. First observe for $f\in\T^S$ with $f(0)=1$, $f^{-1}$ agrees with the ordinary function inverse, hence we can apply calculus rule as follows:
	\[Df^{-1}(x) = -\frac{\diff f^{-1}}{\diff x}(1-x) = \frac{-1}{f'(f^{-1}(1-x))}.\]
	We only prove $\kappa_2(f^{-1})= \tilde{\kappa}_2(f)$ here and the other two identities can be proved similarly.
	\begin{align*}
		\kappa_2(f^{-1}) &= \int_0^1\big(\log Df^{-1}(x)\big)^2\diff x \\
		&= \int_0^1\big(-\log\big[-f'(f^{-1}(1-x))\big]\big)^2\diff x\\
		&= \int_0^1\log^2\big[-f'(f^{-1}(1-x))\big]\diff x
	\end{align*}
	Let $y=f^{-1}(1-x)$, then $f'(y)\diff y=-\diff x$, and $x=0$ corresponds to $y=0$, $x=1$ corresponds to $y=1$.
	\begin{align*}
		\kappa_2(f^{-1})
		&= \int_0^1\log^2 [-f'(y)]\cdot\big(-f'(y)\big)\diff y&& (y=f^{-1}(1-x))\\
		&= \int_0^1\log^2 [-f'(1-z)]\cdot\big(-f'(1-z)\big)\diff z&& (z=1-y)\\
		&= \int_0^1Df(z)\big(\log Df(z)\big)^2\diff z\\
		&= \tilde{\kappa}_2(f).
	\end{align*}
\end{proof}
We remark that by properly extending the definition of the shadow functionals, identities like $\kl(f^{-1}) = \lk(f)$ holds for general trade-off function $f$.


Before we finally start the proof, let's recall Berry-Esseen theorem for random variables. Suppose we have $n$ independent random variables $X_1,\ldots, X_n$ with $\E X_i = \mu_i, \Var X_i = \sigma_i^2, \E|X_i-\mu_i|^3 = \rho_i^3$. Consider the normalized random variable
$$S_n := \frac{\sum_{i=1}^n X_i-\mu_i}{\sqrt{\sum_{i=1}^n \sigma^2_i}}.$$
Denote its cdf by $F_n$. Then
\begin{theorem}[Berry-Esseen]\label{thm:BerryRV}
There exists a universal constant $C>0$ such that
	\[\sup_{x\in\R}|F_n(x)-\Phi(x)|\leqslant C\cdot \frac{\sum_{i=1}^n \rho_i^3}{\big(\sum_{i=1}^n\sigma_i^2\big)^{\frac{3}{2}}}.\]
\end{theorem}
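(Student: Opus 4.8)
The statement is the classical Berry--Esseen bound for sums of independent but not necessarily identically distributed random variables, and I would prove it by the classical Fourier-analytic route of Esseen. First, by rescaling I may assume $\sum_{i=1}^n\sigma_i^2=1$, so that $S_n=\sum_i(X_i-\mu_i)$ has variance $1$, distribution function $F_n$, and characteristic function $\widehat{F}_n(t)=\prod_{i=1}^n\varphi_i(t)$, where $\varphi_i(t)=\E[\e^{\mathrm{i}t(X_i-\mu_i)}]$. Write $L:=\sum_{i=1}^n\rho_i^3$ for the (rescaled) Lyapunov ratio appearing on the right-hand side. If $L$ exceeds a fixed absolute constant the claimed inequality is trivial, since $\sup_x|F_n(x)-\Phi(x)|\le 1$ always; so I assume $L$ is small, and the whole game is to track how the error depends on $L$.

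The key external tool is the Esseen smoothing inequality: for any distribution function $F$, the normal distribution function $\Phi$ (with $\sup|\Phi'|=1/\sqrt{2\pi}$), and any $T>0$,
\[
\sup_{x\in\R}|F(x)-\Phi(x)|\;\le\;\frac{1}{\pi}\int_{-T}^{T}\frac{|\widehat{F}(t)-\e^{-t^2/2}|}{|t|}\,\diff t\;+\;\frac{24}{\pi\sqrt{2\pi}\,T}.
\]
I would apply this with $F=F_n$, choosing the cutoff $T\asymp 1/L$ (say $T=1/(8L)$), so that the second term is already $O(L)$. Everything then reduces to showing $\int_{-T}^{T}|\widehat{F}_n(t)-\e^{-t^2/2}|\,/\,|t|\,\diff t=O(L)$.

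To control the integrand I would combine two estimates. (i) A Taylor estimate with third-moment remainder: since $|\e^{\mathrm{i}u}-1-\mathrm{i}u+u^2/2|\le|u|^3/6$ and $\E[X_i-\mu_i]=0$, one gets $|\varphi_i(t)-(1-\sigma_i^2 t^2/2)|\le\rho_i^3|t|^3/6$, which after comparing each $\varphi_i(t)$ with $\e^{-\sigma_i^2 t^2/2}$ and telescoping the products yields, for $|t|\le T$, a bound of the shape $|\widehat F_n(t)-\e^{-t^2/2}|\le C\,L\,|t|^3\,\big(\text{Gaussian-type decay in }t\big)$. (ii) The hard part is establishing that Gaussian-type decay uniformly on the \emph{long} interval $|t|\le T\asymp 1/L$, i.e.\ an estimate $|\widehat F_n(t)|\le\e^{-ct^2}$ there: this is delicate precisely because individual factors $|\varphi_i(t)|$ need not decay at all once $\sigma_i^2 t^2$ is large, so the naive per-factor product bound loses all the decay. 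I would handle it by the truncation/symmetrization trick: bound $|\varphi_i(t)|^2=\E[\cos(t(\xi_i-\xi_i'))]$ for an independent copy $\xi_i'$ of $\xi_i:=X_i-\mu_i$, use $1-\cos u\ge u^2/4$ on $|u|\le 2$, and truncate the second moment of $\xi_i-\xi_i'$ at level $2/|t|$ at a cost controlled by its third moment; summing the surviving exponents over $i$, one shows that the indices where the truncation ``fails'' carry total variance $<\tfrac12$ (each such index forces $|t|$ large relative to its third moment), so at least half of the Gaussian exponent $t^2/2$ survives. Combining (i) and (ii), $|\widehat F_n(t)-\e^{-t^2/2}|\,/\,|t|\le C'L\,t^2\e^{-ct^2}$ on $|t|\le T$, and integrating over all of $\R$ gives $O(L)$; together with the $O(L)$ smoothing remainder this proves the theorem with an absolute constant $C$.

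I expect step (ii) — the uniform sub-Gaussian control of $|\widehat F_n(t)|$ on $|t|\lesssim 1/L$ — to be the genuine obstacle; step (i) and the final integration are routine. As an alternative route one could run Stein's method: solve the Stein equation $f_x'(w)-wf_x(w)=\mathbf 1_{\{w\le x\}}-\Phi(x)$ with the standard bounds $\|f_x\|_\infty\le\sqrt{\pi/2}$, $\|f_x'\|_\infty\le 2$, write $F_n(x)-\Phi(x)=\E[f_x'(W)-Wf_x(W)]$ for $W=\sum_i(X_i-\mu_i)\big/\sqrt{\sum_j\sigma_j^2}$, and expand via leave-one-out sums $W^{(i)}=W-\xi_i$; the third-moment bound then drops out, the only subtlety being a concentration estimate $\P(a\le W^{(i)}\le b)\le C(b-a)+C\sum_j\E|\xi_j|^3$ (needed because $f_x'$ jumps at $x$). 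Either route yields the stated bound with a universal constant.
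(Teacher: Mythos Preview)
The paper does not actually prove this theorem: it is stated as the classical Berry--Esseen theorem for independent random variables and simply \emph{cited} as a known result (with the remark that the best known constant $C=0.5600$ is due to Shevtsova). It is then used as a black box in the proof of Theorem~\ref{thm:Berry}. So there is no ``paper's own proof'' to compare against.

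Your sketch is the standard Esseen Fourier-analytic proof and is correct in outline. The normalization, the smoothing inequality with cutoff $T\asymp 1/L$, the third-order Taylor bound on each $\varphi_i$, and the telescoping product are all textbook. You are right that the only genuinely delicate point is establishing the sub-Gaussian decay $|\widehat F_n(t)|\le \e^{-ct^2}$ uniformly on $|t|\lesssim 1/L$; the symmetrization/truncation argument you describe works, though in most presentations this step is handled a bit differently (one shows directly, via the third-order Taylor estimate and the inequality $\sigma_i^3\le\rho_i^3$, that $|\varphi_i(t)|\le\exp(-\sigma_i^2 t^2/2+C\rho_i^3|t|^3)$ and then sums the exponents, using $|t|\le c/L$ to keep the cubic term under control). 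Your Stein's-method alternative is also a valid and by now equally standard route. Either argument would prove the statement; for the purposes of this paper, however, citing the result suffices.
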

To the best of our knowledge, the best $C$ is 0.5600 due to \cite{shevtsova2010improvement}.

\bigskip

Now we proceed to the proof of \Cref{thm:Berry}.
\begin{proof}[Proof of \Cref{thm:Berry}]
For simplicity let
\[\f := f_{1}\otimes f_{2} \otimes \cdots \otimes f_{n}.\]

First let's find distributions $P_0$ and $P_1$ such that $\F(P_0,P_1)=\f$.

First, by symmetry, if $f_i(0)<1$, then $f_i'(x)=0$ in some interval $[1-\ep,1]$ for some $\ep>0$, which yields $\kl(f_i)=+\infty$. So we can assume $f_i(0)$ for all $i$.

Recall that $Df_{i}(x) = -f'_{i}(1-x)$. Let $P$ be the uniform distribution on $[0,1]$ and $Q_{i}$ be the distribution supported on $[0,1]$ with density $Df_{i}$. These are the distributions constructed in the proof of \Cref{prop:trade-off}. Since $f_i$ are all symmetric and $f_i(0)=1$, the supports of $P$ and all $Q_{i}$ are all exactly $[0,1]$, and we have $\F(P,Q_{i})=f_{i}$. Hence by definition $\f = \F(P^{ n},Q_{1}\times\cdots\times Q_{n})$.

Now let's study the hypothesis testing problem $P^{ n}$ vs $Q_{1}\times\cdots\times Q_{n}$. Let
\[L_{i}(x) := \log \frac{\diff Q_{i}}{\diff P}(x) = \log Df_{i}(x)\]
be the log likelihood ratio. 
Since both hypotheses are product distributions, Neyman-Pearson lemma implies that the optimal rejection rules of this testing problem must be a threshold function of the quantity $\sum_{i=1}^n L_{i}$.
We need to study $\sum_{i=1}^n L_{i}(x_i)$ under both the null and the alternative hypothesis, i.e. when $(x_1,\ldots, x_n)$ comes from $P^{ n}$ and $Q_{1}\times\cdots\times Q_{n}$. From here we implement the following plan: first find the quantities that exhibit central limit behavior, then express $\alpha$ and $\f(\alpha)$ in terms of these quantities.

For further simplification, let
$$T_n := \sum_{i=1}^n L_{i}.$$
As we turn off the $x_i$ notation, we should bear in mind that $T_n$ has different distributions under $P^{ n}$ and $Q_{1}\times\cdots\times Q_{n}$, but it is an independent sum in both cases.

In order to find quantities with central limit behavior, it suffices to normalize $T_n$ under both distributions. The mysterious functionals we introduced are specifically designed for this purpose.
\begin{align*}
	\E_P[L_{i}] &= \int_0^1 \log Df_{i}(x_i)\diff x_i = -\kl(f_{i}),\\
	\E_{Q_{i}}[L_{i}] &= \int_0^1 Df_{i}(x_i)\log Df_{i}(x_i)\diff x_i = \lk(f_{i}) = \kl(f_{i}).
\end{align*}
In the last step we used \Cref{prop:functionals}. With the bold vector notation,
\begin{align*}
	\E_{P^n}[T_n]  &= \sum_{i=1}^n -\kl(f_{i}) = -\|\boldsymbol{\kl}\|_1,\\
	\E_{Q_{1}\times\cdots\times Q_{n}}[T_n] &= \sum_{i=1}^n \kl(f_{i})= \|\boldsymbol{\kl}\|_1.
\end{align*}
Similarly for the variances:
\begin{align*}
	\Var_{P}[L_{i}] &=\E_P[L_{i}^2] - \big(\E_P[L_{i}]\big)^2 = \kappa_2(f_{i}) - \kl^2(f_{i})
	,\\
	\Var_{Q_{i}}[L_{i}] &= \E_{Q_{i}}[L_{i}^2] - \big(\E_{Q_{i}}[L_{i}]\big)^2 = \tilde{\kappa}_2(f_{i}) - \lk^2(f_{i})= \kappa_2(f_{i}) - \kl^2(f_{i}).
\end{align*}
\[
	\Var_{P^n}[T_n] =\Var_{Q_{1}\times\cdots\times Q_{n}}[T_n] = \sum_{i=1}^n \kappa_2(f_{i}) - \kl^2(f_{i}) = \|\boldsymbol{\kappa_2}\|_1 - \|\boldsymbol{\kl}\|_2^2.
\]
In order to apply Berry-Esseen theorem (for random variables) we still need the centralized third moments:
\begin{align*}
	\E_P|L_i-\E_P[L_i]|^3 &=\int_0^1\big|\log Df_i(x)+\kl(f_i)\big|^3\diff x = \bar{\kappa}_3(f_i),\\
	\E_{Q_i}|L_i-\E_{Q_i}[L_i]|^3 &=\int_0^1Df_i(x)\big|\log Df_i(x)-\lk(f_i)\big|^3\diff x = 
	\tilde{\kappa}_3(f_i)= \bar{\kappa}_3(f_i).
\end{align*}
Let $F_n$ be the cdf of $\frac{T_n + \|\boldsymbol{\kl}\|_1}{\sqrt{\|\boldsymbol{\kappa_2}\|_1 - \|\boldsymbol{\kl}\|_2^2}}$ under $P^{ n}$, and $\tilde{F}^{(n)}$ be the cdf of $\frac{T_n-\|\boldsymbol{\kl}\|_1}{\sqrt{\|\boldsymbol{\kappa_2}\|_1-\|\boldsymbol{\kl}\|_2^2}}$ under $Q_{1}\times\cdots\times Q_{n}$. By Berry-Esseen Theorem \ref{thm:BerryRV},
\begin{align}\label{eq:closetonormal}
	\sup_{x\in\R}|F_n(x)-\Phi(x)|
	&\leqslant C\cdot\frac{\|\boldsymbol{\bar{\kappa}_3}\|_1}{\big(\|\boldsymbol{\kappa_2}\|_1 - \|\boldsymbol{\kl}\|_2^2\big)^{\frac{3}{2}}}=\gamma
\end{align}
and similarly $\sup_{x\in\R}|\tilde{F}^{(n)}(x)-\Phi(x)|\leqslant\gamma$.


So we find the quantities that exhibit central limit behavior. Now let's relate them with $\f$. Consider the testing problem $(P^{ n},Q_{1}\times\cdots\times Q_{n})$. For a fixed $\alpha\in[0,1]$, let the optimal rejection rule (potentially randomized) at level $\alpha$ be $\phi$.
By Neyman-Pearson lemma, $\phi$ must be a thresholding on $T_n$. An equivalent form that highlights the central limit behavior is the following:
$$\phi=
\left\{
\begin{array}{ll}
1, 		& \frac{T_n + \|\boldsymbol{\kl}\|_1}{\sqrt{\|\boldsymbol{\kappa_2}\|_1 - \|\boldsymbol{\kl}\|_2^2}}>t, \\
p, & \frac{T_n + \|\boldsymbol{\kl}\|_1}{\sqrt{\|\boldsymbol{\kappa_2}\|_1 - \|\boldsymbol{\kl}\|_2^2}}=t,\\
0, & \frac{T_n + \|\boldsymbol{\kl}\|_1}{\sqrt{\|\boldsymbol{\kappa_2}\|_1 - \|\boldsymbol{\kl}\|_2^2}}<t
\end{array}
\right.
$$
Here $t\in\R\cup\{\pm\infty\}$ and $p\in[0,1]$ are parameters uniquely determined by the condition $\E_{P^n}[\phi]=\alpha$. With this form $\E_{P^n}[\phi]$ can be easily spelled out in terms of $F_n$:
\begin{align*}
	\E_{P^n}[\phi] &= P^n\Big[\frac{T_n + \|\boldsymbol{\kl}\|_1}{\sqrt{\|\boldsymbol{\kappa_2}\|_1 - \|\boldsymbol{\kl}\|_2^2}}>t\Big] + p \cdot P^n\Big[\frac{T_n + \|\boldsymbol{\kl}\|_1}{\sqrt{\|\boldsymbol{\kappa_2}\|_1 - \|\boldsymbol{\kl}\|_2^2}}=t\Big]\\
	&= 1-F_n(t) + p \cdot [F_n(t)-F_n(t^-)].
\end{align*}
Here $F_n(t^-)$ is the left limit of the function $F_n$ at $t$.
Simple algebra yields
\[
	1-\alpha = 1- \E_{P^n}[\phi]= (1-p)F_n(t)+pF_n(t^-)
\]
and consequently the inequality
\[
	F_n(t^-)\leqslant 1-\alpha\leqslant F_n(t).
\]
For $\E_{Q_{1}\times\cdots\times Q_{n}}[\phi]$ it is helpful to introduce another letter $\tau := t-\mu$. In the theorem statement $\mu$ was defined to be $\frac{2\|\boldsymbol{\kl}\|_1}{\sqrt{\|\boldsymbol{\kappa_2}\|_1 - \|\boldsymbol{\kl}\|_2^2}}$
so we have the equivalence
\begin{equation}\label{eq:equiv}
	\frac{T_n + \|\boldsymbol{\kl}\|_1}{\sqrt{\|\boldsymbol{\kappa_2}\|_1 - \|\boldsymbol{\kl}\|_2^2}}>t
	\Leftrightarrow \frac{T_n-\|\boldsymbol{\kl}\|_1}{\sqrt{\|\boldsymbol{\kappa_2}\|_1-\|\boldsymbol{\kl}\|_2^2}}
	>
	\tau.
\end{equation}
With this extra notation we have
\begin{align*}
	1-\f(\alpha)
	=&\,\, \E_{Q_{1}\times\cdots\times Q_{n}}[\phi]\\
	=&\,\, Q_{1}\times\cdots\times Q_{n}\Big[\frac{T_n + \|\boldsymbol{\kl}\|_1}{\sqrt{\|\boldsymbol{\kappa_2}\|_1 - \|\boldsymbol{\kl}\|_2^2}}>t\Big]+\\
	&\,\, p \cdot Q_{1}\times\cdots\times Q_{n}\Big[\frac{T_n + \|\boldsymbol{\kl}\|_1}{\sqrt{\|\boldsymbol{\kappa_2}\|_1 - \|\boldsymbol{\kl}\|_2^2}}=t\Big]
	&& \text{(Def. of $\phi$)}\\
	=&\,\, Q_{1}\times\cdots\times Q_{n}\Big[\frac{T_n-\|\boldsymbol{\kl}\|_1}{\sqrt{\|\boldsymbol{\kappa_2}\|_1-\|\boldsymbol{\kl}\|_2^2}}>\tau\Big]+\\
	&\,\, p \cdot Q_{1}\times\cdots\times Q_{n}\Big[\frac{T_n-\|\boldsymbol{\kl}\|_1}{\sqrt{\|\boldsymbol{\kappa_2}\|_1-\|\boldsymbol{\kl}\|_2^2}}=\tau\Big]
	&& \text{ (By \eqref{eq:equiv}) }\\
	=&\,\, 1-\tilde{F}^{(n)}(\tau) + p \cdot [\tilde{F}^{(n)}(\tau)-\tilde{F}^{(n)}(\tau^-)].
\end{align*}
Similar algebra as before yields
\[\f(\alpha) = (1-p)\tilde{F}^{(n)}(\tau)+p\tilde{F}^{(n)}(\tau^-)\]
and hence
\[
\tilde{F}^{(n)}(\tau^-)\leqslant \f(\alpha)\leqslant \tilde{F}^{(n)}(\tau).
\]
So far we have
\begin{align}
	F_n(t^-)\leqslant 1-\alpha\leqslant F_n(t),\\
\tilde{F}^{(n)}(\tau^-)\leqslant \f(\alpha)\leqslant \tilde{F}^{(n)}(\tau).\label{eq:sana}
\end{align}
In \eqref{eq:closetonormal} we show $F_n$ and $\tilde{F}^{(n)}$ are within distance $\gamma$ to the cdf of standard normal, so
\[
	\Phi(t)-\gamma\leqslant F_n(t^-)\leqslant 1-\alpha\leqslant F_n(t)\leqslant \Phi(t)+\gamma
\]
and hence
\begin{equation}\label{eq:mina}
	\Phi^{-1}(1-\alpha-\gamma)\leqslant t\leqslant \Phi^{-1}(1-\alpha+\gamma).
\end{equation}
Using \eqref{eq:sana} and \eqref{eq:mina}, 
\begin{align*}
	\f(\alpha)&\leqslant\tilde{F}^{(n)}(\tau)\\
	&\leqslant \Phi(\tau)+\gamma\\
	&= \Phi(t-\mu)+\gamma\\
	&\leqslant\Phi(\Phi^{-1}(1-\alpha+\gamma)-\mu)+\gamma\\
	&=G_{\mu}(\alpha-\gamma)+\gamma
\end{align*}
Similarly we can show that $\f(\alpha)\geqslant G_{\mu}(\alpha+\gamma)-\gamma$. The proof is now complete.
\end{proof}

Next we prove the asymptotic version. Recall that our goal is
\asymprep*
\begin{proof}[Proof of \Cref{thm:CLT}]
	We will first construct pointwise convergence $f_{n1}\otimes f_{n2} \otimes \cdots \otimes f_{nn}\to G_{2K/s}$ and then conclude uniform convergence from a general theorem.

	Apply Berry-Esseen Theorem \ref{thm:Berry} to the $n$-th row of the triangular array and we have
	\begin{equation*}\label{eq:sandwich}
		G_{\mu_n}(\alpha+\gamma_n)-\gamma_n\leqslant f_{n1}\otimes f_{n2} \otimes \cdots \otimes f_{nn}(\alpha)\leqslant G_{\mu_n}(\alpha-\gamma_n)+\gamma_n.
	\end{equation*}
	Here $\mu_n$ and $\gamma_n$ are the counterparts of $\mu$ and $\gamma$ defined in \Cref{thm:Berry} when applied to $f_{n1},\ldots,f_{nn}$. Namely,
	\begin{align*}
		\mu_n=&\,\,\frac{2\|\boldsymbol{\kl}^{(n)}\|_1}{\sqrt{\|\boldsymbol{\kappa_2}^{(n)}\|_1 - \|\boldsymbol{\kl}^{(n)}\|_2^2}},\\
		\gamma_n=&\,\,0.56\cdot\frac{\|\boldsymbol{\bar{\kappa}_3}^{(n)}\|_1}{\big(\|\boldsymbol{\kappa_2}^{(n)}\|_1 - \|\boldsymbol{\kl}^{(n)}\|_2^2\big)^{\frac{3}{2}}}
	\end{align*}
	Here the bold vector notation with a superscript $(n)$ denotes the vector for the $n$-th row. For example, $\boldsymbol{\kl}^{(n)} = \big(\kl(f_{n1}),\ldots, \kl(f_{nn})\big)$.

	By the sandwich inequality, pointwise convergence of $f_{n1}\otimes f_{n2} \otimes \cdots \otimes f_{nn}$ follows from the two limits
	\begin{equation}\label{eq:wendy}
		G_{\mu_n}(\alpha+\gamma_n)-\gamma_n\to G_{2K/s}(\alpha), \quad G_{\mu_n}(\alpha-\gamma_n)+\gamma_n\to G_{2K/s}(\alpha).
	\end{equation}
	To prove these, let's first show $\gamma_n\to 0$ and $\mu_n\to2K/s$.

	Reformulating the assumptions in bold vector notations, we have
	\[||\boldsymbol{\kl}^{(n)}||_1\to K,\quad ||\boldsymbol{\kl}^{(n)}||_\infty\to 0,\quad ||\boldsymbol{\kappa_2}^{(n)}||_1\to s^2,\quad ||\boldsymbol{\kappa_3}^{(n)}||_1\to 0.\]
	In addition to these, it suffices to show
	\begin{equation}\label{eq:ning}
		\|\boldsymbol{\kl}^{(n)}\|_2^2\to0 ~~\text{ and }~~ \|\boldsymbol{\bar{\kappa}_3}^{(n)}\|_1\to0.
	\end{equation}
	For the first half, notice that $\|\boldsymbol{\kl}^{(n)}\|_2^2=\langle \boldsymbol{\kl}^{(n)}, \boldsymbol{\kl}^{(n)}\rangle\leqslant \|\boldsymbol{\kl}^{(n)}\|_\infty\cdot \|\boldsymbol{\kl}^{(n)}\|_1\to0$. In fact, $\|\boldsymbol{\kl}^{(n)}\|_\infty\to0$ is not only sufficient but also necessary, because $\|\boldsymbol{\kl}^{(n)}\|_\infty\leqslant\|\boldsymbol{\kl}^{(n)}\|_2$.

	Next we use the assumptions to show $\|\boldsymbol{\bar{\kappa}_3}^{(n)}\|_1\to0$.
	We need a lemma
	\begin{lemma} \label{lem:cubicmoments}
	For a trade-off function $f$,
		$$\bar{\kappa}_3(f)\leqslant \kappa_3(f) + 3\kl(f)\cdot\kappa_2(f)+3\kl^2(f)\cdot \sqrt{\kappa_2(f)}+\kl^3(f).$$
	\end{lemma}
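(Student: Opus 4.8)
The plan is to reduce everything to a pointwise inequality on the integrand $L(x):=\log|f'(x)|$ and then integrate against Lebesgue measure on $[0,1]$. First I would dispose of the trivial case: if any of $\kl(f)$, $\kappa_2(f)$, $\kappa_3(f)$ is $+\infty$ then the right-hand side is $+\infty$ and there is nothing to prove, so I may assume all three are finite. The key structural observation is that Lebesgue measure on $[0,1]$ is a probability measure, so $L$ may be treated as a random variable; its mean is $\int_0^1 L(x)\diff x = -\kl(f)$, and by Jensen's inequality $\kl(f) = -\int_0^1\log|f'(x)|\diff x \ge 0$ (this is exactly the nonnegativity of the KL divergence already noted in the text), so that $\bigl|-\kl(f)\bigr| = \kl(f)$.

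Next I would apply the triangle inequality twice. Since $\kl(f)\ge 0$ we have the pointwise bound $|L(x)+\kl(f)| \le |L(x)| + \kl(f)$, and cubing and expanding the binomial gives
\[
\bigl|L(x)+\kl(f)\bigr|^3 \;\le\; \bigl(|L(x)|+\kl(f)\bigr)^3 \;=\; |L(x)|^3 + 3\kl(f)\,|L(x)|^2 + 3\kl(f)^2\,|L(x)| + \kl(f)^3 .
\]
Integrating over $[0,1]$ and using the definitions of $\bar{\kappa}_3$, $\kappa_3$, and $\kappa_2$ yields
\[
\bar{\kappa}_3(f) \;\le\; \kappa_3(f) + 3\kl(f)\,\kappa_2(f) + 3\kl(f)^2\!\int_0^1 |L(x)|\diff x + \kl(f)^3 .
\]

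Finally I would control the remaining first-moment term by Cauchy--Schwarz on the probability space $([0,1],\mathrm{Leb})$: $\int_0^1 |L(x)|\diff x \le \bigl(\int_0^1 L(x)^2\diff x\bigr)^{1/2} = \sqrt{\kappa_2(f)}$. Substituting this into the previous display gives exactly $\bar{\kappa}_3(f) \le \kappa_3(f) + 3\kl(f)\,\kappa_2(f) + 3\kl(f)^2\sqrt{\kappa_2(f)} + \kl(f)^3$, which is the claim. There is no genuine obstacle here: the argument is a routine combination of Jensen's inequality (to fix the sign of $\kl$), the elementary inequality $|a+b|^3\le(|a|+|b|)^3$, and Cauchy--Schwarz; the only point deserving a moment's care is verifying that $\kl(f)$ equals the negative of the mean of $L$ and is nonnegative, so that the absolute value inside $\bar{\kappa}_3$ can be peeled off cleanly against $|L|+\kl(f)$.
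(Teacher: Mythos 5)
Your proof is correct and follows essentially the same route as the paper's: the triangle inequality $|L+\kl(f)|\le |L|+\kl(f)$, binomial expansion of the cube, and then bounding $\int_0^1|L|$ by $\sqrt{\kappa_2(f)}$ via Cauchy--Schwarz (which the paper phrases as Jensen's inequality on the probability space $([0,1],\mathrm{Leb})$). Your explicit remarks on the nonnegativity of $\kl(f)$ and the disposal of the infinite case are small tidy additions but not substantive departures.
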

	\begin{proof}[Proof of \Cref{lem:cubicmoments}]
		\begin{align*}
			\bar{\kappa}_3(f)=&\phantom{+}\int_0^1\big|\log Df(x)+\kl(f)\big|^3\diff x\\
			\leqslant&\phantom{+} \int_0^1\big(\big|\log Df(x)\big|+\big|\kl(f)\big|\big)^3\diff x\\
			\leqslant&\phantom{+} \int_0^1\big|\log Df(x)\big|^3\diff x+3\kl(f)\cdot\int_0^1\big|\log Df(x)\big|^2\diff x\\
			&+3\kl^2(f)\cdot\int_0^1\big|\log Df(x)\big|\diff x + \kl^3(f)\\
			\leqslant&\phantom{+} \kappa_3(f) + 3\kl(f)\cdot\kappa_2(f)+3\kl^2(f)\cdot \sqrt{\kappa_2(f)}+\kl^3(f).
		\end{align*}
		In the last step we used Jensen's inequality.
	\end{proof}
	Apply \Cref{lem:cubicmoments} to each $f_{ni}$ and sum them up:
	\begin{align*}
		\|\boldsymbol{\bar{\kappa}_3}^{(n)}\|_1 
		&\leqslant \|\boldsymbol{\kappa_3}^{(n)}\|_1 + 3\textstyle\sum_i \kl(f_{ni})\cdot \kappa_2(f_{ni})+ 3\textstyle\sum_i \kl(f_{ni})\cdot \sqrt{\kappa_2(f_{ni})} \cdot \kl(f_{ni}) + \textstyle\sum_i \kl(f_{ni})\cdot \kl^2(f_{ni}).
	\end{align*}
	Using $|\sum a_i b_i|\leqslant |\sum a_i| \cdot \max |b_i|$ and Cauchy-Schwarz inequality yields
	\begin{align*}
		\|\boldsymbol{\bar{\kappa}_3}^{(n)}\|_1 
		&\leqslant \|\boldsymbol{\kappa_3}^{(n)}\|_1 + 3 \|\boldsymbol{\kl}^{(n)}\|_\infty\cdot \|\boldsymbol{\kappa_2}^{(n)}\|_1  + 3\|\boldsymbol{\kl}^{(n)}\|_\infty \cdot \Big(\textstyle\sum_i \sqrt{\kappa_2(f_{ni})} \cdot \kl(f_{ni})\Big) + \|\boldsymbol{\kl}^{(n)}\|_\infty^2 \cdot \|\boldsymbol{\kl}^{(n)}\|_1\\
		&\leqslant \|\boldsymbol{\kappa_3}^{(n)}\|_1 + 3 \|\boldsymbol{\kl}^{(n)}\|_\infty\cdot \|\boldsymbol{\kappa_2}^{(n)}\|_1  + 3\|\boldsymbol{\kl}^{(n)}\|_\infty \cdot \sqrt{\|\boldsymbol{\kappa_2}^{(n)}\|_1 \cdot \|\boldsymbol{\kl}^{(n)}\|_2^2} + \|\boldsymbol{\kl}^{(n)}\|_\infty^2 \cdot \|\boldsymbol{\kl}^{(n)}\|_1.
	\end{align*}
	By the assumptions
	\[||\boldsymbol{\kl}^{(n)}||_1\to K,\quad ||\boldsymbol{\kl}^{(n)}||_\infty\to 0,\quad ||\boldsymbol{\kappa_2}^{(n)}||_1\to s^2,\quad ||\boldsymbol{\kappa_3}^{(n)}||_1\to 0\]
	and $\|\boldsymbol{\kl}^{(n)}\|_2^2\to0$ which we just proved,
	it's easy to see that all four terms goes to 0 as $n$ goes to infinity.

	The two limits \eqref{eq:ning} we have just proved imply $\mu_n\to2K/s$ and $\gamma_n\to 0$. Given these, convergence \eqref{eq:wendy} is easy once we notice that $G_{\mu}(\alpha)=\Phi(\Phi^{-1}(1-\alpha)-\mu)$ is continuous in both $\alpha$ and $\mu$.

	If the readers are concerned with $1-\alpha-\gamma_n$ exceeding $[0,1]$, then observe that when $\alpha\in(0,1)$, $1-\alpha-\gamma_n$ eventually ends up in $(0,1)$ where $\Phi^{-1}$ is well-defined and continuous. So the only concern is at 0 and 1. If $\alpha=0$, $\Phi^{-1}(1-\alpha-\gamma_n)\to+\infty$ so $G_{\mu_n}(0+\gamma_n)-\gamma_n\to1 = G_{2K/s}(0)$. A similar argument works for $\alpha=1$.

	Anyway, we have shown pointwise convergence. Uniform convergence is again a direct consequence of \Cref{lem:uniform}.
	The proof is now complete.
\end{proof}

Next we explain the effect of tensoring $f_{0,\delta}$.

	\begin{equation}
	f\otimes f_{0,\delta}(\alpha) =
		\left\{
		\begin{array}{ll}
		(1-\delta)\cdot f(\frac{\alpha}{1-\delta}), 		& 0\leqslant \alpha \leqslant 1-\delta \\
		0, & 1-\delta\leqslant \alpha\leqslant 1.
		\end{array}
		\right.\tag{\ref{prop:ruibbit}}
	\end{equation}
\begin{proof}[Proof of \Cref{prop:ruibbit}]
	First, $f_{0,\delta}$ is the trade-off function of two uniform distributions $f_{0,\delta} = T\big(U[0,1],U[\delta,1+\delta]\big)$.
	To see this, observe that any optimal test $\phi$ for $U[0,1]$ vs $U[\delta,1+\delta]$ 
	must have the following form:
	$$\phi(x)=
	\left\{
	\begin{array}{ll}
	1, 		&  x\in(1,1+\delta]\\
	p, & x\in[\delta,1],\\
	0, & x\in[0,\delta)
	\end{array}
	\right.
	$$
	That is, we know it must be from $U[0,1]$ if we see something in $[0,\delta)$, and must be from $U[\delta,1+\delta]$ if we see something in $(1,1+\delta]$. Otherwise the only thing we can do is random guessing. It's easy to see that the errors of such $\phi$ linearly interpolates between $(0,1-\delta)$ and $(1-\delta,0)$, i.e. type I and type II error add up to $1-\delta$. On the other hand, by definition, $f_{0,\delta}(\alpha) = \max{\{1-\delta-\alpha,0\}}$. So they indeed agree with each other.

	Now suppose $f=T(P,Q)$. By definition of tensor	product, $f\otimes f_{0,\delta} = T(P\times U[0,1], Q\times U[\delta,1+\delta])$. If the optimal test for $P$ vs $Q$ at level $\alpha$ is $\phi_\alpha$, then an optimal test for $P\times U[0,1]$ vs $Q\times U[\delta,1+\delta]$ must be of the following form:
	$$\tilde{\phi}_\alpha(\omega, x)=
	\left\{
	\begin{array}{ll}
	1, 		&  x\in(1,1+\delta]\\
	\phi_\alpha(\omega), & x\in[\delta,1],\\
	0, & x\in[0,\delta)
	\end{array}
	\right.
	$$
	The errors are
	\begin{align*}
		\E_{P\times U[0,1]}[\tilde{\phi}_\alpha] &= P\big[x\in(1,1+\delta]\big]+ P\big[x\in[\delta,1]\big]\cdot \E_{P}[\phi_\alpha(\omega)] \\
		&= 0+(1-\delta)\alpha=(1-\delta)\alpha\\
		1-\E_{Q\times U[\delta,1+\delta]}[\tilde{\phi}_\alpha] &= 1-P\big[x\in(1,1+\delta]\big]- P\big[x\in[\delta,1]\big]\cdot \E_{Q}[\phi_\alpha(\omega)] \\
		&=1- \delta- (1-\delta)\big(1-f(\alpha)\big) = (1-\delta)f(\alpha)
	\end{align*}
	This completes the proof.
\end{proof}

\DPCLTrep*
\begin{proof}[Proof of \Cref{thm:DPCLT}]
As in the main body, we first apply rules $f_{\ep,\delta} = f_{\ep,0}\otimes f_{0,\delta}$ and $f_{0,\delta_1}\otimes f_{0,\delta_2} = f_{0,1-(1-\delta_1)(1-\delta_2)}$ to get
\begin{align*}
	f_{\ep_{n1},\delta_{n1}}\otimes \cdots \otimes f_{\ep_{nn},\delta_{nn}}
	&= \big(f_{\ep_{n1},0}\otimes \cdots \otimes f_{\ep_{nn},0}\big)\otimes \big(f_{0,\delta_{n1}}\otimes \cdots \otimes f_{0,\delta_{nn}}\big)\\
	&= \big(\underbrace{f_{\ep_{n1},0}\otimes \cdots \otimes f_{\ep_{nn},0}}_{f^{(n)}}\big)\otimes f_{0,\delta^{(n)}}
\end{align*}
with $\delta^{(n)} = 1-\prod_{i=1}^n(1-\delta_{ni})$.
For the second factor, let's first prove the limit $\delta^{(n)}\to 1-\e^{-\delta}$.
Changing the product into sum, we have
\[\log (1-\delta^{(n)}) = \textstyle\sum_{i=1}^n\log(1-\delta_{ni})\]
The limit almost follows from the Taylor expansion $\log (1+x) = x+o(x)$, but we need to be a little more careful as the number of summation terms also goes to infinity. 
Since $\max_{1\leqslant i\leqslant n} \delta_{ni}\to 0$, we can assume for large $n$, $\delta_{ni}<r$ for some $r$ such that when $|x|<r$, the following Taylor expansion holds for some constant $C$:
\[|\log(1-x)+x|\leqslant Cx^2.\]
With this,
\begin{align*}
	\big|\textstyle\sum_{i=1}^n \log(1-\delta_{ni})+\delta_{ni}\big|
	\leqslant C\cdot\textstyle\sum_{i=1}^n\delta_{ni}^2\leqslant C\cdot\max_{i} \delta_{ni}\cdot \textstyle\sum_i \delta_{ni} \to 0.
\end{align*}
Therefore, $\log (1-\delta^{(n)}) = \textstyle\sum_{i=1}^n\log(1-\delta_{ni})$ has the same limit as $\sum_{i=1}^n\delta_{ni}$. In other words, $\log (1-\delta^{(n)})\to \delta$, or equivalently, $\delta^{(n)}\to 1-\e^{-\delta}$.

For a fixed $x\in[0,1]$, $f_{0,\delta^{(n)}}(x) = \max\{0,1-\delta^{(n)}-x\}$ is continuous in $\delta^{(n)}$. Hence we have the pointwise limit $f_{0,\delta^{(n)}}\to f_{0,1-\e^{-\delta}}$.

For the first factor $f^{(n)} = f_{\ep_{n1},0}\otimes \cdots \otimes f_{\ep_{nn},0}$, we will apply \Cref{thm:CLT}. Let's check the conditions.

By the continuity of the function $x\mapsto x\tanh\tfrac{x}{2}$ at 0, the assumption $\max_{1\leqslant i\leqslant n} \ep_{ni}\to 0$ implies
\[\max_{1\leqslant i \leqslant n}\kl(f_{ni})=\max_{1\leqslant i \leqslant n} \ep_{ni}\tanh\tfrac{\ep_{ni}}{2}\to0.\]
Next, we show
\[\sum_{i=1}^n \kl(f_{ni})=\sum_{i=1}^n \ep_{ni}\tanh\frac{\ep_{ni}}{2}\to K=\frac{\mu^2}{2}.\]
Preparing for the same Taylor expansion trick, let $n$ be large enough so that Taylor expansion $|\tanh x -x|\leqslant Cx^2$ applies to all $\delta_{ni}$.
\begin{align*}
	\bigg|\sum_{i=1}^n \ep_{ni}\tanh\frac{\ep_{ni}}{2} -\sum_{i=1}^n \frac{\ep_{ni}^2}{2}\bigg| &= \sum_{i=1}^n \ep_{ni}\Big|\tanh\frac{\ep_{ni}}{2}-\frac{\ep_{ni}}{2}\Big|\\
	&\le C\cdot \sum_{i=1}^n \ep_{ni}\cdot \ep_{ni}^2\\
	&\le C\cdot \max_{1\leqslant i \leqslant n} \ep_{ni}\cdot \sum_{i=1}^n\ep_{ni}^2\to0.
\end{align*}
So $\sum_{i=1}^n \ep_{ni}\tanh\frac{\ep_{ni}}{2}$ and $\sum_{i=1}^n \tfrac{\ep_{ni}^2}{2}$ has the same limit, which by our assumption is $\mu^2/2$.

For second moment, $\sum_{i=1}^n \kappa_2(f_{ni})=\sum_{i=1}^n \ep_{ni}^2$ has limit $\mu^2$. That is, $s$ in \Cref{thm:CLT} is equal to $\mu$.

For third moment,
\[\sum_{i=1}^n \kappa_3(f_{ni})=\sum_{i=1}^n \ep_{ni}^3 \leqslant \big(\max_{1\le i\le n} \ep_{ni} \big)\cdot\sum_{i=1}^n \ep_{ni}^2\to 0.\]

All four conditions of \Cref{thm:CLT} check, so we can conclude the limit of $f^{(n)}$ is the GDP trade-off function with parameter $2K/s = s = \mu$.

The last step is to combine the two limits $f^{(n)}\to G_\mu$ and $f_{0,\delta^{(n)}}\to f_{0,1-\e^{-\delta}}$. By \Cref{prop:ruibbit},

\begin{equation*}
f^{(n)}\otimes f_{0,\delta^{(n)}}(\alpha) =
	\left\{
	\begin{array}{ll}
	(1-\delta^{(n)})\cdot f^{(n)}(\frac{\alpha}{1-\delta^{(n)}}), 		& 0\leqslant \alpha \leqslant 1-\delta^{(n)}, \\
	0, & 1-\delta^{(n)}\leqslant \alpha\leqslant 1.
	\end{array}
	\right.
\end{equation*}
\Cref{lem:uniform} tells us $f^{(n)} $  uniformly converges to $ G_\mu$, so we have the limit
\[f^{(n)}(\tfrac{\alpha}{1-\delta^{(n)}})\to G_{\mu}(\tfrac{\alpha}{1-(1-\e^{-\delta})})\]
This implies the pointwise limit
\[
	f_{\ep_{n1},\delta_{n1}}\otimes \cdots \otimes f_{\ep_{nn},\delta_{nn}} = f^{(n)}\otimes f_{0,\delta^{(n)}}\to G_\mu\otimes  f_{0,1-\e^{-\delta}}.
\]
Again, uniform convergence comes for free via \Cref{lem:uniform}.
\end{proof}

The next two corollaries are Berry-Esseen style central limit theorems for the composition of pure $\ep$-DP. Given the existence of \Cref{thm:fast}, these results are relatively loose, but might be good enough if we have large $n$. Nonzero $\delta$ is allowed following a similar argument as in \Cref{thm:DPCLT}.
\begin{corollary}\label{cor:e0CLT}
	Set $t_i = \tanh \frac{\ep_i}{2}$ and
	\begin{align*}
		\mu=&\,\,\frac{2\sum_{i=1}^n \ep_it_i}{\big(\sum_{i=1}^n \ep_i^2 (1-t_i^2)\big)^{1/2}},\\
		\gamma=&\,\,0.56\cdot \frac{\sum_{i=1}^n \ep_i^3(1-t_i^4)}{\big(\sum_{i=1}^n \ep_i^2 (1-t_i^2)\big)^{3/2}}.
	\end{align*}
	Then for any $\alpha\in[0,1]$,
	\begin{equation*}
		G_\mu(\alpha+\gamma)-\gamma\leqslant f_{\ep_1,0}\otimes f_{\ep_2,0} \otimes \cdots \otimes f_{\ep_n,0}(\alpha)\leqslant G_\mu(\alpha-\gamma)+\gamma.
	\end{equation*}
\end{corollary}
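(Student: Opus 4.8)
The plan is to read off \Cref{cor:e0CLT} from the Berry--Esseen bound \Cref{thm:Berry} applied to $f_i = f_{\ep_i,0}$; the only content is evaluating the moment functionals $\kl$, $\kappa_2$, $\kappa_3$, $\bar\kappa_3$ on a single $f_{\ep,0}$ and checking that they assemble into the stated $\mu$ and $\gamma$.

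First I would record the shape of $f_{\ep,0}$. As in the caption of \Cref{fig:DPvsGDP}, $f_{\ep,0}$ is a symmetric piecewise-linear trade-off function with $f_{\ep,0}(0)=1$, equal to $1-\e^{\ep}x$ on $[0,\tfrac1{\e^\ep+1}]$ and to $\e^{-\ep}(1-x)$ on $[\tfrac1{\e^\ep+1},1]$. Hence $\log\lvert f_{\ep,0}'(x)\rvert$ is the step function taking value $\ep$ on a set of Lebesgue measure $p:=\tfrac1{\e^\ep+1}$ and value $-\ep$ on a set of measure $q:=1-p$, and one has the identities $q-p=\tanh\tfrac\ep2=:t$, $1+t=2q$, $1-t=2p$, $1-t^2=4pq$, $p^2+q^2=1-2pq$.

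Next I would compute. Integrating the step function gives $\kl(f_{\ep,0})=\ep(q-p)=\ep t$, $\kappa_2(f_{\ep,0})=\ep^2$, and $\kappa_3(f_{\ep,0})=\ep^3<\infty$ (so the hypotheses of \Cref{thm:Berry} hold); the first two already appear in the proof of \Cref{thm:DPCLT}. For the centered third functional, $\log\lvert f_{\ep,0}'\rvert+\kl(f_{\ep,0})$ equals $\ep(1+t)=2q\ep$ on the measure-$p$ set and $-\ep(1-t)=-2p\ep$ on the measure-$q$ set, so $\bar\kappa_3(f_{\ep,0})=\ep^3\bigl(p(1+t)^3+q(1-t)^3\bigr)=8\ep^3 pq\,(p^2+q^2)$, which, using $pq=\tfrac{1-t^2}4$ and $p^2+q^2=\tfrac{1+t^2}2$, collapses to $\bar\kappa_3(f_{\ep,0})=\ep^3(1-t^4)$. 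I would also note $\kappa_2(f_{\ep,0})-\kl^2(f_{\ep,0})=\ep^2(1-t^2)$. Plugging $f_i=f_{\ep_i,0}$ into \Cref{thm:Berry} with $t_i=\tanh\tfrac{\ep_i}2$, the quantities there become $\|\bkl\|_1=\sum_i\ep_it_i$, $\|\bk\|_1-\|\bkl\|_2^2=\sum_i\ep_i^2(1-t_i^2)$ and $\|\bkbar\|_1=\sum_i\ep_i^3(1-t_i^4)$; substituting into the definitions of $\mu$ and $\gamma$ in \Cref{thm:Berry} reproduces exactly the $\mu,\gamma$ of the corollary, hence the two-sided bound for $\alpha\in[\gamma,1-\gamma]$. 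Extending $G_\mu$ to $1$ on $(-\infty,0)$ and $0$ on $(1,+\infty)$ as in the footnote to \Cref{thm:Berry} removes the restriction to $[\gamma,1-\gamma]$; the degenerate case $\gamma\ge\tfrac12$ is handled crudely using $0\le f_{\ep_1,0}\otimes\cdots\otimes f_{\ep_n,0}(\alpha)\le1-\alpha$ together with $0\le G_\mu\le1$.

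There is no genuine obstacle here---this is a corollary of \Cref{thm:Berry} plus the observation that $f_{\ep,0}$ has a two-valued log-derivative. The only mildly delicate step is the algebraic simplification of $\bar\kappa_3(f_{\ep,0})$: one must spot the identities $1\pm t=2q,\,2p$ and $1-t^2=4pq$ to see that $8\ep^3 pq(p^2+q^2)$ equals $\ep^3(1-t^4)$. Everything else is routine bookkeeping.
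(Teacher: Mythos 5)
Your proposal is correct and follows essentially the same route as the paper: the paper proves this corollary by combining \Cref{thm:Berry} with \Cref{lem:epfunctionals}, whose computation of $\kl$, $\kappa_2$, $\kappa_3$, $\bar\kappa_3$ on $f_{\ep,0}$ (via the two-valued log-likelihood ratio and the identities $1\pm t = 2q, 2p$, giving $\bar\kappa_3 = 8\ep^3 pq(p^2+q^2) = \ep^3(1-t^4)$) is exactly your calculation. Your extra remarks on extending $G_\mu$ outside $[0,1]$ and on the case $\gamma\geqslant\tfrac12$ are fine and, if anything, slightly more careful than the paper's one-line proof.
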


In order to highlight the $1/\sqrt{n}$ convergence rate, we also derive an easy version in the homogeneous case.
\begin{corollary}\label{cor:e0Berry}
	Let $\mu = 2\sqrt{n}\sinh\frac{\ep}{2},
		\gamma = \frac{0.56}{\sqrt{n}}\cdot\frac{\cosh \ep}{\cosh\frac{\ep}{2}}$.
	Then
	\[
		G_\mu(\alpha+\gamma)-\gamma\leqslant f_{\ep,0}^{\otimes n}(\alpha)\leqslant G_\mu(\alpha-\gamma)+\gamma.
	\]
\end{corollary}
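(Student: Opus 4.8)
The plan is to deduce \Cref{cor:e0Berry} from the inhomogeneous Berry--Esseen bound (\Cref{cor:e0CLT}, equivalently \Cref{thm:Berry} applied to $f_1=\cdots=f_n=f_{\ep,0}$) by specializing all the $\ep_i$ to a common value $\ep$ and then simplifying the resulting $\mu$ and $\gamma$ with elementary hyperbolic identities. Write $t:=\tanh(\ep/2)$ throughout.

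First I would record the moment functionals of $f_{\ep,0}$. Since $f_{\ep,0}$ is piecewise linear with a single kink at its fixed point $x^{*}=1/(1+\e^{\ep})$, its derivative equals $-\e^{\ep}$ on $[0,x^{*}]$ and $-\e^{-\ep}$ on $[x^{*},1]$; hence $\log|f_{\ep,0}'|$ takes the value $\ep$ on a set of Lebesgue measure $1/(1+\e^{\ep})=(1-t)/2$ and $-\ep$ on a set of measure $\e^{\ep}/(1+\e^{\ep})=(1+t)/2$. A short computation then gives
\[
\kl(f_{\ep,0})=\ep t,\qquad \kappa_2(f_{\ep,0})=\ep^{2},\qquad \bar{\kappa}_3(f_{\ep,0})=\ep^{3}(1-t^{4}),
\]
the last identity using $(1+t)^{2}+(1-t)^{2}=2(1+t^{2})$. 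These are precisely the summands appearing in \Cref{cor:e0CLT} when $\ep_i\equiv\ep$, so one may alternatively cite \Cref{cor:e0CLT} directly and skip this step.

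Substituting $\ep_i\equiv\ep$ into the formulas of \Cref{cor:e0CLT} (or \Cref{thm:Berry}) gives
\[
\mu=\frac{2n\ep t}{\sqrt{n\ep^{2}(1-t^{2})}}=2\sqrt{n}\,\frac{t}{\sqrt{1-t^{2}}},\qquad
\gamma=\frac{0.56\,n\ep^{3}(1-t^{4})}{\bigl(n\ep^{2}(1-t^{2})\bigr)^{3/2}}=\frac{0.56}{\sqrt{n}}\cdot\frac{1-t^{4}}{(1-t^{2})^{3/2}}.
\]
It remains to simplify. From $1-t^{2}=1/\cosh^{2}(\ep/2)$ one gets $t/\sqrt{1-t^{2}}=\sinh(\ep/2)$, hence $\mu=2\sqrt{n}\sinh(\ep/2)$; and writing $1-t^{4}=(1-t^{2})(1+t^{2})$ together with $1+t^{2}=1+\tanh^{2}(\ep/2)=\cosh\ep/\cosh^{2}(\ep/2)$ (which follows from $\cosh\ep=\cosh^{2}(\ep/2)+\sinh^{2}(\ep/2)$) yields $(1-t^{4})/(1-t^{2})^{3/2}=(1+t^{2})\cosh(\ep/2)=\cosh\ep/\cosh(\ep/2)$, hence $\gamma=\tfrac{0.56}{\sqrt n}\cdot\tfrac{\cosh\ep}{\cosh(\ep/2)}$. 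Feeding these values back into the conclusion of \Cref{cor:e0CLT} is exactly the stated sandwich inequality, with the convention inherited from \Cref{thm:Berry} that $G_\mu$ is extended by $1$ on $(-\infty,0)$ and by $0$ on $(1,+\infty)$, so that $\alpha\pm\gamma$ causes no trouble for any $\alpha\in[0,1]$.

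There is no genuine obstacle here: the analytic content (the Neyman--Pearson and Berry--Esseen argument) is already packaged in \Cref{thm:Berry}, and what remains is bookkeeping. The only points deserving care are getting the Lebesgue measures of the two level sets of $\log|f_{\ep,0}'|$ correct (these are the weights of the underlying Bernoulli log-likelihood ratio) and tracking the domain-extension convention for $G_\mu$ so that the final inequality holds for all $\alpha\in[0,1]$ rather than only on $[\gamma,1-\gamma]$.
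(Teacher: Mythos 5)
Your proof is correct and follows essentially the same route as the paper: the paper also obtains this corollary by computing the moment functionals of $f_{\ep,0}$ (Lemma \ref{lem:epfunctionals}), plugging $\ep_i\equiv\ep$ into Theorem \ref{thm:Berry} via Corollary \ref{cor:e0CLT}, and simplifying with the identities $1-t^2=\operatorname{sech}^2(\ep/2)$ and $1+t^2=\cosh\ep/\cosh^2(\ep/2)$. Your explicit handling of the domain-extension convention for $G_\mu$ is a minor point the paper leaves implicit, but otherwise the two arguments coincide.
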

To see $\gamma = O(1/\sqrt{n})$, note that for the limit to be meaningful, $\ep$ has to be $o(1)$, which implies $\frac{\cosh \ep}{\cosh\frac{\ep}{2}}\approx 1$.

Both of them rely on evaluating the moment functionals on $f_{\ep,0}$. We summarize the results in the following lemma:
\begin{lemma} \label{lem:epfunctionals}
	Let $t = \tanh \frac{\ep}{2}$. Then
	\begin{align*}
		\kl(f_{\ep,0}) = \ep t, \quad \kappa_2(f_{\ep,0}) = \ep^2, \quad \kappa_3(f_{\ep,0}) = \ep^3, \quad \bar{\kappa}_3(f_{\ep,0}) = \ep^3(1-t^4).
	\end{align*}
\end{lemma}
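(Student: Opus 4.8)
\textbf{Proof proposal for \Cref{lem:epfunctionals}.}
The plan is to use the explicit piecewise-linear form of $f_{\ep,0}$ and just read off the four integrals. Setting $\delta=0$ in \eqref{eq:fed} gives $f_{\ep,0}(\alpha)=\max\{0,\,1-\e^{\ep}\alpha,\,\e^{-\ep}(1-\alpha)\}$; the first step is to observe that the clause ``$0$'' never binds on $[0,1]$ (since $\e^{-\ep}(1-\alpha)\geqslant 0$ there), so $f_{\ep,0}$ is genuinely two line segments. Solving $1-\e^{\ep}\alpha=\e^{-\ep}(1-\alpha)$ yields the single breakpoint $p=\tfrac{1}{\e^{\ep}+1}$, so that $f_{\ep,0}$ has slope $-\e^{\ep}$ on $(0,p)$ and slope $-\e^{-\ep}$ on $(p,1)$. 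Consequently $\log|f_{\ep,0}'(x)|$ equals $\ep$ on the interval $(0,p)$ (of length $p$) and equals $-\ep$ on the interval $(p,1)$ (of length $1-p$). This reduces every integral in the definitions of $\kl,\kappa_2,\kappa_3,\bar\kappa_3$ to a two-term weighted sum.

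The second step is the bookkeeping. For $\kl$ we get $\kl(f_{\ep,0})=-\big(p\ep+(1-p)(-\ep)\big)=\ep(1-2p)$, and the identity $1-2p=\tfrac{\e^{\ep}-1}{\e^{\ep}+1}=\tanh\tfrac{\ep}{2}=t$ gives $\kl(f_{\ep,0})=\ep t$. Since $\log^2|f_{\ep,0}'|\equiv\ep^2$ and $\big|\log|f_{\ep,0}'|\big|^3\equiv\ep^3$ on all of $(0,1)$ (the sign is irrelevant here), integrating immediately gives $\kappa_2(f_{\ep,0})=\ep^2$ and $\kappa_3(f_{\ep,0})=\ep^3$.

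The third step is $\bar\kappa_3$. On $(0,p)$ the integrand $\big|\log|f_{\ep,0}'(x)|+\kl(f_{\ep,0})\big|^3$ is $|\ep+\ep t|^3=\ep^3(1+t)^3$, and on $(p,1)$ it is $|-\ep+\ep t|^3=\ep^3(1-t)^3$ (using $0\le t<1$). Hence $\bar\kappa_3(f_{\ep,0})=\ep^3\big[p(1+t)^3+(1-p)(1-t)^3\big]$. Now substitute $p=\tfrac{1-t}{2}$ and $1-p=\tfrac{1+t}{2}$ (both of which follow from $p=\tfrac{1}{\e^\ep+1}$ and the definition of $t$); factoring out $\tfrac{(1-t)(1+t)}{2}$ leaves $(1-t^2)\cdot\tfrac{(1+t)^2+(1-t)^2}{2}=(1-t^2)(1+t^2)=1-t^4$, giving $\bar\kappa_3(f_{\ep,0})=\ep^3(1-t^4)$.

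There is essentially no genuine obstacle here: the whole lemma is a computation once the piecewise structure is in hand. The only points that warrant a sentence of care are (i) checking that the ``$\max$ with $0$'' is inactive and that $f_{\ep,0}(0)=1$ so that the functionals are finite (this is also where $\delta=0$ is used, in contrast to $f_{\ep,\delta}$ with $\delta>0$ where $\kl$ diverges), and (ii) the small algebraic simplification of the cubic in $t$ at the end. I would also remark, as an optional consistency check, that $\kappa_2-\kl^2=\ep^2(1-t^2)$ and $t=\tanh(\ep/2)$, $\ep t=\ep\tanh(\ep/2)$ and $\ep^2(1-t^2)=\ep^2\operatorname{sech}^2(\ep/2)$, which is exactly what feeds into the $\mu$ and $\gamma$ of \Cref{cor:e0CLT} and \Cref{cor:e0Berry}.
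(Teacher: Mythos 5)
Your proposal is correct and is essentially the paper's own computation: the paper evaluates the same four quantities as moments of the two-point log-likelihood ratio taking values $\mp\ep$ with probabilities $\tfrac{\e^\ep}{\e^\ep+1}$ and $\tfrac{1}{\e^\ep+1}$, which is exactly your direct integration of the piecewise-constant $\log|f_{\ep,0}'|$ over the two linear pieces (your $p$ is the paper's $q$). The only cosmetic difference is the final algebra for $\bar\kappa_3$, which the paper writes as $8\ep^3 pq(p^2+q^2)$ before substituting $p=\tfrac12(1+t)$, $q=\tfrac12(1-t)$, whereas you factor the cubic directly; both yield $\ep^3(1-t^4)$.
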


\begin{proof}[Proof of \Cref{lem:epfunctionals}]
	For convenience, let $p = \frac{\e^{\ep}}{\e^{\ep}+1}$ and $q = 1-p = \frac{1}{\e^{\ep}+1}$. We have
	\begin{align*}
		p &= \frac{\e^{\ep}}{\e^{\ep}+1} = \frac{\e^\frac{\ep}{2}}{\e^\frac{\ep}{2}+\e^{-\frac{\ep}{2}}} = \frac{\cosh \frac{\ep}{2}+\sinh \frac{\ep}{2}}{2\cosh \frac{\ep}{2}} = \frac{1}{2}(1+t)\\
		q &= \frac{1}{\e^{\ep}+1} = \frac{\e^{-\frac{\ep}{2}}}{\e^\frac{\ep}{2}+\e^{-\frac{\ep}{2}}} = \frac{\cosh \frac{\ep}{2}-\sinh \frac{\ep}{2}}{2\cosh \frac{\ep}{2}} = \frac{1}{2}(1-t).
	\end{align*}
	The log likelihood ratio is $-\ep$ with probability $p$ and $\ep$ with probability $q$, so
	\begin{align*}
		\kl(f_{\ep,0}) &= -[(-\ep)\cdot p + \ep\cdot q] = \ep(p-q) = \ep t,\\
		\kappa_2(f_{\ep,0}) &= \ep^2(p+q) = \ep^2,\\
		\kappa_3(f_{\ep,0}) &= \ep^3(p+q) = \ep^3
	\end{align*}
	and
	\begin{align*}
		\bar{\kappa}_3(f_{\ep,0}) &= p|-\ep+\ep(p-q)|^3+ q|\ep+\ep(p-q)|^3 \\
		&= 8\ep^3\cdot pq(p^2+q^2)\\
		&=2\ep^3\cdot (1-t^2)\cdot \frac{1}{2}(1+t^2)\\
		&=\ep^3(1-t^4).
	\end{align*}
\end{proof}

\begin{proof}[Proof of \Cref{cor:e0CLT} ]
	Follows directly from \Cref{thm:Berry} and \Cref{lem:epfunctionals}.
\end{proof}

\begin{proof}[Proof of \Cref{cor:e0Berry}]
	All we need to do is to simplify the expression of $\mu$ and $\gamma$ assuming all $\ep_i = \ep$ and hence $t_i = t = \tanh \frac{\ep}{2}$. 
	\begin{align*}
		\mu&=\frac{2n \ep t}{\sqrt{n\ep^2 (1-t^2)}}\\
		&=2\sqrt{n}\cdot\frac{t}{1-t^2} = 2\sqrt{n}\sinh\frac{\ep}{2}\\
		\gamma&=0.56\cdot \frac{n \ep^3(1-t^4)}{\big(n \ep^2 (1-t^2)\big)^{3/2}}\\
		&=\frac{0.56}{\sqrt{n}}\cdot\frac{1+t^2}{\sqrt{1-t^2}}\\
		&=\frac{0.56}{\sqrt{n}}\cdot\frac{\cosh \ep}{\cosh\frac{\ep}{2}}.
	\end{align*}
	The proof is complete.
\end{proof}

\bigskip


\section{Proof of \Cref{thm:fast}}
\label{app:fast}
This section is devoted to the proof of \Cref{thm:fast}. 
Since we always assume $\delta=0$, it is dropped from the subscript and we use $f_\ep$ to denote $f_{\ep,0}$. As in the proof of \Cref{thm:Berry}, the first step is to express $f_\ep^{\otimes n}$ in the form
$$1-f_\ep^{\otimes n}(\alpha) = F_n\big[x_n-F_n^{-1}(1-\alpha)\big]$$
with $F_n\to\Phi$ and $x_n\to 1$. Then we show both convergences have rate $1/n$. 
\begin{proof}[Proof of \Cref{thm:fast}]
	Let's find $F_n$ first. Fix $\ep$ and let $p = \tfrac{1}{1+e^\ep}, q = 1-p = p\cdot e^\ep$. Recall that $f_\ep^{\otimes n} = T\big(B(n,p),B(n,q)\big)$ and we know that it is the linear interpolation of points given by binomial tails.
	The main goal here is to avoid the linear interpolation.
	
	For the simple hypothesis testing problem $B(n,p)$ vs $B(n,q)$, we know via Neyman-Pearson that every optimal rejection rule $\phi$ must have the following form:
	$$
		\phi(x)=\left\{
		\begin{array}{ll}
		1, 		& \text{if } x>k, \\
		0, 		& \text{if } x<k, \\
		1-c, 		& \text{if } x=k.
		\end{array}
		\right.
	$$
	It rejects (i.e. decides that the sample comes from $B(n,q)$) if it sees something greater than $k$, accepts if it sees something smaller than $k$, and reject with probability $1-c$ if it sees $k$. Such tests are parameterized by $(k,c)$ where $k\in\{0,1,\ldots, n\}$ and $c\in[0,1)$.

	The corresponding type I and type II errors are denoted by $\alpha_{(k,c)}$ and $\beta_{(k,c)}$. 
	Let $X\sim B(n,p)$ and $Y\sim U[0,1]$ be independent random variables. We have
	\begin{align*}
		\alpha_{(k,c)} &= \E_{x\sim B(n,p)}[\phi(x)] = \E[\phi(X)]\\
		&= \P[X>k] + (1-c)\P[X=k] \\
		&= \P[X>k] + \P[Y>c]\cdot \P[X=k] \\
		&= \P[X+Y>k+c]\\
		\beta_{(k,c)}&= \E_{x\sim B(n,q)}[1-\phi(x)]= \E[1-\phi(n-X)]\\
		&=\P[n-X<k]+c\cdot \P[n-X = k]\\
		&=\P[X>n-k]+\P[Y>1-c]\cdot \P[X = n- k]\\
		&= \P[X+Y>n+1-k-c]
	\end{align*}
	$X+Y$ supports on $[0,n+1]$ and has a piecewise constant density. As a consequence, the cdf $F_{X+Y}$ is a bijection between $[0,n+1]$ and $[0,1]$. So for a fixed type I error $\alpha\in[0,1]$, the optimal testing rule $(k,c)$ is uniquely determined by the formula
	$$k+c = F_{X+Y}^{-1} (1-\alpha).$$
	And we have for the trade-off function:
	$$1-f_\ep^{\otimes n}(\alpha) = F_{X+Y}\big( n+1 - F_{X+Y}^{-1} (1-\alpha)\big).$$
	Now we proceed to write $F_{X+Y}$ in a form that reveals its central limit behavior. First notice $\E[X+Y] = np+\tfrac{1}{2}, \Var[X+Y] = \Var[X] + \Var[Y] = npq+\tfrac{1}{12}$. For simplicity denote this variance by $\sigma^2$. Let $F_n$ be the normalized cdf of $X+Y$, i.e.
	$$F_n(x) = P\Big[\tfrac{X+Y - \E[X+Y]}{\sqrt{\Var[X+Y]}}\leqslant x\Big] = F_{X+Y}\Big[np+\tfrac{1}{2} + x\sigma\Big].$$
	Simple algebra yields
	\begin{equation}\label{eq:fast1}
	1-f_\ep^{\otimes n}(\alpha) = F_n\Big[\tfrac{n(q-p)}{\sigma}-F_n^{-1}(1-\alpha)\Big].
	\end{equation}
	It's easy to show that $\tfrac{n(q-p)}{\sigma}\to 1$ and $F_n\to\Phi$ pointwise. However, we need to show that the convergence rates are both $1/n$, which is technically involved, especially for the convergence of $F_n$. In view of this, we pack the conclusions into the following lemmas, and provide the proofs later:
	\begin{lemma} \label{lem:pqlimit}
	With $\ep = 1/\sqrt{n}$ and $p,q,\sigma$ defined as above,
	\[\frac{n(q-p)}{\sigma} = 1-\frac{1}{8n}+o(n^{-1}).\]
	\end{lemma}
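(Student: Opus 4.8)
The plan is to reduce the ratio to elementary hyperbolic functions of $\epsilon/2$ and then Taylor expand in powers of $1/n$. First I would record the closed forms $q-p = \tanh(\epsilon/2)$ and $pq = \frac{1}{4\cosh^2(\epsilon/2)}$, both immediate from $p = \frac{1}{1+\e^\epsilon}$ and $q = \frac{\e^\epsilon}{1+\e^\epsilon}$: indeed $q-p = \frac{\e^\epsilon-1}{\e^\epsilon+1}$ and $pq = \frac{\e^\epsilon}{(\e^\epsilon+1)^2} = \frac{1}{(\e^{\epsilon/2}+\e^{-\epsilon/2})^2}$. Since $\sigma^2 = npq + \tfrac1{12}$, this turns the target quantity into
\[
\frac{n(q-p)}{\sigma} = \frac{n\,\tanh(\epsilon/2)}{\sqrt{\dfrac{n}{4\cosh^2(\epsilon/2)} + \dfrac1{12}}}.
\]

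Next I would substitute $\epsilon = 1/\sqrt n$, so that $\epsilon/2 = \tfrac1{2\sqrt n}$, and expand numerator and denominator separately through order $1/n$. From $\tanh x = x - \tfrac{x^3}{3} + O(x^5)$ one gets $n\tanh(\epsilon/2) = \tfrac{\sqrt n}{2} - \tfrac1{24\sqrt n} + O(n^{-3/2}) = \tfrac{\sqrt n}{2}\bigl(1 - \tfrac1{12n} + O(n^{-2})\bigr)$. From $\cosh^2 x = 1 + x^2 + O(x^4)$ one gets $\cosh^{-2}(\epsilon/2) = 1 - \tfrac1{4n} + O(n^{-2})$, hence
\[
\sigma^2 = \frac n4 - \frac1{16} + \frac1{12} + O(n^{-1}) = \frac n4 + \frac1{48} + O(n^{-1}) = \frac n4\Bigl(1 + \frac1{12n} + O(n^{-2})\Bigr),
\]
so that $\sigma^{-1} = \tfrac{2}{\sqrt n}\bigl(1 - \tfrac1{24n} + O(n^{-2})\bigr)$.

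Finally I would multiply the two expansions:
\[
\frac{n(q-p)}{\sigma} = \Bigl(1 - \tfrac1{12n} + O(n^{-2})\Bigr)\Bigl(1 - \tfrac1{24n} + O(n^{-2})\Bigr) = 1 - \frac1{8n} + O(n^{-2}),
\]
since $\tfrac1{12}+\tfrac1{24} = \tfrac18$; this is in fact slightly stronger than the claimed $o(n^{-1})$. There is no genuine obstacle here beyond careful bookkeeping: the only thing to watch is keeping enough terms --- obtaining the coefficient of $n^{-1}$ in the final ratio requires retaining the $O(n^{-3/2})$ term in the numerator and the $O(n^{-1})$ term inside $\sigma^2$ --- and all the remainder estimates are harmless because $\tanh$ and $\cosh$ are analytic at $0$. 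An equivalent packaging that avoids hyperbolic notation is to set $u = \tanh(\epsilon/2)$, use $\cosh^2(\epsilon/2) = (1-u^2)^{-1}$ to write the ratio as $\frac{nu}{\sqrt{n(1-u^2)/4 + 1/12}}$, and expand in $u = \tfrac1{2\sqrt n} + O(n^{-3/2})$; the arithmetic is identical.
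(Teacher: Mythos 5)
Your proof is correct and follows essentially the same route as the paper's: reduce $q-p$ and $pq$ to hyperbolic functions of $\epsilon/2$ (the paper writes $4pq = 1-\tanh^2(\epsilon/2)$, which is your $\operatorname{sech}^2$ identity), then Taylor expand numerator and denominator to order $1/n$ and multiply. The bookkeeping and the resulting coefficients ($-\tfrac{1}{12n}$ and $-\tfrac{1}{24n}$ combining to $-\tfrac{1}{8n}$) match the paper exactly, and your $O(n^{-2})$ remainder is a harmless strengthening of the stated $o(n^{-1})$.
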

	As a consequence, there exists $C>0$ such that
	\begin{equation}\label{eq:fast2}
	\big|\tfrac{n(q-p)}{\sigma}-1\big|\leqslant\tfrac{C}{n}.
	\end{equation}
	\begin{lemma}\label{prop:ch.f.}
		There is a positive number $C$ such that $|F_n(x)-\Phi(x)|\leqslant \tfrac{C}{n}$ holds for $n\geqslant 2$.
	\end{lemma}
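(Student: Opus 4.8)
The plan is to prove the lemma by the characteristic-function method via Esseen's smoothing inequality, the point being that convolving the binomial law with $Y\sim U[0,1]$ supplies for free exactly the ``continuity correction'' that upgrades the classical $O(1/\sqrt n)$ Berry--Esseen rate to $O(1/n)$. Write $\mu_n=np+\tfrac12=\E[X+Y]$ and recall $\sigma^2=npq+\tfrac1{12}=\Var[X+Y]$, so that the characteristic function of $Z_n=(X+Y-\mu_n)/\sigma$ factors as
\[
\psi_n(t)=\e^{-it\mu_n/\sigma}\,\big(1-p+p\e^{it/\sigma}\big)^{n}\cdot\frac{\e^{it/\sigma}-1}{it/\sigma}.
\]
I would apply Esseen's lemma in the form $\sup_x|F_n(x)-\Phi(x)|\leqslant\tfrac1\pi\int_{-T}^{T}\frac{|\psi_n(t)-\e^{-t^2/2}|}{|t|}\,\diff t+\tfrac{C_0}{T}$ for an absolute constant $C_0$, then take $T=n$ so the last term is $O(1/n)$, reducing everything to showing that the integral is $O(1/n)$. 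I would split $[-T,T]$ into a low-frequency range $|t|\leqslant\epsilon_0\sqrt n$ and a high-frequency range $\epsilon_0\sqrt n\leqslant|t|\leqslant n$ for a small absolute constant $\epsilon_0$; on the latter $\e^{-t^2/2}$ is negligible after integration, so only $|\psi_n|$ matters there.

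On the low-frequency range I would use the cumulant expansions $\log(1-p+p\e^{is})=ips-\tfrac{pq}{2}s^2-\tfrac{i\,pq(q-p)}{6}s^3+O(s^4)$ and $\log\frac{\e^{is}-1}{is}=\tfrac i2 s-\tfrac1{24}s^2+O(s^4)$ with $s=t/\sigma$. The linear terms cancel against $-it\mu_n/\sigma$, the quadratic terms combine to exactly $-\tfrac{t^2}{2}\cdot\frac{npq+1/12}{\sigma^2}=-\tfrac{t^2}{2}$ by the very choice $\sigma^2=npq+\tfrac1{12}$, and the residual cubic term is $-\tfrac i6\frac{npq(q-p)}{\sigma^3}t^3$. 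The key arithmetic is that $\epsilon=\mu/\sqrt n\to0$, so $q-p=\tanh(\epsilon/2)=\Theta(1/\sqrt n)$ while $npq=\Theta(n)$ and $\sigma^3=\Theta(n^{3/2})$, whence the binomial skewness $\frac{npq(q-p)}{\sigma^3}=\Theta(1/n)$. Collecting remainders gives $\log\psi_n(t)=-\tfrac{t^2}2+O\big(\tfrac{|t|^3+|t|^4}{n}\big)$ for $|t|\leqslant\epsilon_0\sqrt n$ with $\epsilon_0$ small, hence $\psi_n(t)=\e^{-t^2/2}\big(1+O(\tfrac{|t|^3+|t|^4}{n})\big)$ for $|t|\leqslant n^{1/4}$ and $|\psi_n(t)|\leqslant\e^{-t^2/4}$ for $n^{1/4}\leqslant|t|\leqslant\epsilon_0\sqrt n$; integrating, $\int_{|t|\leqslant\epsilon_0\sqrt n}\frac{|\psi_n(t)-\e^{-t^2/2}|}{|t|}\diff t\leqslant\frac Cn\int_{\mathbb R}(t^2+|t|^3)\e^{-t^2/2}\diff t+(\text{exp.\ small})=O(1/n)$.

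The high-frequency range will be the main obstacle. There one must show $\int_{\epsilon_0\sqrt n\leqslant|t|\leqslant n}\frac{|\psi_n(t)|}{|t|}\diff t=O(1/n)$. Using $|1-p+p\e^{is}|^n\leqslant\e^{-2npq\sin^2(s/2)}$ and $\big|\frac{\e^{is}-1}{is}\big|=\frac{2|\sin(s/2)|}{|s|}$, and substituting $s=t/\sigma$ (so $|s|$ runs over roughly $[2\epsilon_0,2\sqrt n]$, since $\sigma\sim\sqrt n/2$), the difficulty is that $\e^{-2npq\sin^2(s/2)}$ is $2\pi$-periodic in $s$ and is non-negligible only in windows of width $O(1/\sqrt n)$ around the ``revival'' points $s=2\pi k$. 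The crucial free gain is that the uniform factor $\frac{2\sin(s/2)}{s}$ \emph{vanishes} at exactly those points: near $s=2\pi k$ it is $O(|s-2\pi k|/|k|)$ while the binomial factor is $\leqslant\e^{-cn(s-2\pi k)^2}$, so the $k$-th window contributes at most $\frac{C}{k^2}\int_{\mathbb R}|r|\,\e^{-cnr^2}\diff r=O(1/(k^2 n))$, and summing the $O(\sqrt n)$ relevant windows gives $\sum_{k\geqslant1}O(1/(k^2n))=O(1/n)$ (the stray arc near $k=0$ contributes only $\e^{-cn}$). This realizes precisely the ``smoothing effect that comes for free'' invoked in \Cref{thm:fast}. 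Assembling the two ranges in the Esseen bound with $T=n$ gives $\sup_x|F_n(x)-\Phi(x)|=O(1/n)$ for all large $n$ (where $pq\geqslant\tfrac18$ and the Taylor remainders are controlled), and for the finitely many small $n$ the trivial bound $\sup_x|F_n-\Phi|\leqslant1\leqslant C/n$ completes the proof after enlarging $C$.
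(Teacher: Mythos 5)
Your proposal is correct, and its analytic heart coincides with the paper's: the normalization $\sigma^2=npq+\tfrac1{12}$ makes the second cumulants of $\psi_n$ and $\varphi$ cancel exactly at low frequency; the third cumulant is $O(1/n)$ because $q-p=\tanh(\epsilon/2)=\Theta(n^{-1/2})$ while $\sigma^3=\Theta(n^{3/2})$; and at high frequency the uniform factor vanishes precisely at the revival points $s=2\pi k$ of the binomial characteristic function, so each window contributes $O(1/(k^2n))$ and the sum is $O(1/n)$. These are exactly the contents of \Cref{lem:I1} and \Cref{lem:I2} (the paper organizes the window sum as $\bigl(\tfrac{4}{r^2}+\sum_k\tfrac1{k^2\pi^2}\bigr)J$ with $J=\int_0^\pi(1-4pq\sin^2t)^{n/2}\sin t\,dt\leqslant 6/n$, evaluated by substituting $x=\cos t$ — the same mechanism as your Gaussian-window bound). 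The one genuine divergence is the inversion step. The paper cannot invert $\varphi_n$ and $\varphi$ separately ($\varphi_n\notin L^1$ and $\varphi_n(t)/t$ is not integrable near $0$), so it proves the exact identity of \Cref{lem:chf} for the \emph{difference} $F_n-\Phi$ via a Riemann--Lebesgue argument, and must then also control the tail of $\varphi_n$ out to $|t|=\infty$ (which its \Cref{lem:I2} does, thanks to the $1/t^2$ decay). You instead invoke Esseen's smoothing inequality truncated at $T=n$, which contributes a harmless $C_0/T=O(1/n)$ boundary term and spares you both the integrability discussion and the portion $|t|>n$; the price is importing the constant in Esseen's lemma rather than working with an exact identity. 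Both routes are sound and yield the same $O(1/n)$ rate.
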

	Since $\Phi(x)\geqslant F_n(x)-\frac{C}{n}$, setting $x = F_n^{-1}(1-\alpha)$ yields
	\[\Phi\big(F_n^{-1}(1-\alpha)\big)\geqslant F_n\big(F_n^{-1}(1-\alpha)\big)-\tfrac{C}{n} = 1-\alpha-\tfrac{C}{n}.\]
	Hence
	\begin{equation}\label{eq:fast3}
		F_n^{-1}(1-\alpha)\geqslant \Phi^{-1}\big(1-\alpha-\tfrac{C}{n}\big).
	\end{equation}
	With (\ref{eq:fast1}--\ref{eq:fast3}) and \Cref{prop:ch.f.} we have
	\begin{align*}
		1-f_\ep^{\otimes n}(\alpha) &= F_n\Big[\tfrac{n(q-p)}{\sigma}-F_n^{-1}(1-\alpha)\Big]\\
		&\leqslant \Phi\Big[\tfrac{n(q-p)}{\sigma}-F_n^{-1}(1-\alpha)\Big]+\tfrac{C}{n}\\
		&\leqslant \Phi\Big[1+\tfrac{C}{n}-F_n^{-1}(1-\alpha)\Big]+\tfrac{C}{n}\\
		&\leqslant \Phi\Big[1+\tfrac{C}{n}-\Phi^{-1}(1-\alpha-\tfrac{C}{n})\Big]+\tfrac{C}{n}.
	\end{align*}
	The function $\Phi$ is $\frac{1}{\sqrt{2\pi}}$-Lipschitz, so
	\[1-f_\ep^{\otimes n}(\alpha)\leqslant \Phi\Big[1-\Phi^{-1}(1-\alpha-\tfrac{C}{n})\Big]+\tfrac{1}{\sqrt{2\pi}}\cdot\tfrac{C}{n}+\tfrac{C}{n}.\]
	By blowing up the current $C$ and using the symmetry of standard normal, we have
	\begin{align*}
		f_\ep^{\otimes n}(\alpha) &\geqslant 1-\Phi\Big[1-\Phi^{-1}(1-\alpha-\tfrac{C}{n})\Big]-\tfrac{C}{n}\\
		&= \Phi\Big[\Phi^{-1}(1-\alpha-\tfrac{C}{n}) - 1 \Big]- \tfrac{C}{n}\\
		& = G_1(\alpha+\tfrac{C}{n})-\tfrac{C}{n}.
	\end{align*}
	Similarly, we can show the upper bound
	\[f_\ep^{\otimes n}(\alpha)\leqslant G_1(\alpha-\tfrac{C}{n})+\tfrac{C}{n}.\]
	The proof is now complete.
\end{proof}
Next we show \Cref{lem:pqlimit} and \Cref{prop:ch.f.}.
\begin{proof}[Proof of \Cref{lem:pqlimit}]
	The proof is basically careful Taylor expansion. We will frequently use the assumption that $\ep = 1/\sqrt{n}$. First we factor the objective as
	\begin{align*}
		\frac{n(q-p)}{\sigma} &= 2\sqrt{n}(q-p)\cdot\frac{\sqrt{n}}{2\sigma} = \frac{2(q-p)}{\ep}\cdot\frac{\sqrt{n}}{2\sigma}
	\end{align*}
	and consider Taylor expansions of the two factors separately.
	For the first factor, recall that
	\[q-p = \frac{e^\ep-1}{e^\ep+1} = \frac{e^{\tfrac{\ep}{2}}-e^{-\tfrac{\ep}{2}}}{e^{\tfrac{\ep}{2}}+e^{-\tfrac{\ep}{2}}} = \tanh \tfrac{\ep}{2}.\]
	Using the Taylor expansion $\tanh x  = x-x^3/3+o(x^4)$, we have
	\begin{equation}\label{eq:pq1}
		\tfrac{2(q-p)}{\ep} = \tanh \tfrac{\ep}{2} \,\big/\, \tfrac{\ep}{2} = 1-\tfrac{1}{3}(\tfrac{\ep}{2})^2+o(\ep^3) = 1-\tfrac{1}{12n} + o(n^{-3/2}).
	\end{equation}
	For the second one, since $p+q=1$, we have $4pq=(p+q)^2-(p-q)^2 = 1-(q-p)^2$.
	A shorter expansion shows $q-p = \tanh\tfrac{\ep}{2} = \tfrac{\ep}{2} +o(\ep^2)$, and hence
	\begin{align*}
		4pq= 1-\big(\tfrac{\ep}{2}+o(\ep^2)\big)^2 = 1-\tfrac{\ep^2}{4}+o(\ep^3)= 1-\tfrac{1}{4n}+o(n^{-3/2}).
	\end{align*}
	Recall that $\sigma$ is defined to be $\sqrt{npq+\frac{1}{12}}$. Using the above expansion of $4pq$, we have
	\begin{align*}
		\frac{\sqrt{n}}{2\sigma} &= \sqrt{\frac{n}{4\sigma^2}} = \sqrt{\frac{n}{4npq+\tfrac{1}{3}}} = \big(4pq+\tfrac{1}{3n}\big)^{-1/2} = \big(1+\tfrac{1}{12n}+o(n^{-3/2})\big)^{-1/2}
	\end{align*}
	Since $(1+x)^{-1/2} = 1-\tfrac{1}{2}x+o(x)$, we have
	\begin{equation}\label{eq:pq2}
		\frac{\sqrt{n}}{2\sigma} = 1-\tfrac{1}{2}\big(\tfrac{1}{12n}+o(n^{-3/2})\big)+o(n^{-1}) = 1-\tfrac{1}{24n}+o(n^{-1}).
	\end{equation}
	Combining the expansions \eqref{eq:pq1} and \eqref{eq:pq2},
	\begin{align*}
		\frac{n(q-p)}{\sigma} &= \frac{2(q-p)}{\ep}\cdot\frac{\sqrt{n}}{2\sigma}\\
		&=\Big(1-\frac{1}{12n}+o(n^{-3/2})\Big)\cdot\Big(1-\frac{1}{24n}+o(n^{-1})\Big)\\
		&=1-\frac{1}{8n}+o(n^{-1}).
	\end{align*}
	The proof is complete.
\end{proof}
Then we move on to the more challenging \Cref{prop:ch.f.}.
\begin{proof}[Proof of \Cref{prop:ch.f.}]
	The proof is inspired by Problem 6 on page 305 of \cite{uspensky1937introduction}. Though involved, the idea is not hard: reduce the bound on cdfs to a bound on characteristic functions (ch.f. for short) by an appropriate Fourier inversion, then control the ch.f. by careful Taylor expansion.

	Recall that $\ep,p,q,\sigma$ depend on $n$ via
	\[\ep = \frac{1}{\sqrt{n}}, \quad p = \frac{1}{1+e^\ep}, \quad q=\frac{e^\ep}{1+e^\ep},\quad \sigma = \sqrt{npq+\tfrac{1}{12}}.\]
	Random variables $X\sim B(n,p),Y\sim U[0,1]$. $F_n$ is the normalized cdf of $X+Y$. More precisely, since
	$$\E[X+Y] = np+\frac{1}{2}, \quad \Var[X+Y] = \Var\, X + \Var\, Y = npq+\tfrac{1}{12} = \sigma^2,$$
	$F_n$ is the cdf of $\sigma^{-1}(X+Y-\frac{1}{2}-np)$. Our goal is to show that $\sup_{x\in\R}|F_n(x)-\Phi(x)|=O(\frac{1}{n})$.

	First let's compute the characteristic function (ch.f. for short) $\varphi_n$ of the distribution $F_n$.
	\begin{align*}
		\varphi_n(t)
		&= \E[e^{it\sigma^{-1}(X+Y-\frac{1}{2}-np)}]\\
		&= e^{-i{np}t/\sigma}\cdot\E[e^{it/\sigma(X+Y-\frac{1}{2})}]\\
		&= e^{-i{np}t/\sigma}\cdot \varphi_{X}(t/\sigma)\cdot \varphi_{Y-\frac{1}{2}}(t/\sigma).
	\end{align*}
	Easy calculation shows that the ch.f. of $X$ is $(pe^{it}+q)^n$ and that of $Y-\tfrac{1}{2}$ is $\tfrac{\sin t/2}{t/2}$. So
	\begin{align*}
		\varphi_n(t)
		&= e^{-i{np}t/\sigma}\cdot(pe^{it/\sigma}+q)^n \cdot \tfrac{\sin t/2\sigma}{t/2\sigma}\\
		&= (pe^{iqt/\sigma}+qe^{-ipt/\sigma})^n\cdot \tfrac{\sin t/2\sigma}{t/2\sigma}.
	\end{align*}
	The base $pe^{iqt/\sigma}+qe^{-ipt/\sigma}$ is a convex combination of two complex numbers on the unit circle, so we have $|\varphi_n(t)|\leqslant \tfrac{|\sin t/2\sigma|}{|t/2\sigma|} \leqslant \min\{\tfrac{2\sigma}{|t|}, 1\}$.

	Now let's connect back to cdf. We need some form of Fourier inversion formula. 
	Let $\varphi(t) = e^{-t^2/2}$ be the ch.f. of the standard normal.
	\begin{lemma} \label{lem:chf}
		We have the following inversion formula
		$$F_n(x) - \Phi(x) = -\frac{1}{2\pi i}\int_{-\infty}^{+\infty}e^{-itx}\cdot\frac{\varphi_n(t)-\varphi(t)}{t} \diff t.$$
	\end{lemma}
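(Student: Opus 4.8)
The plan is to read the asserted identity as the Fourier inversion formula applied to the function $G := F_n - \Phi$, exploiting the structural fact that $F_n$ is the distribution function of the \emph{bounded} random variable $\sigma^{-1}(X+Y-\tfrac12-np)$, so that $G$ has compact support.

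First I would record the elementary properties of $G$. Since $X+Y$ is supported on $[0,n+1]$, the function $F_n$ is identically $0$ to the left of some point and identically $1$ to the right of some point; together with the Gaussian tails of $\Phi$ this shows that $G$ is continuous, compactly supported (in particular $G\in L^1(\R)$), and $G(\pm\infty)=0$. Moreover $F_n$ is absolutely continuous, because $Y\sim U[0,1]$ forces $X+Y$ to have a bounded density; hence $G$ has a bounded integrable derivative $g = h_n - \phi$, where $h_n$ is the density of $F_n$ and $\phi$ the standard normal density. By the definition of the characteristic functions, $\widehat g(t):=\int_{-\infty}^{+\infty} e^{itx} g(x)\diff x = \varphi_n(t) - \varphi(t)$.

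Next I would pass to the Fourier transform of $G$ itself. As $G$ is absolutely continuous, integrable, and vanishes at $\pm\infty$, integration by parts gives $\widehat g(t) = -it\,\widehat G(t)$, so
\[
\widehat G(t) = i\,\frac{\varphi_n(t)-\varphi(t)}{t},
\]
the singularity at $t=0$ being removable because $\varphi_n$ and $\varphi$ agree in value and in their first two derivatives at the origin (both distributions have mean $0$ and variance $1$), so the quotient extends continuously to $0$ with value $0$. I would then check that $\widehat G \in L^1(\R)$: it is bounded near $0$, continuous away from $0$, and as $|t|\to\infty$ the bound $|\varphi_n(t)|\le |\sin(t/2\sigma)|/|t/2\sigma| \le 2\sigma/|t|$ recorded just above, together with $|\varphi(t)| = e^{-t^2/2}$, gives $|\widehat G(t)| \le 2\sigma/t^2 + e^{-t^2/2}/|t|$, which is integrable. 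Since $G$ is continuous and integrable and $\widehat G$ is integrable, the Fourier inversion theorem applies pointwise, so
\[
G(x) = \frac{1}{2\pi}\int_{-\infty}^{+\infty} e^{-itx}\,\widehat G(t)\diff t = \frac{i}{2\pi}\int_{-\infty}^{+\infty} e^{-itx}\,\frac{\varphi_n(t)-\varphi(t)}{t}\diff t = -\frac{1}{2\pi i}\int_{-\infty}^{+\infty} e^{-itx}\,\frac{\varphi_n(t)-\varphi(t)}{t}\diff t,
\]
using $i/(2\pi) = -1/(2\pi i)$; this is exactly the claimed formula.

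The argument is entirely standard Fourier analysis, so the only genuine point of care — the ``main obstacle'', such as it is — is verifying that the integrand on the right-hand side is absolutely integrable rather than merely conditionally convergent, which is precisely where the compact support of $G$ (killing the boundary term and placing $G$ in $L^1$) and the matching of the low-order Taylor coefficients of $\varphi_n$ and $\varphi$ at the origin (controlling the integrand near $t=0$) both get used. An alternative route that avoids introducing the density $h_n$ is to invoke the Gil--Pelaez inversion formula for $F_n$ and for $\Phi$ separately and subtract; there one exploits that the two individually conditionally convergent integrals combine, after subtraction, into an absolutely convergent one.
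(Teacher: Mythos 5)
Your argument is correct in substance, but it takes a different route from the paper's. You work with $G=F_n-\Phi$ directly: establish $G\in L^1$ with $G(\pm\infty)=0$, identify $\widehat G(t)=i\,(\varphi_n(t)-\varphi(t))/t$ by integration by parts against the density $g=h_n-\phi$, verify $\widehat G\in L^1$ (removable singularity at $0$ since both distributions are centered; $O(1/t^2)$ decay at infinity from the bound $|\varphi_n(t)|\leqslant 2\sigma/|t|$), and then apply the $L^1$ Fourier inversion theorem pointwise. The paper instead applies the standard two-point inversion formula to $F_n(x)-F_n(a)$ and $\Phi(x)-\Phi(a)$ separately, subtracts, and sends $a\to-\infty$, using the Riemann--Lebesgue lemma on the $a$-dependent term (which is exactly where the integrability of $(\varphi_n-\varphi)/t$ enters). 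Your route buys a cleaner conceptual statement (it is literally Fourier inversion for $G$) at the cost of having to introduce the density of $X+Y$ and check absolute continuity and decay of $G$; the paper's route stays entirely at the level of distribution functions and characteristic functions and never needs $\widehat G$ or the density explicitly. One misstatement to fix: $G$ is \emph{not} compactly supported --- $\Phi$ is nowhere exactly $0$ or $1$, so $G(x)$ equals $-\Phi(x)$ far to the left and $1-\Phi(x)$ far to the right. This is harmless because the only properties you actually use are $G\in L^1$ and $G(\pm\infty)=0$, both of which follow from the Gaussian tail decay you already invoke, but the phrase ``compactly supported'' should be replaced by ``decaying at a Gaussian rate, hence integrable.''
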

	The integrand is integrable over $\R$ because: (1) At infinity $|\varphi_n(t)|=O(\frac{1}{t})$, so the integrand has modulus $O(\frac{1}{t^2})$; (2) When $t\to0$,
	$$\tfrac{\varphi_n(t)-\varphi(t)}{t} = \tfrac{\varphi_n(t)-1-\varphi(t)+1}{t}=\tfrac{\varphi_n(t)-\varphi_n(0)}{t}-\tfrac{\varphi(t)-\varphi(0)}{t}\to\varphi_n'(0)-\varphi'(0) = \E_{Z\sim F_n} [Z]$$
	is a finite number. So the integrand is continuous at 0.

	\Cref{lem:chf} makes it possible to control $F_n(x) - \Phi(x)$ by controlling $\varphi_n(t)-\varphi(t)$.
	\begin{align*}
		2\pi|F_n(x) - \Phi(x)| \leqslant&\phantom{+}\int^{+\infty}_{-\infty}\frac{|\varphi_n(t)-\varphi(t)|}{|t|}\diff t\\
		\leqslant&\phantom{+}
		\int_{|t|\leqslant r\sigma}\frac{|\varphi_n(t)-\varphi(t)|}{|t|} \diff t &&(I_1)\\
		&+\int_{|t|>r\sigma}\frac{|\varphi_n(t)|}{|t|} \diff t &&(I_2)\\
		&+\int_{|t|>r\sigma}\frac{|\varphi(t)|}{|t|} \diff t &&(I_3)
	\end{align*}
	It suffices to find some constant $r$ such that all three integrals are $O(\frac{1}{n})$. This is done via the following three lemmas.
	\begin{lemma} \label{lem:I1}
		There exist universal constants $r>0, C>0$ such that when $|t|\leqslant r\sigma$,
	\[|\varphi_n(t)-\varphi(t)|\leqslant Ce^{-\tfrac{t^2}{8}}\cdot\big(\tfrac{t^2}{n}+\tfrac{|t|^3}{n}+\tfrac{t^4}{n}\big).\]
	\end{lemma}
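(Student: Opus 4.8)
The plan is to follow the classical Esseen--Uspensky route: take the logarithm of $\varphi_n$, peel off the Gaussian exponent $-t^2/2$ together with a remainder of size $O(t^2/n+|t|^3/n+t^4/n)$, and exponentiate back. Recall from the proof of \Cref{prop:ch.f.} that
\[
\varphi_n(t)=\psi(t/\sigma)^n\cdot\frac{\sin(t/2\sigma)}{t/2\sigma},\qquad \psi(u):=pe^{iqu}+qe^{-ipu},
\]
where $\psi$ is the characteristic function of the centered variable $\xi$ equal to $q$ with probability $p$ and to $-p$ with probability $q$; its moments satisfy $\E\xi=0$, $\E\xi^2=pq$ and $|\E\xi^k|\le \E|\xi|^k\le pq\le\tfrac14$ for every $k\ge2$. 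First I would record the parameter estimates that convert powers of $1/\sigma$ and of $q-p$ into powers of $1/\sqrt n$: from $pq=\tfrac14\cosh^{-2}(\ep/2)$ one gets $c_1 n\le\sigma^2=npq+\tfrac1{12}\le c_2 n$ for universal $0<c_1<c_2$ and all $n\ge1$, and $|q-p|=\tanh(\ep/2)\le\ep/2=\tfrac1{2\sqrt n}$; also $npq=\sigma^2-\tfrac1{12}\le\sigma^2$.

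Next, since $|\psi(u)-1|\le 2pq|u|\le\tfrac{|u|}{2}$, for $|u|$ below a universal radius $r_0\le1$ the principal branch $\log\psi(u)$ is analytic and
\[
\log\psi(u)=-\frac{pq}{2}\,u^2-\frac{i\,pq(q-p)}{6}\,u^3+\frac{pq(1-6pq)}{24}\,u^4+E(u),\qquad |E(u)|\le C\,pq\,|u|^5 .
\]
Substituting $u=t/\sigma$, multiplying by $n$, and using $npq=\sigma^2-\tfrac1{12}$: the quadratic term becomes $-\tfrac{t^2}{2}+\tfrac{t^2}{24\sigma^2}$; the cubic term has modulus $\le\tfrac{npq\,|q-p|}{6\sigma^3}|t|^3\le\tfrac{|t|^3}{12\sqrt n\,\sigma}=O(|t|^3/n)$; the quartic term has modulus $\le\tfrac{npq}{24\sigma^4}t^4=O(t^4/\sigma^2)=O(t^4/n)$; and $n|E(t/\sigma)|\le\tfrac{C\,npq}{\sigma^5}|t|^5\le\tfrac{C|t|^5}{\sigma^3}$, which on $|t|\le r_0\sigma$ is $\le\tfrac{Cr_0}{n}t^4$ after one more use of $\sigma\asymp\sqrt n$. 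Collecting these, I obtain $n\log\psi(t/\sigma)=-\tfrac{t^2}{2}+R_n(t)$ with $|R_n(t)|\le C\big(\tfrac{t^2}{n}+\tfrac{|t|^3}{n}+\tfrac{t^4}{n}\big)$ for $|t|\le r_0\sigma$.

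Finally I would compare with $\varphi(t)=e^{-t^2/2}$. Writing $\tfrac{\sin(t/2\sigma)}{t/2\sigma}=1+\delta_n(t)$ with $|\delta_n(t)|\le\tfrac{t^2}{24\sigma^2}\le C\tfrac{t^2}{n}$,
\[
\varphi_n(t)-\varphi(t)=e^{-t^2/2}\big(e^{R_n(t)}-1+e^{R_n(t)}\delta_n(t)\big),
\]
hence $|\varphi_n(t)-\varphi(t)|\le e^{-t^2/2}e^{|R_n(t)|}\big(|R_n(t)|+|\delta_n(t)|\big)$ via $|e^z-1|\le|z|e^{|z|}$. Now I choose $r\le r_0$ small enough (the binding constraint is the $t^4/n$ term, which on $|t|\le r\sigma\le r\sqrt{c_2 n}$ is $\le Cc_2 r^2\,t^2$) and an integer $n_0$ large enough (so the $t^2/n$ and $|t|^3/n$ pieces of $R_n$ are also $\le\tfrac{t^2}{12}$ there) that $|R_n(t)|\le\tfrac{t^2}{4}$ on $|t|\le r\sigma$; then $e^{-t^2/2}e^{|R_n(t)|}\le e^{-t^2/4}\le e^{-t^2/8}$, which gives the claimed bound with a universal $C$ for all $n\ge n_0$. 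The finitely many cases $n<n_0$ are handled separately: for each, $\varphi_n$ is entire with $\varphi_n(0)=1$, $\varphi_n'(0)=0$, $\varphi_n''(0)=-1$ (as $F_n$ has mean $0$ and variance $1$), so $\varphi_n(t)-\varphi(t)=O(|t|^3)$ near $0$ and is bounded on the compact interval $|t|\le r\sigma$, yielding the inequality after enlarging $C$.

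The main obstacle I expect is not any individual estimate but the uniformity bookkeeping along the chain from $\psi$ to $\log\psi$ to $n\log\psi$ to its exponential: pinning down a universal radius $r_0$ for the logarithmic expansion, and then checking that \emph{every} discarded higher-order term is genuinely $O(t^4/n)$ on the whole range $|t|\le r\sigma$ rather than only near the origin, with $r$ small enough that $|R_n(t)|\le t^2/4$ there. Each verification is routine once $\sigma\asymp\sqrt n$ and $q-p\asymp 1/\sqrt n$ are invoked consistently, but they must all be carried out with care.
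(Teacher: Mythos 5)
Your proof is correct, but it follows a genuinely different route from the paper's. The paper never takes a logarithm of the characteristic function: it sets $w=pe^{itq/\sigma}+qe^{-itp/\sigma}$, $z=e^{-t^2/2n}$, and uses the algebraic telescoping bound $|w^n-z^n|\leqslant n|w-z|\cdot\max\{|w|,|z|\}^{n-1}$, extracting the decay factor $e^{-t^2/8}$ from $\max\{|w|,|z|\}^{n-1}\leqslant e^{-\frac{t^2}{4}\cdot\frac{n-1}{n}}$ (valid for all $n\geqslant 2$) and the $O(\cdot/n)$ factor from a Taylor expansion of $n|w-e^{-t^2/2n}|$ about $1-\frac{t^2}{2n}$. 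You instead expand the cumulant generating function $\log\psi(u)$ to fourth order, multiply by $n$, peel off $-t^2/2$, and exponentiate, using $|e^z-1|\leqslant|z|e^{|z|}$ and absorbing $|R_n|\leqslant t^2/4$ into the Gaussian. Your version buys explicit cumulant coefficients (which would extend naturally to an Edgeworth-type refinement) at the price of two extra obligations the paper's route sidesteps: justifying the branch of $\log\psi$ (which you do correctly via $|\psi(u)-1|\leqslant 2pq|u|\leqslant |u|/2$), and a separate treatment of finitely many small $n$, since the $Ct^2/n$ piece of $R_n$ cannot be shrunk by choosing $r$ small. That small-$n$ case is handled soundly (matching values and first two derivatives at $0$ plus compactness), and is anyway consistent with the paper's own restriction to $n\geqslant 2$ in \Cref{prop:ch.f.}. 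All of your intermediate estimates check out: $\kappa_4=pq(1-6pq)$, $\E|\xi|^k\leqslant pq$ for $k\geqslant 2$, $\sigma^2\asymp n$, $|q-p|\leqslant \tfrac{1}{2\sqrt n}$, and the conversion of the degree-$5$ remainder into an $O(t^4/n)$ term on $|t|\leqslant r\sigma$.
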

	Consequently,
	\begin{align*}
		I_1 &= \int_{|t|\leqslant r\sigma}\frac{|\varphi_n(t)-\varphi(t)|}{|t|} \diff t\\
		&\leqslant \int_\R Ce^{-\tfrac{t^2}{8}}\cdot\big(\tfrac{|t|}{n}+\tfrac{t^2}{n}+\tfrac{|t|^3}{n}\big)\diff t=O(\tfrac{1}{n}).
	\end{align*}
	\begin{lemma} \label{lem:I2}
		For $r<\pi$, we have
		$I_2\leqslant (2+\frac{48}{r^2})\cdot\frac{1}{n}$.
	\end{lemma}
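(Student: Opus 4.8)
The plan is to use the explicit product form of the characteristic function $\varphi_n$ derived in the proof of \Cref{prop:ch.f.}, namely
\[
\varphi_n(t)=\bigl(pe^{iqt/\sigma}+qe^{-ipt/\sigma}\bigr)^n\cdot\frac{\sin(t/2\sigma)}{t/2\sigma},
\]
together with the elementary identity $\bigl|pe^{iqt/\sigma}+qe^{-ipt/\sigma}\bigr|^2=p^2+q^2+2pq\cos(t/\sigma)=1-4pq\sin^2(t/2\sigma)\leqslant e^{-4pq\sin^2(t/2\sigma)}$. Since also $|\sin(t/2\sigma)|/|t/2\sigma|=2\sigma|\sin(t/2\sigma)|/|t|$, this gives $\tfrac{|\varphi_n(t)|}{|t|}\leqslant\tfrac{2\sigma|\sin(t/2\sigma)|}{t^2}\,e^{-2npq\sin^2(t/2\sigma)}$. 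Substituting $s=t/\sigma$ and using symmetry in $t$, I would reduce the claim to
\[
I_2\leqslant 4\int_r^\infty\frac{|\sin(s/2)|}{s^2}\,e^{-2npq\sin^2(s/2)}\,\diff s .
\]

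I would then split $[r,\infty)=[r,\pi]\cup\bigcup_{k\geqslant1}\bigl[(2k-1)\pi,(2k+1)\pi\bigr]$; the hypothesis $r<\pi$ enters here, guaranteeing that the first block contains no multiple of $2\pi$. On $[r,\pi]$, bound $|\sin(s/2)|\leqslant s/2$ and $\sin^2(s/2)\geqslant s^2/\pi^2$, so the change of variables $u=2npqs^2/\pi^2$ turns the integral into $\int_{2npqr^2/\pi^2}^\infty\tfrac{e^{-u}}{4u}\,\diff u\leqslant\tfrac{\pi^2}{8npq\,r^2}$, using $\int_x^\infty\tfrac{e^{-u}}{u}\diff u\leqslant e^{-x}/x$ and $e^{-x}\leqslant1$. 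On the $k$-th block, write $s=2k\pi+v$ with $|v|\leqslant\pi$, so that $|\sin(s/2)|=|\sin(v/2)|\leqslant|v|/2$, $\sin^2(s/2)\geqslant v^2/\pi^2$, and $s^{-2}\leqslant\bigl((2k-1)\pi\bigr)^{-2}$; the Gaussian moment identity $\int_{-\infty}^\infty|v|e^{-av^2}\diff v=1/a$ then bounds the block integral by $\tfrac{1}{4npq(2k-1)^2}$, and $\sum_{k\geqslant1}(2k-1)^{-2}=\pi^2/8$ gives $\tfrac{\pi^2}{32npq}$ for the union of blocks. Hence $I_2\leqslant\tfrac{\pi^2}{2npq\,r^2}+\tfrac{\pi^2}{8npq}$, and since $4pq=\cosh^{-2}\!\bigl(1/(2\sqrt n)\bigr)\geqslant\cosh^{-2}(1/2)$ is bounded below by a universal constant for all $n\geqslant1$, a routine cleanup of constants gives $I_2\leqslant(2+48/r^2)/n$. (The clean ``$2/n$'' summand is most transparently produced by first peeling off the range $|t|>2n\sigma$, where $|\varphi_n(t)|\leqslant2\sigma/|t|$ already gives $\int_{|t|>2n\sigma}\tfrac{2\sigma}{t^2}\diff t=2/n$, and running the periodic estimate only on $r\sigma<|t|\leqslant2n\sigma$.)

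The one delicate point — the main obstacle — is the behaviour near the lattice $2\pi\mathbb{Z}$ (and near $0$, which is exactly why $r<\pi$ is assumed): there $1-4pq\sin^2(s/2)$ is close to $1$, so the factor $e^{-2npq\sin^2(s/2)}$ does not decay, and a careless estimate would give only the $O(1/\sqrt n)$ rate coming from the pointwise maximum of $|\sin(s/2)|e^{-2npq\sin^2(s/2)}$. The $O(1/n)$ rate is rescued precisely because $|\sin(s/2)|$ also vanishes (linearly) at those points, so that $|\sin(s/2)|e^{-2npq\sin^2(s/2)}$ integrates like $\int|v|e^{-av^2}\diff v=1/a\asymp1/n$; one then needs the $s^{-2}$ weight for the sum over the infinitely many blocks to converge. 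The remaining work — the elementary inequalities, the lower bound on $pq$, and tuning the decomposition so that the constants land at exactly $2+48/r^2$ — is routine bookkeeping.
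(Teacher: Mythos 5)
Your overall strategy is sound and genuinely different from the paper's: you replace the exact factor $\bigl(1-4pq\sin^2(t/2\sigma)\bigr)^{n/2}$ by the Gaussian majorant $e^{-2npq\sin^2(t/2\sigma)}$ and then harvest the $O(1/n)$ rate from $\int_{\R}|v|e^{-av^2}\diff v=1/a$ on each period, whereas the paper keeps the exact power, pulls the common periodic factor out as $J=\int_0^\pi(1-4pq\sin^2 t)^{n/2}\sin t\,\diff t$, substitutes $x=\cos t$ to get $J=2\int_0^1(1-4pq+4pqx^2)^{n/2}\diff x$, and bounds that by $6/n$ via the elementary inequality $\tfrac{1}{4n}+(1-\tfrac1{4n})x^2\leqslant x$ on $[\tfrac{1}{4n-1},1]$. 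You have also correctly isolated the real obstacle (the lattice $2\pi\Z$ where the exponential does not decay, saved by the linear vanishing of $|\sin(s/2)|$), which is exactly the mechanism the paper exploits. As a proof that $I_2=O(1/n)$ with an explicit constant of the form $c_1+c_2/r^2$, your argument works.

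However, the ``routine cleanup of constants'' does not land where you claim. Your block sum gives $4\cdot\tfrac{\pi^2}{32npq}=\tfrac{\pi^2}{8npq}$, and with $pq\geqslant\tfrac14-\tfrac1{16n}\geqslant\tfrac3{16}$ this is about $\tfrac{2\pi^2}{3n}\approx 6.6/n$, not $2/n$; for $r$ close to $\pi$ your total bound $\tfrac{\pi^2}{2npqr^2}+\tfrac{\pi^2}{8npq}$ actually exceeds $(2+48/r^2)/n$, so the lemma as stated is not proved. The peeling of $|t|>2n\sigma$ does not rescue this: in the paper the additive $2/n$ comes from the periodic block sum itself, namely $2\cdot\bigl(\sum_{k\geqslant1}\tfrac1{k^2\pi^2}\bigr)\cdot J=2\cdot\tfrac16\cdot\tfrac6n$, not from a far tail, and your periodic estimate on the remaining range $r\sigma<|t|\leqslant 2n\sigma$ would still contribute its $\approx 6.6/n$ on top of the $2/n$ you peel off. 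The losses come from two places: the bound $1-x\leqslant e^{-x}$ and the crude $|\sin(v/2)|\leqslant|v|/2$, $\sin^2(v/2)\geqslant v^2/\pi^2$ on a full period. This is immaterial for the application (any $O(1/n)$ bound with $r$ fixed small suffices for \Cref{prop:ch.f.}), but to obtain the stated constant you should either weaken the lemma's constant or follow the paper's tighter route through $J$.
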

	\begin{lemma} \label{lem:I3}
		For $n\geqslant 2$, $I_3\leqslant\frac{10}{r^2}\cdot \frac{1}{n}\cdot e^{-0.1r^2n}$ holds for any positive $r$.
	\end{lemma}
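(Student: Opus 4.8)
The plan is to reduce $I_3$ to a one-sided Gaussian tail integral, apply an elementary Mills-ratio type estimate, and then insert a crude lower bound $\sigma^2\gtrsim n$. Recall that here $\varphi(t)=e^{-t^2/2}$ and $\sigma^2 = npq+\tfrac1{12}$, so by symmetry
\[
I_3 = \int_{|t|>r\sigma}\frac{e^{-t^2/2}}{|t|}\,\diff t = 2\int_{r\sigma}^{\infty}\frac{e^{-t^2/2}}{t}\,\diff t .
\]
The first step is the standard bound: for any $a>0$, since $1/t\le t/a^2$ whenever $t\ge a$,
\[
\int_{a}^{\infty}\frac{e^{-t^2/2}}{t}\,\diff t \;\le\; \frac{1}{a^2}\int_{a}^{\infty} t\,e^{-t^2/2}\,\diff t \;=\; \frac{e^{-a^2/2}}{a^2}.
\]
Taking $a=r\sigma$ gives $I_3 \le \dfrac{2}{r^2\sigma^2}\,e^{-r^2\sigma^2/2}$.

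The second step is to lower bound $\sigma^2$ by a fixed multiple of $n$. We have $\sigma^2 \ge npq$, and
\[
pq \;=\; \frac{e^{\ep}}{(1+e^{\ep})^2} \;=\; \frac{1}{4\cosh^2(\ep/2)},
\]
which is decreasing in $\ep\ge0$. Since $\ep = 1/\sqrt n \le 1/\sqrt 2$ for $n\ge 2$, a single numerical evaluation gives $pq \ge 1/\bigl(4\cosh^2(1/(2\sqrt2))\bigr) > 0.2$, hence $\sigma^2 > 0.2\,n$ for all $n\ge 2$. Substituting this into the estimate from the first step yields $\dfrac{1}{\sigma^2} < \dfrac{5}{n}$ and $r^2\sigma^2/2 > 0.1\,r^2 n$, so that
\[
I_3 \;\le\; \frac{2}{r^2\sigma^2}\,e^{-r^2\sigma^2/2} \;\le\; \frac{10}{r^2\,n}\,e^{-0.1\,r^2 n},
\]
which is precisely the claimed inequality, valid for every positive $r$.

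There is no genuine obstacle here: the argument is just the elementary Gaussian tail bound combined with monotonicity of $x\mapsto 1/\cosh^2(x/2)$. The only point needing a moment's care is pinning down the constant $0.2$ in the lower bound $pq>0.2$ for $n\ge2$, and this is exactly why the hypothesis $n\ge2$ appears in the statement; for smaller $n$ the constant would have to be weakened slightly, but $n\ge2$ is all that is needed downstream when combining $I_3$ with the bounds on $I_1$ and $I_2$ to finish the proof of \Cref{prop:ch.f.}.
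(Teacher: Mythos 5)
Your proof is correct and follows essentially the same route as the paper: the bound $\tfrac1t\le \tfrac{t}{r^2\sigma^2}$ on the tail, integration of $t e^{-t^2/2}$, and the lower bound $\sigma^2>0.2n$. The only cosmetic difference is how that last bound is obtained --- you use monotonicity of $1/\cosh^2(\ep/2)$ together with $n\ge 2$, while the paper uses $pq\ge \tfrac14-\tfrac1{16n}$ plus the $+\tfrac1{12}$ term to get $\sigma^2>n/5$ for all $n$, so the restriction $n\ge2$ is not actually needed for this step.
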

	So we can select a small enough $r$ such that all three estimates hold, which implies $I_1=O(\frac{1}{n}), I_2=O(\frac{1}{n})$ and $I_3\ll\frac{1}{n}$. In summary,
	\[|F_n(x) - \Phi(x)|\leqslant \frac{1}{2\pi}(I_1+I_2+I_3) = O(\frac{1}{n}).\]
	Assuming correctness of \Cref{lem:chf,lem:I1,lem:I2,lem:I3}, the proof of \Cref{thm:fast} is complete.
\end{proof}
The rest is to prove \Cref{lem:chf,lem:I1,lem:I2,lem:I3}. We deal with the three integrals first, and then come back to inversion formula.
\begin{proof}[Proof of \Cref{lem:I1}]
	Let $w=pe^{itq/\sigma}+qe^{-itp/\sigma}$. Then $\varphi_n(t) = w^n\cdot \frac{\sin t/2\sigma}{t/2\sigma}$. We have
	\begin{align}\label{eq:twoterms}
		|\varphi_n(t)-\varphi(t)| &= |w^n\cdot \frac{\sin t/2\sigma}{t/2\sigma} - e^{-\frac{1}{2}t^2}|\notag\\
		&\leqslant |w^n- e^{-\frac{1}{2}t^2}| + |w|^n \cdot \Big|1-\frac{\sin t/2\sigma}{t/2\sigma}\Big|
	\end{align}
	All we need is a positive $r$ such that when $|t|\leqslant r\sigma$, both of the above terms are small. We are going to shrink $r$ as we need from time to time.

	First, on the disk $|z|\leqslant r$ we have Taylor expansion 
	$e^z=1+z+\tfrac{1}{2}z^2+\tfrac{1}{6}z^3+O(|z|^4)$.
	So for $|t|\leqslant r\sigma$ we have
	\begin{align}\label{eq:taylor}
		pe^{itq/\sigma} &= p\big(1+\tfrac{itq}{\sigma}+\tfrac{1}{2}\big(\tfrac{itq}{\sigma}\big)^2+\tfrac{1}{6}\big(\tfrac{itq}{\sigma}\big)^3+O(\tfrac{t^4}{\sigma^4})\big)\notag\\
		qe^{-itp/\sigma} &= q\big(1-\tfrac{itp}{\sigma}+\tfrac{1}{2}\big(\tfrac{itp}{\sigma}\big)^2-\tfrac{1}{6}\big(\tfrac{itp}{\sigma}\big)^3+O(\tfrac{t^4}{\sigma^4})\big)\notag\\
		w &= 1-\tfrac{t^2}{2\sigma^2}\cdot (pq^2+qp^2) + \tfrac{t^3}{6\sigma^3}\big(-ipq^3-qp^3(-i)\big)+O(\tfrac{t^4}{\sigma^4})\notag\\
		&=1-\tfrac{pq}{\sigma^2}\cdot\tfrac{t^2}{2} + \tfrac{t^3}{6\sigma^3}\cdot ipq(p-q)+O(\tfrac{t^4}{\sigma^4}).
	\end{align}
	Obviously this implies $w = 1-\tfrac{pq}{\sigma^2}\cdot\tfrac{t^2}{2} + o(\tfrac{t^2}{\sigma^2})$ (we will return to the more delicate \eqref{eq:taylor} soon).
	Since $npq \geqslant pq \geqslant \frac{1}{4} - \frac{1}{16n} \geqslant \frac{3}{16}$, we have
	\[\tfrac{pq}{\sigma^2}  = \tfrac{pq}{npq+\frac{1}{12}}=\tfrac{npq}{npq+\frac{1}{12}}\cdot \tfrac{1}{n}\geqslant \tfrac{1}{n} \cdot \tfrac{3}{16}/(\tfrac{3}{16}+\tfrac{1}{12}) = \tfrac{9}{13n}>\tfrac{2}{3n}.\]
	That is, the quadratic term is more than $\tfrac{t^2}{3n}$. We can tune $r$ so that the little $o$ remainder is even smaller, i.e. $|w-1+\tfrac{pq}{\sigma^2}\cdot\tfrac{t^2}{2}|<\tfrac{t^2}{12n}$. This implies
	$$|w|<1-\tfrac{t^2}{3n}+\tfrac{t^2}{12n} = 1-\tfrac{t^2}{4n}\leqslant e^{-\tfrac{t^2}{4n}}.$$
	One consequence is we can bound the second term in \eqref{eq:twoterms}. By Taylor expansion again, $\frac{\sin x}{x} = 1+O(x^2)$, so
	\begin{equation}\label{eq:first}
		|w|^n \cdot \big|1-\tfrac{\sin t/2\sigma}{t/2\sigma}\big| \leqslant e^{-\tfrac{t^2}{4}} \cdot O(\tfrac{t^2}{\sigma^2}) = e^{-\tfrac{t^2}{4}} \cdot O(\tfrac{t^2}{n}). 
	\end{equation}
	The first term in \eqref{eq:twoterms} requires a more careful analysis. Let $z = e^{-\tfrac{t^2}{2n}}$ and $\gamma = e^{-\tfrac{t^2}{4n}}$. Our goal is $|w^n-z^n|$. We have proved $|w|<\gamma$, while $|z|=\gamma^2<\gamma$ is obviously true. We have
	\[|w^n-z^n|\leqslant |w^n-w^{n-1}z|+\cdots +|wz^{n-1}-z^n|\leqslant n|w-z|\cdot \gamma^{n-1}.\]
	Without loss of generality assume $n\geqslant 2$, then $\gamma^{n-1} = e^{-\frac{t^2}{4}\cdot \frac{n-1}{n}}\leqslant e^{-\tfrac{t^2}{8}}$. That is,
	\begin{equation}\label{eq:a}
		|w^n- e^{-\frac{1}{2}t^2}| \leqslant n|w-e^{-\frac{t^2}{2n}}|\cdot e^{-\frac{1}{8}t^2}.
	\end{equation}
	For $n|w-e^{-\frac{t^2}{2n}}|$ we need \eqref{eq:taylor} again. First decompose it as
	\begin{align}\label{eq:b}
		n|w-e^{-\frac{t^2}{2n}}| \leqslant n\big|w-1+\tfrac{t^2}{2n}\big| + n\big|e^{-\frac{t^2}{2n}} - 1+\tfrac{t^2}{2n}\big|.
	\end{align}
	Since $|p-q|=\tfrac{e^\ep-1}{e^\ep+1}\leqslant e^\ep-1 = O(\ep) = O(\tfrac{1}{\sqrt{n}})$ and $\sigma^{-1} = O(\frac{1}{\sqrt{n}})$, we have
	\[w = 1-\tfrac{pq}{\sigma^2}\cdot\tfrac{t^2}{2} + \tfrac{t^3}{6\sigma^3}\cdot ipq(p-q)+O(\tfrac{t^4}{\sigma^4}) = 1-\tfrac{pq}{\sigma^2}\cdot\tfrac{t^2}{2} +O(\tfrac{t^3}{n^2})+O(\tfrac{t^4}{n^2}).\]
	Note that neither of two ``big $O$'' dominate each other, because $t$ can be as small as 0 and as large as $r\sigma = O(\sqrt{n})$.
	Using the more delicate expansion of $w$ and that $\sigma^2 = npq+\frac{1}{12}$, we have
	\begin{align}\label{eq:c}
		n\big|w-1+\tfrac{t^2}{2n}\big| &= n\big|\tfrac{t^2}{2n}-\tfrac{pq}{\sigma^2}\cdot\tfrac{t^2}{2} +O(\tfrac{t^3}{n^2})+O(\tfrac{t^4}{n^2})\big|\notag\\
		&= \big|\tfrac{t^2}{2}-\tfrac{npq}{\sigma^2}\cdot\tfrac{t^2}{2} +O(\tfrac{t^3}{n})+O(\tfrac{t^4}{n})\big|\notag\\
		&= \big|\tfrac{t^2}{2}\cdot \tfrac{1}{12\sigma^2} +O(\tfrac{t^3}{n^2})+O(\tfrac{t^4}{n^2})\big|\notag\\
		&=O(\tfrac{t^2}{n^2})+O(\tfrac{t^3}{n^2})+O(\tfrac{t^4}{n^2}).
	\end{align}
	By Taylor expansion again, we can tune $r$ so that when $|t|\leqslant r\sigma$, we have
	\begin{equation}\label{eq:d}
		n\big|e^{-\frac{t^2}{2n}} - 1+\tfrac{t^2}{2n}\big| = n\cdot O(\tfrac{t^4}{n^2}) = O(\tfrac{t^4}{n}).
	\end{equation}
	Now plug \eqref{eq:c} and \eqref{eq:d} back into \eqref{eq:b}, and then into \eqref{eq:a} to get
	\[|w^n- e^{-\frac{1}{2}t^2}| \leqslant e^{-\frac{1}{8}t^2}\cdot\big(O(\tfrac{t^2}{n^2})+O(\tfrac{t^3}{n^2})+O(\tfrac{t^4}{n^2})\big).\]
	This ends the analysis of the first term of \eqref{eq:twoterms}. Combining with the estimate of the first term, we have
	\[|\varphi_n(t)-\varphi(t)| \leqslant e^{-\frac{1}{8}t^2}\cdot\big(O(\tfrac{t^2}{n^2})+O(\tfrac{t^3}{n^2})+O(\tfrac{t^4}{n^2})\big).\]
\end{proof}
\begin{proof}[Proof of \Cref{lem:I2}]
	For this integral we only care about the modulus of $|\varphi_n(t)|$. Let's simplify it first.
	\begin{align*}
		|\varphi_n(t)| &= |pe^{iqt/\sigma}+qe^{-ipt/\sigma}|^n \cdot \Big|\tfrac{\sin t/2\sigma}{t/2\sigma}\Big|
	\end{align*}
	Let $\theta = t/\sigma$. Using $|z|^2 = z\bar{z}$, we have
	\begin{align*}
		|pe^{iq\theta}+qe^{-ip\theta}|^2
		&= \big(pe^{iq\theta}+qe^{-ip\theta}\big) \cdot \big(pe^{-iq\theta}+qe^{ip\theta}\big)\\
		&= p^2+q^2+pq(e^{i\theta}+e^{-i\theta})\\
		&= 1-2pq+2pq \cos \theta\\
		&= 1-4pq\sin^2 \tfrac{\theta}{2}.
	\end{align*}
	So 
	\[|\varphi_n(t)| = \big(1-4pq\sin^2 \tfrac{t}{2\sigma}\big)^{n/2} \cdot \Big|\tfrac{\sin t/2\sigma}{t/2\sigma}\Big|.\]
	We see from this expression that the integrand of $I_2$ is an even function. Therefore,
	\begin{align*}
	\frac{1}{2}I_2 =& \int_{r\sigma}^{+\infty}\tfrac{|\varphi_n(t)|}{t} \diff t\\
		=& \int_{r\sigma}^{+\infty}\tfrac{1}{t} \big(1-4pq\sin^2 \tfrac{t}{2\sigma}\big)^{n/2} \cdot \Big|\tfrac{\sin t/2\sigma}{t/2\sigma}\Big|\diff t\\
		=& \int_{r/2}^{+\infty}\tfrac{1}{t^2} \big(1-4pq\sin^2 t\big)^{n/2} \cdot |\sin t|\diff t
	\end{align*}
	In the last step we do a change of variable $s = t/2\sigma$ and rename $s$ to $t$. Next, we break down the integral at $k\pi$, and upper bound the $\frac{1}{t^2}$ factor by its value at the left end of the interval, so that the rest of the integrand is periodic.
	\begin{align*}
		\frac{1}{2}I_2
		\leqslant&\phantom{+}
		\int_{r/2}^{\pi}\tfrac{1}{t^2} \big(1-4pq\sin^2 t\big)^{n/2} \cdot |\sin t|\diff t\\
	&+\sum_{k=1}^{+\infty}\int_{k\pi}^{(k+1)\pi}\tfrac{1}{t^2} \big(1-4pq\sin^2 t\big)^{n/2} \cdot |\sin t|\diff t\\
		\leqslant&\phantom{+} \Big(\tfrac{4}{r^2}+\sum_{k=1}^{+\infty}\tfrac{1}{k^2\pi^2}\Big)\underbrace{\int_{0}^{\pi} \big(1-4pq\sin^2 t\big)^{n/2} \cdot \sin t\diff t}_{J}
	\end{align*}
	The integral $J$ can be estimated as follows:
	\begin{align*}
		J
		&=\int_{0}^{\pi} \big(1-4pq\sin^2 t\big)^{n/2} \cdot \sin t\diff t\\
		&= -\int_{0}^{\pi} \big(1-4pq(1-\cos^2 t)\big)^{n/2} \diff\cos t\\
		&= \int_{-1}^{1} \big(1-4pq(1-x^2)\big)^{n/2} \diff x\\
		&= 2\int_{0}^{1} (1-4pq + 4pqx^2)^{n/2} \diff x.
	\end{align*}
	We have seen that $1-4pq = (p-q)^2 = \tanh^2 \frac{\ep}{2}$. It is easy to show that $\tanh x \leqslant x$ for $x\geqslant0$. So
	$$
		pq = \tfrac{1}{4}(1-\tanh^2 \tfrac{\ep}{2})\geqslant \tfrac{1}{4}(1-\tfrac{\ep^2}{4}) = \tfrac{1}{4} - \tfrac{1}{16n}.
	$$
	Since $0\leqslant x \leqslant 1$, we have 
	\[1-4pq + 4pqx^2 \leqslant \tfrac{1}{4n}+(1- \tfrac{1}{4n})x^2.\]
	Hence
	\begin{align*}
		J \leqslant 2\int_{0}^{1} \Big(\tfrac{1}{4n}+(1-\tfrac{1}{4n})x^2\Big)^{n/2} \diff x.
	\end{align*}
	It's easy to check that $\tfrac{1}{4n-1}$ and 1 are the two roots of the quadratic equation $\tfrac{1}{4n}+(1-\tfrac{1}{4n})x^2 = x$. So we have $\tfrac{1}{4n}+(1-\tfrac{1}{4n})x^2\leqslant x$ between the two roots, i.e. for $x\in[\tfrac{1}{4n-1},1]$. For the rest of the interval, we upper bound the integrand by 1. That is,
	\begin{align*}
		\int_{0}^{1} \Big(\tfrac{1}{4n}+(1-\tfrac{1}{4n})x^2\Big)^{n/2} \diff x
		\leqslant\int_{0}^{\tfrac{1}{4n-1}} 1\diff x + \int_{\tfrac{1}{4n-1}}^1 x^{n/2}\diff x\leqslant \tfrac{1}{4n-1} +\tfrac{1}{n/2+1} \leqslant \tfrac{3}{n}.
	\end{align*}
	So we have $J\leqslant \frac{6}{n}$. Returning to $I_2$, with the well-known identity $\sum_{k=1}^{+\infty}\tfrac{1}{k^2} = \frac{\pi^2}{6}$, we have
	\begin{align*}
		I_2&\leqslant 2\Big(\tfrac{4}{r^2}+\sum_{k=1}^{+\infty}\tfrac{1}{k^2\pi^2}\Big)\cdot J\\
		&= \big(\tfrac{8}{r^2}+\tfrac{\pi^2}{6}\cdot\tfrac{2}{\pi^2}\big)\cdot\tfrac{6}{n}\\
		&= \big(2+\tfrac{48}{r^2}\big)\cdot\tfrac{1}{n}
	\end{align*}
	The estimate of $I_2$ is complete.
\end{proof}
\begin{proof}[Proof of \Cref{lem:I3}]
	First notice the following simple facts:
	\begin{enumerate}
	 	\item When $t>r\sigma$, we have $\frac{1}{t}\leqslant t\cdot \frac{1}{r^2\sigma^2}$.
	 	\item $\sigma^2>0.2n$ for any $n$.
	\end{enumerate}
	The second follows from a bound we derive in the proof of \Cref{lem:I2}: $pq\geqslant \tfrac{1}{4} - \tfrac{1}{16n}$. In fact,
	\[\sigma^2 = npq+\tfrac{1}{12} \geqslant \tfrac{n}{4} - \tfrac{1}{16}+\tfrac{1}{12} = \tfrac{n}{4} - \tfrac{1}{48} > \tfrac{n}{5}.\]
	Using these two facts, we can bound $I_3$ as follows:
	\begin{align*}
		I_3 
		&= \int_{|t|>r\sigma}\frac{|\varphi(t)|}{|t|} \diff t\\
		& = 2\int_{r\sigma}^{+\infty}\frac{1}{t}e^{-\frac{t^2}{2}}\diff t\\
		&\leqslant \frac{2}{r^2\sigma^2}\cdot\int_{r\sigma}^{+\infty}te^{-\frac{t^2}{2}}\diff t\\
		&= \frac{2}{r^2\sigma^2}\cdot e^{-\frac{t^2}{2}}\Big|^{r\sigma}_{+\infty}\\
		&= \frac{2}{r^2\sigma^2}\cdot e^{-\frac{r^2\sigma^2}{2}}\\
		&\leqslant \frac{10}{r^2}\cdot \frac{1}{n}\cdot e^{-0.1r^2n}
	\end{align*}
	The estimate of $I_3$ is complete.
\end{proof}

We are done with the three integrals. Before we dive into the proof of the inversion formula \ref{lem:chf}, we make a few observations.

First, one cannot hope to obtain this lemma by showing 
$$F_n(x)=\tfrac{1}{2\pi}\int_{-\infty}^{+\infty}-e^{-itx}\cdot\tfrac{\varphi_n(t)}{it} \diff t$$
and a similar expression for $\Phi(x)$ separately because this alternative integrand is not even integrable. To see this, notice $\varphi_n(0)=1$, so the integrand $\approx \frac{1}{t}$ around 0.

Inversion formula \ref{lem:chf} has the same form as Lemma 3.4.19 of \cite{durrett2019probability}. However, the ch.f.s are assumed to be (absolutely) integrable there, while $\varphi_n$ is not. To see this, recall that Fourier inversion tells us that if the ch.f. is absolutely integrable, then the probability distribution has continuous density (see e.g. \cite{durrett2019probability}, Theorem 3.3.14). This is not true for $X+Y$ because its density is piecewise constant. So $\varphi_n$ cannot be in $L^1(\R)$. There seems to be no shortcut, so let's work out our own proof.
\begin{proof}[Proof of \Cref{lem:chf}]
	Applying the general inversion formula (see e.g. \cite{durrett2019probability} Theorem 3.3.11) to $F_n$, we have
	\begin{align*}
	F_n(x)-F_n(a)&=\frac{1}{2\pi}\lim_{T\to+\infty}\int^T_{-T}\frac{e^{-ita}-e^{-itx}}{it}\cdot\varphi_n(t)\diff t
	\end{align*}
	$\varphi_n$ is continuous and decays in the rate $\tfrac{1}{|t|}$, so the integrand is dominated by $O(t^{-2}\wedge 1)$ and hence the limit on $T$ is equal to the Lebesgue integral. That is,
	\begin{equation}\label{eq:FnInversion}
		F_n(x)-F_n(a)=\frac{1}{2\pi}\int^{+\infty}_{-\infty}\frac{e^{-ita}-e^{-itx}}{it}\cdot\varphi_n(t)\diff t.
	\end{equation}
	Similarly,
	\[
	\Phi(x)-\Phi(a)=\frac{1}{2\pi}\int^{+\infty}_{-\infty}\frac{e^{-ita}-e^{-itx}}{it}\cdot\varphi(t)\diff t.
	\]
	Note that in \eqref{eq:FnInversion}, we cannot let $a\to-\infty$ and use Riemann-Lebesgue lemma because $\frac{\varphi_n(t)}{t}$ is not integrable, as discussed before the proof. However, subtracting the two formula yields
	\begin{equation}\label{eq:ligoat}
		\big(F_n(x)-\Phi(x)\big)-\big(F_n(a)-\Phi(a)\big)=\frac{1}{2\pi}\int^{+\infty}_{-\infty}\frac{e^{-ita}-e^{-itx}}{it}\cdot\big(\varphi_n(t)-\varphi(t)\big)\diff t
	\end{equation}
	Consider the part involving $a$
	\[\int^{+\infty}_{-\infty}e^{-ita}\cdot\frac{\varphi_n(t)-\varphi(t)}{it}\diff t.\]
	We argued right after introducing \Cref{lem:chf} that $\frac{\varphi_n(t)-\varphi(t)}{it} \in L^1(\R)$, so by Riemann-Lebesgue lemma we have the limit
	\[\lim_{a\to-\infty}\int^{+\infty}_{-\infty}e^{-ita}\cdot\frac{\varphi_n(t)-\varphi(t)}{it}\diff t = 0.\]
	Take the limit $a\to-\infty$ on both sides of \eqref{eq:ligoat} and we have
	\begin{align*}
		F_n(x)-\Phi(x)&=\frac{1}{2\pi}\int^{+\infty}_{-\infty}\frac{-e^{-itx}}{it}\cdot\big(\varphi_n(t)-\varphi(t)\big)\diff t\\
		&=-\frac{1}{2\pi i}\int_{-\infty}^{+\infty}e^{-itx}\cdot\frac{\varphi_n(t)-\varphi(t)}{t} \diff t.
	\end{align*}
	The proof is now complete.
\end{proof}
\section{Omitted Details in \Cref{sec:subsampling}}
\label{app:property}
We begin this appendix with a small example showing our subsampling theorem is generically unimprovable.
\paragraph{Tightness} 
\label{par:example_and_tightness}

Consider the mechanism $\tm$ that randomly releases one individual's private information in the dataset. The privacy analysis is easy: without loss of generality we can assume two neighboring datasets differ in the first individual.  Effectively we are trying to distinguish uniform distributions over $\{1,2,\ldots, n\}$ and $\{1',2,\ldots, n\}$. It's not hard to see that the trade-off function of these two uniform distributions is $f_{0,1/n}$, i.e. $(\ep,\delta)$-DP with $\ep=0, \delta=1/n$. This is exact --- the adversary has tests that achieve every point on the curve.

Our \cref{thm:subsample} yields the same result, showing its tightness. To see this, let $M$ be the identity map that takes in one individual and outputs his/her entire private information. Then $\tm = M\circ\Sample_{\frac{1}{n}}$. Privacy of $M$ is described by $f\equiv 0$. By \Cref{thm:subsample}, $\tm$ is $C_{1/n}(f)$-DP. \Cref{fig:subsample} shows that $C_{1/n}(f)=f_{0,1/n}$.

Next we show the following two equations:
	\begin{align*}
		\ep'&=\log(1-p + pe^\ep),\\
		\delta'&=p\big(1+f^*(-e^{\ep})\big)
	\end{align*}
	can be re-parameterized into 
	\begin{equation}
		\delta'=1+f_p^*(-e^{\ep'})\tag{\ref{eq:fp}}
	\end{equation}
	where $f_p = pf+(1-p)\Id$.
\begin{proof}[Proof of \Cref{eq:fp}]
	
	Since $\ep\mapsto\log(1-p + pe^\ep)$ maps $[0,+\infty)$ to $[0,+\infty)$ monotonically, we can solve $\ep$ from $\ep'$ and plug into $\delta'$. We have
	\begin{align*}
		\frac{1}{p}(1-e^{\ep'}) = 1-e^{\ep} \quad\text{ and }\quad
		\delta'=p\big(1+f^*(-e^{\ep})\big)
		&=p\big(1+f^*(\tfrac{1}{p}(1-e^{\ep'})-1)\big).
	\end{align*}
	Let $y = -e^{\ep'}$ and it suffices to show for any $y\leqslant-1$,
	\begin{equation}\label{eqn:ning}
		1+f_p^*(y) = p\big(1+f^*(\tfrac{1}{p}(1+y)-1)\big).
	\end{equation}
	To see this, expand $f_p^*$ as follows
	\begin{align*}
		f_p^*(y) &= \sup_{x} yx-f_p(x)\\
		&=\sup_{x} yx-pf(x)-(1-p)(1-x)\\
		&=p-1  + \sup_{x} (y+1-p)x -pf(x)\\
		&=p-1  + p\cdot\sup_{x} (\tfrac{1}{p}(1+y)-1)x -f(x)\\
		&=p-1 +pf^*(\tfrac{1}{p}(1+y)-1)
	\end{align*}
	\eqref{eqn:ning} follows directly.
\end{proof}

Next we provide the general tool mentioned in \Cref{sub:proof_of_subsample_theorems} that convert collections of $(\ep,\delta)$-DP guarantee in the form of \eqref{eq:fp} to some $f$-DP.

The symmetrization operator $\Symm:\T\to\T^S$ maps a general trade-off function to a symmetric trade-off function.  It's defined as follows:
\begin{definition} \label{def:symm}
	For $f\in\T$, let $\bar{x} = \inf\{x\in[0,1]:-1\in\partial f(x)\}$. The symmetrization operator $\Symm:\T\to\T^S$ is defined as
	\[\Symm(f):= \left\{
	\begin{array}{ll}
	\min\{f,f^{-1}\}^{**}, &\text{ if }\,\, \bar{x}\leqslant f(\bar{x}),\\
	\max\{f,f^{-1}\}, &\text{ if }\,\, \bar{x}>f(\bar{x}).
	\end{array}
	\right.\]
\end{definition}

\begin{proposition}\label{prop:asymm_env}
	Let $f\in\T$, not necessarily symmetric. Suppose a mechanism is $(\ep,1+f^*(-e^{\ep}))$-DP for all $\ep\geqslant 0$, then it is $\Symm(f)$-DP.
\end{proposition}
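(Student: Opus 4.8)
The plan is to reduce \Cref{prop:asymm_env} to a purely analytic identity between trade-off functions and then to establish that identity from the geometry of supporting lines. First I would invoke \Cref{prop:DPtof}: a mechanism is $\big(\ep,1+f^*(-\e^\ep)\big)$-DP for every $\ep\geqslant 0$ if and only if it is $g$-DP, where $g:=\sup_{\ep\geqslant0}f_{\ep,\,1+f^*(-\e^\ep)}$, and this $g$ is automatically a symmetric trade-off function because each $f_{\ep,\delta}$ is. So the whole statement collapses to proving the identity $g=\Symm(f)$.

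For the second step I would unpack each member of the family. For $k=-\e^{\ep}\leqslant-1$ the line $\ell_k(x)=kx-f^*(k)$ is exactly the supporting line of the convex function $f$ of slope $k$, so $\ell_k\leqslant f\leqslant\Id$; writing $\ell_k^{\mathrm{refl}}$ for its reflection across the diagonal $y=x$ (a line of slope $1/k$), one has $\ell_k^{\mathrm{refl}}\leqslant f^{-1}\leqslant\Id$. Comparing with \eqref{eq:fed} one reads off $f_{\ep,\,1+f^*(-\e^\ep)}=\max\{\ell_k,\ell_k^{\mathrm{refl}},0\}$, the cap by $\Id$ being automatic. Taking the supremum over $\ep\geqslant0$ and noting that, by the definition \eqref{eq:inver_f} of the left-continuous inverse, $\sup_k\ell_k^{\mathrm{refl}}$ is the inverse of $\sup_k\ell_k$, I would obtain
\[
g=\max\{h,h^{-1}\},\qquad h(x):=\max\Bigl\{0,\ \sup_{k\leqslant-1}\bigl(kx-f^*(k)\bigr)\Bigr\}.
\]
By convex duality $h$ is the largest convex minorant of $f$ all of whose slopes are at most $-1$; explicitly it coincides with $f$ on $[0,\bar{x}]$ and equals $\max\{0,\ \bar{x}+f(\bar{x})-x\}$ on $[\bar{x},1]$, where the affine piece $\bar{x}+f(\bar{x})-x$ is the slope $-1$ supporting line of $f$, touching $f$ at $\bar{x}$. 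Since $f(\bar{x})\leqslant1-\bar{x}$ one has $\bar{x}+f(\bar{x})\leqslant1$, so $h$ is itself a trade-off function by \Cref{prop:trade-off}.

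The last step is to check $\max\{h,h^{-1}\}=\Symm(f)$, and here the dichotomy in \Cref{def:symm} appears on its own. If $\bar{x}\leqslant f(\bar{x})$, the slope $-1$ branch of $h$ crosses the diagonal, $h$ and $h^{-1}$ agree along that middle segment, and $\max\{h,h^{-1}\}$ consists of $f$ on the steep initial stretch, that symmetric slope $-1$ segment in the middle, and $f^{-1}$ at the end; this is precisely $\min\{f,f^{-1}\}^{**}$, the biconjugate only replacing the upward kink of $\min\{f,f^{-1}\}$ at the fixed point of $f$ by the same slope $-1$ segment. If instead $\bar{x}>f(\bar{x})$, then $f$ already drops below the diagonal, $\max\{f,f^{-1}\}$ is convex as a maximum of convex functions, and I would verify $\max\{h,h^{-1}\}=\max\{f,f^{-1}\}$ directly: writing $x_0$ for the fixed point of $f$ (which lies strictly left of $\bar{x}$ in this case), both sides equal $f$ on $[0,x_0]$ and $f^{-1}$ on $[x_0,1]$; the inequality ``$\leqslant$'' uses only $h\leqslant f$ together with the fact that $\bar{x}+f(\bar{x})-x$ supports $f$, and the reverse inequality follows by reflecting this across the diagonal, since $\max\{h,h^{-1}\}$ is symmetric. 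Combined with the first step, $g=\Symm(f)$ yields the proposition. I expect this last step to be where the real work lies: the bookkeeping of exactly where $h$, $h^{-1}$, $f$, $f^{-1}$ and the slope $-1$ seam coincide --- and in particular the behaviour near the zero region or any atom of $f$ when $f(0)<1$, and the degenerate cases where $\partial f$ contains $-1$ on a whole interval --- is the delicate part, whereas Steps 1 and 2 are essentially formal.
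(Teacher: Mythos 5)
Your proposal is correct and follows essentially the same route as the paper: both reduce the statement to computing the upper envelope $\max\{0,\sup_{k\leqslant-1}(kx-f^*(k))\}$ of the supporting lines of slope at most $-1$, identify it with $f$ on $[0,\bar{x}]$ glued to the slope $-1$ tangent beyond $\bar{x}$, and then check that symmetrizing this envelope reproduces the two cases in \Cref{def:symm}. The only cosmetic difference is that you enter via \Cref{prop:DPtof} and absorb the reflected lines into the supremum from the start, whereas the paper first forms the one-sided envelope $f_{\mathrm{env}}$ directly from the DP inequalities and then applies \Cref{prop:symmetry}; these are interchangeable.
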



Recall from basic convex analysis that double convex conjugate $f^{**}$ is the greatest convex lower bound of $f$. If $f$ itself is convex then $f^{**}=f$. For $f$ symmetric , $f=f^{-1}$. By convexity of $f$, we have $\Symm(f)=f$ in both cases. So \Cref{prop:ftoDP} is a special case of \Cref{prop:asymm_env}. The first half of \Cref{prop:asymm_env} is \Cref{prop:asymm_for_proof}, the part we used in the proof of our subsampling theorem.

\begin{figure}[h]
  \centering
  \includegraphics[width=0.7\linewidth]
{./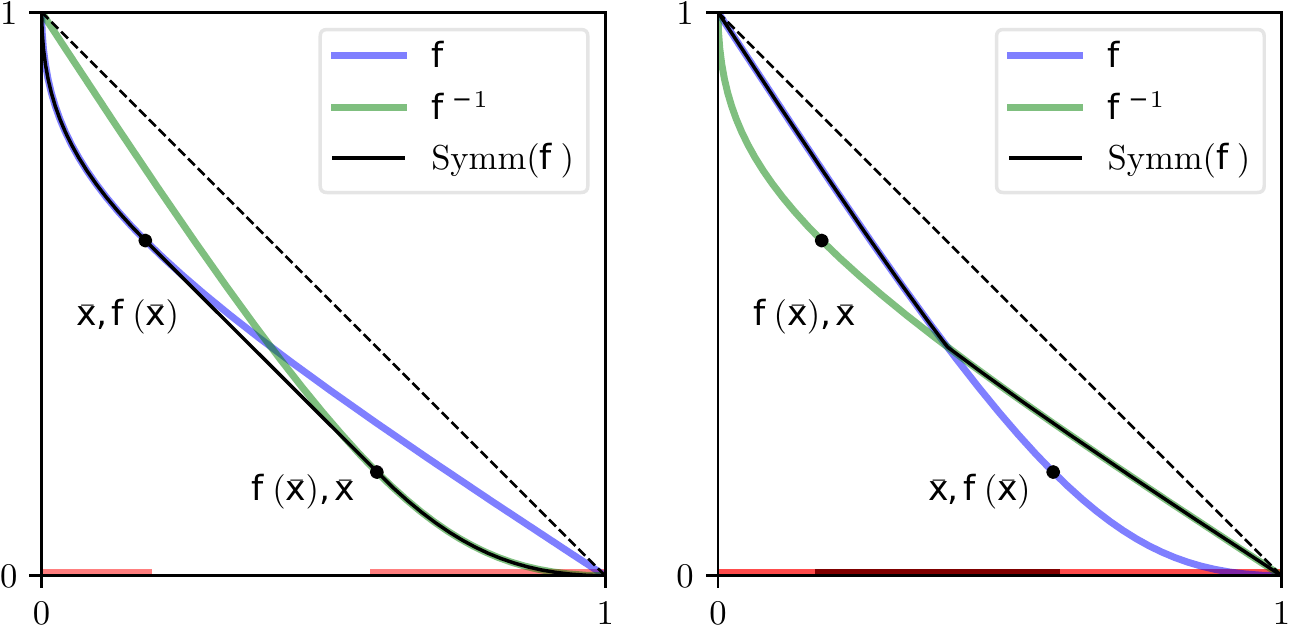}
  \captionof{figure}{Action of Symm. Left panel: $\bar{x}\leqslant f(\bar{x})$. Right panel: $\bar{x} > f(\bar{x})$. For both panels the effective parts (red bars on $x$-axes) are $[0,\bar{x}]$ of $f$ and $[f(\bar{x}),1]$ of $f^{-1}$. No overlap in the left panel since $\bar{x} < f(\bar{x})$, so interpolate with straight line; overlap in the right panel so the max is taken.}
  \label{fig:symm}
\end{figure}

From \Cref{fig:symm} it's not hard to see that
\begin{align*}
	\min\{f,f^{-1}\}^{**}(x) =
	\left\{
	\begin{array}{ll}
	f(x),&x\in[0,\bar{x}], \\
	\bar{x}+f(\bar{x})-x, & x\in[\bar{x},f(\bar{x})], \\
	f^{-1}, & x\in[f(\bar{x}),1].
	\end{array}
	\right.
\end{align*}


\begin{proof}[Proof of \Cref{prop:asymm_env}]
	$M$ being $\big(\ep,\delta(\ep)\big)$-DP means that for any neighboring datasets $S$ and $S'$,
	\[T\big(M(S),M(S')\big)(x)\geqslant-e^\ep x+1-\delta(\ep).\]
	Fix $x\in[0,1]$. Since the DP condition holds for all $\ep\geqslant 0$, the lower bound still holds when we take the supremum over $\ep\geqslant 0$. In other words, $M$ is ${f}_\mathrm{env}$-DP with
	$${f}_\mathrm{env}(x) = \max\{0,\,\,\sup_{\ep\geqslant0}1-\delta(\ep)-e^\ep x\}.$$
	By \Cref{prop:symmetry} $M$ is also $\max\{{f}_\mathrm{env},{f}_\mathrm{env}^{-1}\}$-DP. The proof will be complete if we can show $\max\{{f}_\mathrm{env},{f}_\mathrm{env}^{-1}\} = \Symm(f)$.

	\begin{figure}[h]
	  \centering
	  \includegraphics[width=0.7\linewidth]
	{./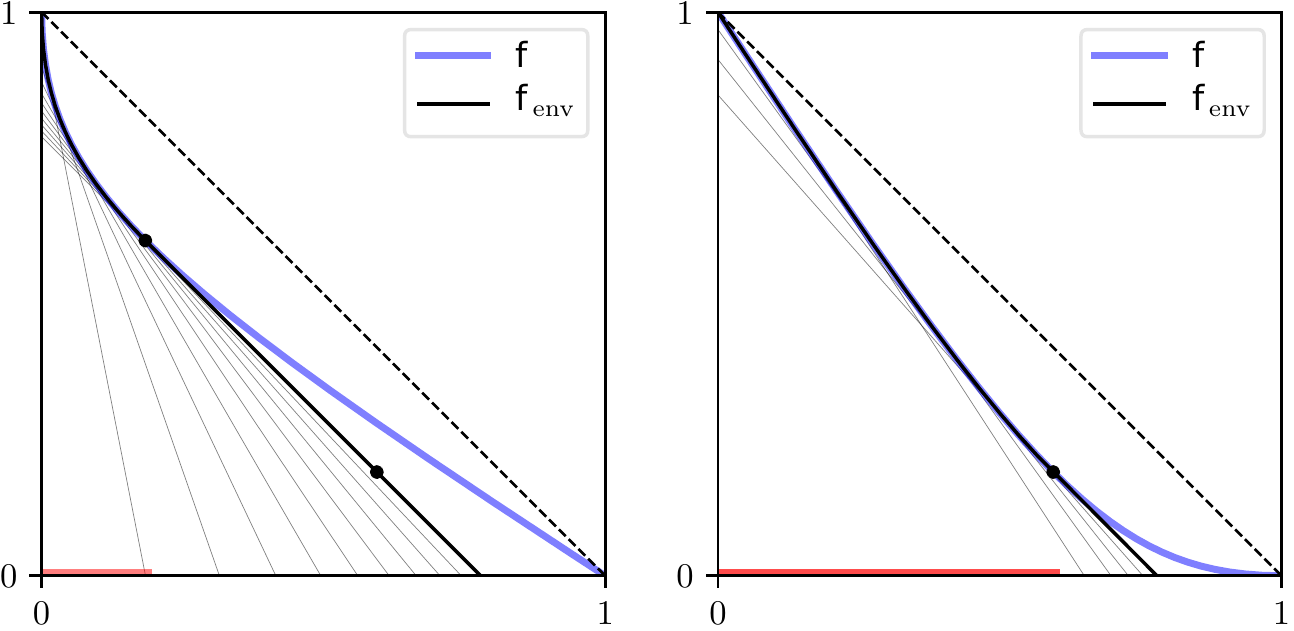}
	  \captionof{figure}{Symm explained. Left panel: $\bar{x}\leqslant f(\bar{x})$. Right panel: $\bar{x} > f(\bar{x})$. }
	  \label{fig:symm2}
	\end{figure}

	We achieve this by first showing:
	\[{f}_\mathrm{env}(x) =
	\left\{
	\begin{array}{ll}
	f(x),&x\in[0,\bar{x}], \\
	\bar{x}+f(\bar{x})-x, & x\in[\bar{x},\bar{x}+f(\bar{x})], \\
	0, & x\in[\bar{x}+f(\bar{x}),1].
	\end{array}
	\right.\]
	From \Cref{fig:symm2} it is almost obvious. We still provide the argument below.

	Plug in $\delta(\ep) = 1+f^*(-e^{\ep})$ and change the variable $y=-e^\ep$:
	\begin{align*}
		\sup_{\ep\geqslant0}[-e^\ep x+1-\delta(\ep)]
		&=\sup_{\ep\geqslant0}[-e^\ep x-f^*(-e^{\ep})]\\
		&=\sup_{y\leqslant-1}[y x-f^*(y)]
	\end{align*}
	From convex analysis we know if $y\in\partial f(x)$ then $yx = f(x)+f^*(y)$. By definition of $\bar{x}$, if $x\leqslant\bar{x}$, then at least one subgradient $y\in\partial f(x)$ is no greater than $-1$. So this specific $y x-f^*(y) = f(x)$ is involved in the supremum, i.e. $\sup_{y\leqslant-1}[y x-f^*(y)] = f(x)$. This justifies the expression for the first segment.

	When $x>\bar{x}$, the supremum is always attained at $y=-1$. In fact, if we let $l_y(x) = y x-f^*(y)$, then
	\begin{lemma} \label{lem:ning}
	 	$l_y(x)\leqslant l_{-1}(x)$ when $y\leqslant -1$ and $x>\bar{x}$.
	\end{lemma}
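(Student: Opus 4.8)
The plan is to read $l_y(x) = yx - f^*(y)$ as the affine lower bound of $f$ on $[0,1]$ coming from the Fenchel--Young inequality $f(x) + f^*(y) \geqslant xy$, which is \emph{tight} at a given $x$ exactly when $y \in \partial f(x)$. In particular, since $\bar{x}$ is defined so that $-1 \in \partial f(\bar{x})$, the line $l_{-1}$ touches the graph of $f$ at $\bar{x}$: $l_{-1}(\bar{x}) = f(\bar{x})$.

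First I would dispose of the trivial case $y = -1$, where $l_y = l_{-1}$. For $y < -1$, I would evaluate both lines at $\bar{x}$: because $l_y(x) \leqslant f(x)$ for all $x \in [0,1]$ while $l_{-1}(\bar{x}) = f(\bar{x})$, we immediately get $l_y(\bar{x}) \leqslant f(\bar{x}) = l_{-1}(\bar{x})$. Next, observe that the difference $g(x) := l_y(x) - l_{-1}(x) = (y+1)x - f^*(y) + f^*(-1)$ is affine with slope $y + 1 \leqslant 0$, hence non-increasing on $\mathbb{R}$. Combined with $g(\bar{x}) \leqslant 0$, this yields $g(x) \leqslant 0$ for every $x \geqslant \bar{x}$, and in particular for $x > \bar{x}$, which is exactly the assertion $l_y(x) \leqslant l_{-1}(x)$.

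The argument is a one-line convex-analysis observation, so I do not expect any genuine obstacle; the only point that deserves a moment's care is the equality $l_{-1}(\bar{x}) = f(\bar{x})$. This is the equality case of Fenchel--Young and requires $-1 \in \partial f(\bar{x})$ at the infimum point itself, which holds because $f$ is convex and continuous on $[0,1]$ (so $\{x : -1 \in \partial f(x)\}$ is a closed interval, whence its infimum $\bar{x}$ lies in it). Once that is in place, the monotonicity of the affine difference $g$ finishes the proof.
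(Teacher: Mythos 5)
Your proof is correct, and it takes a genuinely different route from the paper's. The paper proves the lemma by reparameterizing the slope $y$ by the supporting location $u$ (with $y = f'(u)$), writing $l_y(x) = f(u) + f'(u)(x-u)$, and computing $\frac{\partial}{\partial u} l_y(x) = f''(u)(x-u) \geqslant 0$ for $u \leqslant \bar{x} < x$; this shows $l_y(x)$ is increasing in $y$ on $(-\infty,-1]$, but it implicitly requires $f$ to be twice differentiable and the slope map $u \mapsto f'(u)$ to be a usable change of variables. Your argument instead pins down the ordering at the single point $\bar{x}$ via the Fenchel--Young equality $l_{-1}(\bar{x}) = f(\bar{x})$ (valid because $\{x : -1 \in \partial f(x)\} = \partial f^*(-1)$ is a closed interval containing its infimum) together with the inequality $l_y(\bar{x}) \leqslant f(\bar{x})$, and then propagates it to all $x \geqslant \bar{x}$ by observing that $l_y - l_{-1}$ is affine with nonpositive slope $y+1$. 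This is more elementary and strictly more general: it needs only that $f$ is a closed proper convex function, with no differentiability whatsoever, which is the natural level of generality here since trade-off functions need not be smooth (the paper's footnote acknowledges this by falling back on subgradients). The only point requiring care in your version is exactly the one you flagged --- that $-1$ is genuinely a subgradient at the infimum point $\bar{x}$ itself --- and your closedness argument handles it.
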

	\begin{proof}[Proof of \Cref{lem:ning}]
	$l_y$ is the supporting linear function of $f$ with slope $y$. It suffices to show that $l_y(x)$ is monotone increasing in $y$. To see this, change the variable from the slope $y$ to the supporting location $u$. As $f$ is convex, $y=f'(u)$ is increasing in $u$. In terms of $u$, $l_y(x) = f(u) + f'(u)(x-u)$. Taking derivative with respect to $u$:
	\[\frac{\partial}{\partial u }\, l_y(x) = f'(u) + f''(u)(x-u) + f'(u)\cdot(-1) = f''(u)(x-u).\]
	$y\leqslant -1$ corresponds to location $u\leqslant \bar{x}$ and hence $u<x$. So we see that $\frac{\partial}{\partial u }\, l_y(x)= f''(u)(x-u) \geqslant 0$. $l_y(x)$ is increasing in $u$, and hence increasing in $y$, completing the proof of the lemma.
	\end{proof}
	So the supremum is attained at $y=-1$. The value is $\sup_{y\leqslant -1}\big[yx-f^*(y)\big] = l_{-1}(x)$. Support function with slope $-1$ must support $f$ at $\bar{x}$.
	The location yields expression $l_{-1}(x) = f(\bar{x}) - (x-\bar{x})$. This justifies the expression for the second segment. The third one is simply the result of thresholding at 0.

	It's straightforward to verify that
	\[
		f_\mathrm{env}^{-1}(x) =
		\left\{
		\begin{array}{ll}
		\bar{x}+f(\bar{x})-x,&x\in[0,f(\bar{x})], \\
		f^{-1}(x), & x\in[f(\bar{x}),1].
		\end{array}
		\right.
	\]
	Obviously $\max\{{f}_\mathrm{env},{f}_\mathrm{env}^{-1}\}=\Symm(f)$. When the intervals $[0,\bar{x}]$ and $[f(\bar{x}),1]$ are disjoint, $f$ and $f^{-1}$ are effective separately, and the linear interpolation fills the blank. On the other hand when they intersect, max is taken and the linear function is never effective.
\end{proof}

We conclude this appendix with the proof of the classical privacy amplification by subsampling theorem. It primarily follows \cite{jon}, but is written so that potential generalization and improvement are in reach.
\DPsubsamplerep*
\begin{proof}[Proof of \Cref{lem:DPsubsample}]
	Let $\xv$ and $\xv'$ be neighboring datasets, each with $n$ individuals. Without loss of generality, assume $\xv$ and $\xv'$ differ in the first individual. We are ultimately interested in $\tm(\xv)$ and $\tm(\xv')$. They are generated as follows.

	Let $I\subseteq[n]$ be any size $m$ subset of the index set $[n]=\{1,2,\ldots, n\}$. $\xv_I$ and $\xv_I'$ denote the $m$ individuals indexed by $I$ in corresponding datasets, both of which can be the input of $M$. When $I$ is uniformly sampled from the $n\choose m$ subsets of $[n]$ of cardinality $m$, $M(\xv_I)$ is $\tm(\xv)$.


	Let $\phi$ be an arbitrary rejection rule. For a fixed $I\subseteq [n]$, when $\phi$ is used for the problem $M(\xv_I)$ vs $M(\xv'_I)$, the corresponding type I and type II errors are
	\begin{equation}\label{eq:defS}
		\alpha_I:=\E[\phi(M(\xv_I))]\quad\text{ and }\quad\beta_I:=1-\E[\phi(M(\xv'_I))]
	\end{equation}
	respectively. The expectations are over the randomness of $M$.

	When $\phi$ is used for $\tm(\xv)$ vs $\tm(\xv')$, the type I error $\alpha$ and type II error $\beta$ satisfy
	\[
		\alpha=\E[\phi(\tm(\xv))]= \E_I\,\E[\phi(M(\xv_I))] = \E_I[\alpha_I] 
	\]
	and
	\begin{align*}
		\beta=1-\E[\phi(\tm(\xv'))]= \E_I\big[1-\E[\phi(M(\xv'_I))]\big] = \E_I[\beta_I]. 
	\end{align*}
	Ultimately we are going to show that $\beta$ has a lower bound in terms of $\alpha$.
	This is possible because for each $I$, $\beta_I$ has a lower bounded in terms of $\alpha_I$, whose form depends on whether the ``difference'' individual 1 is sampled.

	When $1\not\in I$, $\xv_I = \xv'_I$. By definition \eqref{eq:defS}, $\alpha_I+\beta_I = 1$.
	When $1\in I$, $\xv_{I}$ and $\xv'_{I}$ are neighbors in $X^m$, so $(\ep,\delta)$ privacy of $M$ becomes effective here. Let $k = -e^\ep, b = 1-\delta$. From \Cref{thm:privacy_testing} we know $\beta_I\geqslant k\alpha_I+b$.

	So we should separate these cases $1\in I$ and $1\notin I$ and average them respectively.
	Define
	\begin{align*}
		A&=\E[\alpha_I|I\ni 1], 
		\quad
		B=\E[\beta_I|I\ni 1] 
	\end{align*}
	and
	\begin{align*}
		\bar{A}&=\E[\alpha_I|I\not\ni 1] ,
		\quad
		\bar{B}=\E[\beta_I|I\not\ni 1].
	\end{align*}
	Let $p=m/n$. Then $P[I\ni 1]=p, P[I\not\ni 1]=1-p$. Further averages of these quantities give us $\alpha$ and $\beta$:
	$$\alpha = pA+(1-p)\bar{A}, \quad \beta = pB+(1-p)\bar{B}.$$
	Linearity passes through expectations, so we have $\bar{A}+\bar{B} = 1$ and
	\[B = \E[\beta_I|I\ni 1] \geqslant \E[k\alpha_I+b|I\ni 1] = k\E[\alpha_I|I\ni 1]+b = kA+b.\]
	



	This inequality $B\geqslant kA+b$ comes from the difference between $\xv$ and $\xv'$. The smart observation is, there are a lot more neighboring datasets we can exploit. For example $S_I$ and $S_{\tilde{I}}'$ when $I = \{1,2,\ldots, m\}, \tilde{I} = \{2,3,\ldots, m+1\}$. They share individuals $2,3,\ldots, m$ and differ in the one left. This yields another inequality $B\geqslant k\bar{A}+b$. We fill in the details Jon Ullman omits in his lecture note.
	
	For $I\ni1$, we can replace 1 with any of the $n-m$ indices not in $I$ and obtain $n-m$ different subsets $I^{(1)},\ldots, I^{(n-m)}$. Note that none of these contains 1. In another word, we have the following correspondence:
	\[
		\begin{array}{ccc}
			\ni 1 &  & \not\ni1 \\
			I & \longleftrightarrow & I^{(1)} \\
			& \vdots &\\
			I & \longleftrightarrow & I^{(n-m)}
		\end{array}
	\]
	Each $\xv_{I^{(j)}}$ is a neighbor of $\xv'_I$, so by the privacy of $M$, we have $\beta_I\geqslant k\alpha_{I^{(j)}}+b$. Sum these up for each $I\ni1$ and the $n-m$ replacements of $I$, we have
	\[(n-m)\cdot\sum_{I\ni 1} \beta_I\geqslant m\cdot\sum_{I\not\ni 1} k\alpha_I+b.\]
	The right hand side factor $m$ comes from a different counting: if $I\not\ni1$ then each of the $m$ items in $I$ could have been 1 before the replacement. So each $I\not\ni1$ appears $m$ times in the summation.

	Multiply by $\frac{n}{m(n-m)}\cdot\binom{n}{m}^{-1}$,
	\[\frac{n}{m}\cdot\bigg(\sum_{I\ni 1} \beta_I\bigg)\cdot \binom{n}{m}^{-1}\geqslant b+ k \cdot \frac{n}{n-m}\cdot\bigg(\sum_{I\not\ni 1} \alpha_I\bigg)\cdot \binom{n}{m}^{-1}.\]
	By the simple Bayes' rule, this is $B\geqslant k\bar{A}+b$.

	Remember we want a lower bound of $\beta$. The best possible lower bound we can ge t via these relations is the minimum of the following linear program:
	\begin{align*}
		\min_{A,B,\bar{A},\bar{B}} \quad & \beta\\
		\textrm{s.t.}\quad
		& pB+(1-p)\bar{B}=\beta\\
		& pA+(1-p)\bar{A}=\alpha\\
		& \bar{A}+\bar{B} = 1 \\
		& B\geqslant kA+b\\
		& B\geqslant k\bar{A}+b \\
		& A,B,\bar{A},\bar{B}\in [0,1]
	\end{align*}
	\begin{lemma} \label{lem:LP}
		The minimum of the above linear program is no less than $p(k\alpha+b)+(1-p)(1-\alpha)$.
	\end{lemma}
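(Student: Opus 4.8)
The plan is to solve the linear program essentially by hand, eliminating variables until it becomes a one-dimensional minimization, and crucially exploiting the sign information $k = -\e^{\ep} \le -1$ (which holds because $\ep \ge 0$).

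First I would eliminate $\bar B$ via $\bar B = 1 - \bar A$, and merge the two inequality constraints $B \ge kA + b$ and $B \ge k\bar A + b$ into the single constraint $B \ge k\min\{A,\bar A\} + b$, which is legitimate since $k < 0$. Consequently, for every feasible point of \Cref{lem:LP},
\[
\beta = pB + (1-p)\bar B \;\ge\; p\big(k\min\{A,\bar A\}+b\big) + (1-p)(1-\bar A),
\]
so it suffices to show that the right-hand side is at least $L := p(k\alpha+b) + (1-p)(1-\alpha)$ for all $A,\bar A \in [0,1]$ subject to $pA + (1-p)\bar A = \alpha$. Note that the constraints $0 \le B \le 1$ are never used, since discarding them only enlarges the feasible set and hence can only lower the minimum.

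Next I would split into the two cases $A \le \bar A$ and $\bar A \le A$. In the case $A \le \bar A$, the equality $pA + (1-p)\bar A = \alpha$ forces $A \le \alpha$; substituting $\min\{A,\bar A\} = A$ and $(1-p)\bar A = \alpha - pA$ into the displayed bound rewrites it as an affine function of $A$ with slope $p(k+1) \le 0$, which over the range $A \le \alpha$ is minimized at $A = \alpha$, where it equals exactly $L$. The case $\bar A \le A$ is symmetric: now $\bar A \le \alpha$, the bound becomes affine in $\bar A$ with slope $pk - (1-p) < 0$, and is minimized at $\bar A = \alpha$, again with value $L$. Combining the two cases gives $\beta \ge L$ for every feasible point, which is the assertion; moreover the bound is attained at $A = \bar A = \alpha$, $B = k\alpha+b$, so it is in fact the exact minimum.

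I do not anticipate a genuine obstacle here: this is elementary linear-program manipulation. The single point that must not be glossed over is the use of $k = -\e^{\ep} \le -1$ — it is precisely this sign that makes the reduced objective monotone in the remaining free variable and pins the optimizer at $A = \bar A = \alpha$. An alternative would be to certify the bound by exhibiting feasible dual multipliers for the LP, but the direct case analysis above is shorter and makes the role of the hypothesis $\ep \ge 0$ transparent.
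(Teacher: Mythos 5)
Your proof is correct and follows essentially the same route as the paper: drop the box constraints, use the equality constraints to reduce to a one-variable problem, and exploit $k=-\e^{\ep}\leqslant -1$ to show the objective is minimized at $A=\bar A=\alpha$. Your explicit two-case analysis on $\min\{A,\bar A\}$ is in fact a slightly more rigorous rendering of the paper's informal perturbation argument ("not worth it"), so there is nothing to fix.
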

	\begin{proof}[Proof of \Cref{lem:LP}]
		We are going to remove the $A,B,\bar{A},\bar{B}\in [0,1]$ constraint and find the exact minimum of the relaxation, which is a lower bound of the original minimum.

		We have $\beta+\alpha = p(A+B)+(1-p)(\bar{A}+\bar{B})=p(A+B)+(1-p)$. So equivalently we can try to solve
		\begin{align*}
			\min_{A,B,\bar{A}} \quad & A+B\\
			\textrm{s.t.}\quad
			& pA+(1-p)\bar{A}=\alpha\\
			& B\geqslant kA+b\\
			& B\geqslant k\bar{A}+b
		\end{align*}
		When $A=\bar{A}=\alpha$, the lower bound they impose on $B$ is $k\alpha+b$. Notice that $k=-\e^\ep\leqslant -1$. The two consequences are:
		\begin{enumerate}
			\setlength\itemsep{-0.3em}
			\item $A>\alpha$ is worse than $A=\alpha$, because the convex combination equality requires $\bar{A}<\alpha$. $B$ has to increase anyway. Both $A$ and $B$ increase, so the objective $A+B$ also increases.
			\item If $A$ is decreased from $\alpha$ by some amount, $B$ has to increase by $\e^\ep$ times that amount, which is not worth it.
		\end{enumerate}
		So the minimum of the relaxed linear program is achieved at $A=\bar{A}=\alpha, B=k\alpha+b,\bar{B} = 1-\alpha$, thereby inducing the claimed lower bound.
	\end{proof}

	So $\beta\geqslant p(k\alpha+b)+(1-p)(1-\alpha)$. Changing back to $\ep,\delta$, we have
	\[\beta\geqslant p(-e^\ep\alpha+1-\delta)+(1-p)(1-\alpha) = -[p\e^\ep+1-p]\alpha+1-p\delta = -\e^{\ep'}\alpha+1-\delta'.\]
	By \Cref{thm:privacy_testing}, $\tm$ is $(\ep',\delta')$-DP.
\end{proof}
\section{Omitted Proofs in \Cref{sec:application_in_sgd}}
\label{app:SGD}
Our first goal is to prove
\mixturerep*
First we point out that the functional $\chi^2_+$ is computing a variant of $\chi^2$-divergence. Recall that $\chi^2$-divergence is an $F$-divergence with $F(t) = (t-1)^2$. We define $\chi^2_+$-divergence to be the $F$-divergence with $F(t) = (t-1)^2_+ =
	\left\{
	\begin{array}{ll}
		0, &t\leqslant1,\\
		(t-1)^2, & t>1.
	\end{array}
	\right.$
As in \Cref{app:relation}, let $z_f = \inf\{x\in[0,1]:f(x)=0\}$ be the first zero of $f$. 
\begin{proposition} \label{prop:chiplus}
	For a pair of distributions $P$ and $Q$ such that $T(P,Q)=f$ is a symmetric trade-off function with $f(0)=1$,
	\[\chi^2_+(P\|Q) = \chi^2_+(f).\]
\end{proposition}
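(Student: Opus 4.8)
The plan is to recognize $\chi^2_+(\cdot\|\cdot)$ as an $F$-divergence and feed it through the machinery of Appendix~\ref{app:relation}. Take $F(t)=(t-1)_+^2$, which is convex on $(0,+\infty)$ with $F(1)=0$ and $F(0)=0$; consequently the $\chi^2_+$-divergence satisfies the data processing inequality, so by Proposition~\ref{thm:functional} there is a functional $l_F\colon\T\to\R$ with $\chi^2_+(P'\|Q')=l_F\big(\F(P',Q')\big)$ for every pair $P',Q'$. In particular $\chi^2_+(P\|Q)=l_F(f)$, and the whole problem reduces to showing $l_F(f)=\chi^2_+(f)=\int_0^1(|f'(x)|-1)_+^2\,\diff x$.

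To compute $l_F(f)$ I would evaluate it on a conveniently chosen representing pair. Let $U$ be the uniform distribution on $[0,1]$ and $Q_f$ the distribution on $[0,1]$ with density $x\mapsto|f'(1-x)|$; since $1-f(0)=0$, $Q_f$ has no atom at $1$. As established in the proof of Proposition~\ref{prop:trade-off}, $\F(U,Q_f)=f$, and hence by Lemma~\ref{lem:symmetry} together with the symmetry of $f$ we get $\F(Q_f,U)=f^{-1}=f$ as well. Therefore $l_F(f)=\chi^2_+(Q_f\|U)$, and this last quantity is read off directly from the definition of the $F$-divergence: with respect to Lebesgue measure on $[0,1]$ the densities of $Q_f$ and $U$ are $|f'(1-x)|$ and $1$, and both boundary terms in the $F$-divergence formula vanish (one because $F(0)=0$, the other because $Q_f(\{\text{density of }U=0\})=Q_f(\R\setminus[0,1])=0$, together with the convention $\tau_F\cdot 0=0$), leaving
\[
\chi^2_+(Q_f\|U)=\int_0^1 F\big(|f'(1-x)|\big)\,\diff x=\int_0^1\big(|f'(x)|-1\big)_+^2\,\diff x=\chi^2_+(f),
\]
where the middle step uses the substitution $x\mapsto 1-x$. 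Combining the two computations gives $\chi^2_+(P\|Q)=l_F(f)=\chi^2_+(f)$.

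There is no deep obstacle here; the work lies entirely in checking that the imported results apply. The one point that must not be glossed over is the \emph{orientation} of the representing pair: Proposition~\ref{prop:fdiv} evaluates $l_F(f)$ as $\int_0^1 F(|f'(x)|^{-1})\,|f'(x)|\,\diff x$ via the pair $(U,Q_f)$, which is not literally $\int_0^1(|f'(x)|-1)_+^2\,\diff x$ — it is the symmetry hypothesis $f=f^{-1}$ that licenses using $(Q_f,U)$ instead and thereby obtaining the stated closed form. Equivalently, one could bypass symmetry and verify $\int_0^1 F(|f'(x)|^{-1})\,|f'(x)|\,\diff x=\int_0^1 F(|f'(x)|)\,\diff x$ by the change of variables $u=f(x)$, using that a symmetric trade-off function with $f(0)=1$ is a decreasing absolutely continuous bijection of $[0,1]$ with $f\circ f=\mathrm{id}$, so that $|f'(f(x))|=|f'(x)|^{-1}$ almost everywhere; this is precisely the maneuver underlying the identities $\kl(f)=\lk(f)$ and $\kappa_2(f)=\tilde{\kappa}_2(f)$ in Proposition~\ref{prop:functionals}. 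Minor care is also needed with the conventions $F(0)\cdot 0=\tau_F\cdot 0=0$ when confirming that the boundary terms drop out.
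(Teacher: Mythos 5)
Your proof is correct and follows essentially the same route as the paper's: both reduce the claim to the $F$-divergence representation with $F(t)=(t-1)_+^2$ and then use the symmetry $f=f^{-1}$ to turn $\int_0^1 F(|f'|^{-1})|f'|\,\diff x$ into $\int_0^1 F(|f'|)\,\diff x$. The only difference is presentational — you apply the symmetry by swapping the representing pair to $(Q_f,U)$ before computing, whereas the paper applies \Cref{prop:fdiv} to $(U,Q_f)$ and then performs the change of variables $y=f(x)$ (your own ``equivalent'' remark is precisely the paper's argument, though note it does not ``bypass symmetry'': the identity $|f'(f(x))|=|f'(x)|^{-1}$ is exactly where $f\circ f=\mathrm{id}$ is used).
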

\begin{proof}[Proof of \Cref{prop:chiplus}]
	By \Cref{prop:fdiv}, when $f=T(P,Q)$, $\chi^2_+(P\|Q)$ can be computed via the following expression:
$$\chi^2_+(P\|Q)=\int_0^{z_f} \big({\big|f'(x)\big|}^{-1}-1\big)_+^2\cdot \big|f'(x)\big| \diff x + F(0)\cdot(1-f(0))+\tau_F\cdot (1-z_f)$$
where $F(0) = \lim_{p\to 0^+} {F(t)} = 0, \tau_F=\lim_{p\to+\infty} \frac{F(t)}{t} = +\infty$. Since we assume $f$ is symmetric, $z_f= f(0)=1$. This also implies that $f^{-1}$ is the ordinary function inverse, i.e. $f(f(x))=x$. 
Let $y = f^{-1}(x) = f(x)$. Then $\diff y = f'(x) \diff x$. On the other hand, $x=f(y),\diff x = f'(y)\diff y$. $x=1$ corresponds to $y=0$ and $x=0$ corresponds to $y=1$, so
\begin{align*}
	\chi^2_+(P\|Q)
	&=\int_0^{1} \big({\big|f'(x)\big|}^{-1}-1\big)_+^2\cdot \big|f'(x)\big| \diff x\\
	&=\int_0^{1} \Big({\Big|\frac{\diff y}{\diff x}\Big|}^{-1}-1\Big)_+^2\cdot \big|f'(x)\big| \diff x\\
	&=-\int_1^{0} \big({\big|f'(y)\big|}-1\big)_+^2\diff y\\
	&=\int_0^{1} \big({\big|f'(y)\big|}-1\big)_+^2\diff y\\
	&=\chi^2_+(f).
\end{align*}
\end{proof}
We need some more calculation tools to prove \Cref{thm:mixtureSGD}.
\begin{lemma}\label{lem:functionals2}
	Let $f\in\T^S$ with $f(0)=1$ and $x^*$ be its unique fixed point. Then
	\begin{align*}
		\chi^2_+(f)&= \int_0^{x^*}(f'(x)+1)^2\diff x\\
		\kl(f) &= \int_0^{x^*}\big(|f'(x)|-1\big)\log |f'(x)|\diff x \\
		\kappa_2(f)&= \int_0^{x^*}\big(|f'(x)|+1\big)\big(\log |f'(x)|\big)^2\diff x \\
		\bar{\kappa}_3(f)&=\int_0^{x^*}\big|\log |f'(x)|+\kl(f)\big|^3+|f'(x)|\cdot\big|\log |f'(x)|-\kl(f)\big|^3\diff x\\
		\kappa_3(f)&=\int_0^{x^*}\big(|f'(x)|+1\big)\big(\log |f'(x)|\big)^3\diff x.
	\end{align*}
\end{lemma}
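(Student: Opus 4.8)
The plan is to use the symmetry of $f$ to collapse each of the five integrals over $[0,1]$ onto the subinterval $[0,x^*]$. Two preliminary facts carry the argument. First, since $f$ is a symmetric trade-off function with $f(0)=1$, it is strictly decreasing on $[0,1]$, satisfies $f(1)=0$, and coincides with its own inverse in the ordinary sense, $f(f(x))=x$ for all $x\in[0,1]$ (the situation described in the footnote to \eqref{eq:inver_f}). Second, $f$ is convex and continuous on $[0,1]$, hence absolutely continuous with $\int_0^1|f'|=f(0)-f(1)=1$, and $f'$ is non-decreasing. Differentiating $f(f(x))=x$ gives $f'(f(x))\,f'(x)=1$ for a.e.\ $x$; at the fixed point this forces $f'(x^*)=-1$ (the slope is $\leqslant 0$), which is precisely the fact already used in the proof of \Cref{thm:subsample}. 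Because $f'$ is non-decreasing, $|f'(x)|\geqslant 1$ on $[0,x^*]$ and $|f'(x)|\leqslant 1$ on $[x^*,1]$.

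Given this, the identity for $\chi^2_+(f)$ is immediate: on $[x^*,1]$ the integrand $(|f'(x)|-1)_+^2$ vanishes, while on $[0,x^*]$ we have $f'(x)\leqslant -1$, so $(|f'(x)|-1)_+^2=(|f'(x)|-1)^2=(f'(x)+1)^2$, giving $\chi^2_+(f)=\int_0^{x^*}(f'(x)+1)^2\,\diff x$ (the same case split also recovers it from \Cref{prop:fdiv} and \Cref{prop:chiplus}). For the four ``log-moment'' functionals the recipe is: split the integral at $x^*$ and, on $[x^*,1]$, substitute $u=f(x)$. Since $f$ is a decreasing absolutely continuous bijection of $[x^*,1]$ onto $[0,x^*]$ (as $f(x^*)=x^*$ and $f(1)=0$) and $x=f(u)$ there by symmetry, we have $\diff x=f'(u)\,\diff u=-|f'(u)|\,\diff u$ and $|f'(x)|=|f'(f(u))|=1/|f'(u)|$, hence $\log|f'(x)|=-\log|f'(u)|$. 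For $\kl$ this gives
\[
\kl(f)=-\int_0^{x^*}\log|f'(x)|\,\diff x-\int_{x^*}^1\log|f'(x)|\,\diff x,
\]
and the substitution turns the second integral into $-\int_0^{x^*}|f'(x)|\log|f'(x)|\,\diff x$, whence $\kl(f)=\int_0^{x^*}\big(|f'(x)|-1\big)\log|f'(x)|\,\diff x$.

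The remaining three identities are obtained by exactly the same substitution: each $[x^*,1]$-piece returns to $[0,x^*]$ with an extra Jacobian factor $|f'(x)|$ and a sign flip inside every logarithm. Thus $\kappa_2$ accumulates $(|f'(x)|+1)(\log|f'(x)|)^2$; $\kappa_3$ accumulates $(|f'(x)|+1)(\log|f'(x)|)^3$, using that $\log|f'|\geqslant 0$ on $[0,x^*]$ so $|\log|f'||^3=(\log|f'|)^3$ there; and in $\bar{\kappa}_3$ the term $\big|\log|f'(x)|+\kl(f)\big|^3$ on $[x^*,1]$ transforms into $|f'(x)|\cdot\big|{-}\log|f'(x)|+\kl(f)\big|^3=|f'(x)|\cdot\big|\log|f'(x)|-\kl(f)\big|^3$, producing the stated sum. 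The only points that need care are the measure-theoretic validity of the change of variables (fine, since $f$ is monotone and absolutely continuous on $[0,1]$, even though $|f'|$ may blow up as $x\to 0$) and the reciprocal/sign bookkeeping; since each functional may legitimately equal $+\infty$, all manipulations are to be read as identities in $[0,+\infty]$. I expect no genuine obstacle: the entire content is the observation that the fixed point $x^*$ is exactly where $|f'|$ crosses $1$, which makes $u\mapsto f(u)$ carry $[x^*,1]$ onto $[0,x^*]$ and each $[x^*,1]$-piece of an integral into a $|f'|$-weighted copy of a $[0,x^*]$-piece.
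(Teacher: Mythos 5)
Your proposal is correct and follows essentially the same route as the paper's proof: both split each integral at the fixed point $x^*$ (where $|f'|$ crosses $1$, which immediately gives the $\chi^2_+$ identity) and then apply the change of variables $y=f(x)$ on $[x^*,1]$, using $f(f(x))=x$ to produce the Jacobian factor $|f'|$ and the sign flip inside the logarithms. Your additional remarks on strict monotonicity, absolute continuity, and reading the identities in $[0,+\infty]$ are consistent with (and slightly more explicit than) the paper's argument.
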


\begin{proof}[Proof of \Cref{lem:functionals2}]
	First we observe that $f'(x)\leqslant -1$ for $x\leqslant x^*$ and $f'(x)\geqslant -1$ for $x\geqslant x^*$. This means the integrand involved in $\chi^2_+$ is 0 in $[x^*,1]$ and hence proves the first identity.

	The rest of the proof is entirely based on a trick we used above.
	Let $y = f^{-1}(x) = f(x)$. Then $x=f(y),\diff x = f'(y)\diff y$. Since $x^*$ is the fixed point of $f$, $x=x^*$ corresponds to $y=x^*$. $x=1$ corresponds to $y=0$ and $x=0$ corresponds to $y=1$.
	\begin{align*}
		-\int_{x^*}^1\log |f'(x)|\diff x
		&=\int_{x^*}^1\log |f'(x)|^{-1}\diff x\\
		&=\int_{x^*}^0\log \Big|\frac{\diff x}{\diff y}\Big| \cdot f'(y)\diff y\\
		&=\int^{x^*}_0\log |f'(y)| \cdot |f'(y)|\diff y.
	\end{align*}
	So
	\begin{align*}
		\kl(f) &= -\int_0^{1}\log |f'(x)|\diff x \\
		&=-\int^{x^*}_0\log |f'(x)|\diff x-\int_{x^*}^1\log |f'(x)|\diff x\\
		&=-\int^{x^*}_0\log |f'(x)|\diff x+\int^{x^*}_0\log |f'(x)| \cdot |f'(x)|\diff x\\
		&= \int_0^{x^*}\big(|f'(x)|-1\big)\log |f'(x)|\diff x.
	\end{align*}
	The rest of identities can be proved in exactly the same way.
\end{proof}

\begin{lemma} \label{lem:CpMoments}
	Suppose $f\in\T^S$ and $f(0)=1$. $x^*$ is its unique fixed point. Let $g(x) = -f'(x)-1 = |f'(x)|-1$. Then
	\begin{align*}
		\kl(C_p(f)) &= p\int_0^{x^*}g(x)\log \big(1+pg(x)\big)\diff x \\
		\kappa_2(C_p(f))&= \int_0^{x^*}\big(2+pg(x)\big)\big[\log \big(1+pg(x)\big)\big]^2\diff x \\
		\kappa_3(C_p(f))&=\int_0^{x^*}\big(2+pg(x)\big)\big[\log \big(1+pg(x)\big)\big]^3\diff x.
	\end{align*}
\end{lemma}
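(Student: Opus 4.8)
\textbf{Proof proposal for \Cref{lem:CpMoments}.} The plan is to reduce everything to \Cref{lem:functionals2}, using the explicit form \eqref{eq:Cp_expression} of $C_p(f)$ and an elementary computation of its derivative. First I would record the structural facts we need: since $f(0)=1$, we have $f_p(0)=pf(0)+(1-p)=1$, and $C_p(f)$ is a symmetric trade-off function with $C_p(f)(0)=f_p(0)=1$, so \Cref{lem:functionals2} applies to $h=C_p(f)$. Next I would identify the fixed point $y^*$ of $C_p(f)$: on the middle interval $[x^*,f_p(x^*)]$ the function equals the slope $-1$ line $x\mapsto x^*+f_p(x^*)-x$, which meets the diagonal at $y^*=\tfrac{1}{2}\big(x^*+f_p(x^*)\big)$; moreover $f_p\geqslant f$ gives $f_p(x^*)\geqslant f(x^*)=x^*$, hence $y^*\geqslant x^*$.

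The key computation is the derivative of $C_p(f)$ on $[0,x^*]$, where $C_p(f)=f_p$. Writing $f'(x)=-1-g(x)$ with $g(x)=|f'(x)|-1\geqslant 0$ on $[0,x^*]$, one gets
\[
f_p'(x)=pf'(x)+(1-p)(-1)=-1-pg(x),
\]
so that $|C_p(f)'(x)|=1+pg(x)$ on $[0,x^*]$, while $|C_p(f)'(x)|=1$ on $[x^*,y^*]$. Now apply \Cref{lem:functionals2} with $h=C_p(f)$ and upper limit $y^*$. On $[x^*,y^*]$ the integrands for $\kl,\kappa_2,\kappa_3$ each carry a factor $\log|h'(x)|=\log 1=0$, so that portion contributes nothing. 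On $[0,x^*]$ substitute $|C_p(f)'(x)|=1+pg(x)$, using $|h'|-1=pg$, $|h'|+1=2+pg$, $\log|h'|=\log(1+pg)$; this yields exactly the three stated formulas (the factor $p$ in the $\kl$ expression coming from $pg(x)\log(1+pg(x))$).

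I do not expect a serious obstacle here; the content is bookkeeping. The only points requiring a touch of care are: (i) checking that \Cref{lem:functionals2}'s hypotheses ($C_p(f)$ symmetric, value $1$ at $0$, unique fixed point) are genuinely met — the first is granted in the excerpt, the second follows from $f(0)=1$, and the third from symmetry together with the piecewise form; and (ii) handling the degenerate case $f_p(x^*)=x^*$ (no middle linear piece, $y^*=x^*$), where the formulas hold trivially. Everything else is a direct substitution.
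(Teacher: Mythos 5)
Your proposal is correct and follows essentially the same route as the paper: apply \Cref{lem:functionals2} to $C_p(f)$, observe that the middle slope-$(-1)$ segment contributes nothing since $\log|C_p(f)'|=0$ there, and substitute $|f_p'(x)|=1+pg(x)$ on $[0,x^*]$. The derivative computation and the resulting three formulas all check out.
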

\begin{proof}[Proof of \Cref{lem:CpMoments}]
	We prove for kl and the rest are similar. Let $x_p^*$ be the fixed point of $C_p(f)$. By \Cref{lem:functionals2},
	\[
		\kl(C_p(f)) = \int_0^{x^*_p}\big(|C_p(f)'(x)|-1\big)\log |C_p(f)'(x)|\diff x.
	\]
	From the expression of $C_p(f)$ \eqref{eq:Cp_expression} we know $\log |C_p(f)'(x)|=0$ in the interval $[x^*,x^*_p]$, and $C_p(f) = f_p = pf+(1-p)\Id$ in the interval $[0,x^*]$. So
	\begin{align*}
		\kl(C_p(f)) 
		&= \int_0^{x^*}\big(|f_p'(x)|-1\big)\log |f_p'(x)|\diff x.
	\end{align*}
	In the interval $[0,x^*]$, $g(x)=|f'(x)|-1\geqslant 0$. $f_p'(x) = pf'(x)+(1-p)(-1) = p(f'(x)+1)-1 = -pg(x)-1$, so $|f_p'(x)| = pg(x)+1$. When plugged in to the expression above, we have
	\[
	\kl(C_p(f)) = p\int_0^{x^*}g(x)\log \big(1+pg(x)\big)\diff x.
	\]
\end{proof}
\begin{proof}[Proof of \Cref{thm:mixtureSGD}]
	It suffices to compute the limits in \Cref{thm:CLT}, namely
	$$T\cdot \kl(C_p(f)), \,\,T\cdot \kappa_2(C_p(f)) \text{ and } T\cdot \kappa_3(C_p(f)).$$
	Since $T\sim p^{-2}$, we can consider $p^{-2}\kl(C_p(f))$ and so on.

	As in \Cref{lem:CpMoments}, let $x^*$ be the unique fixed point of $f$ and $g(x) = -f'(x)-1 = |f'(x)|-1$. Note that $g(x)\geqslant 0$ for $x\in[0,x^*]$. The assumption expressed in terms of $g$ is simply
	$$\int_0^1 g(x)^4\diff x<+\infty.$$
	In particular, it implies $g(x)^k$ are integrable in $[0,x^*]$ for $k=2,3,4$. In addition, $\chi^2_+(f) = \int_0^{x^*}g(x)^2\diff x$ by \Cref{lem:functionals2}.

	For the functional $\kl$, by \Cref{lem:CpMoments},
	\begin{align}
		\lim_{p\to0^+}\frac{1}{p^2}\,\kl(C_p(f))
		&= \lim_{p\to0^+}\int_0^{x^*}g(x)\cdot \frac{1}{p}\log \big(1+pg(x)\big)\diff x\tag{$*$}\label{eq:sgd1}\\
		&= \int_0^{x^*}g(x)\cdot \lim_{p\to0^+}\frac{1}{p}\log \big(1+pg(x)\big)\diff x\nonumber\\
		&=\int_0^{x^*}g(x)^2\diff x=\chi^2_+(f)\nonumber
	\end{align}
	Changing the order of the limit and the integral in \eqref{eq:sgd1} is approved by \DCT. To see this, notice that $\log(1+x)\leqslant x$.
	The integrand in \eqref{eq:sgd1} satisfies
	\begin{align*}
		0\leqslant g(x)\cdot \frac{1}{p}\log \big(1+pg(x)\big)\leqslant g(x)^2.
	\end{align*}
	We already argued that $g(x)^2$ is integrable, so it works as a dominating function and the limit is justified. When $p\sqrt{T}\to p_0$, we have
	\[T\cdot \kl(C_p(f))\to p_0^2\cdot\chi^2_+(f).\]
	So the constant $K$ in \Cref{thm:CLT} is $p_0^2\cdot\chi^2_+(f)$.

	For the functional $\kappa_2$ we have
	\begin{align*}
		\frac{1}{p^2}\kappa_2(C_p(f)) &= \int_0^{x^*}\big(2+pg(x)\big)\Big[\frac{1}{p}\log \big(1+pg(x)\big)\Big]^2\diff x.
	\end{align*}
	By a similar dominating function argument,
	\begin{align*}
		\lim_{p\to0^+}\frac{1}{p^2}\,\kappa_2(C_p(f)) = 2\int_0^{x^*}g(x)^2\diff x=2\chi^2_+(f).
	\end{align*}
	Adding in the limit $p\sqrt{T}\to p_0$, we know $s^2$ in \Cref{thm:CLT} is $2p_0^2\cdot\chi^2_+(f)$. Once again, we have $s^2 = 2K$.

	The same argument involving $g(x)^4$ applies to the functional $\kappa_3$ and yields
	$$\lim_{p\to0^+}\frac{1}{p^3}\,\kappa_3(C_p(f)) = 2\int_0^{x^*}g(x)^3\diff x.$$
	Note the different power in $p$ in the denominator. It means $\kappa_3(C_p(f)) = o(p^2)$ and hence $T\cdot \kappa_3(C_p(f))\to 0$ when $p\sqrt{T}\to p_0$.

	Hence all the limits in \Cref{thm:CLT} check and we have a $G_\mu$ limit where
	\[\mu = 2K/s = s = \sqrt{2p_0^2\cdot\chi^2_+(f)} = p_0\cdot\sqrt{2\chi^2_+(f)}.\]
	This completes the proof.
\end{proof}
\chirep*
\begin{proof}[Proof of \Cref{lem:chi2GDP}]
	We use \Cref{prop:chiplus} as the tool. Obviously $P=\N(\mu,1)$ and $Q=\N(0,1)$ satisfy the conditions there. So it suffices to compute $\chi^2_+(\N(\mu,1)\|\N(0,1))$. Recall that $\chi^2_+$ is the $F$-divergence with $F(t) = (t-1)_+^2$, so $\chi^2_+(P\|Q) = \E_Q\big[( \frac{P}{Q}-1)_+^2\big]$. Let $\varphi$ and $\Phi$ be the density function and cdf of the standard normal. We have
	\begin{align*}
		\chi^2_+(G_\mu) &= \chi^2_+(\N(\mu,1)\|\N(0,1))\\
		&= \E_{x\sim \N(0,1)}\Big[\Big(\frac{\varphi(x-\mu)}{\varphi(x)}-1\Big)_+^2\Big]\\
		&= \int_{\mu/2}^{+\infty}\Big(\frac{\varphi(x-\mu)}{\varphi(x)}-1\Big)^2\cdot \varphi(x)\diff x\\
		&= \int_{\mu/2}^{+\infty}\Big(\frac{\varphi(x-\mu)}{\varphi(x)}\Big)^2\cdot \varphi(x)\diff x -2 \int_{\mu/2}^{+\infty}\varphi(x-\mu)\diff x+\int_{\mu/2}^{+\infty}\varphi(x)\diff x\\
		&= \underbrace{\int_{\mu/2}^{+\infty}e^{2\mu x-\mu^2}\cdot \varphi(x)\diff x}_{I}-2(1-\Phi(\mu/2))+\Phi(-\mu/2)\\
		&=I+3\Phi(-\mu/2)-2.
	\end{align*}
	For the integral $I$,
	\begin{align*}
		I &= \int_{\mu/2}^{+\infty}e^{2\mu x-\mu^2}\cdot \varphi(x)\diff x\\
		&= \int_{\mu/2}^{+\infty}\frac{1}{\sqrt{2\pi}}\cdot e^{2\mu x-\mu^2-x^2/2}\diff x\\
		&= \int_{\mu/2}^{+\infty}\frac{1}{\sqrt{2\pi}}\cdot e^{-(x-2\mu)^2/2}\cdot e^{\mu^2}\diff x\\
		&= e^{\mu^2}\cdot P[\N(2\mu,1)\geqslant \mu/2]\\
		&= e^{\mu^2}\cdot\Phi(3\mu/2)
	\end{align*}
	This completes the proof.
\end{proof}
\SGDlimitrep*
\begin{proof}[Proof of \Cref{thm:SGDlimit}]
	Combining \Cref{thm:sgdcompo,thm:mixtureSGD} and \Cref{lem:chi2GDP}, it suffices to check $\int_0^1(f'(x)+1)^4\diff x<+\infty$ when $f(x) = G_a(x) = \Phi(\Phi^{-1}(1-x)-a)$. Let $y = \Phi^{-1}(1-x)$. We have $\varphi(y)\diff y = -\diff x$. Hence
	\[G_a'(x) = \varphi(y-a) \cdot \frac{\diff y}{\diff x} = -\frac{\varphi(y-a)}{\varphi(y)} = -\e^{ay-\frac{a^2}{2}}.\]
	The integral is
	\begin{align*}
		\int_0^1(G_a'(x)+1)^4\diff x
		&=\int_{-\infty}^{+\infty}(-\e^{ay-\frac{a^2}{2}}+1)^4\varphi(y)\diff y,
	\end{align*}
	which is just a linear combination of moment generating functions of the standard normal and hence finite.
\end{proof}
\functionalGmurep*
\begin{proof}[Proof of \Cref{lem:functionalGmu}]
	We will use \Cref{lem:CpMoments}. It's easy to show the fixed point of $G_\mu$ is $x^* = \Phi(-\mu/2)$. So
	\begin{align*}
		\kl\big(C_p(G_\mu)\big) &=
		p\int_0^{\Phi(-\mu/2)}\big(-G_\mu'(x)-1\big)\log \big(1+p(-G_\mu'(x)-1)\big)\diff x
	\end{align*}
	Using the same change of variable $y = \Phi^{-1}(1-x) = -\Phi^{-1}(x)$, we have
	\begin{align*}
		\kl\big(C_p(G_\mu)\big) &=
		p\int^{+\infty}_{\mu/2}\Big(\frac{\varphi(y-\mu)}{\varphi(y)}-1\Big)\log \Big(1+p\Big(\frac{\varphi(y-\mu)}{\varphi(y)}-1\Big)\Big)\varphi(y)\diff y\\
		&=p\int_{\mu/2}^{+\infty} Z(y)\cdot\big(\varphi(y-\mu)-\varphi(y)\big)\diff y.
	\end{align*}
	The rest can be proved similarly.
\end{proof}
\SGDBerryrep*
\begin{proof}[Proof of \Cref{thm:SGDBerry}]
	Follows from plugging in the expressions above into \Cref{thm:Berry}.
\end{proof}

\end{document}
